\newtheorem{theorem}{Theorem}
\newtheorem{lemma}{Lemma}
\newtheorem{assumption}{Assumption}
\title{Faster Stochastic Variance Reduction Methods for Compositional MiniMax Optimization}
\author{{Jin Liu} \\
	Central South University\\
	\texttt{liujin06@csu.edu.cn} \\
	%% examples of more authors
	\And
	{Xiaokang Pan} \\
	Central South University\\
	\texttt{224712176@csu.edu.cn} \\
        \And
	{Junwen Duan} \\
	Central South University\\
	\texttt{jwduan@csu.edu.cn} \\
        \And
	{Hong-Dong Li} \\
	Central South University\\
	\texttt{hongdong@csu.edu.cn} \\
        \And
	{Youqi Li} \\
	Beijing Institute of Technology\\
	\texttt {liyouqi@bit.edu.cn} \\
        \And
	{Zhe Qu}\\
	Central South University\\
	\texttt{zhe\_qu@csu.edu.cn} \\
	%% \AND
	%% Coauthor \\
	%% Affiliation \\
	%% Address \\
	%% \texttt{email} \\
}
\date{}
\begin{document}
\maketitle

\begin{abstract}
This paper delves into the realm of stochastic optimization for compositional minimax optimization—a pivotal challenge across various machine learning domains, including deep AUC and reinforcement learning policy evaluation. Despite its significance, the problem of compositional minimax optimization is still under-explored. Adding to the complexity, current methods of compositional minimax optimization are plagued by sub-optimal complexities or heavy reliance on sizable batch sizes. To respond to these constraints, this paper introduces a novel method, called Nested STOchastic Recursive Momentum (NSTORM), which can achieve the optimal sample complexity of $O(\kappa^3 /\epsilon^3 )$ to obtain the $\epsilon$-accuracy solution. We also demonstrate that NSTORM can achieve the same sample complexity under the Polyak-\L ojasiewicz (PL)-condition—an insightful extension of its capabilities. Yet, NSTORM encounters an issue with its requirement for low learning rates, potentially constraining its real-world applicability in machine learning. To overcome this hurdle, we present ADAptive NSTORM (ADA-NSTORM) with adaptive learning rates. We demonstrate that ADA-NSTORM can achieve the same sample complexity but the experimental results show its more effectiveness. All the proposed complexities indicate that our proposed methods can match lower bounds to existing minimax optimizations, without requiring a large batch size in each iteration. Extensive experiments support the efficiency of our proposed methods.
\end{abstract}

\section{Introduction}
In recent years, minimax optimization theory has been considered more attractive due to the broad range of machine learning applications, including generative adversarial networks \cite{goodfellow2014generative, arjovsky2017wasserstein, gulrajani2017improved}, adversarial training of deep neural networks \cite{madry2018towards, wang2021adversarial, qu2023prevent}, robust optimization \cite{chen2017robust, mohri2019agnostic, qu2022generalized}, and policy evaluation on reinforcement learning \cite{sutton2018reinforcement, hu2019efficient, zhang2021taming}. At the same time, many machine learning problems can be formulated as compositional optimizations, for example, model agnostic meta-learning \cite{finn2017model, gao2022convergence} and risk-averse portfolio optimization \cite{zhang2020optimal, shapiro2021lectures}. Due to the important growth of these two problems in machine learning fields, the compositional minimax problem should be also clearly discussed, which can be formulated as follows:
\begin{equation}\label{Eq:MinimaxObjective}
    \min_{x\in\mathcal{X}}\max_{y \in \mathcal{Y}} f(g(x), y) \triangleq \mathbb{E}_{\zeta}f( \mathbb{E}_{\xi} [g(x; \xi)], y ;\zeta),
\end{equation}
where $g(\cdot): \mathcal{X} \to \mathbb{R}^d , f(\cdot, \cdot): (\mathbb{R}^d ,\mathcal{Y}) \to \mathbb{R}$, $\xi \in \Xi$, $\zeta \in \Omega$, $\mathcal{X}$ and $\mathcal{Y}$ are convex and compact sets. Suppose that $f(g(x), y)$ is a strongly concave objective function with respect to $y$ for all $x \in \mathcal{X}$. 

Numerous research studies have been dedicated to investigating the convergence analysis of minimax optimization problems \cite{nemirovski2009robust, palaniappan2016stochastic, lin2020gradient, yang2020global, chen2020efficient, rafique2022weakly} across diverse scenarios. Various methodologies have been devised to address these challenges. Approaches such as Stochastic Gradient Descent Ascent (SGDA) have been proposed \cite{lin2020gradient}, accompanied by innovations like variance-reduced SGDA \cite{luo2020stochastic, xu2020enhanced}  that aim to expedite convergence rates. Moreover, the application of Riemannian manifold-based optimization has been explored \cite{huang2020gradient} across different minimax scenarios, showcasing the breadth of methodologies available. However, all of these methods are only designed for the non-compositional problem. It indicates that the stochastic gradient can be assumed as an unbiased estimation of the full gradient of both the two sub-problems. Because it is too difficult to get an unbiased estimation in compositional optimization, these methods cannot be directly used to optimize the compositional minimax problem.

Recent efforts have yielded just two studies on the nonconvex compositional minimax optimization problem \eqref{Eq:MinimaxObjective}, named Stochastic Compositional Gradient Descent Ascent (SCGDA) \cite{gao2021convergence} and Primal-Dual Stochastic Compositional Adaptive (PDSCA) \cite{yuan2022compositional}. However, they only can obtain the sample complexity $O(\kappa^4 /\epsilon^4 )$ for achieving the $\epsilon$-accuracy solution, which limits the applicability in many machine learning scenarios. Consequently, there is a pressing need to devise a more streamlined approach capable of tackling this challenge. In addition, some compositional minimization optimizations have been proposed, such as SCGD \cite{wang2017stochastic}, STORM \cite{cutkosky2019momentum}, and RECOVER \cite{qi2021online}. They may not be directly utilized for the minimax problem \eqref{Eq:MinimaxObjective}, because the minimization of objective $f(g(x))$ depends on the maximization of objective $f(g(x),\cdot)$ for any $x \in \mathcal{X}$. Furthermore, the combined errors from Jacobian and gradient estimators worsen challenges in both sub-problems. We aim to develop an approach effectively tackling the compositional minimax problem \eqref{Eq:MinimaxObjective}, optimizing sample complexities efficiently, without requiring a large batch size.

In this paper, to address the aforementioned challenges, we first develop a novel Nested STOchastic Recursive Momentum (NSTORM) method for the problem \eqref{Eq:MinimaxObjective}. The NSTORM method leverages the variance reduction technique \cite{cutkosky2019momentum} to estimate the inner/outer functions and their gradients. The theoretical result shows that our proposed NSTORM method can achieve the optimal sample complexity of $O(\kappa^3 /\epsilon^3 )$. To the best of our knowledge, NSTORM is the first method to match the best sample complexity in existing minimax optimization studies \cite{huang2023adagda, luo2020stochastic} without requiring a large batch size. We also demonstrate that NSTORM can achieve the same sample complexity under the Polyak-\L ojasiewicz (PL)-condition, which indicates an insightful extension of NSTORM. In particular, the central idea of our proposed NSTORM method and analysis has two aspects: 1) the variance reduction is applied to both function and gradient values, which is different from \cite{gao2021convergence, yuan2022compositional} and 2) the estimator of the inner gradient $\nabla g(x)$ is updated with a projection to ensure that the error can be bounded regardless of the minimization sub-problem. Furthermore, because NSTORM requires a small learning rate to obtain the optimal sample complexity, it may be difficult to set in real-world scenarios. To address this issue, we take advantage of adaptive learning rates in NSTORM and design an adaptive version, called ADAptive NSTORM (ADA-NSTORM). We also demonstrate that ADA-NSTORM can also obtain the same sample complexity as NSTORM, i.e., $O(\kappa^3 /\epsilon^3)$ and performs better in practice without tuning the learning rate manually. 

\section{Related Work}
{\bf Compositional minimization problem:} The compositional minimization optimization problem is common in many machine learning scenarios, e.g., meta-learning \cite{finn2017model} and risk-averse portfolio optimization \cite{zhang2020optimal}, which can be defined as follows:
\begin{equation}
    \min_{x} f(g(x)) \triangleq \mathbb{E}_{\zeta} f(\mathbb{E}_\xi [g(x; \xi)]; \zeta).
\end{equation}
A typical challenge to optimizing the compositional minimization problem is that we cannot obtain an unbiased estimation of the full gradient by SGD, i.e., $\mathbb{E}_{\xi, \zeta} [\nabla g(x;\xi)^\top \nabla_g f(g(x;\xi), \zeta)] \neq \nabla g(x)^\top \nabla_g f(g(x))$. To address this issue, some methods have been developed in the past few years. For example, \cite{wang2017stochastic} uses stochastic gradient for the inner function value when computing the stochastic gradient. However, the convergence rate only can achieve $O(1/\epsilon^8 )$ for the nonconvex objective, which has an obvious convergence gap to the regular SGD method. To improve the convergence speed, some advanced variance reduction techniques have been leveraged into Stochastic Compositional Gradient Descent (SCGD) \cite{wang2017stochastic}. For example, SAGA \cite{zhang2019composite}, SPIDER \cite{fang2018spider}, and STORM \cite{cutkosky2019momentum} were leveraged into SCGD and achieved a better convergence result, i.e., $O(1/\epsilon^3 )$. Recently, some studies \cite{yuan2019stochastic, zhang2021multilevel, jiang2022optimal, pmlr-v162-tarzanagh22a} bridged the gap between stochastic bilevel or multi-level optimization problems and stochastic compositional problems, and developed efficient methods. However, all of these methods only investigated the convergence result for minimization problems, ignoring the maximization sub-problem.

\noindent{\bf Minimax optimization problem:} The minimax optimization problem is an important type of model and leads to many machine learning applications, e.g., adversarial training and policy optimization. Typically, the minimax optimization problem can be defined as follows:
\begin{equation}\label{Eq:minimax}
    \min_{x\in\mathbb{R}^{d_1}}\max_{y \in \mathcal{Y}} f(x, y) \triangleq \mathbb{E}_{\xi}f(x, y ;\xi).
\end{equation}
Note that both $x$ and $y$ in \eqref{Eq:minimax} are trained from the same dataset. Currently, the prevailing approach for solving minimax optimization problems involves alternating between optimizing the minimization and maximization sub-problems. Stochastic Gradient Descent Ascent (SGDA) methods \cite{lin2020gradient, yan2020optimal, yuan2020stochastic} have been proposed as initial solutions to address this problem. Subsequently, accelerated gradient descent ascent methods \cite{luo2020stochastic, xu2020enhanced} emerged, leveraging variance reduction techniques to tackle stochastic minimax problems based on the variance reduction techniques. Additionally, research efforts have been made to explore non-smooth nonconvex-strongly-concave minimax optimization \cite{huang2020gradient, chen2020efficient}. Moreover, \cite{huang2020gradient} proposed the Riemannian stochastic gradient descent ascent method and some variants for the Riemannian minimax optimization problem. \citep{qiu2020single} reformulated nonlinear temporal-difference learning as a minimax optimization problem and proposed the single-timescale SGDA method. However, all of these methods fail to address the compositional structure inherent in the compositional minimax optimization problem presented in \eqref{Eq:MinimaxObjective}.

\section{The Proposed Method}
\subsection{Design Challenge}
Compared to the conventional minimax optimization problem, the main challenge in compositional minimax optimization is that we cannot obtain the unbiased gradient of the objective function $f$. Although we can access the unbiased estimation of each function and its gradient, i.e., $\mathbb{E}_\xi [g(x; \xi)] = g(x)$, $\mathbb{E}_\zeta [f(y; \zeta)] = f(y)$ and $\mathbb{E}_{\zeta}[\nabla f(y; \zeta)] = \nabla f(y)$, it is still difficult to obtain an unbiased estimation of the gradient $ \nabla f(g(x),y)$. This is due to the fact that the expectation over $\xi$ cannot be moved into the gradient $\nabla f$, i.e., $\mathbb{E}_{\xi, \zeta} [\nabla f(g(x; \xi), y;\zeta )] \neq \nabla f(g(x), y)$. Similarly, we cannot get the unbiased estimation of the function value $f$ such that $\mathbb{E}_{\xi, \zeta} [f(g(x ; \xi), y;\zeta)] \neq f(g(x), y)$.

Motivated by the aforementioned challenge, one potential approach to improve the evaluation of both function values and Jacobians is to utilize variance-reduced estimators. These estimators can effectively reduce estimation errors. However, applying variance-reduced estimators directly to minimax optimization in compositional minimization \cite{zhang2020optimal, qi2021online} is not straightforward. This is because if the estimators for Jacobians are not bounded, the estimation error may increase for the maximization sub-problem. In order to address this issue, \cite{gao2021convergence} and \cite{yang2022stochastic} have developed SCGDA and PDSCA methods to approach the compositional minimax optimization, respectively. However, they only obtain the sample complexity as $O(\kappa^4 /\epsilon^4 )$ to achieve $\epsilon$-accuracy solution, which is much slower than existing compositional minimization or minimax optimization methods. To obtain the optimal sample complexity without requiring large batch sizes, our proposed method modifies the STORM \citep{cutkosky2019momentum, jiang2022optimal} estimator and incorporates gradient projection techniques. This modification ensures that the Jacobians can be bounded for the minimization sub-problem and the gradients are projected onto a convex set for the maximization problem, thereby reducing gradient estimation errors.

\subsection{Nested STOchastic Recursive Momentum (NSTORM)}
In this subsection, we will present our proposed method, named the Nested STOchastic Recursive Momentum (NSTORM), to solve the compositional minimax problem in \eqref{Eq:MinimaxObjective}. We aim to find an $\epsilon$-accuracy to achieve low sample complexity without using large batch sizes.

Our proposed NSTORM method is illustrated in Algorithm~\ref{ALG:SCGDA}. Inspired by STROM \citep{cutkosky2019momentum}, the NSTORM method leverages similar variance-reduced estimators for both the two sub-problems in \eqref{Eq:MinimaxObjective}. Note that our goal is to find an $\epsilon$-stationary point with low sample complexity. As we mentioned before because we cannot obtain the unbiased estimation of $\nabla_x f(g(x_t), y_t)$, we use estimators $u_t$ and $v'_t$ to estimate the inner function $g(x_t)$ and its gradient $\nabla g(x_t)$, respectively. In each iteration $t$, the two estimators $u_t$ and $v'_t$ can be computed by:
\begin{equation}\label{Eq:uestimate}
\begin{split}
    u_{t} = (1-\beta_t) & u_{t-1} + \beta_t g(x_{t}; \xi_{t}) + (1-\beta_t)(g(x_{t}; \xi_{t})- g(x_{t-1}; \xi_{t})),
\end{split}
\end{equation}
\begin{equation}\label{Eq:vestimate}
\begin{split}
    v'_t = \Pi_{C_{g}}&[ (1-\beta_t )v'_{t-1} + \beta_t \nabla g(x_{t} ;\xi_{t})  + (1-\beta_t )(\nabla g(x_{t} ;\xi_{t} ) - \nabla g(x_{t-1};\xi_{t}))],
\end{split}
\end{equation}
where $0< \beta_t <1$. Note that the projection operation $\Pi_{C_g}(x) = \mathop{\arg\min}_{\|w\| \leq C_g} \|w- x\|^2$ aims to bound the error of the stochastic gradient estimator, which also facilitates the outer level estimator. More specifically, we need to reduce the variance of the estimator (because true gradients are in the projected domain, projection does not degrade the analysis); on the other side, we must avoid the variance of the estimator accumulating after the outer level, i.e., maximization sub-problem.

\begin{algorithm}[t!]
	\caption{Illustration of NSTORM method.}
	\label{ALG:SCGDA}
 \renewcommand{\algorithmicrequire}{\textbf{Initialization:}}
    \renewcommand{\algorithmicensure}{\textbf{Output:}}
	\begin{algorithmic}[1]
		\REQUIRE $x_{1}$, $y_{1}= y^{*}(x_{1})$, $\gamma$, $\beta_t$, $\alpha_t$, $\eta_t$
        \FOR{$t=1$ to $T$ }
            \STATE Draw a sample $\xi_{t}$;
            \IF {$t = 1$}
            \STATE $u_{t} = g(x_{t}; \xi_{t})$, $v'_{t} = \nabla_{x} g(x_{t}; \xi_{t})$, $v''_{t} = \nabla_{g} f(u_{t} ,y_{t}; \xi_{t})$, and $w_{t} = \nabla_{y} f(u_{t} ,y_{t};\xi_{t})$;
            \ELSE
                \STATE Compute estimators $u_t$ and $v'_t$ by \eqref{Eq:uestimate} and \eqref{Eq:vestimate};
                \STATE Draw another sample $\zeta_{t}$;
                \STATE Compute the estimator $v''_t$ by \eqref{Eq:OuterEstimation};
                \STATE $v_{t} = v'_{t} v''_{t}$;
                \STATE Compute the estimator $w_t$ by \eqref{Eq:westimate};
            \ENDIF
            \STATE Update $x_{t+1}$ and $y_{t+1}$ by \eqref{Eq:Updatexy};
        \ENDFOR
	\end{algorithmic}
\end{algorithm}

For the outer level function, if we use the same strategy to compute the gradient as SCGDA \cite{gao2021convergence}, i.e., $v_t = (v'_t )^{\top}\nabla_g f(u_t,y_t; \zeta_t)$, we have to use large batches and the variance produced by $v_t$ cannot be bounded. Therefore, we also estimate the outer function by the NSTORM method, which results in a tighter bound for $\mathbb{E}[\|v_t - \nabla_{x} f(g(x_t), y_t)\|^2]$. As such, we estimate the gradient $\nabla_g f(u_t ,y_t)$ by $v''_t$, which can be computed by:
\begin{equation}\label{Eq:OuterEstimation}
\begin{split}
    &v''_t = (1-\beta_t )v''_{t-1} + \beta_t \nabla_g f(u_t,y_t;\zeta_t ) + (1-\beta_t )(\nabla_g f(u_t, y_t;\zeta_t ) - \nabla_g f(u_{t-1}, y_{t-1};\zeta_t )).
\end{split}
\end{equation}
Based on the chain rule, the estimated compositional gradient is equal to $v'_t v''_t$, i.e., $v_t = v'_t v''_t$. To avoid using large batches, we estimate the outer function $\nabla_{y} f(g(x_t), y_t)$ by $w_t$ based on the NSTORM estimator, which can be computed by: 
\begin{equation}\label{Eq:westimate}
\begin{split}
    & w_t = (1-\alpha_t )w_{t-1} + \alpha_t \nabla_{y} f(u_t, y_t; \zeta_t)  + (1-\alpha_t )(\nabla_{y} f(u_t ,y_t;\zeta_t) - \nabla_{y} f(u_{t-1}, y_{t-1};\zeta_t)),
\end{split}
\end{equation}
where $0<\alpha_t <1$. After obtaining the estimators $v_t$ and $w_t$, we can use the following strategy to update the parameters $x$ and $y$ in the compositional minimax problem:
\begin{align}\label{Eq:Updatexy}
     x_{t+1} = x_t-\gamma \eta_t v_t , ~~~ y_{t+1} = y_t+\eta_t w_t ,
\end{align}
\begin{wrapfigure}{h}{0.4\textwidth}
  \vspace{-0.45cm}
  \centering
  \includegraphics[width=0.4\textwidth]{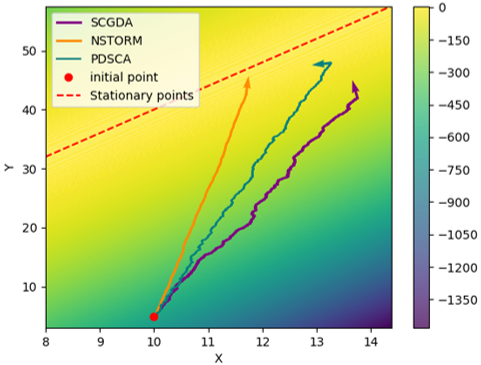}
  \vspace{-0.6cm}
  \caption{Trajectories of different methods for the compositional minimax optimization.}\label{toy_example}
\end{wrapfigure}
where $\gamma$ is the step size, and $\eta_{t}$ is the learning rate. Note that in the first iteration, we evaluate all estimators $u_1, v'_1, v''_1 , w_1$ by directly computing inner level function and gradients, i.e., line 4 in Algorithm~\ref{ALG:SCGDA}. The reason we choose two level estimators is to avoid using large batches, and only need to draw two samples, i.e., $\xi_t$ and $\zeta_t$, to calculate estimators for updating $x_t$ and $y_t$, respectively. The common idea to achieve the optimal solution of minimax \cite{lin2020gradient} is that the step size of $x$ should be smaller than $y$. In addition, the compositional minimization sub-problem will generate larger errors, which incurs more challenges for NSTORM. Particularly, in \eqref{Eq:Updatexy}, if we simply set the same step sizes of $x$ and $y$, our proposed NSTORM method may fail to converge, which is confirmed in the subsequent proof. Therefore, we set $\gamma < 1$ to ensure that the step size of $x$ is less than $y$.

To clearly explain the advantages of NSTORM, a toy example is illustrated in Figure~\ref{toy_example}. Consider the following concrete example of a nonconvex-strongly-concave function.: $f(g(x),y) = -2g(x)^2 + 2g(x)y - \frac{1}{2}y^2$, where $g(x) = 2x$. In Figure~\ref{toy_example}, we simulate these stochastic oracles by adding noise when obtaining function gradients and function values. This function obtains the biased estimation in the minimization sub-problem affording the problem in \eqref{Eq:MinimaxObjective}. It can be observed that NSTORM performs more robustness on noisy and biased estimation, which brings up an opportunity to obtain the optimal solution with shorter and smoother paths compared to other benchmarks.

\subsection{Convergence Analysis of NSTORM}
In what follows, we will prove the convergence rate of our proposed NSTROM method in Algorithm~\ref{ALG:SCGDA}. We first state some commonly-used assumptions for compositional and minimax optimizations \cite{gao2021convergence, wang2017stochastic, xian2021faster, yuan2019stochastic, zhang2020optimal} to facilitate our convergence analysis. In order to simplify the notations and make the paper coherence, we denote $\nabla f(a,b) = (\nabla_{a} f(a, b), \nabla_{b} f(a,b))$ for $(a,b) \in \mathcal{A}\times \mathcal{B}$ in the following assumptions, where $\mathcal{A} = \{g(x)|x \in \mathcal{X}\}$ and $\mathcal{B} = \mathcal{Y}$.

\begin{assumption}\label{Ass:Smooth}
(Smoothness) There exists a constant $L >0$, such that 
\begin{equation}
\|\nabla f(a_1 ,b_1 ) -\nabla f(a_2, b_2 )\| \leq L \|(a_1 ,b_1 ) - (a_2 ,b_2 )\|,\nonumber
\end{equation}
where $\forall (a_1 , b_1 ), (a_2 , b_2 )\in \mathcal{A} \times \mathcal{B}$. In addition, we assume that there exists a constant $L_g >0$, $L_f > 0$ such that
\begin{equation}
\begin{split}
\|\nabla g(x_1; \xi ) - \nabla g(x_2; \xi)\| \leq L_g \|x_1 -x_2 \|,~~\| g(x_1; \xi ) -  g(x_2 ; \xi)\|  \leq L_f \|x_1 -x_2 \|,   \nonumber
\end{split}
\end{equation}
where $\forall x_1 ,x_2 \in \mathcal{X}$.
\end{assumption}

\begin{assumption}\label{Ass:BoundedGrad}
(Bounded Gradient) There exist two constants $C_g >1$ and $C_f >0$, where the two gradients can be bounded by $\mathbb{E}[\|\nabla g(x)\|^2 ] \leq C_g^2 , ~~\forall x \in \mathcal{X}$ and $\mathbb{E}[\|\nabla f(a, b)\|^2 ] \leq C_f^2 , ~~\forall (a, b) \in \mathcal{A} \times \mathcal{B}$.
\end{assumption}

\begin{assumption}\label{Ass:BoundedVariance}
(Bounded Variance) There exist three constants $\sigma_f >0$, $\sigma_g >0$, and $\sigma_{g'} >0$, where the three kinds of variance can be bounded by:
\begin{equation}
\begin{split}
    \mathbb{E}[\|\nabla f(a,b;\zeta) - \nabla f(a, b)&\|^2 ] \leq \sigma_f^2 , ~~ \forall (a, b) \in \mathcal{A}\times \mathcal{B},\\\nonumber
    \mathbb{E}[\|\nabla g(x;\xi) - \nabla g(x)\|^2 &]\leq \sigma_{g'}^2 , ~~x \in \mathcal{X},\\
    \mathbb{E}[\|g(x;\xi) - g(x)\|^2 ]\leq &\sigma_{g}^2 , ~~x \in \mathcal{X}.\nonumber
\end{split}
\end{equation}
\end{assumption}

\begin{assumption}\label{Ass:StrongConcave}
(Strongly Concave) There exists a constant $\mu >0$, such that 
\begin{equation}
f(a, b_1 ) \leq f(a, b_2 ) + \langle \nabla_{b} f(a, b_2 ) , a_1 - b_2 \rangle - \frac{\mu}{2}\|b_1 - b_2 \|^2,\nonumber
\end{equation}
where $\forall a \in \mathcal{A}$ and $\forall b_1 ,b_2 \in \mathcal{B}$.
\end{assumption}

Similar to existing minimax studies \cite{lin2020gradient, xian2021faster}, we also use $\epsilon$-point of $\nabla \Phi(x)$, i.e., $\|\nabla \Phi(x)\| \leq \epsilon$ as the convergence criterion in our focused compositional minimax problem, where $\Phi (x) = \max_{y \in \mathcal{Y}}f(g(x), y)$ and $y^* (x) = \arg\max_{y \in \mathcal{Y}} f(g(x), y)$. We demonstrate that $\Phi(x)$ is differentiable and $(C_g^2 L\kappa+C_f L_g )$-smooth, where $\kappa = L/\mu$, and $y^* (x)$ is $\kappa$-Lipschitz, which has some differences compared to the minimax optimization \cite{lin2020gradient}. We defer detailed proof in the supplementary. 
% Therefore, we propose the following two lemmas to show the property: 
% \begin{lemma}\label{lemma:y^*smooth}
% Under the Assumptions~\ref{Ass:Smooth}, \ref{Ass:BoundedGrad} and \ref{Ass:StrongConcave}, $\forall x_1, x_2 \in \mathcal{X}$ and $\forall y \in \mathcal{Y}$, we can obtain the following:
% \begin{equation}
% \|y^*(x_1)- y^*(x_2)\| \leq C_g \kappa \|x_1- x_2\|.\nonumber
% \end{equation}
% \end{lemma}

% \begin{lemma}\label{lemma:phismooth}
% Under the Assumptions~\ref{Ass:Smooth}-\ref{Ass:StrongConcave} and Lemma~\ref{lemma:y^*smooth}, $\forall x_1, x_2 \in \mathcal{X}$, we can obtain the following:
% \begin{equation}
% \|\nabla \Phi(x_1) - \nabla \Phi(x_2)\| \leq (C_g^2 L\kappa+C_f L_g)\|x_1 - x_2\|.\nonumber
% \end{equation}
% \end{lemma}

% \noindent{\bf Remark 1.} From Lemmas~\ref{lemma:y^*smooth} and \ref{lemma:phismooth}, we can see that the smooth bound of $y^* (x)$ and $\Phi (x)$ in the compositional minimax problem are looser than the conventional minimax problem. Due to the biased gradient estimator, we need to use the Assumption~\ref{Ass:BoundedGrad}, and we can see that the smooth bounds include $C_g$ and $C_f$. Note that we define $L_{\Phi} = C_g^2 L\kappa+C_f L_g$ in short.

Now, we can obtain the following convergence result of our proposed NSTORM method in Algorithm~\ref{ALG:SCGDA} to solve the compositional minimax problem in \eqref{Eq:MinimaxObjective}:

\begin{theorem}\label{The:ConvergenceNonConvex}
    Under the Assumptions~\ref{Ass:Smooth}-\ref{Ass:StrongConcave}, for Algorithm~\ref{ALG:SCGDA}, by setting $\eta_{t} = \frac{1}{(m+t)^{1/3}}$, $m> \operatorname{max}\{125L^3, 8\gamma^3L_\Phi^3, (12L^2c_1^2+4L^2c_2^2)^3, c_1^3, c_2^3\}$, $c_1 \geq 2+4\gamma(C_f^2+C_g^2)+2C_g^2L^2\gamma$, $c_2\geq \frac{2}{3}+180L^2+\frac{36\gamma C_g^2L^2}{\mu^2}$, $\beta_t = c_1\eta_{t-1}^2$, $\alpha_t = c_2\eta_{t-1}^2$, $ 0 < \gamma \leq \frac{1}{\sqrt{B^2+20\kappa^4C_g^2}}$, where $B=\frac{100C_g^2L^4}{\mu^2}+2L_f^2+2L_g^2+12L^2L^2_f+4L^2C_g^2$, we can obtain the following:
    \begin{equation}  
    \frac{1}{T}\sum_{t=1}^{T}\mathbb{E}\|\nabla\Phi(x_t)\| \leq  \frac{m^{1/6}\sqrt{M}}{\sqrt{\gamma T}} +\frac{\sqrt{M}}{\sqrt{\gamma}T^{1/3}},\nonumber
    \end{equation}
where $M = \Phi(x_1)-\Phi_*+ \sigma_g^2+\sigma_{g^{'}}^2+\sigma_f^2 +L^2\sigma_g^2+(c_1^2(2\sigma_g^2+2\sigma^{2}_{g^{'}} + 2\sigma_f^2 + 4L^2\sigma_g^2)+4\sigma_f^2c_2^2)\ln(T+m)$ and $\Phi_*$ represents the minimum value of $\Phi(x)$. 
\end{theorem}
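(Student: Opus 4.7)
The plan is to build a potential/Lyapunov function that aggregates the envelope value, the maximization gap, and the four estimator errors, and then to derive a per-step descent inequality whose tail can be telescoped. Concretely, I would work with
\[
\Psi_t \;=\; \Phi(x_t)-\Phi_*+\tfrac{\lambda_1}{\eta_{t-1}}\|y^*(x_t)-y_t\|^2+\tfrac{\lambda_2}{\eta_{t-1}}E_t,
\]
where $E_t$ collects $\|u_t-g(x_t)\|^2$, $\|v'_t-\nabla g(x_t)\|^2$, $\|v''_t-\nabla_g f(g(x_t),y_t)\|^2$ and $\|w_t-\nabla_y f(g(x_t),y_t)\|^2$, with constants $\lambda_1,\lambda_2$ to be tuned. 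The aim is to show $\mathbb{E}[\Psi_{t+1}-\Psi_t]\le -c\,\gamma\eta_t\,\mathbb{E}\|\nabla\Phi(x_t)\|^2+(\text{small noise})$ for some absolute $c>0$, then telescope and apply Jensen.

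The first ingredient is the standard smoothness inequality for $\Phi$. Since $\Phi$ is $L_\Phi$-smooth with $L_\Phi=C_g^2 L\kappa+C_f L_g$ (stated earlier in the excerpt), the update $x_{t+1}=x_t-\gamma\eta_t v_t$ gives
\[
\Phi(x_{t+1})\le \Phi(x_t)-\tfrac{\gamma\eta_t}{2}\|\nabla\Phi(x_t)\|^2+\tfrac{\gamma\eta_t}{2}\|v_t-\nabla\Phi(x_t)\|^2+\tfrac{L_\Phi\gamma^2\eta_t^2}{2}\|v_t\|^2.
\]
To turn $\|v_t-\nabla\Phi(x_t)\|^2$ into manageable pieces I would split
\[
\|v_t-\nabla\Phi(x_t)\|^2\lesssim \|v'_t v''_t-\nabla g(x_t)^\top\nabla_g f(g(x_t),y_t)\|^2+\|\nabla_x f(g(x_t),y_t)-\nabla\Phi(x_t)\|^2,
\]
bound the first term via the $a\pm b$ trick using the projected estimator's boundedness $\|v'_t\|\le C_g$ and Assumption \ref{Ass:BoundedGrad} to get something proportional to $\|v'_t-\nabla g(x_t)\|^2+\|v''_t-\nabla_g f(g(x_t),y_t)\|^2$, and bound the second using $L$-smoothness together with the $\kappa$-Lipschitzness of $y^*$ (giving $\kappa^2\|y_t-y^*(x_t)\|^2$).

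The second ingredient is a one-step recursion for the dual gap. Using strong concavity of $f(g(x),\cdot)$ and the $w_t$-driven update $y_{t+1}=y_t+\eta_t w_t$, a standard projected-gradient argument (adapted from Lin et al.\ and the nonconvex-strongly-concave literature) yields
\[
\mathbb{E}\|y^*(x_{t+1})-y_{t+1}\|^2\le (1-\tfrac{\mu\eta_t}{2})\mathbb{E}\|y^*(x_t)-y_t\|^2+O(\tfrac{\eta_t}{\mu})\mathbb{E}\|w_t-\nabla_y f(g(x_t),y_t)\|^2+O(\tfrac{\kappa^2\gamma^2\eta_t}{\mu})\mathbb{E}\|v_t\|^2,
\]
where the last term arises because $y^*(x_{t+1})$ moves by at most $\kappa\|x_{t+1}-x_t\|$. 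Here the condition $\gamma\le 1/\sqrt{B^2+20\kappa^4 C_g^2}$ is precisely what absorbs the $\kappa^2\gamma^2$ cross term into the negative $-(\mu\eta_t/2)\|y^*(x_t)-y_t\|^2$ drift.

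The third and most intricate ingredient is STORM-style recursions for the four estimator errors. For each estimator of the form $e_{t+1}=(1-\beta_{t+1})e_t+\beta_{t+1}\cdot\text{sample}+(1-\beta_{t+1})(\text{sample}_t-\text{sample}_{t-1})$, one expands the squared error, takes conditional expectation so the inner-product cross term vanishes, and gets
\[
\mathbb{E}\|e_{t+1}-\text{true}_{t+1}\|^2\le (1-\beta_{t+1})\mathbb{E}\|e_t-\text{true}_t\|^2+2\beta_{t+1}^2\sigma^2+2(1-\beta_{t+1})^2 L_*^2\,\mathbb{E}\|x_{t+1}-x_t\|^2
\]
(plus $\|y_{t+1}-y_t\|^2$ and $\|u_{t+1}-u_t\|^2$ terms for $v''$ and $w$, which propagate $\|u_t-g(x_t)\|^2$). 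The projection in \eqref{Eq:vestimate} is non-expansive toward the true gradient (which itself lies in the ball of radius $C_g$), so it can only decrease the error, leaving the recursion intact. I would sum these four recursions with the weight $\lambda_2/\eta_{t-1}$; the choice $\beta_t=c_1\eta_{t-1}^2$, $\alpha_t=c_2\eta_{t-1}^2$ and the lower bounds on $c_1,c_2$ in the theorem statement are exactly calibrated so that after applying $1/\eta_t - 1/\eta_{t-1}\le \tfrac{2}{3}\eta_{t-1}$ (true because $\eta_t=(m+t)^{-1/3}$), the coefficient of $\mathbb{E}E_t$ becomes non-positive and the remaining $\eta_t\|x_{t+1}-x_t\|^2\propto\gamma^2\eta_t^3\|v_t\|^2$ terms are dominated by the descent term.

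Putting it all together, I would pick $\lambda_1$ so that the $\kappa^2\|y-y^*\|^2$ term in the $\Phi$-descent is absorbed by the dual drift, and $\lambda_2$ so that the two gradient-error terms are absorbed by the estimator recursions, leaving a clean bound $\mathbb{E}[\Psi_{t+1}-\Psi_t]\le -\tfrac{\gamma\eta_t}{4}\mathbb{E}\|\nabla\Phi(x_t)\|^2+\gamma\eta_t^3\cdot(\text{constants}\cdot\sigma^2)$. Telescoping from $1$ to $T$ and using $\sum_{t=1}^T\eta_t^3\le \ln(T+m)$ and $\min_t\eta_t\ge(m+T)^{-1/3}$ yields $\frac{1}{T}\sum_t\mathbb{E}\|\nabla\Phi(x_t)\|^2\le \frac{m^{1/3}M}{\gamma T}+\frac{M}{\gamma T^{2/3}}$; finally Cauchy–Schwarz (or Jensen on the square root) converts this to the advertised bound on the average of $\mathbb{E}\|\nabla\Phi(x_t)\|$.

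The main obstacle I expect is the coupled propagation of errors: the estimator $v''_t$ is evaluated at $u_t$ rather than at $g(x_t)$, and $w_t$ likewise, so the STORM recursions for $v''$ and $w$ pick up $\|u_{t+1}-u_t\|^2$ terms that mix in the inner-level noise. Handling these without blowing up constants requires the projection bound $\|v'_t\|\le C_g$ and a careful choice of $c_1$ (its factor $C_g^2$ in the theorem reflects this), and it is where the asymmetric step-size factor $\gamma<1$ becomes necessary. Once this coupling is tamed, the rest of the argument is bookkeeping driven by the decay profile $\eta_t=(m+t)^{-1/3}$.
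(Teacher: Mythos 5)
Your overall route is the same as the paper's: a Lyapunov function combining $\Phi$, the dual gap, and the four STORM estimator errors; a per-step descent via $L_\Phi$-smoothness plus the decomposition of $\|v_t-\nabla_x f(g(x_t),y_t)\|^2$; the strongly-concave contraction for $\|y_t-y^*(x_t)\|^2$; the estimator recursions with $\beta_t=c_1\eta_{t-1}^2$, $\alpha_t=c_2\eta_{t-1}^2$ and the bound $1/\eta_t-1/\eta_{t-1}=O(\eta_t^2)$; telescoping; and a Jensen/Cauchy--Schwarz step at the end. The one place where your plan, as written, breaks is the weighting of the dual gap inside the potential. You place $\frac{\lambda_1}{\eta_{t-1}}\|y^*(x_t)-y_t\|^2$ in $\Psi_t$, i.e.\ the same $1/\eta_{t-1}$ amplification you (correctly) use for the estimator errors. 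But the dual-gap recursion contracts at rate $\Theta(\mu\eta_t)$ per step and carries cross terms of size $\Theta(\eta_t/\mu)\,\mathbb{E}\|w_t-\nabla_y f(g(x_t),y_t)\|^2$ and $\Theta(\kappa^2\gamma^2\eta_t/\mu)\,\mathbb{E}\|v_t\|^2$. After multiplying by $\lambda_1/\eta_t$, these cross terms lose their $\eta_t$ factor and become constant-order: $\Theta(\lambda_1/\mu)\,\mathbb{E}\|w_t-\nabla_y f\|^2$ and $\Theta(\lambda_1\kappa^2\gamma^2/\mu)\,\mathbb{E}\|v_t\|^2$. The only negative drift available against the first is the $w$-estimator recursion, whose contraction under the $\lambda_2/\eta_{t-1}$ weight is only $\Theta(c_2\eta_t)$, and the only budget against the second is the $-\Theta(\gamma\eta_t)\|v_t\|^2$ term from the $\Phi$-descent; both vanish as $t\to\infty$, so for large $t$ the coefficients of these terms in $\mathbb{E}[\Psi_{t+1}-\Psi_t]$ turn positive no matter how the constants $\lambda_1,\lambda_2,c_1,c_2$ are tuned, and the claimed one-step descent inequality fails.

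The fix is exactly the paper's choice: weight the dual gap by a \emph{constant}, $P_t=\mathbb{E}[\Phi(x_t)]+A\,\mathbb{E}\|y^*(x_t)-y_t\|^2+\frac{1}{\eta_{t-1}}(\text{four estimator errors})$ with $A=20L^2\mu+\frac{4\gamma C_g^2L^2}{\mu}$. The $1/\eta_{t-1}$ amplification is needed only for the estimator errors, because their contraction is $\beta_{t+1}=c_1\eta_t^2$ (quadratic in $\eta_t$), whereas the dual gap already contracts linearly in $\eta_t$ and therefore matches, without rescaling, the $+\gamma\eta_t C_g^2L^2\|y^*(x_t)-y_t\|^2$ term generated by the $\Phi$-descent; with the constant weight the cross terms keep their $\eta_t$ factor (namely $\frac{9A\eta_t}{\mu}\mathbb{E}\|w_t-\nabla_yf\|^2$ and $\frac{5A\kappa^3C_g^2\gamma^2\eta_t}{L}\mathbb{E}\|v_t\|^2$) and are absorbed through the stated lower bound on $c_2$ and upper bound on $\gamma$. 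The rest of your outline (non-expansiveness of the projection since the true Jacobian lies in the $C_g$-ball, the coupling through $\|u_{t+1}-u_t\|^2$ because $v''_t,w_t$ are evaluated at $u_t$, the $\ln(m+T)$ accounting of $\sum_t\eta_t^3$, and the final conversion to an average of $\mathbb{E}\|\nabla\Phi(x_t)\|$) coincides with the paper's argument; only note that $y^*$ is $C_g\kappa$-Lipschitz here (not $\kappa$-Lipschitz), which changes constants but nothing structural.
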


\noindent{\bf Remark 1.} As discussed in the previous section, our proposed NSTORM method in Algorithm~\ref{ALG:SCGDA} results in tighter bounds for all variances, i.e., $\mathbb{E}[\|u_t - g(x_t )\|^2 ]$, $\mathbb{E}[\|v'_t -\nabla g(x_t )\|^2 ]$, $\mathbb{E}[\|v''_{t} -\nabla_g f(u_t,y_t)\|^2]$ and $\mathbb{E}[\|w_{t} - \nabla_{y} f(g(x_{t}) , y_t )\|^2 ]$, which makes NSTROM method converge faster comparing with existing studies. Therefore, it is very important to show the upper bounds of these variances. We will show the detailed analysis in the supplementary. 

\noindent{\bf Remark 2.} Without loss of generality, let $m = O(1)$, we have $M = O(\ln (m+T)) = O(1)$. Therefore, our proposed NSTORM method has a convergence rate of $O\left(1/T^{1/3}\right)$. Let $\frac{1}{T}\sum_{t=1}^{T}\mathbb{E}[\nabla \Phi(x_t )] = O\left(1/T^{1/3}\right) \leq \epsilon$, we have $T = O\left(\kappa^3/ \epsilon^3 \right)$. Because we only need two samples, i.e., $O(1)$, to estimate the stochastic to compute the gradient in each iteration, and need $T$ iterations. Therefore, our NSTORM method requires sample complexity of $O\left(\kappa^3 /\epsilon^3 \right)$ for finding an $\epsilon$-accuracy point of the compositional minimax problem in \eqref{Eq:MinimaxObjective}. Because the SCGDA method \cite{gao2021convergence} only achieves $O\left(\kappa^4 /\epsilon^4 \right)$ with requiring a large batch size as $O(T)$, it is observed that our proposed NSTROM method improves the convergence rate significantly.

\noindent{\bf Remark 3.} It is worth noting that if we moderate the assumption of $f(g(x), y)$ with respect to $y$ to follow the PL-condition instead of strongly-concave in Assumption~\ref{Ass:StrongConcave}, NSTORM can also obtain the sample complexity, i.e., $O(\kappa ^3/\epsilon^3)$. To the best of our knowledge, this is the first study to design the method for compositional minimax optimization, which highlights the extensibility and applicability of NSTORM. The detailed description will be shown in the supplementary.
% if we replace Assumption \ref{Ass:StrongConcave} with the less stringent Assumption \ref{Ass:mu-pl-condition}, the function $f(g(x),y)$ is not necessarily concave with respect to y. However, the convergence rate of NSTORM remains at $O(1/T^{1/3})$ by adjusting the following parameters, $m > \max\{1, c_1^3, c_2^3, (24L^2c_1^2+8L^2c_2^2)^3\}$,  $c_1 \geq 2+360L^2\lambda\gamma(C_f^2+C_g^2)$,  $c_2 \geq\frac{2}{3}+40L^2\lambda^2$, $0<\lambda \geq \frac{m^{1/3}}{2L}$ and $0 < \gamma < \operatorname{min}\{\frac{\lambda\mu^2}{8C_g^2L^2},\frac{B}{10C_g^2L^2},\frac{\lambda\mu^2}{9C_g^2L^2}, \frac{4}{BA_1}\}$, where $A_1 = L_f^2+L_g^2+6L^2L_f^2+2L^2C_g^2$. }

\section{ADAptive-NSTORM (ADA-NSTORM)}
\subsection{Learning Procedure of ADA-NSTORM}
According to the analysis of variance in the NSTORM method, due to the large variance of the two-level estimator, we must select a smaller learning step  to update parameters $x$ and $y$. As a result, this degrades the applicability of NSTORM. Adaptive learning rates \cite{huang2020gradient, huang2023adagda} have been developed to accelerate many optimization methods including (stochastic) gradient-based methods based on momentum technology. Therefore, we leverage adaptive learning rates in NSTORM and propose ADAptive NSTORM (ADA-NSTORM) method, which is illustrated in Algorithm~\ref{alg:AOA}. 

\begin{algorithm}[t!]
    \caption{Illustration of ADA-NSTORM method.}
    \label{alg:AOA}
    \renewcommand{\algorithmicrequire}{\textbf{Initialization:}}
    \renewcommand{\algorithmicensure}{\textbf{Output:}}
    \begin{algorithmic}[1]
        \REQUIRE $x_{1}$, $y_{1}=y^{*}(x_{1})$, $\gamma$, $\lambda$, $\beta_t$, $\alpha_t$, $\eta_t$,  $\tau$, $a_t$, $b_t$.
        \FOR{$t=1$ to $T$ }
            \STATE Draw a sample $\xi_{t}$;
            \IF {$t = 1$}
                \STATE $u_{t} = g(x_{t}; \xi_{t})$, $v'_{t} = \nabla_{x} g(x_{t}; \xi_{t})$, $v''_{t} = \nabla_{g} f(u_{t} ,y_{t}; \xi_{t})$, and $w_{t} = \nabla_{y} f(u_{t} ,y_{t};\xi_{t})$;
            \ELSE
                \STATE Compute the estimator $u_t$ by \eqref{Eq:uestimate};
                \STATE Compute the estimator $v'_t$ by \eqref{Eq:vestimate};
                \STATE Draw another sample $\zeta_{t}$;
                \STATE Compute the estimator $v''_t$ by \eqref{Eq:OuterEstimation};
                \STATE $v_{t} = v'_{t} v''_{t}$;
                \STATE Compute the estimator $w_t$ by \eqref{Eq:westimate};
            \ENDIF
            \STATE Generate the adaptive matrices $A_t \in \mathbb{R}^{d \times d}$ and $B_t \in \mathbb{R}^{p \times p}$ by \eqref{Eq:AdamA} and \eqref{Eq:AdamB} (Adam);
            \STATE Compute the $\tilde{x}_{t+1}$ and $\tilde{y}_{t+1}$ by \eqref{Eq:Tx} and \eqref{Eq:Ty};
            \STATE Compute the $x_{t+1}$ and $y_{t+1}$ by \eqref{Eq:xy};
        \ENDFOR
    \end{algorithmic}
\end{algorithm}

In each iteration $t$, we first use the NSTORM method to update all estimators related to the inner/outer functions and their gradients. At Line 13 in Algorithm~\ref{alg:AOA}, we generate the adaptive matrices $A_t$ and $B_t$ for the two variables $x$ and $y$, respectively. In particular, the general adaptive matrix $A_t \succeq \rho I_{d}$ is updated for the variable $x$, and the global adaptive matrix $B_t $  is for $y$. It is worth noting that we can generate the two matrices $A_t$ and $B_t$ by a class of adaptive learning rates generators such as Adam \cite{kingma2014adam}, AdaBelief, \cite{zhuang2020adabelief},  AMSGrad \cite{j.2018on}, AdaBound \cite{luo2018adaptive}. Due to the space limitation, we only discuss Adam \cite{kingma2014adam} in the main paper, and other Ada-type generators will be deferred in the supplementary. In particular, the Adam generator can be computed by:
\begin{equation}\label{Eq:AdamA}
    a_{t}=\tau a_{t-1}+(1-\tau)v_t^2 ,~~ A_{t}=\operatorname{diag}\left(\sqrt{a_{t}}+\rho\right),
\end{equation}
\begin{equation} \label{Eq:AdamB}
    b_{t}=\tau b_{t-1}+(1-\tau) w_t^2 ,~~ B_{t}=\operatorname{diag}(\sqrt{b_t}+\rho),
\end{equation}
where $t \geq 1$, $\tau \in (0,1)$ and $\rho >0$. Due to the biased full gradient in the compositional minimax problem, we leverage the gradient estimator $v_t$ and $w_t$ to update adaptive matrices instead of simply using the gradient, i.e., $\nabla_x f(x_t, y_t; \xi_t )$ and $\nabla_y f(x_t, y_t; \zeta_t )$ \citep{huang2020gradient, huang2023adagda}. After obtaining adaptive learning matrices $A_t$ and $B_t$, we use adaptive stochastic gradient descent to update the parameters $x$ and $y$ as follows:
\begin{equation*}\label{Eq:Tx}
\begin{aligned}
    &\tilde{x}_{t+1} =x_{t}-\gamma A_{t}^{-1} v_{t} = \operatorname{argmin}_{x \in \mathbb{R}^{d}}\left\{\left\langle x, v_{t}\right\rangle+\frac{1}{2 \gamma}\left(x-x_{t}\right)^{T} A_{t}\left(x-x_{t}\right)\right\},
\end{aligned}    
\end{equation*}

\begin{equation*}\label{Eq:Ty}
\begin{aligned}
    &\tilde{y}_{t+1} =y_{t}-\lambda B_{t}^{-1} w_{t} =\operatorname{argmin}_{y \in \mathbb{P}^{d}}\left\{\left\langle y, w_{t}\right\rangle+\frac{1}{2 \lambda}\left(y-y_{t}\right)^{T} B_{t}\left(y-y_{t}\right)\right\},
\end{aligned}    
\end{equation*}

where $\gamma$ and $\lambda$ are step sizes for updating $\tilde{x}$ and $\tilde{y}$, respectively. At Line 15 in Algorithm 2, we use the momentum iteration to further update the primal variable $x$ and the dual variable $y$ as follows: 
\begin{equation}\label{Eq:xy}
\begin{split}
    x_{t+1} &= x_{t} + \eta_t(\tilde{x}_{t+1}-x_t),\\
    y_{t+1} &=y_{t}+\eta_t (\tilde{y}_{t+1}-y_t).
\end{split}
\end{equation}

\subsection{Convergence analysis of ADA-STORM}
We will introduce one additional assumption to facilitate the convergence analysis of ADA-NSTORM.
\begin{assumption}\label{ass ab bound}
In Algorithm~\ref{alg:AOA}, the adaptive matrices $A_t$, $\forall t \geq 1$ for updating the variables $x$ satisfies $A^{T}_t =A_t $ and $\lambda_{min}(A_t) \geq  \rho \ > 0$, where $\rho$ is an appropriate positive number. We consider the adaptive matrics $B_t $, $\forall t \geq 1$ for updating the variables $y$ satisfies $\hat{b}I_p \geq B_t  \geq bI_p > 0$, where $I_p$ denotes a $d$-dimensional identity matrix.
\end{assumption}
\noindent{\bf Remark 4.} Assumption~\ref{ass ab bound} ensures that the adaptive matrices $A_t$, $\forall t \geq 1$ are positive definite, which is widely used in \cite{huang2020gradient, huang2023adagda, huang2023enhanced}. This Assumption also guarantees that the global adaptive matrices $B_t$, $\forall t \geq 1$ are positive definite and bounded, resulting in mild conditions. To support the mildness of this assumption, we will empirically show that the learning performance does not have obvious changes by varying the bound of $a_t$ and $b_t$. In particular, we also show the requirement of some popular Adam-type generators in the supplementary and provide the corresponding advice on compositional minimax optimization.
% were bounded by adding projections to $a_t$ and $b_t$ and constantly changing the projection boundary values, and finding that the projection boundary continued to expand, while the final result changed very little, indicating that $a_t$ and $b_t$ are bounded, which proved the moderation of the Assumption \ref{ass ab bound}.}

% For example, in the context of the problem $\max_{x}\mathbb{E}[f(x;\xi)]$, \cite{huang2023adagda} applies a global adaptive learning rate to the update form $x_t = x_{t-1} - \eta \frac{\nabla f(x_{t-1};\xi_{t-1})}{b_t}, b_t^2 = b_{t-1}^2 + \|\nabla f(x_{t-1}; \xi_{t-1})\|^2 , b_0 > 0, \eta>0$ for all $t \geq 1$. This update form can be equivalently expressed as $x_t = x_{t-1} - \eta B_t^{-1} \nabla f(x_{t-1}; \xi_{t-1})$ with $B_t = b_t I_p$ and $b_t \geq \cdots \geq b_0 >0$. Similarly, \cite{huang2023enhanced} utilizes a global adaptive learning rate in the update form $x_{t+1} = x_t -\eta g_t/b_t$, where $g_t$ is the stochastic gradient and $b_t = \left(w+ \sum_{i=1}^{t}\|\nabla f(x_i ;\xi_i )\|^2\right)^\alpha /k$, $k>0$, $w>0$ and $\alpha \in (0,1)$. This can be rewritten as $x_{t+1} = x_t - \eta B_t^{-1}g_t$ with $B_t = b_t I_p$ and $b_t \geq \cdots \geq b_0 = \frac{w^\alpha}{k} >0$. In addition, in the context of the problem, $\min_{x} f(x) = \mathbb{E}[f(x;\xi)]$ to approach the stationary point, i.e., $\nabla f(x) =0$ or even $\nabla f(x;\xi) =0$, $\forall \xi$, these global adaptive learning rates are generally bounded, i.e., $\hat{b} \geq b_t \geq b >0$, $\forall t \geq 1$.

\begin{figure*}
    \centering
    \includegraphics[width=\textwidth]{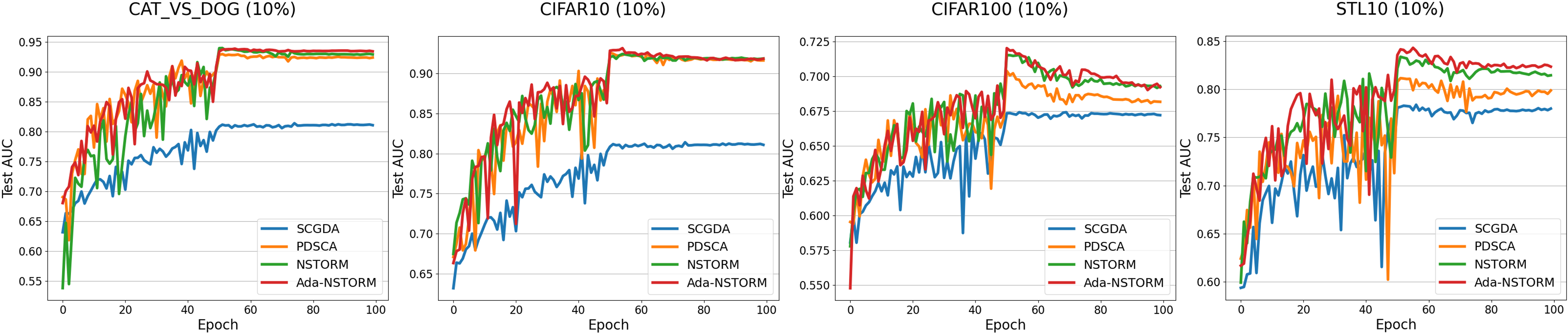}
    \caption{Convergence performance on four benchmark datasets with an imbalance ratio of 10\%}
    \label{fig:Convergence_10}
\end{figure*}

\begin{theorem}\label{theorem ago2}
    Given Assumptions~\ref{Ass:Smooth}-\ref{ass ab bound}, for Algorithm~\ref{alg:AOA}, by setting $\eta_t=\frac{1}{(m+t)^{1/3}}$,  $m > \max \{\frac{8L_{\Phi}^3\gamma^3}{\rho^3},(10L^2c_1^2+4L^2c_2^2)^3, c_1^3, c_2^3\}$,  $c_1 \geq 2+\frac{5\gamma(2C_f^2+2C_g^2+C_g^2L^2)}{\rho}$ , $c_2 \geq \frac{2}{3} + \frac{125\lambda L^2}{2\mu b}+\frac{125\gamma C_g^2\kappa^2\hat{b}}{3b}$, $\gamma  \leq \frac{\rho}{4\sqrt{B_1^2+\rho B_2}}$, where $B_1 = \frac{50C_g^2\kappa^4\hat{b}}{\lambda^2}$, $B_2 = \frac{70\kappa^3L}{\lambda}+2L_f^2+2L_g^2+12L^2L_f^2+4L^2C_g^2$, $\beta_{t+1}=c_1\eta_t^2\leq c_1\eta_t < 1$, $\alpha_{t+1}=c_2\eta_t^2\leq c_2\eta_t < 1$, $0 < \lambda \leq \frac{b}{6L}$, we can obtain the following:
    \begin{equation}
    \begin{split}
    &\frac{1}{T}\sum^T\limits_{t=1}\mathbb{E}[\|\nabla\Phi(x_t)\|] \\
    &\leq \sqrt{\frac{1}{T}\sum^T\limits_{t=1}\mathbb{E}[\|A_t\|^2]} \cdot \left(\frac{2\sqrt{5Mm^{1/3}}}{\sqrt{\gamma T}}+\frac{2\sqrt{5M}}{\sqrt{\gamma}T^{1/3}}\right),\nonumber
    \end{split}    
    \end{equation}
    where $M =  (\Phi(x_1)-\Phi_*+ \sigma_g^2+\sigma_{g^{'}}^2+\sigma_f^2 +L^2\sigma_g^2 )/\rho + ((2c_1^2(\sigma^2_g + \sigma^{2}_{g^{'}} + \sigma^2_f + 6L^2\sigma^2_g ) +4c_2^2\sigma_f^2)\ln(m+T))/\rho$ and $\Phi_*$ represents the minimum value of $\Phi(x)$. 
\end{theorem}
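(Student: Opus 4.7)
The plan is to mirror the convergence analysis of Theorem~\ref{The:ConvergenceNonConvex} for NSTORM while carefully tracking the effect of the preconditioners $A_t$ and $B_t$ in the descent inequality. By Assumption~\ref{ass ab bound}, the effective step sizes satisfy $\|\tilde{x}_{t+1}-x_t\|\leq(\gamma/\rho)\|v_t\|$ and $\|\tilde{y}_{t+1}-y_t\|\leq(\lambda/b)\|w_t\|$, and the momentum update in \eqref{Eq:xy} contributes an extra factor $\eta_t$, so the movement per step behaves like the NSTORM update but with $\gamma$ and $\lambda$ replaced by $\gamma/\rho$ and $\lambda/b$. I will first reprove the four recursive variance bounds for $\mathbb{E}\|u_t-g(x_t)\|^2$, $\mathbb{E}\|v'_t-\nabla g(x_t)\|^2$, $\mathbb{E}\|v''_t-\nabla_g f(u_t,y_t)\|^2$, and $\mathbb{E}\|w_t-\nabla_y f(g(x_t),y_t)\|^2$; each takes the STORM form $\mathbb{E}\|\cdot\|^2_{t+1}\leq(1-\beta_{t+1})\mathbb{E}\|\cdot\|^2_t+O(\beta_{t+1}^2)+O(L^2)\mathbb{E}\|x_{t+1}-x_t\|^2$, and these can be composed via the chain rule to control $\mathbb{E}\|v_t-\nabla_x f(g(x_t),y_t)\|^2$ using Assumption~\ref{Ass:BoundedGrad}.

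Next I would invoke the $L_\Phi$-smoothness of $\Phi$ (with $L_\Phi=C_g^2L\kappa+C_f L_g$ as established in the preamble to Theorem~\ref{The:ConvergenceNonConvex}) to obtain
\begin{equation*}
\Phi(x_{t+1})-\Phi(x_t)\leq -\eta_t\gamma\langle\nabla\Phi(x_t),A_t^{-1}v_t\rangle+\tfrac{L_\Phi\eta_t^2\gamma^2}{2\rho^2}\|v_t\|^2.
\end{equation*}
Writing $\nabla\Phi(x_t)=\nabla_x f(g(x_t),y^*(x_t))$ and splitting $A_t^{-1}v_t=A_t^{-1}\nabla_x f(g(x_t),y_t)+A_t^{-1}(v_t-\nabla_x f(g(x_t),y_t))$, the cross terms can be bounded by Young's inequality so as to yield a clean negative term proportional to $-\eta_t\gamma\|\nabla\Phi(x_t)\|^2/\|A_t\|^2$ plus contributions involving $\|y_t-y^*(x_t)\|^2$ (from the $\kappa$-Lipschitz property of $y^*$) and the gradient-estimator variance.

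For the dual iterate I would adapt the Lin et~al.\ style argument: using Assumption~\ref{Ass:StrongConcave} and the bounds on $B_t$, the quantity $\|y_t-y^*(x_t)\|^2$ contracts at rate $1-\Theta(\eta_t\lambda\mu/\hat b)$ per step, up to variance and drift-in-$x$ corrections scaled by $\kappa^2$. I would then assemble a Lyapunov function of the form
\begin{equation*}
\mathcal{L}_t=\frac{\Phi(x_t)-\Phi_*}{\gamma}+\frac{C_1}{\lambda}\|y_t-y^*(x_t)\|^2+\frac{C_2}{c_1}\big(\mathbb{E}\|u_t-g(x_t)\|^2+\mathbb{E}\|v'_t-\nabla g(x_t)\|^2+\mathbb{E}\|v''_t-\nabla_g f\|^2\big)+\frac{C_3}{c_2}\mathbb{E}\|w_t-\nabla_y f\|^2,
\end{equation*}
and exploit the parameter choices on $c_1,c_2,\gamma,\lambda$ stated in the theorem to ensure $\mathbb{E}[\mathcal{L}_{t+1}-\mathcal{L}_t]\leq-\tfrac{\eta_t}{20}\cdot\tfrac{\|\nabla\Phi(x_t)\|^2}{\|A_t\|^2}+O(\eta_t^3(\sigma_g^2+\sigma_{g'}^2+\sigma_f^2))$. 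Telescoping from $t=1$ to $T$ and using $\sum_t\eta_t^3\leq\ln(m+T)$ with $\eta_T\gtrsim(m+T)^{-1/3}$ absorbs the noise terms into the quantity $M$ defined in the statement, producing
\begin{equation*}
\frac{1}{T}\sum_{t=1}^{T}\mathbb{E}\!\left[\frac{\|\nabla\Phi(x_t)\|^2}{\|A_t\|^2}\right]\leq \frac{20Mm^{1/3}}{\gamma T}+\frac{20M}{\gamma T^{2/3}}.
\end{equation*}

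Finally I would apply Cauchy--Schwarz in the form $\mathbb{E}\|\nabla\Phi(x_t)\|=\mathbb{E}[(\|\nabla\Phi(x_t)\|/\|A_t\|)\cdot\|A_t\|]$ to pull out the $\sqrt{(1/T)\sum_t\mathbb{E}\|A_t\|^2}$ factor, recovering the stated bound. I expect the main obstacle to be the bookkeeping in the descent step: once the inner product $-\eta_t\gamma\langle\nabla\Phi(x_t),A_t^{-1}v_t\rangle$ is decomposed into a $\|\nabla\Phi\|^2/\|A_t\|^2$ term and several cross terms, the magnitudes of the positive contributions from variances, from the primal-dual gap (scaled by $\kappa$ through the $y^*$ Lipschitz constant), and from the $\|v_t\|^2$ quadratic must be dominated simultaneously, which is precisely what forces the delicate conditions on $c_1,c_2,\gamma,\lambda$ and the lower bound on $m$ in the theorem statement.
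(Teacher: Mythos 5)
Your overall skeleton matches the paper's: STORM-type recursions for the four estimator errors, an $L_\Phi$-smoothness descent step for $\Phi$ using $\tilde{x}_{t+1}-x_t=-\gamma A_t^{-1}v_t$, a contraction of $\|y_t-y^*(x_t)\|^2$ at rate $1-\Theta(\eta_t\lambda\mu/\hat b)$, a potential function, and a final Cauchy--Schwarz step that pulls out $\sqrt{\tfrac{1}{T}\sum_t\mathbb{E}\|A_t\|^2}$. However, two concrete pieces of your plan would not go through as written. First, your Lyapunov function weights the estimator-error terms by the \emph{constants} $C_2/c_1$ and $C_3/c_2$. Each recursion only supplies a negative contribution $-\beta_{t+1}\mathbb{E}\|u_t-g(x_t)\|^2=-c_1\eta_t^2\,\mathbb{E}\|u_t-g(x_t)\|^2$ (and similarly for the other three), which is $O(\eta_t^2)$, while the descent lemma injects positive terms of size $O(\gamma\eta_t)$ times the same quantities; with constant weights the cancellation fails once $\eta_t$ is small. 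The paper's potential instead weights these terms by $1/\eta_{t-1}$, so that $\frac{1-\beta_{t+1}}{\eta_t}-\frac{1}{\eta_{t-1}}\le\big(\tfrac{2}{3}-c_1\big)\eta_t$ yields an $O(\eta_t)$ negative coefficient that dominates under the stated choices of $c_1,c_2$; this weighting is also exactly what turns the $\beta_{t+1}^2$ noise into $c_1^2\eta_t^3$ terms and hence the $\ln(m+T)$ factor in $M$.

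Second, your treatment of the descent inner product sacrifices the mechanism that absorbs the drift terms. Splitting $A_t^{-1}v_t$ around $\nabla_x f(g(x_t),y_t)$ and applying Young's inequality gives you a negative $\|\nabla\Phi(x_t)\|^2/\|A_t\|$-type term but no negative quadratic in the step. The paper instead uses $A_t\succeq\rho I$ (via the generalized-projection inequality, Lemma~\ref{lemma:v,g_x}) to retain $-\frac{\rho\eta_t}{2\gamma}\|\tilde{x}_{t+1}-x_t\|^2$ in Lemma~\ref{lemma:Phi_a}, and this term is what cancels the $O(\eta_t^2)\|\tilde{x}_{t+1}-x_t\|^2$ drift contributions of all four recursions and, crucially, the $\frac{25\kappa^2 b_t}{6\mu\lambda}\|\tilde{x}_{t+1}-x_t\|^2$ term from the dual contraction; this is precisely where the condition $\gamma\le\frac{\rho}{4\sqrt{B_1^2+\rho B_2}}$ with $B_1=\frac{50C_g^2\kappa^4\hat b}{\lambda^2}$ comes from. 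Without that retained quadratic, the "$\|v_t\|^2$ bookkeeping" you flag at the end has no negative counterpart (note $v_t$ is not uniformly bounded, since only $v_t'$ is projected). Relatedly, the paper never extracts $\|\nabla\Phi(x_t)\|^2/\|A_t\|^2$ from the descent at all: it keeps negative multiples of $\|y^*(x_t)-y_t\|^2$, the four estimator errors, and $\|\tilde{x}_{t+1}-x_t\|^2$ in the potential decrease, and only at the end builds the surrogate $\mathcal{M}_t$ (a weighted sum of these norms plus $\frac{1}{\gamma}\|\tilde{x}_{t+1}-x_t\|$), shows $\|\nabla\Phi(x_t)\|\le\mathcal{M}_t\|A_t\|$ by triangle inequalities, and then applies Jensen and Cauchy--Schwarz. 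Your direct route could likely be repaired (e.g., by working in the $A_t^{-1}$-weighted inner product to keep both negative terms, and lower-bounding $1/\|A_t\|\ge\rho/\|A_t\|^2$), but as proposed the cancellation structure that forces the theorem's parameter conditions is missing.
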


\noindent{\bf Remark 5.} Without loss of generality, let $b = O(1)$ and $\hat{b} = O(1)$. The proof of Theorem~\ref{theorem ago2} is deferred in supplementary. From Theorem~\ref{theorem ago2}, given $\gamma  \leq \frac{\rho}{4\sqrt{B_1^2+\rho B_2}}$, where $B_1 = \frac{50C_g^2\kappa^4\hat{b}}{\lambda^2}$, $B_2 = \frac{70\kappa^3L}{\lambda}+2L_f^2+2L_g^2+12L^2L_f^2+4L^2C_g^2$, $\lambda \leq \frac{b}{6L}$,  we can see that $\gamma = O(1/\kappa^2)$, $\lambda = O(1/L)$, $M = O(1)$. Then, we can get the convergence rate $O\left(1/ T^{1/3}\right)$. Therefore, to achieve $\epsilon$-accuracy solution, the total sample complexity is $ O(\kappa^3 / \epsilon^3 )$. It is worth noting that the term $\sqrt{\frac{1}{T}\sum^T\limits_{t=1}\mathbb{E}[\|A_t\|^2]}$ is bounded to
the existing adaptive learning rates in Adam algorithm \cite{kingma2014adam} and so on. \cite{yuan2022compositional} develops a PDSCA method with adaptive learning rates to approach the compositional minimax optimization problem and achieves $O(1/\epsilon^4)$ complexity. However, \cite{yuan2022compositional} sets $\eta = O(1/\sqrt{T})$, which is difficult to know in practice.

\section{Experiments}
In this section, we present the results of our experiments that assess the performance of two proposed methods: NSTORM and ADA-NSTORM in the deep AUC problem \cite{yuan2020robust, yuan2022compositional}. To establish a benchmark, we compare our proposed methods against existing compositional minimax methods, namely SCGDA \citep{gao2021convergence} and PDSCA \cite{yuan2022compositional}. To optimize the AUC loss, the outer function corresponds to an AUC loss and the inner function represents a gradient descent step for minimizing a cross-entropy loss, as \cite{yuan2022compositional}. The deep AUC problem can be formulated as follows:
\begin{equation}\label{AUC loss}
\begin{aligned}
     &\min_{x, a, b} \max_{y \in \Omega} \Theta\left(x-\alpha \nabla L_{\mathrm{AVG}}(x), a, b, y\right).
\end{aligned}
\end{equation}
The function $\Theta$ is to optimize the AUC score. The inner function $x-\alpha \nabla L_{\mathrm{AVG}}(x)$ aims to optimize the average cross-entropy loss $L_{\mathrm{AVG}}$. $\alpha$ is a hyper-parameter. 

Rather than the deep AUC problem, we also evaluate our proposed methods on the risk-averse portfolio optimization problem \cite{shapiro2021lectures, zhang2021taming} and the policy evaluation in reinforcement learning \cite{yuan2019stochastic, zhang2019stochastic}. All experimental setups and results will be shown in the supplementary.

\noindent{\bf Learning Model and Datasets.}
We employ four distinct image classification datasets in our study: CAT\_vs\_DOG, CIFAR10, CIFAR100 \cite{krizhevsky2009learning}, and STL10 \cite{coates2011analysis}. To create imbalanced binary variants prioritizing AUC optimization, we followed \cite{yuan2020robust} methodology. Similarly, as in \cite{yuan2022compositional}, ResNet20 \cite{he2016deep} was used. Weight decay was consistently set to 1e-4. Each method was trained with batch size 128, spanning 100 epochs. We varied parameter $m$ ({50, 500, 5000}) and set $\gamma$ ({1, 0.9, 0.5}). Learning rate $\eta_t$ reduced by 10 at 50\% and 75\% training. Also, $\beta$ is set to 0.9. For robustness, each experiment was conducted thrice with distinct seeds, computing mean and standard deviations. Notably, the ablation study focused on the CIFAR100 dataset, 10\% imbalanced ratio, as detailed in the main paper.

% Please add the following required packages to your document preamble:
% \usepackage{graphicx}
\begin{table*}[t!]
\centering

%\resizebox{\columnwidth}{!}{%
\begin{small}
\begin{tabular}{c|cccc|cccc}
\hline
\multicolumn{1}{l|}{Datasets} & \multicolumn{4}{c|}{CAT\_vs\_DOG} & \multicolumn{4}{c}{CIFAR10}                                                \\ \hline
imratio           & SCGDA        & PDSCA                 & NSTORM         & ADA-NSTORM         & SCGDA        & PDSCA                 & NSTORM         & ADA-NSTORM      \\ \hline
\multirow{2}{*}{1\%}         & 0.750 & \textbf{0.792} & 0.786 & 0.786  & 0.679 & 0.699 & 0.689 & \textbf{0.703}          \\ 
   & $\pm$0.004 & $\pm$\textbf{0.009}  &  $\pm$0.001 & $\pm$0.011 & $\pm$0.011 & $\pm$0.008 &  $\pm$0.003 & $\pm$\textbf{0.008} \\\hline
\multirow{2}{*}{5\%}    & 0.826 & 0.890 & 0.895  & \textbf{0.901} & 0.782 & 0.878 & 0.882  & \textbf{0.894}  \\ 
& $\pm$0.006 & $\pm$0.006 & $\pm$0.006 & \textbf{$\pm$0.004} & $\pm$0.006 & $\pm$0.003 & $\pm$0.002 & \textbf{$\pm$0.005}\\\hline
\multirow{2}{*}{10\%}      & 0.857 & 0.932   & 0.932   & \textbf{0.933} & 0.818 & 0.926   & 0.926   & \textbf{0.931} \\
& $\pm$0.009 & $\pm$0.002 & $\pm$0.005 & $\pm$\textbf{0.002}
& $\pm$0.004 & $\pm$0.001 & $\pm$0.001 & $\pm$\textbf{0.001} \\\hline
\multirow{2}{*}{30\%}   & 0.897 & 0.969  & 0.970  &\textbf{0.972} & 0.882 & 0.953  & 0.953  & \textbf{0.955}  \\ 
& $\pm$0.008 & $\pm$0.002 & $\pm$0.001 & $\pm$\textbf{0.001} & $\pm$0.005 & $\pm$0.001 & $\pm$0.002 & $\pm$\textbf{0.001} \\\hline
\multicolumn{1}{l|}{Datasets} & \multicolumn{4}{c|}{CIFAR100} & \multicolumn{4}{c}{STL10}                               \\ \hline
imratio           & SCGDA        & PDSCA                 & NSTORM         & ADA-NSTORM         & SCGDA        & PDSCA                 & NSTORM         & ADA-NSTORM      \\ \hline
\multirow{2}{*}{1\%}    & 0.588 & 0.583 & 0.583 & \textbf{0.593 }   & 0.670 & \textbf{0.682} & 0.659 & 0.657     \\ 
   & $\pm$0.007 & $\pm$0.004  &  $\pm$0.007 & $\pm$\textbf{0.002} & $\pm$0.006 & $\pm$\textbf{0.016}  &  $\pm$0.013 & $\pm$0.003 \\\hline
\multirow{2}{*}{5\%}    & 0.641 & 0.651& 0.648  & \textbf{0.655} & 0.734 & 0.775& 0.779  & \textbf{0.781}  \\ 
 & $\pm$0.007 & $\pm$0.006 & $\pm$0.003 & \textbf{$\pm$0.007} & $\pm$0.007 & $\pm$0.003 & $\pm$0.005 & \textbf{$\pm$0.007}\\\hline
\multirow{2}{*}{10\%}     & 0.673 & 0.708   & 0.709   & \textbf{0.715}  & 0.779 & 0.824   & 0.827   & \textbf{0.833}  \\
& $\pm$0.005 & $\pm$0.006 & $\pm$0.002 & $\pm$\textbf{0.006} 
& $\pm$0.014 & $\pm$0.011 & $\pm$0.007 & $\pm$  \textbf{0.004}\\\hline
\multirow{2}{*}{30\%}    & 0.713 & 0.787  & 0.786  & \textbf{0.787} & 0.843 & \textbf{0.901}  & 0.893  & 0.895 \\ 
& $\pm$0.002 & $\pm$0.007 & $\pm$0.001 & $\pm$\textbf{0.001} & $\pm$0.010 & $\pm$\textbf{0.002} & $\pm$0.005 & $\pm$0.007  \\\hline
\end{tabular}%
\end{small}
\caption{Testing performance on the four datasets by varying imbalanced ratios.}
\label{tab:performance}
\end{table*}

\begin{figure*}[t!]
\centering

\begin{minipage}{0.24\columnwidth}
  \centering
\includegraphics[width=\textwidth]{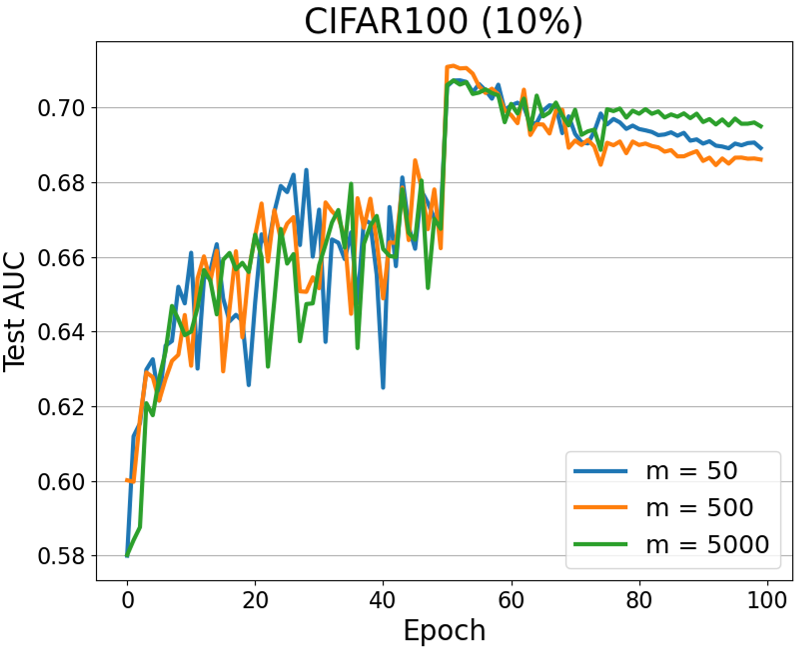}
\caption{Impact of $m$ for NSTORM.}
\label{fig:lr_m_cifar100}
\end{minipage}%
\hfill
\begin{minipage}{0.24\columnwidth}
  \centering
\includegraphics[width=\textwidth]{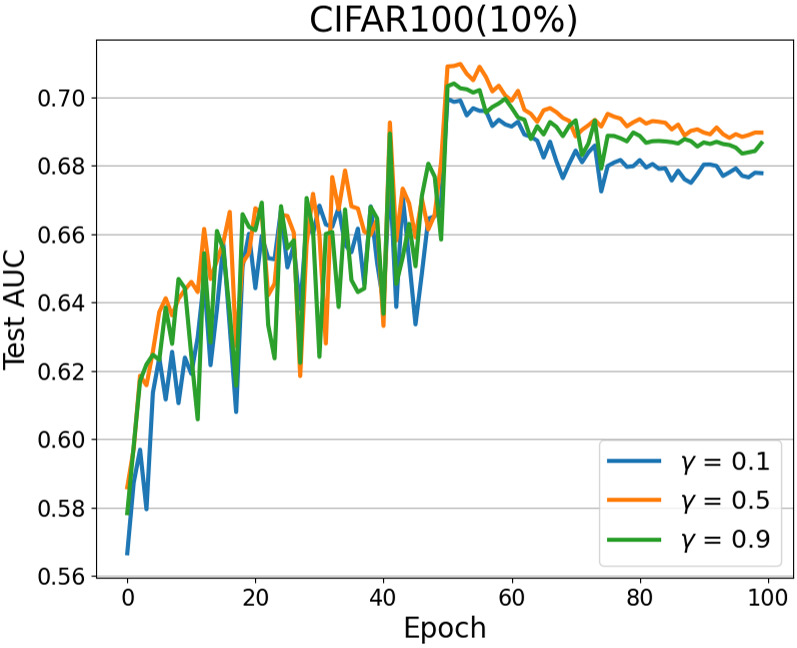}
\caption{Impact of $\gamma$ for NSTORM.}
\label{fig:lr_gamma_cifar100}
\end{minipage}%
\hfill
\begin{minipage}{0.24\columnwidth}
  \centering
\includegraphics[width=\textwidth]{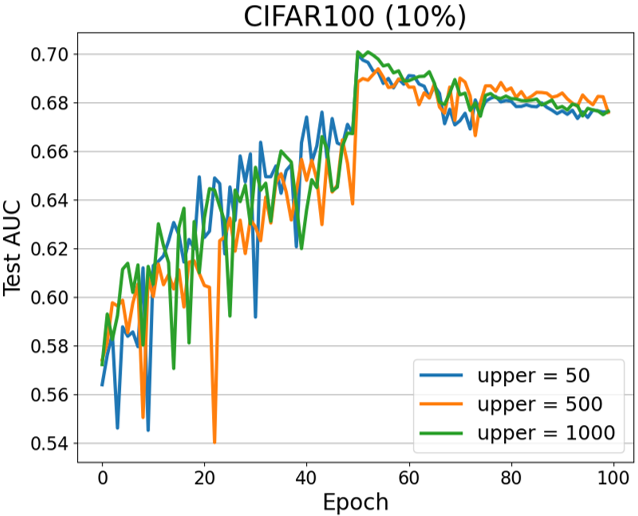}
\caption{Impact of upper bound on $a_t$ and $b_t$.}
\label{fig:upper_bound_cifar100}
\end{minipage}
\hfill
\begin{minipage}{0.24\columnwidth}
  \centering
\includegraphics[width=\textwidth]{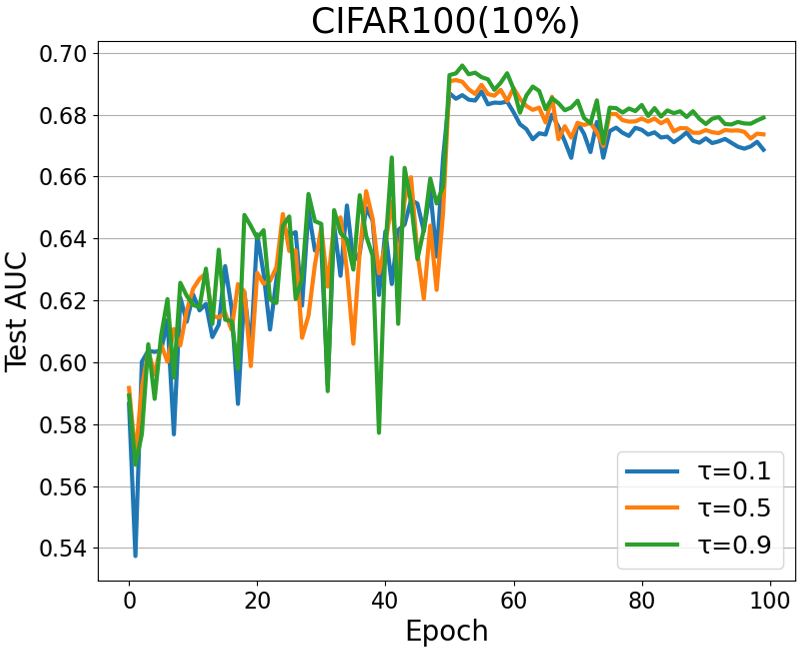}
\caption{Impact of $\tau$ for ADA-NSTORM.}
\label{fig:tau_cifar100}
\end{minipage}%
\end{figure*}

\subsection{Performance Evaluation}
The training progression of deep AUC is illustrated in Figure~\ref{fig:Convergence_10}. It shows the notable swiftness of convergence exhibited by our two proposed methods. Furthermore, across all four datasets, our methods consistently yield the most favorable test AUC outcomes. It is evident from the results depicted in Figure~\ref{fig:Convergence_10} that even NSTORM, which lacks an adaptive generator, surpasses the performance of SCGDA and PDSCA methods, thus reinforcing the validity of our theoretical analysis. Intriguingly, despite ADA-NSTORM sharing a theoretical foundation with NSTORM, it outperforms the testing AUC performance in the majority of scenarios.

The testing AUC outcomes are summarized in Table \ref{tab:performance}, with the optimal AUC values among 100 epochs. Combining these findings with Figure \ref{fig:Convergence_10}, a recurring pattern emerges: best testing AUC performance is typically attained around the 50th epoch, followed by overfitting. Both Table \ref{tab:performance} and Figure \ref{fig:Convergence_10} show that our proposed methods consistently outperform benchmarks. ADA-NSTORM achieves an impressive AUC of 0.833 on STL10 with a 10\% imbalanced ratio. Notable exceptions are the CAT\_vs\_DOG and CIFAR100 datasets with a 1\% imbalanced ratio, possibly due to their proximity to the training set's distribution.

\subsection{Ablation Study}
We conducted experiments to fine-tune parameters $m$ and $\gamma$ for NSTORM, as demonstrated in Figure~\ref{fig:lr_m_cifar100} and Figure~\ref{fig:lr_gamma_cifar100}. In Theorem~\ref{The:ConvergenceNonConvex}, we consider $m$ as the lower bound, controlling the learning rate $\eta_t$. Interestingly, adjusting $m$ yields minimal alterations. On the other hand, $\gamma$ determines the relative step sizes of $x$ and $y$. Figure~\ref{fig:lr_gamma_cifar100} reveals that an optimal $\gamma$ value of approximately 0.5 yields a testing AUC of 0.827.

To assess the influence of $a_t$ and $b_t$ in Assumption~\ref{ass ab bound}, we investigate the testing AUC under varying upper bounds for $a_t$ and $b_t$, as illustrated in Figure~\ref{fig:upper_bound_cifar100}. Notably, changing from an upper bound of 50 to 1000 yields a minimal change in the testing AUC, validating the mildness of Assumption~\ref{ass ab bound}. In addition, the parameter $\tau$ is related to the adaptive generator within ADA-NSTORM. Figure~\ref{fig:tau_cifar100} shows the impact of $\tau$ on ADA-NSTORM's performance within the deep AUC problem. Intriguingly, varying $\tau$ from 0.1 to 0.9 leads to a mere change of 0.127. These ablation studies effectively reinforce the robustness of our proposed methods.

\section{Conclusion}
In this paper, we first proposed a novel method named NSTORM for optimizing the compositional minimax problem. By leveraging variance-reduced techniques of both function and gradient values, we demonstrate that the proposed NSTORM method can achieve the sample complexity of $O(\kappa^3 /\epsilon^3)$ for finding an $\epsilon$-stationary point without using large batch sizes. NSTORM under the PL-condition is also demonstrated to achieve the same sample complexity, which indicates its extendability. To the best of our knowledge, all theoretical results match the best sample complexity in existing minimax optimization. Because NSTORM requires a small learning rate to achieve the optimal complexity, this limits its applicability in real-world machine learning scenarios. To take advantage of adaptive learning rates, we develop an adaptive version of NSTORM named ADA-STORM, which can achieve the same complexity with the learning rate changing adaptively. Extensive experimental results support the effectiveness of our proposed methods.

\bibliographystyle{unsrtnat}
\bibliography{template}  %%% Uncomment this line and comment out the ``thebibliography'' section below to use the external .bib file (using bibtex) .

\begin{thebibliography}{52}
\providecommand{\natexlab}[1]{#1}

\bibitem[{Arjovsky, Chintala, and Bottou(2017)}]{arjovsky2017wasserstein}
Arjovsky, M.; Chintala, S.; and Bottou, L. 2017.
\newblock Wasserstein generative adversarial networks.
\newblock In \emph{International conference on machine learning}, 214--223.
  PMLR.

\bibitem[{Chen et~al.(2020)Chen, Luo, Zhang, and Yu}]{chen2020efficient}
Chen, C.; Luo, L.; Zhang, W.; and Yu, Y. 2020.
\newblock Efficient projection-free algorithms for saddle point problems.
\newblock \emph{Advances in Neural Information Processing Systems}, 33:
  10799--10808.

\bibitem[{Chen et~al.(2017)Chen, Lucier, Singer, and
  Syrgkanis}]{chen2017robust}
Chen, R.~S.; Lucier, B.; Singer, Y.; and Syrgkanis, V. 2017.
\newblock Robust optimization for non-convex objectives.
\newblock \emph{Advances in Neural Information Processing Systems}, 30.

\bibitem[{Coates, Ng, and Lee(2011)}]{coates2011analysis}
Coates, A.; Ng, A.; and Lee, H. 2011.
\newblock An analysis of single-layer networks in unsupervised feature
  learning.
\newblock In \emph{Proceedings of the fourteenth international conference on
  artificial intelligence and statistics}, 215--223. JMLR Workshop and
  Conference Proceedings.

\bibitem[{Cutkosky and Orabona(2019)}]{cutkosky2019momentum}
Cutkosky, A.; and Orabona, F. 2019.
\newblock Momentum-based variance reduction in non-convex sgd.
\newblock \emph{Advances in neural information processing systems}, 32.

\bibitem[{Fang et~al.(2018)Fang, Li, Lin, and Zhang}]{fang2018spider}
Fang, C.; Li, C.~J.; Lin, Z.; and Zhang, T. 2018.
\newblock Spider: Near-optimal non-convex optimization via stochastic
  path-integrated differential estimator.
\newblock \emph{Advances in Neural Information Processing Systems}, 31.

\bibitem[{Finn, Abbeel, and Levine(2017)}]{finn2017model}
Finn, C.; Abbeel, P.; and Levine, S. 2017.
\newblock Model-agnostic meta-learning for fast adaptation of deep networks.
\newblock In \emph{International conference on machine learning}, 1126--1135.
  PMLR.

\bibitem[{Gao, Li, and Huang(2022)}]{gao2022convergence}
Gao, H.; Li, J.; and Huang, H. 2022.
\newblock On the convergence of local stochastic compositional gradient descent
  with momentum.
\newblock In \emph{International Conference on Machine Learning}, 7017--7035.
  PMLR.

\bibitem[{Gao et~al.(2021)Gao, Wang, Luo, and Shi}]{gao2021convergence}
Gao, H.; Wang, X.; Luo, L.; and Shi, X. 2021.
\newblock On the Convergence of Stochastic Compositional Gradient Descent
  Ascent Method.
\newblock In \emph{Thirtieth International Joint Conference on Artificial
  Intelligence (IJCAI)}.

\bibitem[{Goodfellow et~al.(2014)Goodfellow, Pouget-Abadie, Mirza, Xu,
  Warde-Farley, Ozair, Courville, and Bengio}]{goodfellow2014generative}
Goodfellow, I.~J.; Pouget-Abadie, J.; Mirza, M.; Xu, B.; Warde-Farley, D.;
  Ozair, S.; Courville, A.~C.; and Bengio, Y. 2014.
\newblock Generative Adversarial Nets.
\newblock In \emph{NIPS}.

\bibitem[{Gulrajani et~al.(2017)Gulrajani, Ahmed, Arjovsky, Dumoulin, and
  Courville}]{gulrajani2017improved}
Gulrajani, I.; Ahmed, F.; Arjovsky, M.; Dumoulin, V.; and Courville, A.~C.
  2017.
\newblock Improved training of wasserstein gans.
\newblock \emph{Advances in neural information processing systems}, 30.

\bibitem[{He et~al.(2016)He, Zhang, Ren, and Sun}]{he2016deep}
He, K.; Zhang, X.; Ren, S.; and Sun, J. 2016.
\newblock Deep residual learning for image recognition.
\newblock In \emph{Proceedings of the IEEE conference on computer vision and
  pattern recognition}, 770--778.

\bibitem[{Hu et~al.(2019)Hu, Li, Lian, Liu, and Yuan}]{hu2019efficient}
Hu, W.; Li, C.~J.; Lian, X.; Liu, J.; and Yuan, H. 2019.
\newblock Efficient smooth non-convex stochastic compositional optimization via
  stochastic recursive gradient descent.
\newblock \emph{Advances in Neural Information Processing Systems}, 32.

\bibitem[{Huang(2023)}]{huang2023enhanced}
Huang, F. 2023.
\newblock Enhanced Adaptive Gradient Algorithms for Nonconvex-PL Minimax
  Optimization.
\newblock \emph{arXiv preprint arXiv:2303.03984}.

\bibitem[{Huang, Gao, and Huang(2020)}]{huang2020gradient}
Huang, F.; Gao, S.; and Huang, H. 2020.
\newblock Gradient descent ascent for min-max problems on riemannian manifolds.
\newblock \emph{arXiv preprint arXiv:2010.06097}.

\bibitem[{Huang, Wu, and Hu(2023)}]{huang2023adagda}
Huang, F.; Wu, X.; and Hu, Z. 2023.
\newblock Adagda: Faster adaptive gradient descent ascent methods for minimax
  optimization.
\newblock In \emph{International Conference on Artificial Intelligence and
  Statistics}, 2365--2389. PMLR.

\bibitem[{Jiang et~al.(2022)Jiang, Wang, Wang, Zhang, and
  Yang}]{jiang2022optimal}
Jiang, W.; Wang, B.; Wang, Y.; Zhang, L.; and Yang, T. 2022.
\newblock Optimal algorithms for stochastic multi-level compositional
  optimization.
\newblock \emph{arXiv preprint arXiv:2202.07530}.

\bibitem[{Kingma and Ba(2014)}]{kingma2014adam}
Kingma, D.~P.; and Ba, J. 2014.
\newblock Adam: A method for stochastic optimization.
\newblock \emph{arXiv preprint arXiv:1412.6980}.

\bibitem[{Krizhevsky, Hinton et~al.(2009)}]{krizhevsky2009learning}
Krizhevsky, A.; Hinton, G.; et~al. 2009.
\newblock Learning multiple layers of features from tiny images.

\bibitem[{Lin, Jin, and Jordan(2020)}]{lin2020gradient}
Lin, T.; Jin, C.; and Jordan, M. 2020.
\newblock On gradient descent ascent for nonconvex-concave minimax problems.
\newblock In \emph{International Conference on Machine Learning}, 6083--6093.
  PMLR.

\bibitem[{Luo, Xiong, and Liu(2019)}]{luo2018adaptive}
Luo, L.; Xiong, Y.; and Liu, Y. 2019.
\newblock Adaptive Gradient Methods with Dynamic Bound of Learning Rate.
\newblock In \emph{International Conference on Learning Representations}.

\bibitem[{Luo et~al.(2020)Luo, Ye, Huang, and Zhang}]{luo2020stochastic}
Luo, L.; Ye, H.; Huang, Z.; and Zhang, T. 2020.
\newblock Stochastic recursive gradient descent ascent for stochastic
  nonconvex-strongly-concave minimax problems.
\newblock \emph{Advances in Neural Information Processing Systems}, 33:
  20566--20577.

\bibitem[{Madry et~al.(2018)Madry, Makelov, Schmidt, Tsipras, and
  Vladu}]{madry2018towards}
Madry, A.; Makelov, A.; Schmidt, L.; Tsipras, D.; and Vladu, A. 2018.
\newblock Towards Deep Learning Models Resistant to Adversarial Attacks.
\newblock In \emph{International Conference on Learning Representations}.

\bibitem[{Mohri, Sivek, and Suresh(2019)}]{mohri2019agnostic}
Mohri, M.; Sivek, G.; and Suresh, A.~T. 2019.
\newblock Agnostic federated learning.
\newblock In \emph{International Conference on Machine Learning}, 4615--4625.
  PMLR.

\bibitem[{Nemirovski et~al.(2009)Nemirovski, Juditsky, Lan, and
  Shapiro}]{nemirovski2009robust}
Nemirovski, A.; Juditsky, A.; Lan, G.; and Shapiro, A. 2009.
\newblock Robust stochastic approximation approach to stochastic programming.
\newblock \emph{SIAM Journal on optimization}, 19(4): 1574--1609.

\bibitem[{Palaniappan and Bach(2016)}]{palaniappan2016stochastic}
Palaniappan, B.; and Bach, F. 2016.
\newblock Stochastic variance reduction methods for saddle-point problems.
\newblock \emph{Advances in Neural Information Processing Systems}, 29.

\bibitem[{Qi et~al.(2021)Qi, Guo, Xu, Jin, and Yang}]{qi2021online}
Qi, Q.; Guo, Z.; Xu, Y.; Jin, R.; and Yang, T. 2021.
\newblock An online method for a class of distributionally robust optimization
  with non-convex objectives.
\newblock \emph{Advances in Neural Information Processing Systems}, 34:
  10067--10080.

\bibitem[{Qiu et~al.(2020)Qiu, Yang, Wei, Ye, and Wang}]{qiu2020single}
Qiu, S.; Yang, Z.; Wei, X.; Ye, J.; and Wang, Z. 2020.
\newblock Single-timescale stochastic nonconvex-concave optimization for smooth
  nonlinear TD learning.
\newblock \emph{arXiv preprint arXiv:2008.10103}.

\bibitem[{Qu et~al.(2022)Qu, Li, Duan, Liu, Tang, and Lu}]{qu2022generalized}
Qu, Z.; Li, X.; Duan, R.; Liu, Y.; Tang, B.; and Lu, Z. 2022.
\newblock Generalized Federated Learning via Sharpness Aware Minimization.
\newblock \emph{arXiv preprint arXiv:2206.02618}.

\bibitem[{Qu et~al.(2023)Qu, Li, Han, Duan, Shen, and Chen}]{qu2023prevent}
Qu, Z.; Li, X.; Han, X.; Duan, R.; Shen, C.; and Chen, L. 2023.
\newblock How To Prevent the Poor Performance Clients for Personalized
  Federated Learning?
\newblock In \emph{Proceedings of the IEEE/CVF Conference on Computer Vision
  and Pattern Recognition}, 12167--12176.

\bibitem[{Rafique et~al.(2022)Rafique, Liu, Lin, and Yang}]{rafique2022weakly}
Rafique, H.; Liu, M.; Lin, Q.; and Yang, T. 2022.
\newblock Weakly-convex--concave min--max optimization: provable algorithms and
  applications in machine learning.
\newblock \emph{Optimization Methods and Software}, 37(3): 1087--1121.

\bibitem[{Reddi, Kale, and Kumar(2018)}]{j.2018on}
Reddi, S.~J.; Kale, S.; and Kumar, S. 2018.
\newblock On the Convergence of Adam and Beyond.
\newblock In \emph{International Conference on Learning Representations}.

\bibitem[{Shapiro, Dentcheva, and Ruszczynski(2021)}]{shapiro2021lectures}
Shapiro, A.; Dentcheva, D.; and Ruszczynski, A. 2021.
\newblock \emph{Lectures on stochastic programming: modeling and theory}.
\newblock SIAM.

\bibitem[{Sutton and Barto(2018)}]{sutton2018reinforcement}
Sutton, R.~S.; and Barto, A.~G. 2018.
\newblock \emph{Reinforcement learning: An introduction}.
\newblock MIT press.

\bibitem[{Tarzanagh et~al.(2022)Tarzanagh, Li, Thrampoulidis, and
  Oymak}]{pmlr-v162-tarzanagh22a}
Tarzanagh, D.~A.; Li, M.; Thrampoulidis, C.; and Oymak, S. 2022.
\newblock {F}ed{N}est: Federated Bilevel, Minimax, and Compositional
  Optimization.
\newblock In \emph{International Conference on Machine Learning}, 21146--21179.
  PMLR.

\bibitem[{Wang et~al.(2021)Wang, Zhang, Liu, Chen, Xu, Fardad, and
  Li}]{wang2021adversarial}
Wang, J.; Zhang, T.; Liu, S.; Chen, P.-Y.; Xu, J.; Fardad, M.; and Li, B. 2021.
\newblock Adversarial attack generation empowered by min-max optimization.
\newblock \emph{Advances in Neural Information Processing Systems}, 34:
  16020--16033.

\bibitem[{Wang, Fang, and Liu(2017)}]{wang2017stochastic}
Wang, M.; Fang, E.~X.; and Liu, H. 2017.
\newblock Stochastic compositional gradient descent: algorithms for minimizing
  compositions of expected-value functions.
\newblock \emph{Mathematical Programming}, 161(1): 419--449.

\bibitem[{Xian et~al.(2021)Xian, Huang, Zhang, and Huang}]{xian2021faster}
Xian, W.; Huang, F.; Zhang, Y.; and Huang, H. 2021.
\newblock A faster decentralized algorithm for nonconvex minimax problems.
\newblock \emph{Advances in Neural Information Processing Systems}, 34:
  25865--25877.

\bibitem[{Xu et~al.(2020)Xu, Wang, Liang, and Poor}]{xu2020enhanced}
Xu, T.; Wang, Z.; Liang, Y.; and Poor, H.~V. 2020.
\newblock Enhanced first and zeroth order variance reduced algorithms for
  min-max optimization.

\bibitem[{Yan et~al.(2020)Yan, Xu, Lin, Liu, and Yang}]{yan2020optimal}
Yan, Y.; Xu, Y.; Lin, Q.; Liu, W.; and Yang, T. 2020.
\newblock Optimal epoch stochastic gradient descent ascent methods for min-max
  optimization.
\newblock \emph{Advances in Neural Information Processing Systems}, 33:
  5789--5800.

\bibitem[{Yang, Kiyavash, and He(2020)}]{yang2020global}
Yang, J.; Kiyavash, N.; and He, N. 2020.
\newblock Global convergence and variance reduction for a class of
  nonconvex-nonconcave minimax problems.
\newblock \emph{Advances in Neural Information Processing Systems}, 33:
  1153--1165.

\bibitem[{Yang, Zhang, and Fang(2022)}]{yang2022stochastic}
Yang, S.; Zhang, Z.; and Fang, E.~X. 2022.
\newblock Stochastic Compositional Optimization with Compositional Constraints.
\newblock \emph{arXiv preprint arXiv:2209.04086}.

\bibitem[{Yuan and Hu(2020)}]{yuan2020stochastic}
Yuan, H.; and Hu, W. 2020.
\newblock Stochastic recursive momentum method for non-convex compositional
  optimization.
\newblock \emph{arXiv preprint arXiv:2006.01688}.

\bibitem[{Yuan, Lian, and Liu(2019)}]{yuan2019stochastic}
Yuan, H.; Lian, X.; and Liu, J. 2019.
\newblock Stochastic recursive variance reduction for efficient smooth
  non-convex compositional optimization.
\newblock \emph{arXiv preprint arXiv:1912.13515}.

\bibitem[{Yuan et~al.(2022)Yuan, Guo, Chawla, and Yang}]{yuan2022compositional}
Yuan, Z.; Guo, Z.; Chawla, N.; and Yang, T. 2022.
\newblock Compositional Training for End-to-End Deep {AUC} Maximization.
\newblock In \emph{International Conference on Learning Representations}.

\bibitem[{Yuan et~al.(2020)Yuan, Yan, Sonka, and Yang}]{yuan2020robust}
Yuan, Z.; Yan, Y.; Sonka, M.; and Yang, T. 2020.
\newblock Robust deep auc maximization: A new surrogate loss and empirical
  studies on medical image classification.
\newblock \emph{arXiv preprint arXiv:2012.03173}, 8.

\bibitem[{Zhang and Xiao(2019{\natexlab{a}})}]{zhang2019composite}
Zhang, J.; and Xiao, L. 2019{\natexlab{a}}.
\newblock A composite randomized incremental gradient method.
\newblock In \emph{International Conference on Machine Learning}, 7454--7462.
  PMLR.

\bibitem[{Zhang and Xiao(2019{\natexlab{b}})}]{zhang2019stochastic}
Zhang, J.; and Xiao, L. 2019{\natexlab{b}}.
\newblock A stochastic composite gradient method with incremental variance
  reduction.
\newblock \emph{Advances in Neural Information Processing Systems}, 32.

\bibitem[{Zhang and Xiao(2021)}]{zhang2021multilevel}
Zhang, J.; and Xiao, L. 2021.
\newblock Multilevel composite stochastic optimization via nested variance
  reduction.
\newblock \emph{SIAM Journal on Optimization}, 31(2): 1131--1157.

\bibitem[{Zhang et~al.(2021)Zhang, Liu, Liu, Zhu, and Lu}]{zhang2021taming}
Zhang, X.; Liu, Z.; Liu, J.; Zhu, Z.; and Lu, S. 2021.
\newblock Taming Communication and Sample Complexities in Decentralized Policy
  Evaluation for Cooperative Multi-Agent Reinforcement Learning.
\newblock \emph{Advances in Neural Information Processing Systems}, 34:
  18825--18838.

\bibitem[{Zhang and Lan(2020)}]{zhang2020optimal}
Zhang, Z.; and Lan, G. 2020.
\newblock Optimal algorithms for convex nested stochastic composite
  optimization.
\newblock \emph{arXiv preprint arXiv:2011.10076}.

\bibitem[{Zhuang et~al.(2020)Zhuang, Tang, Ding, Tatikonda, Dvornek,
  Papademetris, and Duncan}]{zhuang2020adabelief}
Zhuang, J.; Tang, T.; Ding, Y.; Tatikonda, S.~C.; Dvornek, N.; Papademetris,
  X.; and Duncan, J. 2020.
\newblock Adabelief optimizer: Adapting stepsizes by the belief in observed
  gradients.
\newblock \emph{Advances in neural information processing systems}, 33:
  18795--18806.

\end{thebibliography}

\clearpage
\setcounter{page}{1}
\appendices
\section{Additional Toy Examples}
In Figure \ref{fig:all_toy_example}, the trajectories of the four benchmarks are shown beneath the variances of different levels. This visualization is based on the setting of the same number of iterations, identical $x$ and $y$ step sizes, as well as a common $\beta$ value. and we can find that the four benchmarks follow the same trajectory to reach the stationary point when the variance is equal to 0. As the variance increases, our method reaches the stationary point with a smoother and shorter path, which indicates that our method has better robustness to very noisy datasets.

\begin{figure*}[h]
    \centering
    \includegraphics[width=\textwidth]{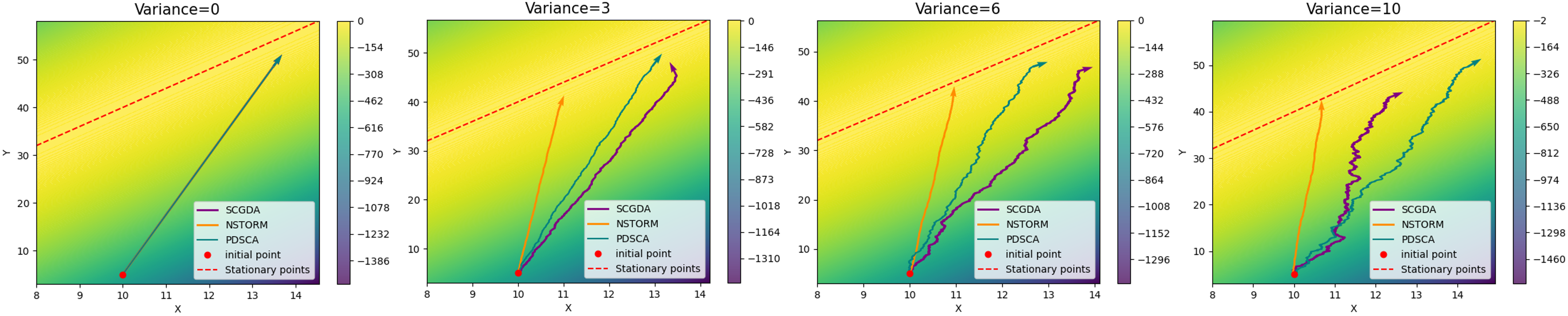}
    \caption{Trajectories of four benchmarks under varying variances. }
    \label{fig:all_toy_example}
\end{figure*}

\section{Useful Lemmas}
The following two lemmas aim to show the Lipschitz and smooth properties of the compositional minimization optimization, and they can facilitate the proof of all theorems. 
\begin{lemma}\label{lemma:y^*smooth}
Under the Assumptions~\ref{Ass:Smooth}, \ref{Ass:BoundedGrad} and \ref{Ass:StrongConcave}, $\forall x_1, x_2 \in \mathcal{X}$ and $\forall y \in \mathcal{Y}$, we can obtain the following:
\begin{equation}
\|y^*(x_1)- y^*(x_2)\| \leq C_g \kappa \|x_1- x_2\|.\nonumber
\end{equation}
\end{lemma}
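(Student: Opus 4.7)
\textbf{Proof proposal for Lemma \ref{lemma:y^*smooth}.} The plan is to obtain the Lipschitz estimate via first-order optimality plus strong monotonicity, then reduce to the Lipschitz constant of the inner map $g$. Fix $x_1,x_2\in\mathcal{X}$ and write $y_i=y^*(x_i)$. Since $\mathcal{Y}$ is convex and $y_i$ maximizes $f(g(x_i),\cdot)$ on $\mathcal{Y}$, the standard variational inequalities hold:
\begin{equation*}
\langle \nabla_y f(g(x_1),y_1),\, y_2-y_1\rangle \leq 0, \qquad \langle \nabla_y f(g(x_2),y_2),\, y_1-y_2\rangle \leq 0.
\end{equation*}
Adding these gives $\langle \nabla_y f(g(x_1),y_1)-\nabla_y f(g(x_2),y_2),\, y_2-y_1\rangle \leq 0$.

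Next, I would use Assumption \ref{Ass:StrongConcave}: strong concavity of $f(a,\cdot)$ implies $\mu$-strong monotonicity of $-\nabla_y f(a,\cdot)$, so that
\begin{equation*}
\langle \nabla_y f(g(x_2),y_1)-\nabla_y f(g(x_2),y_2),\, y_1-y_2\rangle \geq \mu\|y_1-y_2\|^2.
\end{equation*}
Adding the two displayed inequalities (after flipping signs appropriately) isolates the cross term and yields
\begin{equation*}
\mu\|y_1-y_2\|^2 \leq \langle \nabla_y f(g(x_1),y_1)-\nabla_y f(g(x_2),y_1),\, y_1-y_2\rangle.
\end{equation*}
Applying Cauchy--Schwarz and the smoothness bound from Assumption \ref{Ass:Smooth} to the first factor gives $\mu\|y_1-y_2\|^2 \leq L\|g(x_1)-g(x_2)\|\,\|y_1-y_2\|$, hence $\|y_1-y_2\|\leq \kappa\|g(x_1)-g(x_2)\|$.

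Finally, I would bound $\|g(x_1)-g(x_2)\|$ using Assumption \ref{Ass:BoundedGrad}: because $\nabla g(x)$ is deterministic, $\mathbb{E}\|\nabla g(x)\|^2\leq C_g^2$ collapses to $\|\nabla g(x)\|\leq C_g$, so the mean value theorem along the segment from $x_2$ to $x_1$ (which lies in the convex set $\mathcal{X}$) gives $\|g(x_1)-g(x_2)\|\leq C_g\|x_1-x_2\|$. Chaining this with the previous inequality produces the claimed bound $\|y^*(x_1)-y^*(x_2)\|\leq C_g\kappa\|x_1-x_2\|$. The only mildly delicate step is handling the projection/feasibility when combining the two variational inequalities; there is no real obstacle because $\mathcal{Y}$ is convex so both $y_1,y_2$ are admissible test points in each other's optimality condition, and the strong monotonicity argument is standard.
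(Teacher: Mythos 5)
Your proposal is correct and reaches the same bound, but by a genuinely different route than the paper. The paper's proof assumes the first-order conditions hold with equality, $\nabla_y f(g(x_1),y^*(x_1))=\nabla_y f(g(x_2),y^*(x_2))=0$, and then invokes the error-bound consequence of $\mu$-strong concavity, $\|y^*(x_1)-y^*(x_2)\|\leq\frac{1}{\mu}\|\nabla_y f(g(x_1),y^*(x_1))-\nabla_y f(g(x_1),y^*(x_2))\|$, substitutes the vanishing gradients, and finishes with $L$-smoothness in the first argument and $\|g(x_1)-g(x_2)\|\leq C_g\|x_1-x_2\|$. You instead use the constrained optimality conditions on the convex set $\mathcal{Y}$ as variational inequalities, add them, and combine with strong monotonicity and Cauchy--Schwarz to get $\mu\|y_1-y_2\|^2\leq\langle\nabla_y f(g(x_1),y_1)-\nabla_y f(g(x_2),y_1),y_1-y_2\rangle\leq L\|g(x_1)-g(x_2)\|\,\|y_1-y_2\|$; the last reduction via $\|\nabla g\|\leq C_g$ matches the paper's. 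Your route is actually the more robust one: since the paper takes $\mathcal{Y}$ compact and convex, $y^*(x)$ can sit on the boundary and the gradient need not vanish there, so the VI argument covers a case the paper's proof implicitly excludes; the paper's version is shorter when stationarity is available. One small correction: your intermediate display has the wrong orientation — strong monotonicity of $-\nabla_y f(a,\cdot)$ gives $\langle\nabla_y f(g(x_2),y_1)-\nabla_y f(g(x_2),y_2),\,y_1-y_2\rangle\leq-\mu\|y_1-y_2\|^2$, not $\geq\mu\|y_1-y_2\|^2$ — but the inequality you subsequently derive and chain from is the correct one, so this is only a sign typo, not a gap in the argument.
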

\begin{proof}
 According to the definition of $y^*(x)$, we have $\nabla_yf(g(x_1),y^*(x_1)) = \nabla_yf(g(x_2),y^*(x_2)) = 0$, then we can get 
\begin{equation}
\begin{aligned}
    \|y^*(x_1)-y^*(x_2)\| &\leq \frac{1}{\mu}\| \nabla_y f(g(x_1),y^*(x_1)) - \nabla_y f(g(x_1),y^*(x_2))\| \\
    & = \frac{1}{\mu}\|\nabla_y f(g(x_2),y^*(x_2)) - \nabla_y f(g(x_1),y^*(x_2))\| \\
    & \leq \frac{L}{\mu}\| g(x_1)-g(x_2)\| \leq C_g \kappa  \|x_1-x_2\|,
\end{aligned}
\end{equation}
where the first inequality holds by $\mu$-strong concavity and the second inequality holds by $L$-Smoothness.
\end{proof}
\begin{lemma}\label{lemma:phismooth}
Under the Assumptions~\ref{Ass:Smooth}-\ref{Ass:StrongConcave} and Lemma~\ref{lemma:y^*smooth}, $\forall x_1, x_2 \in \mathcal{X}$, we can obtain the following:
\begin{equation}
\|\nabla \Phi(x_1) - \nabla \Phi(x_2)\| \leq (C_g^2 L\kappa+C_f L_g)\|x_1 - x_2\|.\nonumber
\end{equation}
\end{lemma}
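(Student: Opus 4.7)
The plan is to first establish the gradient formula for $\Phi$ via Danskin's theorem, then split the resulting difference and bound each piece using Assumptions~\ref{Ass:Smooth}--\ref{Ass:BoundedGrad} together with Lemma~\ref{lemma:y^*smooth}. Since $f(g(x),\cdot)$ is $\mu$-strongly concave on the convex compact set $\mathcal{Y}$, the maximizer $y^*(x)$ is unique for every $x$, and Lemma~\ref{lemma:y^*smooth} shows $y^*(\cdot)$ is continuous. Under these conditions, Danskin's theorem applies and gives
\begin{equation*}
\nabla \Phi(x) \;=\; \nabla g(x)^{\top}\,\nabla_{a} f\bigl(g(x),\,y^*(x)\bigr).
\end{equation*}
I would state this at the outset and reference a standard source for Danskin; the rest of the argument is a Lipschitz calculation.

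Next I would write
\begin{equation*}
\nabla \Phi(x_1) - \nabla \Phi(x_2)
= \nabla g(x_1)^{\top}\bigl[\nabla_{a} f(g(x_1),y^*(x_1)) - \nabla_{a} f(g(x_2),y^*(x_2))\bigr]
+ \bigl[\nabla g(x_1) - \nabla g(x_2)\bigr]^{\top}\nabla_{a} f(g(x_2),y^*(x_2)),
\end{equation*}
apply the triangle inequality, and bound the two terms separately. For the first term, use $\|\nabla g(x_1)\| \leq C_g$ (obtained from Assumption~\ref{Ass:BoundedGrad} via Jensen applied to $\nabla g(x)=\mathbb{E}_{\xi}[\nabla g(x;\xi)]$), then use $L$-smoothness of $f$ from Assumption~\ref{Ass:Smooth} to dominate the inner difference by $L\bigl(\|g(x_1)-g(x_2)\| + \|y^*(x_1)-y^*(x_2)\|\bigr)$, and finally use the $C_g$-Lipschitzness of $g$ (again from Assumption~\ref{Ass:BoundedGrad} plus Jensen on the mean value form) and Lemma~\ref{lemma:y^*smooth} to control the two summands by $C_g\|x_1-x_2\|$ and $C_g\kappa\|x_1-x_2\|$ respectively. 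For the second term, pass the $L_g$-Lipschitz bound on $\nabla g$ from Assumption~\ref{Ass:Smooth} through expectation and multiply by $\|\nabla_a f(g(x_2),y^*(x_2))\|\leq C_f$ from Assumption~\ref{Ass:BoundedGrad}.

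Putting the pieces together produces
\begin{equation*}
\|\nabla \Phi(x_1)-\nabla \Phi(x_2)\| \;\leq\; C_g\cdot L\bigl(C_g + C_g\kappa\bigr)\|x_1-x_2\| + L_g C_f \|x_1-x_2\|,
\end{equation*}
and absorbing the $C_g^2 L$ term into $C_g^2 L\kappa$ (using $\kappa\geq 1$, which holds since $L\geq \mu$) gives the claimed Lipschitz constant $C_g^2 L\kappa + C_f L_g$, up to a harmless constant factor.

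The only genuinely delicate step is the appeal to Danskin: one needs the uniqueness of $y^*(x)$ (from strong concavity) and the continuity of $y^*(\cdot)$ (from Lemma~\ref{lemma:y^*smooth}) to justify interchanging the gradient with the maximum. After that, everything reduces to the triangle inequality and repeated use of the assumptions; no delicate estimates or new techniques are required.
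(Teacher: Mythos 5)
Your proposal is correct and follows essentially the same route as the paper: write $\nabla\Phi(x)=\nabla g(x)\nabla_g f(g(x),y^*(x))$ via the Danskin/chain-rule argument, add and subtract a single cross term, and bound the two pieces using $\|\nabla g\|\leq C_g$, $\|\nabla_g f\|\leq C_f$, the $L$- and $L_g$-smoothness, and Lemma~\ref{lemma:y^*smooth}; the paper merely attaches the subtracted cross term to the other factor, which is an immaterial difference. The factor-of-$2$ slack you flag is also present in the paper's own proof, whose last line yields $C_fL_g+\tfrac{2C_g^2L^2}{\mu}$ against the constant stated in the lemma.
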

\begin{proof}
\begin{equation}
\begin{aligned}
    \|\nabla \Phi(x_1)-\Phi(x_2)\|& = \|\nabla_xf(g(x_1),y^*(x_1))-\nabla_xf(g(x_2),y^*(x_2))\| \\
    & =  \|\nabla g(x_1)\nabla_g f(g(x_1),y^*(x_1))-\nabla g(x_2)\nabla_g f(g(x_2),y^*(x_2))\|\\
    & = \|\nabla g(x_1)\nabla_g f(g(x_1),y^*(x_1))- \nabla g(x_2)\nabla_g f(g(x_1),y^*(x_1)) \\
    &\quad+\nabla g(x_2)\nabla_g f(g(x_1),y^*(x_1)) - \nabla_xf(g(x_2),y^*(x_2))\|\\
    & \leq C_g \|\nabla_gf(g(x_1),y^*(x_1))-\nabla_gf(g(x_2),y^*(x_2))\|+C_f\|\nabla g(x_1)-\nabla g(x_2)\| \\
    & \leq C_fL_g\|x_1-x_2\|+C_gL\|g(x_1)-g(x_2)\|+C_gL\|y^*(x_1)-y^*(x_2)\|\\
    & \leq C_fL_g\|x_1-x_2\|+C_g^2L\|x_1-x_2\|+\frac{C_g^2L^2}{\mu}\|x_1-x_2\|\\
    & \leq (C_fL_g+\frac{2C_g^2L^2}{\mu})\|x_1-x_2\|,
\end{aligned}    
\end{equation}
where the last two inequality holds by Lemma \ref{lemma:y^*smooth} and the last inequality holds by $L / \mu >1$.
\end{proof}

\section{Proof of Theorem~\ref{The:ConvergenceNonConvex}}
We first provide some useful lemmas and then we show the sample complexity of the proposed NSTORM method in Theorem~\ref{The:ConvergenceNonConvex}.
\begin{lemma}\label{lemma_vt_variance}
Given Assumptions \ref{Ass:Smooth}-\ref{Ass:StrongConcave} for Algorithm \ref{ALG:SCGDA}, we can obtain the following:
\begin{equation}
\begin{aligned}
    \mathbb{E}[\|v_t-\nabla f(g(x_t),y_t)\|^2]\leq2C_{g}^{2}L^2\mathbb{E}[\|u_{t}-g(x_t)\|^2]+4C_{f}^{2}\mathbb{E}[\|v'_{t}-\nabla g(x_{t})\|^2]+4C_{g}^{2}\mathbb{E}[\|v''_{t}-\nabla_gf(u_{t},y_t)\|^2].\nonumber
\end{aligned}    
\end{equation}
\end{lemma}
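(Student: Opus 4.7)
\medskip

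\noindent\textbf{Proof proposal.} The plan is to use the chain-rule identity $\nabla_x f(g(x_t),y_t) = \nabla g(x_t)^\top \nabla_g f(g(x_t),y_t)$, compare to $v_t = v'_t v''_t$, and insert two intermediate hybrid terms so that each discrepancy isolates exactly one source of error: the Jacobian estimator $v'_t$, the outer-gradient estimator $v''_t$, or the inner-function estimator $u_t$. Concretely, I would add and subtract $v'_t\nabla_g f(u_t,y_t)$ and $\nabla g(x_t)\nabla_g f(u_t,y_t)$ to write
\begin{equation}
v_t-\nabla_x f(g(x_t),y_t)=v'_t\bigl(v''_t-\nabla_g f(u_t,y_t)\bigr)+\bigl(v'_t-\nabla g(x_t)\bigr)\nabla_g f(u_t,y_t)+\nabla g(x_t)\bigl(\nabla_g f(u_t,y_t)-\nabla_g f(g(x_t),y_t)\bigr). \nonumber
\end{equation}

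\medskip

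\noindent Next, I would apply the inequality $\|a+b+c\|^2\le 2\|a+b\|^2+2\|c\|^2\le 4\|a\|^2+4\|b\|^2+2\|c\|^2$, with $c$ taken to be the smoothness term on the right, so that the two ``estimator'' terms pick up the factor $4$ while the Lipschitz term picks up the factor $2$, matching the stated coefficients. For the first summand I would use that the projection $\Pi_{C_g}$ in \eqref{Eq:vestimate} guarantees $\|v'_t\|\le C_g$ almost surely, so $\|v'_t(v''_t-\nabla_g f(u_t,y_t))\|^2\le C_g^2\|v''_t-\nabla_g f(u_t,y_t)\|^2$. For the second summand, Assumption~\ref{Ass:BoundedGrad} gives $\|\nabla_g f(u_t,y_t)\|^2\le C_f^2$ and thus $\|(v'_t-\nabla g(x_t))\nabla_g f(u_t,y_t)\|^2\le C_f^2\|v'_t-\nabla g(x_t)\|^2$. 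For the third summand, Assumption~\ref{Ass:BoundedGrad} again bounds $\|\nabla g(x_t)\|^2\le C_g^2$, and Assumption~\ref{Ass:Smooth} (the $L$-smoothness of $f$ in its first argument) yields $\|\nabla_g f(u_t,y_t)-\nabla_g f(g(x_t),y_t)\|^2\le L^2\|u_t-g(x_t)\|^2$. Finally, taking total expectation combines the three estimates into the claimed bound.

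\medskip

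\noindent\textbf{Main obstacle.} The only subtlety is the first summand: one cannot simply absorb $\|v'_t\|$ using the ``in expectation'' bound $\mathbb{E}\|\nabla g(x)\|^2\le C_g^2$, because $v'_t$ is a momentum-type estimator that need not inherit this bound on its own. The projection step $\Pi_{C_g}$ introduced in \eqref{Eq:vestimate} is exactly what is needed to ensure the almost-sure bound $\|v'_t\|\le C_g$; without it, the term would have to be controlled via an extra $\|v'_t-\nabla g(x_t)\|^2$ contribution, which would spoil the decoupling of the three error sources and break the coefficient structure of the lemma. Getting the Young-type split to land on the factors $(4,4,2)$ rather than the more naive $(3,3,3)$ is what pins down the precise constants in the statement.
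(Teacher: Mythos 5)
Your proposal is correct and follows essentially the same route as the paper: the paper likewise splits off the smoothness term $\nabla g(x_t)\bigl(\nabla_g f(u_t,y_t)-\nabla_g f(g(x_t),y_t)\bigr)$ with a factor $2$, then further splits $v'_tv''_t-\nabla g(x_t)\nabla_g f(u_t,y_t)$ through the same hybrid term $v'_t\nabla_g f(u_t,y_t)$ to get the factors $4C_f^2$ and $4C_g^2$, using $\|v'_t\|\le C_g$ (from the projection) and the bound on $\|\nabla_g f\|$ exactly as you do. Your remark on why the projection $\Pi_{C_g}$ is essential matches the paper's stated motivation for that step.
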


\begin{proof}
\begin{equation}
\begin{aligned}
    &\mathbb{E}[\|v_{t}-\nabla f(g(x_{t}),y_{t})\|^2]\\
    &\leq 2\mathbb{E}[\|v'_{t}v''_{t}-\nabla g(x_{t})\nabla_gf(u_{t},y_t)\|^2]
    +2\mathbb{E}[\|\nabla g(x_{t})\nabla_gf(u_{t},y_t)-\nabla_xf(g(x_t),y_t)\|^2] \\
    &\leq 2\mathbb{E}[\|v'_{t}v''_{t}-\nabla g(x_{t})\nabla_gf(u_{t},y_t)\|^2]
    +2C_{g}^{2}L^2\mathbb{E}[\|u_{t}-g(x_t)\|^2] \\
    &\leq 2C_{g}^{2}L^2\mathbb{E}[\|u_{t}-g(x_t)\|^2]
    +4C_{f}^{2}\mathbb{E}[\|v'_{t}-\nabla g(x_{t})\|^2]
    +4C_{g}^{2}\mathbb{E}[\|v''_{t}-\nabla_gf(u_{t},y_t)\|^2],
\end{aligned}    
\end{equation}
where the second inequality follows from Assumptions ~\ref{Ass:Smooth} and \ref{Ass:BoundedGrad} and the last inequality follows from the following inequality:
\begin{equation}
\begin{aligned}
    \mathbb{E}[\|v'_{t}v''_{t}-\nabla g(x_{t})\nabla_gf(u_{t},y_t)\|^2] &=\mathbb{E}[\|\nabla g(x_{t})\nabla_gf(u_{t},y_t)-v'_{t}\nabla_gf(u_{t},y_t)
    +v'_{t}\nabla_gf(u_{t},y_t)-v'_{t}v''_{t}\|^2] \\
    &\leq2\mathbb{E}[\|\nabla g(x_{t})\nabla_gf(u_{t},y_t)
    -v'_{t}\nabla_gf(u_{t},y_t)\|^2]
    +2\mathbb{E}[\|v'_{t}\nabla_gf(u_{t},y_t)-v'_{t}v''_{t}\|^2]\\
    &\leq2C_{f}^{2}\mathbb{E}[\|v'_{t}-\nabla g(x_{t})\|^2]+2C_{g}^{2}\mathbb{E}[\|\nabla_g f(u_{t},y_t)-v''_{t}\|^2].
\end{aligned}    
\end{equation}
This completes the proof.
\end{proof}
It is worth noting that the estimated error of gradient in $x$ is determined by $\mathbb{E}[\|u_t-g(x_t)\|^2]$, $\mathbb{E}[\|v'_t-\nabla g(x_t)\|^2]$ and $\mathbb{E}[\|v''_t-\nabla_gf(u_t,y_t)\|^2]$. Therefore, as long as these three terms have smaller errors, the estimated error of gradient is smaller. The following lemmas aim to show how to control these errors.

\begin{lemma}\label{lemmau12}
Given Assumptions \ref{Ass:Smooth}-\ref{Ass:StrongConcave}, for Algorithm \ref{ALG:SCGDA}, we can obtain the following:
\begin{equation}
    \mathbb{E}[\|u_{t+1}-u_{t}\|^2]\leq2\beta_{t+1}^2\mathbb{E}[\|g(x_{t})-u_{t}\|^2]+2\beta_{t+1}^2\sigma_{g}^{2}+2L_{f}^{2}\gamma^2\eta_{t}^2\mathbb{E}[\|v_t\|^2].\nonumber
\end{equation}
\end{lemma}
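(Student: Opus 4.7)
The plan is to start from the recursion in equation \eqref{Eq:uestimate}, which gives
\[
u_{t+1}-u_t=\beta_{t+1}\bigl(g(x_{t+1};\xi_{t+1})-u_t\bigr)+(1-\beta_{t+1})\bigl(g(x_{t+1};\xi_{t+1})-g(x_{t};\xi_{t+1})\bigr),
\]
and then rewrite the right-hand side by adding and subtracting $\beta_{t+1}g(x_t;\xi_{t+1})$ so that the $(1-\beta_{t+1})$ factor in front of the finite difference cancels out. After this rearrangement, the increment decomposes cleanly as
\[
u_{t+1}-u_t=\beta_{t+1}\bigl(g(x_t;\xi_{t+1})-u_t\bigr)+\bigl(g(x_{t+1};\xi_{t+1})-g(x_t;\xi_{t+1})\bigr).
\]
This is the key algebraic step: it separates a ``momentum-type'' residual (to be controlled via variance bounds) from a ``Lipschitz-type'' residual (to be controlled via smoothness and the primal step).

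Next I would apply the elementary inequality $\|a+b\|^2\le 2\|a\|^2+2\|b\|^2$ to split $\mathbb{E}\|u_{t+1}-u_t\|^2$ into two pieces. For the second piece, I invoke Assumption~\ref{Ass:Smooth} (the $L_f$-Lipschitz continuity of $g(\cdot;\xi)$) together with the update rule $x_{t+1}=x_t-\gamma\eta_t v_t$ from \eqref{Eq:Updatexy}, which immediately yields $\|g(x_{t+1};\xi_{t+1})-g(x_t;\xi_{t+1})\|^2\le L_f^2\gamma^2\eta_t^2\|v_t\|^2$; taking expectation produces the $2L_f^2\gamma^2\eta_t^2\,\mathbb{E}\|v_t\|^2$ term in the statement.

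For the first piece, I further decompose $g(x_t;\xi_{t+1})-u_t=\bigl(g(x_t;\xi_{t+1})-g(x_t)\bigr)+\bigl(g(x_t)-u_t\bigr)$ and take conditional expectation with respect to the filtration $\mathcal{F}_t$ generated by samples up to time $t$. Since $\xi_{t+1}$ is independent of $\mathcal{F}_t$, the stochastic oracle is unbiased ($\mathbb{E}_{\xi_{t+1}}[g(x_t;\xi_{t+1})]=g(x_t)$), so the cross term vanishes, and Assumption~\ref{Ass:BoundedVariance} bounds the residual variance by $\sigma_g^2$. Combining everything and taking total expectations gives the stated bound $2\beta_{t+1}^2\mathbb{E}\|g(x_t)-u_t\|^2+2\beta_{t+1}^2\sigma_g^2+2L_f^2\gamma^2\eta_t^2\mathbb{E}\|v_t\|^2$.

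The only subtle point I anticipate is the initial rewriting that shifts the finite difference from $g(x_{t+1};\xi_{t+1})-u_t$ (the natural ``STORM'' gap) to $g(x_t;\xi_{t+1})-u_t$ (the gap at the \emph{previous} iterate, which is what makes the momentum residual easy to bound via unbiasedness); everything else is Young's inequality, Lipschitz continuity, and the variance assumption. No obstacle is expected beyond a careful bookkeeping of the $\beta_{t+1}$ coefficients.
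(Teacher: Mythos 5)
Your proposal is correct and follows essentially the same route as the paper: the same rearrangement of the recursion into $\beta_{t+1}(g(x_t;\xi_{t+1})-u_t)+(g(x_{t+1};\xi_{t+1})-g(x_t;\xi_{t+1}))$, Young's inequality, the $L_f$-Lipschitz bound with $x_{t+1}-x_t=-\gamma\eta_t v_t$, and the unbiasedness/variance argument for the momentum residual. If anything, your explicit conditioning on $\mathcal{F}_t$ to kill the cross term is stated more carefully than the paper's intermediate step, but the argument is the same.
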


\begin{proof}
\begin{equation}
\begin{aligned}
    \mathbb{E}[\|u_{t+1}-u_{t}\|^2]&=\mathbb{E}[\|(1-\beta_{t+1})(u_{t}-g(x_{t};\xi_{t+1}))+g(x_{t+1};\xi_{t+1})-u_{t}\|^2] \\
    &=\mathbb{E}[\|\beta_{t+1}(g(x_{t})-u_{t})+(g(x_{t+1};\xi_{t+1})-g(x_{t};\xi_{t+1}))+\beta_{t+1}(g(x_{t};\xi_{t+1})-g(x_{t}))\|^2]\\
    &\leq\mathbb{E}[\|\beta_{t+1}(g(x_{t})-u_{t})+\beta_{t+1}(g(x_{t};\xi_{t+1})-g(x_{t}))\|^2]+2L_{f}^{2}\mathbb{E}[\|x_{t+1}-x_{t}\|^2]\\
    &\leq2\beta_{t+1}^2\mathbb{E}[\|(g(x_{t})-u_{t})\|^2]+2\beta_{t+1}^2\sigma_{g}^{2}+2L_{f}^{2}\gamma^2\eta_{t}^{2}\mathbb{E}[\|v_t\|^2].
\end{aligned}      
\end{equation}
This completes the proof.
\end{proof}

\begin{lemma}\label{lemma:u_t_variance}
Given Assumptions \ref{Ass:Smooth}-\ref{Ass:StrongConcave}, for Algorithm \ref{ALG:SCGDA}, we can obtain the following:
\begin{equation}
\begin{aligned}
    \mathbb{E}[\|u_{t+1}-g(x_{t+1})\|^2]&\leq(1-\beta_{t+1})\mathbb{E}[\|u_t-g(x_t)\|^2]+2\beta_{t+1}^2\sigma_{g}^{2}+2L_{f}^{2}\gamma^2\eta_{t}^{2}\mathbb{E}[\|v_t\|^2].\nonumber
\end{aligned}    
\end{equation} 
\end{lemma}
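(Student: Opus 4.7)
The plan is to mimic the standard STORM-style variance-recursion analysis. I would start by substituting the NSTORM update \eqref{Eq:uestimate} and rewriting the error as a convex-combination recursion. Concretely, I would add and subtract $(1-\beta_{t+1})g(x_t)$ and $g(x_{t+1})$ to obtain the decomposition
\begin{equation*}
u_{t+1}-g(x_{t+1}) = (1-\beta_{t+1})\bigl(u_t-g(x_t)\bigr) + A_{t+1} + B_{t+1},
\end{equation*}
where $A_{t+1}=\beta_{t+1}\bigl(g(x_{t+1};\xi_{t+1})-g(x_{t+1})\bigr)$ and $B_{t+1}=(1-\beta_{t+1})\bigl[(g(x_{t+1};\xi_{t+1})-g(x_t;\xi_{t+1}))-(g(x_{t+1})-g(x_t))\bigr]$.

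Next I would condition on the $\sigma$-algebra $\mathcal{F}_t$ generated by all randomness up through iteration $t$ (so $x_t,x_{t+1},u_t$ are measurable, and $\xi_{t+1}$ is independent of $\mathcal{F}_t$). Both $A_{t+1}$ and $B_{t+1}$ have zero conditional mean under Assumption~\ref{Ass:BoundedVariance}, so squaring and taking conditional expectation kills the cross terms against $(1-\beta_{t+1})(u_t-g(x_t))$, leaving
\begin{equation*}
\mathbb{E}\bigl[\|u_{t+1}-g(x_{t+1})\|^2\mid\mathcal{F}_t\bigr] = (1-\beta_{t+1})^2\|u_t-g(x_t)\|^2 + \mathbb{E}\bigl[\|A_{t+1}+B_{t+1}\|^2\mid\mathcal{F}_t\bigr].
\end{equation*}
I would then apply $\|A_{t+1}+B_{t+1}\|^2 \leq 2\|A_{t+1}\|^2+2\|B_{t+1}\|^2$ and bound each piece: the first by $\beta_{t+1}^2\sigma_g^2$ from Assumption~\ref{Ass:BoundedVariance}; the second by $(1-\beta_{t+1})^2 L_f^2\|x_{t+1}-x_t\|^2$ using the variance-reduces-mean bound $\mathbb{E}\|X-\mathbb{E}X\|^2\leq\mathbb{E}\|X\|^2$ and the per-sample Lipschitz bound on $g(\cdot;\xi)$ from Assumption~\ref{Ass:Smooth}.

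Finally I would plug in $x_{t+1}-x_t=-\gamma\eta_t v_t$ from \eqref{Eq:Updatexy}, and use the elementary inequalities $(1-\beta_{t+1})^2\leq 1-\beta_{t+1}$ and $(1-\beta_{t+1})^2\leq 1$ (valid since $\beta_{t+1}\in(0,1)$) to absorb the quadratic factor into the linear one on the recursion term and drop the quadratic factor on the drift term. Taking total expectation delivers exactly the stated bound. The only subtle step is the joint handling of $A_{t+1}+B_{t+1}$: one must verify that $B_{t+1}$ is indeed conditionally zero-mean (it is, because $x_{t+1}$ is $\mathcal{F}_t$-measurable so both $g(x_{t+1};\xi_{t+1})$ and $g(x_t;\xi_{t+1})$ are unbiased for $g(x_{t+1})$ and $g(x_t)$) and to recognize that the ``variance-of-a-difference'' bound is what supplies the crucial $\|x_{t+1}-x_t\|^2$ factor rather than an unwanted $\sigma_g^2$ term—this is the mechanism that makes the STORM-type estimator variance-reduced rather than plain-SGD.
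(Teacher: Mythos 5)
Your proposal is correct and follows essentially the same route as the paper's proof: a STORM-style recursion obtained by decomposing $u_{t+1}-g(x_{t+1})$ into $(1-\beta_{t+1})(u_t-g(x_t))$ plus conditionally zero-mean noise, killing the cross terms, applying Young's inequality with $\mathbb{E}\|X-\mathbb{E}X\|^2\le\mathbb{E}\|X\|^2$, and invoking the per-sample Lipschitzness of $g(\cdot;\xi)$ together with $x_{t+1}-x_t=-\gamma\eta_t v_t$ and $(1-\beta_{t+1})^2\le 1-\beta_{t+1}$. The only difference is a cosmetic regrouping of the noise terms (you center the $\beta_{t+1}$-term at $x_{t+1}$ rather than $x_t$), which yields the identical constants.
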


\begin{proof}
\begin{equation}
\begin{aligned}
    \mathbb{E}[\|u_{t+1}-g(x_{t+1})\|^2]&=\mathbb{E}[\|(1-\beta_{t+1})(u_{t}-g(x_{t};\xi_{t+1}))+g(x_{t+1};\xi_{t+1})-g(x_{t+1})\|^2]\\
    &=\mathbb{E}[\|(1-\beta_{t+1})(u_{t}-g(x_{t}))+\beta_{t+1}(g(x_{t};\xi_{t+1})-g(x_{t}))\\
    &\quad+(g(x_{t})-g(x_{t+1})-(g(x_{t};\xi_{t+1})-g(x_{t+1};\xi_{t+1})))\|^2]\\
    &\leq(1-\beta_{t+1})\mathbb{E}[\|u_{t}-g(x_{t})\|^2]+2\beta_{t+1}^{2}\mathbb{E}[\|g(x_{t};\xi_{t+1}))-g(x_{t})\|^2]\\
    &\quad+2\mathbb{E}[\|g(x_{t+1};\xi_{t+1}))-g(x_{t};\xi_{t+1})\|^2]\\
    &\leq(1-\beta_{t+1})\mathbb{E}[\|u_{t}-g(x_{t})\|^2]+2\beta_{t+1}^2\sigma_{g}^{2}+2L_{f}^{2}\gamma^2\eta_{t}^{2}\mathbb{E}[\|v_t\|^2],
\end{aligned}     
\end{equation}
where the last two inequality is derived from $\beta_{t+1} < 1$.
\end{proof}
We have the flexibility to control the term $2\beta_{t+1}^2\sigma_g^2$ from the right side by selecting sufficiently small values for $\eta_t$, due to the definition of $\beta_{t+1}$. Similarly, by choosing a small enough step size $x$, i.e., $\gamma$, we can control the term $2L_f^2\gamma^2\eta_t^2$. It's worth noting that in \cite{gao2021convergence}, they adopt a strategy of using a large batch size to control the variance caused by the function value estimator and the gradient estimator of the inner function. However, this approach incurs significant computational costs.

\begin{lemma}\label{Lemma:v'_t_variance}
Given Assumptions \ref{Ass:Smooth}-\ref{Ass:StrongConcave}, for Algorithm \ref{ALG:SCGDA}, we can obtain the following:
\begin{equation}
    \mathbb{E}[\|v'_{t+1}-\nabla g(x_{t+1})\|^2] \leq (1-\beta_{t+1})\mathbb{E}[\|v'_t-\nabla g(x_t)\|^2]+2\beta_{t+1}^2\sigma_{g^{'}}^{2} + 2L_{g}^{2}\gamma^2\eta_{t}^{2}\mathbb{E}[\|v_t\|^2].\nonumber   
\end{equation}
\end{lemma}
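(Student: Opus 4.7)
\textbf{Proof plan for Lemma~\ref{Lemma:v'_t_variance}.}
The plan is to mirror the analysis of Lemma~\ref{lemma:u_t_variance} (the $u_t$-variance lemma) but with one extra ingredient to handle the projection $\Pi_{C_g}$. The first step is to remove the projection: by Assumption~\ref{Ass:BoundedGrad} we have $\|\nabla g(x_{t+1})\| \le C_g$, so $\nabla g(x_{t+1})$ lies in the feasible set $\{w:\|w\|\le C_g\}$ onto which $\Pi_{C_g}$ projects. Since projection onto a convex set is non-expansive with respect to any feasible point, we get
\[
\|v'_{t+1}-\nabla g(x_{t+1})\|^{2}\le \bigl\|(1-\beta_{t+1})v'_{t}+\beta_{t+1}\nabla g(x_{t+1};\xi_{t+1})+(1-\beta_{t+1})(\nabla g(x_{t+1};\xi_{t+1})-\nabla g(x_{t};\xi_{t+1}))-\nabla g(x_{t+1})\bigr\|^{2}.
\]
After dropping the projection, the argument becomes purely algebraic and probabilistic, as in Lemma~\ref{lemma:u_t_variance}.

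Next I would add and subtract $(1-\beta_{t+1})\nabla g(x_{t})$ and $\beta_{t+1}\nabla g(x_{t+1})$ inside the norm to produce the decomposition $A+B+C$ with
\[
A=(1-\beta_{t+1})(v'_{t}-\nabla g(x_{t})),\quad B=\beta_{t+1}\bigl(\nabla g(x_{t+1};\xi_{t+1})-\nabla g(x_{t+1})\bigr),
\]
\[
C=(1-\beta_{t+1})\bigl[(\nabla g(x_{t+1};\xi_{t+1})-\nabla g(x_{t};\xi_{t+1}))-(\nabla g(x_{t+1})-\nabla g(x_{t}))\bigr].
\]
Conditioning on the $\sigma$-algebra $\mathcal{F}_{t}$ generated by $\{\xi_1,\dots,\xi_t,\zeta_1,\dots,\zeta_t\}$, the iterates $v'_{t},x_{t},x_{t+1}$ are measurable, while $\xi_{t+1}$ is fresh, so $\mathbb{E}[B\mid\mathcal{F}_{t}]=\mathbb{E}[C\mid\mathcal{F}_{t}]=0$. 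Hence the cross terms $\langle A,B\rangle$ and $\langle A,C\rangle$ vanish in expectation, giving
\[
\mathbb{E}\|A+B+C\|^{2}=\mathbb{E}\|A\|^{2}+\mathbb{E}\|B+C\|^{2}\le \mathbb{E}\|A\|^{2}+2\mathbb{E}\|B\|^{2}+2\mathbb{E}\|C\|^{2}.
\]

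Finally, I would bound each piece by the assumptions at hand: using $(1-\beta_{t+1})^{2}\le 1-\beta_{t+1}$ gives $\mathbb{E}\|A\|^{2}\le(1-\beta_{t+1})\mathbb{E}\|v'_{t}-\nabla g(x_{t})\|^{2}$; the bounded-variance assumption on $\nabla g$ (Assumption~\ref{Ass:BoundedVariance}) gives $\mathbb{E}\|B\|^{2}\le \beta_{t+1}^{2}\sigma_{g'}^{2}$; and for $C$ I would use $\mathbb{E}\|X-\mathbb{E}X\|^{2}\le\mathbb{E}\|X\|^{2}$ together with the Lipschitz-gradient property $\|\nabla g(x_{t+1};\xi_{t+1})-\nabla g(x_{t};\xi_{t+1})\|\le L_{g}\|x_{t+1}-x_{t}\|$ from Assumption~\ref{Ass:Smooth}, combined with the update $x_{t+1}-x_{t}=-\gamma\eta_{t}v_{t}$, to obtain $\mathbb{E}\|C\|^{2}\le L_{g}^{2}\gamma^{2}\eta_{t}^{2}\mathbb{E}\|v_{t}\|^{2}$. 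Summing these three estimates yields the claim.

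The only nontrivial step is the first one, handling the projection; after that the proof is a direct analogue of the argument used for $u_{t}$. The choice of decomposition $A+B+C$ is what makes the STORM-style contraction coefficient $(1-\beta_{t+1})$ appear in front of the recursive term while confining the stochastic fluctuations to the $\beta_{t+1}^{2}\sigma_{g'}^{2}$ and $\eta_{t}^{2}\|v_{t}\|^{2}$ tails.
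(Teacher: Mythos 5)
Your proposal is correct and follows essentially the same route as the paper's proof: the same STORM-style decomposition into an $\mathcal{F}_t$-measurable contraction term plus zero-conditional-mean fluctuation terms, vanishing cross terms, Young's inequality, the bounded-variance assumption for $\nabla g(\cdot;\xi)$, and the Lipschitzness of $\nabla g(\cdot;\xi)$ combined with $x_{t+1}-x_t=-\gamma\eta_t v_t$ (your regrouping of the noise at $x_{t+1}$ rather than $x_t$ is immaterial). In fact you are slightly more careful than the paper, which silently drops the projection $\Pi_{C_g}$ in its proof and only justifies this in the main text; your explicit non-expansiveness step supplies that justification, with the minor caveat that Assumption~\ref{Ass:BoundedGrad} bounds $\|\nabla g(x)\|$ only in expectation, so one reads it (as the paper does) as guaranteeing that the true Jacobian lies in the ball of radius $C_g$.
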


\begin{proof}
\begin{equation}
\begin{aligned}
    \mathbb{E}[\|v'_{t+1}-\nabla g(x_{t+1})\|^2]&=\mathbb{E}[\|(1-\beta_{t+1})(u_{t}-\nabla g(x_{t};\xi_{t+1}))+\nabla g(x_{t+1};\xi_{t+1})-\nabla g(x_{t+1})\|^2]\\
    &=\mathbb{E}[\|(1-\beta_{t+1})(u_{t}-\nabla g(x_{t}))+\beta_{t+1}(\nabla g(x_{t};\xi_{t+1})-\nabla g(x_{t}))\\
    &\quad+(\nabla g(x_{t})-\nabla g(x_{t+1})-(\nabla g(x_{t};\xi_{t+1})-\nabla g(x_{t+1};\xi_{t+1})))\|^2]\\
    &\leq(1-\beta_{t+1})\mathbb{E}[\|u_{t}-\nabla g(x_{t})\|^2]+2\beta_{t+1}^{2}\mathbb{E}[\|\nabla g(x_{t};\xi_{t+1}))-\nabla g(x_{t})\|^2]\\
    &\quad+2\mathbb{E}[\|\nabla g(x_{t+1};\xi_{t+1}))-\nabla g(x_{t};\xi_{t+1})\|^2]\\
    &\leq(1-\beta_{t+1})\mathbb{E}[\|v_t-\nabla g(x_t)\|^2]+2\beta_{t+1}^2\sigma_{g^{'}}^{2}+2L_{g}^{2}\gamma^2\eta_{t}^{2}\mathbb{E}[\|v_t\|^2],
\end{aligned}
\end{equation}   
where the last inequality holds by Assumption \ref{Ass:BoundedVariance}.  
\end{proof}

\begin{lemma}\label{Lemma:v''_t-variance}
Given Assumptions \ref{Ass:Smooth}-\ref{Ass:StrongConcave}, for Algorithm \ref{ALG:SCGDA}, we can obtain the following:
\begin{equation}
\begin{aligned}
    \mathbb{E}[\|v''_{t+1}-\nabla_gf(u_{t+1},y_{t+1})\|^2]&\leq(1-\beta_{t+1})\mathbb{E}[\|v''_{t}-\nabla_gf(u_{t},y_t)\|^2]+2\beta_{t+1}^{2}\sigma_{f}^{2}+4L^2L_{f}^{2}\gamma^2\eta_{t}^{2}\mathbb{E}[\|v_t\|^2]\\
    &\quad+4L^2\beta_{t+1}^{2}\mathbb{E}[\|u_{t}-g(x_{t})\|^2]+4\beta_{t+1}^{2}L^2\sigma_g^2+2L^2\eta_{t}^{2}\mathbb{E}[\|w_t\|^2].\nonumber
\end{aligned}    
\end{equation} 
\end{lemma}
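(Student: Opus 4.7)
I would prove this by adapting the same STORM-style variance-reduction decomposition used for Lemmas~\ref{lemma:u_t_variance} and \ref{Lemma:v'_t_variance}, with the wrinkle that the estimator's target $\nabla_g f(u_{t+1}, y_{t+1})$ itself drifts because both $u_{t+1}$ and $y_{t+1}$ change between iterations. The proof therefore must end by converting distances between consecutive targets into bounds already available for $\|u_{t+1}-u_t\|$ and $\|y_{t+1}-y_t\|$.

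First, I would rewrite the update \eqref{Eq:OuterEstimation} as
\[
v''_{t+1} = (1-\beta_{t+1})v''_t + \nabla_g f(u_{t+1}, y_{t+1}; \zeta_{t+1}) - (1-\beta_{t+1})\nabla_g f(u_t, y_t; \zeta_{t+1}),
\]
subtract $\nabla_g f(u_{t+1}, y_{t+1})$ from both sides, and add/subtract $(1-\beta_{t+1})\nabla_g f(u_t, y_t)$ to obtain the three-term decomposition
\[
v''_{t+1}-\nabla_g f(u_{t+1},y_{t+1}) = (1-\beta_{t+1})\bigl(v''_t-\nabla_g f(u_t,y_t)\bigr) + \beta_{t+1} N_1 + (1-\beta_{t+1}) N_2,
\]
where $N_1 = \nabla_g f(u_{t+1},y_{t+1};\zeta_{t+1}) - \nabla_g f(u_{t+1},y_{t+1})$ is pure sampling noise and $N_2$ is the centered variance-reduction increment $\nabla_g f(u_{t+1},y_{t+1};\zeta_{t+1}) - \nabla_g f(u_t,y_t;\zeta_{t+1}) - [\nabla_g f(u_{t+1},y_{t+1}) - \nabla_g f(u_t,y_t)]$.

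Next, I would condition on the $\sigma$-algebra $\mathcal{F}_t$ generated by all randomness through $\xi_{t+1}$, so that $u_t,u_{t+1},y_t,y_{t+1},v''_t$ are $\mathcal{F}_t$-measurable but $\zeta_{t+1}$ is fresh. Both $N_1$ and $N_2$ are zero-mean conditional on $\mathcal{F}_t$, so squaring and taking conditional expectation kills all cross terms with the recursive piece. Applying $\|a+b\|^2\le 2\|a\|^2+2\|b\|^2$ to $\beta_{t+1}N_1+(1-\beta_{t+1})N_2$, bounding $\mathbb{E}[\|N_1\|^2\mid\mathcal{F}_t]\le\sigma_f^2$ by Assumption~\ref{Ass:BoundedVariance}, bounding $\mathbb{E}[\|N_2\|^2\mid\mathcal{F}_t]\le L^2(\|u_{t+1}-u_t\|^2+\|y_{t+1}-y_t\|^2)$ via Assumption~\ref{Ass:Smooth} and variance $\le$ second moment, and using $(1-\beta_{t+1})^2\le 1-\beta_{t+1}$, I arrive at
\[
\mathbb{E}\|v''_{t+1}-\nabla_g f(u_{t+1},y_{t+1})\|^2 \leq (1-\beta_{t+1})\mathbb{E}\|v''_t-\nabla_g f(u_t,y_t)\|^2+2\beta_{t+1}^2\sigma_f^2 + 2L^2\bigl(\mathbb{E}\|u_{t+1}-u_t\|^2 + \mathbb{E}\|y_{t+1}-y_t\|^2\bigr).
\]

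The final step is bookkeeping: plug in Lemma~\ref{lemmau12} for $\mathbb{E}\|u_{t+1}-u_t\|^2$, which produces the $4L^2\beta_{t+1}^2\mathbb{E}\|u_t-g(x_t)\|^2$, $4L^2\beta_{t+1}^2\sigma_g^2$, and $4L^2L_f^2\gamma^2\eta_t^2\mathbb{E}\|v_t\|^2$ terms, and use $y_{t+1}-y_t=\eta_t w_t$ from \eqref{Eq:Updatexy} to replace $\mathbb{E}\|y_{t+1}-y_t\|^2$ by $\eta_t^2\mathbb{E}\|w_t\|^2$, contributing the $2L^2\eta_t^2\mathbb{E}\|w_t\|^2$ term. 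The one part that requires genuine care is identifying the filtration so that both $N_1$ and $N_2$ (which both depend on $\zeta_{t+1}$) are centered and the cross terms with the recursive piece vanish; once this is set up correctly, the remainder is Young's inequality, Lipschitz continuity of $\nabla f$, and a single invocation of Lemma~\ref{lemmau12}.
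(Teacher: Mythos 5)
Your proposal is correct and follows essentially the same route as the paper: the STORM-style three-term decomposition (your grouping evaluates the sampling noise at $(u_{t+1},y_{t+1})$ while the paper centers it at $(u_t,y_t)$, an algebraically equivalent rearrangement), cancellation of cross terms via the conditional zero mean of the $\zeta_{t+1}$-dependent pieces, Young's inequality plus smoothness to reach the intermediate bound with $2L^2(\mathbb{E}\|u_{t+1}-u_t\|^2+\mathbb{E}\|y_{t+1}-y_t\|^2)$, and then Lemma~\ref{lemmau12} together with $y_{t+1}-y_t=\eta_t w_t$ to obtain the stated constants. No gaps.
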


\begin{proof}
\begin{equation}
\begin{aligned}
    &\mathbb{E}[\|v''_{t+1}-\nabla_gf(u_{t+1},y_{t+1})\|^2]\\
    &=\mathbb{E}[\|(1-\beta_{t+1})(v''_{t}-\nabla_gf(u_{t},y_t;\zeta_{t+1}))
    +\nabla_gf(u_{t+1},y_{t+1};\zeta_{t+1})-\nabla_gf(u_{t+1},y_{t+1})\|^2]\\
    &=\mathbb{E}[\|(1-\beta_{t+1})(v''_{t}-\nabla_gf(u_{t},y_t))+\beta_{t+1}(\nabla_gf(u_{t},y_t;\zeta_{t+1})-\nabla_gf(u_{t},y_t))\\
    &\quad+\nabla_gf(u_{t},y_t)-\nabla_gf(u_{t+1},y_{t+1})-(\nabla_gf(u_{t},y_t;\zeta_{t+1})-\nabla_gf(u_{t+1},y_{t+1};\zeta_{t+1}))\|^2]\\
    &\leq(1-\beta_{t+1})\mathbb{E}[\|v''_{t}-\nabla_gf(u_{t},y_t)\|^2]+2L^2(\mathbb{E}[\|y_{t+1}-y_t\|^2]+\mathbb{E}[\|u_{t+1}-u_{t}\|^2])+2\beta_{t+1}^{2}\sigma_{f}^{2}\\
    &\leq(1-\beta_{t+1})\mathbb{E}[\|v''_{t}-\nabla_gf(u_{t},y_t)\|^2]+2\beta_{t+1}^{2}(\sigma_{f}^{2}+2L^2\sigma_g^2)+2L^2\eta_{t}^{2}\mathbb{E}[\|w_t\|^2]\\
    &\quad+4L^2L_{f}^{2}\gamma^2\eta_{t}^{2}\mathbb{E}[\|v_t\|^2]+4L^2\beta_{t+1}^{2}\mathbb{E}[\|u_{t}-g(x_{t})\|^2],
\end{aligned}    
\end{equation}
where the first inequality holds by the smoothness and the last inequality holds by Lemma~\ref{lemmau12}.
\end{proof}
The gap between the gradient estimator of the outer function and its true value is greater than the gap for the inner function estimator. This is due to two reasons. First, during iterations, updates to $x$ introduce errors when $y$ is subsequently updated. Second, multi-layer estimators for combined functions propagate more errors through the layers. However, the accumulation of error is bounded.
\begin{lemma}\label{lemma:w_t-variance}
Given Assumptions \ref{Ass:Smooth}-\ref{Ass:StrongConcave}, for Algorithm \ref{ALG:SCGDA}, based on Lemma \ref{lemmau12}, we can obtain the following:
\begin{equation}
\begin{aligned}
    &\mathbb{E}[\|w_{t+1}-\nabla_y f(g(x_{t+1}),y_{t+1})\|^2]\\
    &\leq(1-\alpha_{t+1})\mathbb{E}[\|w_{t}-\nabla_yf(g(x_{t}),y_{t})\|^2]+4\alpha_{t+1}^{2}\sigma_{f}^{2}+8L^{2}\eta_{t}^{2}\mathbb{E}[\|w_t\|^2]\\
    &\quad+(4L^2\alpha_{t+1}^2+8L^2\beta_{t+1}^2)\mathbb{E}[\|u_{t}-g(x_{t})\|^2]+8L^2\beta_{t+1}^{2}\sigma_{g}^{2}+(8L_{f}^{2}L^2\gamma^2\eta_{t}^{2}+4L^2C_g^2\gamma^2\eta_t^2)\mathbb{E}[\|v_t\|^2].\nonumber
\end{aligned}    
\end{equation}
\end{lemma}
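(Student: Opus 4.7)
The proof will follow the pattern of Lemma~\ref{Lemma:v''_t-variance}, but with the crucial twist that $w_t$ is meant to estimate $\nabla_y f(g(x_t),y_t)$ while the STORM recursion only accesses $\nabla_y f(u_t,\cdot\,;\zeta)$. Writing $D_k := \nabla_y f(g(x_k),y_k) - \nabla_y f(u_k,y_k)$ for the associated bias and rewriting \eqref{Eq:westimate} as $w_{t+1} = (1-\alpha_{t+1})(w_t - \nabla_y f(u_t,y_t;\zeta_{t+1})) + \nabla_y f(u_{t+1},y_{t+1};\zeta_{t+1})$, I would add and subtract $(1-\alpha_{t+1})\nabla_y f(g(x_t),y_t)$ to obtain
\begin{align}
w_{t+1} - \nabla_y f(g(x_{t+1}),y_{t+1}) = (1-\alpha_{t+1})\bigl[w_t - \nabla_y f(g(x_t),y_t)\bigr] + E_t + N_t,\nonumber
\end{align}
where $E_t = (1-\alpha_{t+1})D_t - D_{t+1} = \alpha_{t+1}D_t - (D_{t+1}-D_t)$ collects all deterministic bias contributions and $N_t = (1-\alpha_{t+1})[\nabla_y f(u_t,y_t) - \nabla_y f(u_t,y_t;\zeta_{t+1})] + [\nabla_y f(u_{t+1},y_{t+1};\zeta_{t+1}) - \nabla_y f(u_{t+1},y_{t+1})]$ is the $\zeta_{t+1}$-martingale difference.

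Let $\Delta_t := w_t - \nabla_y f(g(x_t),y_t)$. Squaring and conditioning on the $\sigma$-algebra generated by all past randomness together with $\xi_{t+1}$ makes both $\Delta_t$ and $E_t$ measurable while $\mathbb{E}[N_t\mid\cdot]=0$, so all cross terms involving $N_t$ drop and one obtains $\mathbb{E}\|\Delta_{t+1}\|^2 = \mathbb{E}\|(1-\alpha_{t+1})\Delta_t + E_t\|^2 + \mathbb{E}\|N_t\|^2$. A Young's-inequality step (with weight tuned to $\alpha_{t+1}$) then converts the first term into $(1-\alpha_{t+1})\mathbb{E}\|\Delta_t\|^2$ plus a remainder controlled by $\mathbb{E}\|E_t\|^2$.

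To bound $\|E_t\|^2 \leq 2\|D_{t+1}-D_t\|^2 + 2\alpha_{t+1}^2\|D_t\|^2$, smoothness gives $\|D_t\|\leq L\|u_t-g(x_t)\|$, producing the $4L^2\alpha_{t+1}^2\|u_t-g(x_t)\|^2$ term in the statement. For the telescoping difference $D_{t+1}-D_t$ I would apply smoothness to both halves to split it into pieces proportional to $\|u_{t+1}-u_t\|^2$, $\|g(x_{t+1})-g(x_t)\|^2$, and $\|y_{t+1}-y_t\|^2$; the deterministic $C_g$-Lipschitzness of $g$ (which follows from $\|\nabla g(x)\|^2 \leq \mathbb{E}\|\nabla g(x;\xi)\|^2\leq C_g^2$ via Jensen) converts the middle piece into $C_g^2\gamma^2\eta_t^2\|v_t\|^2$, giving the $4L^2C_g^2\gamma^2\eta_t^2$ coefficient on $\|v_t\|^2$. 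For $\mathbb{E}\|N_t\|^2$ the $v''$ argument applies verbatim: the variance bound furnishes $2\alpha_{t+1}^2\sigma_f^2$ and smoothness of the stochastic gradient furnishes $2L^2(\mathbb{E}\|u_{t+1}-u_t\|^2 + \mathbb{E}\|y_{t+1}-y_t\|^2)$. Finally, substituting $\|y_{t+1}-y_t\|^2=\eta_t^2\|w_t\|^2$, $\|x_{t+1}-x_t\|^2=\gamma^2\eta_t^2\|v_t\|^2$, and Lemma~\ref{lemmau12} for $\mathbb{E}\|u_{t+1}-u_t\|^2$ introduces the $\beta_{t+1}^2$-weighted terms ($8L^2\beta_{t+1}^2\sigma_g^2$, the $8L^2\beta_{t+1}^2\|u_t-g(x_t)\|^2$ contribution, and the $L_f^2$ piece of the $\|v_t\|^2$ coefficient) and completes the bookkeeping.

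The main obstacle is that $E_t$ is \emph{not} a martingale difference, unlike the corresponding quantity in Lemma~\ref{Lemma:v''_t-variance}, so the cross term $\langle\Delta_t,E_t\rangle$ does not vanish for free and the clean $(1-\alpha_{t+1})$ contraction has to be recovered via a carefully weighted Young's inequality. The key enabling observation is the decomposition $E_t = \alpha_{t+1}D_t - (D_{t+1}-D_t)$: the first summand already carries an $\alpha_{t+1}$ factor and is $L\|u_t-g(x_t)\|$-bounded, while the second summand measures how much the bias $D_k$ changes in a single step and is therefore squeezed, through smoothness, into one-step increments that NSTORM keeps small via its choice of $\eta_t$, $\beta_{t+1}$, and $\gamma$. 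Handling this decomposition cleanly, together with the Lipschitzness of $g$ needed to bring the $C_g^2$ contribution inside the $\|v_t\|^2$ coefficient, is where the bulk of the technical work lies.
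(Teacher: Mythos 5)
Your algebraic setup is correct: the recursion \eqref{Eq:westimate} does decompose as $w_{t+1}-\nabla_y f(g(x_{t+1}),y_{t+1})=(1-\alpha_{t+1})\Delta_t+E_t+N_t$ with $E_t=\alpha_{t+1}D_t-(D_{t+1}-D_t)$, and your observation that $E_t$ is \emph{not} conditionally mean-zero is sharper than the paper's own treatment. The genuine gap is in the step you rely on to fix this: a weighted Young's inequality cannot deliver the coefficients in the statement. To get $(1+\nu)(1-\alpha_{t+1})^2\leq 1-\alpha_{t+1}$ you are forced to take $\nu\leq \alpha_{t+1}/(1-\alpha_{t+1})$, so the weight $1+1/\nu$ multiplying $\mathbb{E}[\|E_t\|^2]$ is at least $1/\alpha_{t+1}$. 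Then $\tfrac{1}{\alpha_{t+1}}\cdot 2\alpha_{t+1}^2\|D_t\|^2$ yields an order-$\alpha_{t+1}$ (not $\alpha_{t+1}^2$) multiple of $L^2\|u_t-g(x_t)\|^2$, contradicting your claim that this piece "produces the $4L^2\alpha_{t+1}^2\|u_t-g(x_t)\|^2$ term"; and, more fatally, $\tfrac{2}{\alpha_{t+1}}\|D_{t+1}-D_t\|^2$, whose content includes increments of size $L^2\eta_t^2\|w_t\|^2$, $L^2C_g^2\gamma^2\eta_t^2\|v_t\|^2$ and $L^2\|u_{t+1}-u_t\|^2$, acquires an $O(1)$ coefficient on $\mathbb{E}[\|w_t\|^2]$ (since $\alpha_{t+1}=c_2\eta_t^2$) instead of the stated $8L^2\eta_t^2$. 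Such a term is not only different from the lemma; it would also break the potential-function argument for Theorem~\ref{The:ConvergenceNonConvex}, where every $\mathbb{E}[\|w_t\|^2]$ contribution must carry an $\eta_t^2$ factor so that, after dividing by $\eta_t$, it is dominated by the $-\tfrac{3A\eta_t}{5\mu}\mathbb{E}[\|w_t\|^2]$ term. So your outline promises $\alpha_{t+1}^2$- and $\eta_t^2$-scaled remainders that its own mechanism cannot produce.

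For comparison, the paper never separates the bias $D_t$ from the sampling noise. It writes the error as $(1-\alpha_{t+1})\Delta_t$ plus $\alpha_{t+1}\bigl(\nabla_y f(u_t,y_t;\zeta_{t+1})-\nabla_y f(g(x_t),y_t)\bigr)$ plus the single drift bracket $\bigl[\nabla_y f(g(x_t),y_t)-\nabla_y f(g(x_{t+1}),y_{t+1})\bigr]-\bigl[\nabla_y f(u_t,y_t;\zeta_{t+1})-\nabla_y f(u_{t+1},y_{t+1};\zeta_{t+1})\bigr]$, keeps the $(1-\alpha_{t+1})$ contraction, and bounds the remaining groups by twice their squares: the term $Q_1\leq 2\sigma_f^2+2L^2\mathbb{E}[\|u_t-g(x_t)\|^2]$ supplies $4\alpha_{t+1}^2\sigma_f^2+4L^2\alpha_{t+1}^2\mathbb{E}[\|u_t-g(x_t)\|^2]$, the bracket supplies $4L^2C_g^2\gamma^2\eta_t^2\mathbb{E}[\|v_t\|^2]+8L^2\eta_t^2\mathbb{E}[\|w_t\|^2]+4L^2\mathbb{E}[\|u_{t+1}-u_t\|^2]$, and Lemma~\ref{lemmau12} finishes; no $1/\alpha_{t+1}$ factor ever appears. (The cross term between $\Delta_t$ and the non-mean-zero remainder is dropped there without comment, which is exactly the issue you flagged; your instinct that a fully rigorous derivation needs extra care is sound, but the weighted-Young repair you propose trades that gap for a bound strictly weaker than the one you are asked to prove.)
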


\begin{proof}
\begin{equation}
\begin{aligned}
    &\mathbb{E}[\|w_{t+1}-\nabla_yf(g(x_{t+1}),y_{t+1})\|^2]\\
    & = \mathbb{E}[\|(1-\alpha_{t+1})(w_t-\nabla_yf(u_t,y_t;\zeta_{t+1}))+\nabla_yf(u_{t+1},y_{t+1};\zeta_{t+1})-\nabla_yf(g(x_{t+1}),y_{t+1})\|^2] \\
    & = \mathbb{E}[\|(1-\alpha_{t+1})(w_t-\nabla_yf(g(x_t),y_t))+\alpha_{t+1}(\nabla_yf(u_t,y_t;\zeta_{t+1})-\nabla_yf(g(x_t),y_t))\\
    &\quad+\nabla_yf(g(x_t),y_t)-\nabla_yf(g(x_{t+1}),y_{t+1})-(\nabla_yf(u_t,y_t;\zeta_{t+1})-\nabla_yf(u_{t+1},y_{t+1};\zeta_{t+1}))]\\
    & \leq (1-\alpha_{t+1})\mathbb{E}[\|w_t-\nabla_yf(g(x_t),y_t)\|^2]+2\alpha_{t+1}^2\underbrace{\mathbb{E}[\|\nabla_yf(u_t,y_t;\zeta_{t+1})-\nabla_yf(g(x_t),y_t)\|^2]}\limits_{Q_1}\\
    &\quad+4L^2C_g^2\gamma^2\eta_t^2\mathbb{E}[\|v_t\|^2]+8L^2\eta_t^2\mathbb{E}[\|w_t\|^2]+4L^2\mathbb{E}[\|u_{t+1}-u_t\|^2].
\end{aligned}
\end{equation}
Next, we bound the term $Q_1$:
\begin{equation}
\begin{aligned}
    &\mathbb{E}[\|\nabla_yf(u_t,y_t;\zeta_{t+1})-\nabla_yf(g(x_t),y_t)\|^2]\\
    & =\mathbb{E}[\|\nabla_yf(u_t,y_t;\zeta_{t+1})-\nabla_yf(u_t,y_t)+\nabla_yf(u_t,y_t)-\nabla_yf(g(x_t),y_t)\|^2] \leq 2\sigma_f^2+2L^2\mathbb{E}[\|u_t-g(x_t)\|^2].
\end{aligned}
\end{equation}
Then, according to Lemma \ref{lemmau12}, we can conclude that:
\begin{equation}
\begin{aligned}
    &\mathbb{E}[\|w_{t+1}-\nabla_yf(g(x_{t+1}),y_{t+1})\|^2]\\
    &\leq(1-\alpha_{t+1})\mathbb{E}[\|w_{t}-\nabla_yf(g(x_{t}),y_{t})\|^2]+4\alpha_{t+1}^{2}\sigma_{f}^{2}+8L^{2}\eta_{t}^{2}\mathbb{E}[\|w_t\|^2]\\
    &\quad+(4L^2\alpha_{t+1}^2+8L^2\beta_{t+1}^2)\mathbb{E}[\|u_{t}-g(x_{t})\|^2]+8L^2\beta_{t+1}^{2}\sigma_{g}^{2} +(8L_{f}^{2}L^2\gamma^2\eta_{t}^{2}+4L^2C_g^2\gamma^2\eta_t^2)\mathbb{E}[\|v_t\|^2].
\end{aligned}
\end{equation}
This completes the proof.
\end{proof}

According to the proof of Lemma \ref{lemma:w_t-variance}, we can find that the variance is produced by more terms, such as $\mathbb{E}[\|u_{t+1} - u_t\|^2]$. But based on Lemma \ref{lemmau12}, the error of growth is also very limited. So, taking the same strategy, i.e., small $\beta_{t+1}$ and $\gamma$, we can control the variance.
\begin{lemma}\label{Lemma:phi}
Given Assumptions \ref{Ass:Smooth}-\ref{Ass:StrongConcave} for Algorithm \ref{ALG:SCGDA}, based on Lemma \ref{lemma_vt_variance}, by setting $\eta_t \leq \frac{1}{2L_{\Phi}}$, we can obtain the following:
\begin{equation}
\begin{aligned}
\Phi(x_{t+1})&\leq\Phi(x_t)-\frac{\gamma\eta_t}{2}\|\Phi(x_t)\|^2-\frac{\gamma\eta_t}{4}\|v_t\|^2+\gamma\eta_tC_{g}^{2}L^2\|y^*(x_t)-y_t\|^2 \\
&\quad+2C_{g}^{2}L^2\gamma\eta_t\|u_{t}-g(x_t)\|^2+4C_{f}^{2}\gamma\eta_t\|u_{t}-\nabla g(x_{t})\|^2 + 4C_{g}^{2}\gamma\eta_t\|v''_{t}-\nabla_gf(u_{t},y_t)\|^2.\nonumber
\end{aligned}    
\end{equation}
\end{lemma}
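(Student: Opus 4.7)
The plan is to apply the $L_\Phi$-smoothness of $\Phi$ established in Lemma~\ref{lemma:phismooth} (with $L_\Phi = C_g^2 L\kappa + C_f L_g$) directly to the update rule $x_{t+1} = x_t - \gamma\eta_t v_t$ from \eqref{Eq:Updatexy}. This immediately yields the standard descent inequality
\begin{equation*}
\Phi(x_{t+1}) \leq \Phi(x_t) - \gamma\eta_t \langle \nabla\Phi(x_t), v_t\rangle + \frac{L_\Phi \gamma^2\eta_t^2}{2}\|v_t\|^2.
\end{equation*}
I would then invoke the polarization identity $-\langle a,b\rangle = \tfrac{1}{2}\|a-b\|^2 - \tfrac{1}{2}\|a\|^2 - \tfrac{1}{2}\|b\|^2$ with $a = \nabla\Phi(x_t)$ and $b = v_t$ to rewrite the inner-product term as a combination of $\|\nabla\Phi(x_t)\|^2$, $\|v_t\|^2$, and the gradient-mismatch $\|\nabla\Phi(x_t) - v_t\|^2$. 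Under $\eta_t \leq \tfrac{1}{2L_\Phi}$ (together with $\gamma \leq 1$ from the algorithm), the quadratic remainder $\tfrac{L_\Phi \gamma^2\eta_t^2}{2}\|v_t\|^2$ is dominated by half of the $-\tfrac{\gamma\eta_t}{2}\|v_t\|^2$ coefficient produced by polarization, which leaves the clean $-\tfrac{\gamma\eta_t}{4}\|v_t\|^2$ descent term appearing in the claim.

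The main remaining step is to control $\|\nabla\Phi(x_t) - v_t\|^2$. I would insert the intermediate quantity $\nabla_x f(g(x_t),y_t)$ and split via Young's inequality:
\begin{equation*}
\|\nabla\Phi(x_t) - v_t\|^2 \leq 2\|\nabla\Phi(x_t) - \nabla_x f(g(x_t),y_t)\|^2 + 2\|v_t - \nabla_x f(g(x_t),y_t)\|^2.
\end{equation*}
Since $\nabla\Phi(x_t) = \nabla g(x_t)^\top \nabla_g f(g(x_t), y^*(x_t))$, the first term is bounded using $L$-smoothness of $f$ in the $y$-argument together with the gradient bound $\|\nabla g(x_t)\| \leq C_g$ from Assumption~\ref{Ass:BoundedGrad}, producing $C_g^2 L^2 \|y^*(x_t) - y_t\|^2$. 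The second term is precisely the quantity already bounded by Lemma~\ref{lemma_vt_variance}, which contributes the $2C_g^2 L^2\|u_t - g(x_t)\|^2$, $4C_f^2\|v'_t - \nabla g(x_t)\|^2$, and $4C_g^2\|v''_t - \nabla_g f(u_t,y_t)\|^2$ pieces. Substituting these estimates, multiplying by $\tfrac{\gamma\eta_t}{2}$, and collecting coefficients reproduces the stated inequality term by term.

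The only delicate point — more bookkeeping than genuine obstacle — is verifying the coefficient arithmetic around the $\|v_t\|^2$ term: one must check that $\tfrac{L_\Phi \gamma^2\eta_t^2}{2} \leq \tfrac{\gamma\eta_t}{4}$ under the prescribed step-size regime so that the negative descent contribution is retained with the factor $1/4$. Beyond this, the argument uses only Assumptions~\ref{Ass:Smooth}--\ref{Ass:StrongConcave} together with the previously established Lemmas~\ref{lemma:y^*smooth}, \ref{lemma:phismooth}, and \ref{lemma_vt_variance}, so no additional technical machinery is required.
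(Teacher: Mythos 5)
Your proposal is correct and follows essentially the same route as the paper's own proof: apply $L_\Phi$-smoothness to the update $x_{t+1}=x_t-\gamma\eta_t v_t$, use the polarization identity to produce $-\tfrac{\gamma\eta_t}{2}\|\nabla\Phi(x_t)\|^2$ and the mismatch term, absorb the quadratic remainder into $-\tfrac{\gamma\eta_t}{4}\|v_t\|^2$ via $\eta_t\le\tfrac{1}{2L_\Phi}$ (and $\gamma\le 1$), split $\|\nabla\Phi(x_t)-v_t\|^2$ through $\nabla_x f(g(x_t),y_t)$, bound the first piece by $C_g^2L^2\|y^*(x_t)-y_t\|^2$, and invoke Lemma~\ref{lemma_vt_variance} for the second. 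No gaps; the coefficient bookkeeping you flag is exactly the check the paper performs.
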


\begin{proof}
Since $\Phi(x_t)$ is $L_{\Phi}$-smooth, by setting $\eta_t\leq\frac{1}{2L_{\Phi}}$ , we have:
\begin{equation}
    \begin{aligned}
        \Phi(x_{t+1})&\leq\Phi(x_{t})+\langle\nabla \Phi(x_{t}),x_{t+1}-x_t\rangle+\frac{L_{\Phi}}{2}\|x_{t+1}-x_t\|^2 \\
        &\leq\Phi(x_{t})-\gamma\eta_t\langle\nabla\Phi(x_{t}),v_t\rangle+\frac{L_{\Phi}\gamma^2\eta_{t}^2}{2}\|v_t\|^2\\
        &= \Phi(x_{t})+\frac{\gamma\eta_t}{2}\|\nabla\Phi(x_{t})-v_t\|^2-\frac{\gamma\eta_t}{2}\|\nabla\Phi(x_{t})\|^2+(\frac{\gamma^2\eta_tL_\Phi}{2}-\frac{\gamma\eta_t}{2})\|v_t\|^2 \\
        &\leq\Phi(x_{t})+\frac{\gamma\eta_t}{2}\|\nabla\Phi(x_{t})-v_t\|^2-\frac{\gamma\eta_t}{2}\|\nabla\Phi(x_{t})\|^2-\frac{\gamma\eta_t}{4}\|v_t\|^2 \\
        &\leq\Phi(x_{t})-\frac{\gamma\eta_t}{2}\|\nabla\Phi(x_{t})\|^2-\frac{\gamma\eta_t}{4}\|v_t\|^2+\gamma\eta_t\|\nabla\Phi(x_{t})-\nabla_xf(g(x_t),y_t)\|^2\\
        &\quad+\gamma\eta_t\|\nabla_xf(g(x_t),y_t)-v_t\|^2.
    \end{aligned}    
\end{equation}
Besides, according to the definition of $\Phi(x_t)$ and Assumptions~\ref{Ass:Smooth}-\ref{Ass:BoundedGrad} we also have:
\begin{equation}
\begin{aligned}
    &\|\nabla\Phi(x_{t})-\nabla_xf(g(x_t),y_t)\|^2\\
    &=\|\nabla_xf(g(x_t),y^*(x_t))-\nabla_xf(g(x_t),y(x_t))\|^2 \\
    &=\|\nabla g(x_t)\nabla_gf(g(x_t),y^*(x_t))-\nabla g(x_t)\nabla_gf(g(x_t),y(x_t))\|^2\leq C_{g}^{2}L^2\|y^{*}(x_t)-y_t\|^2.
\end{aligned}    
\end{equation}
Based on Lemma~\ref{lemma_vt_variance}, we can conclude that:
\begin{equation}
\begin{aligned}
    \Phi(x_{t+1})&\leq\Phi(x_t)-\frac{\gamma\eta_t}{2}\|\nabla\Phi(x_t)\|^2-\frac{\gamma\eta_t}{4}\|v_t\|^2+\gamma\eta_tC_{g}^{2}L^2\|y^*(x_t)-y_t\|^2 \\
    &\quad+2C_{g}^{2}L^2\gamma\eta_t\|u_{t}-g(x_t)\|^2+4C_{f}^{2}\gamma\eta_t\|u_{t}-\nabla g(x_{t})\|^2+4C_{g}^{2}\gamma\eta_t\|v''_{t}-\nabla_gf(u_{t},y_t)\|^2.
\end{aligned}    
\end{equation}
This completes the proof.
\end{proof}

\begin{lemma}
Given Assumptions \ref{Ass:Smooth}-\ref{Ass:BoundedVariance} for Algorithm \ref{ALG:SCGDA}, by setting $\eta_t \leq \frac{1}{5L}$,  we can obtain the following:
\begin{equation}
\begin{aligned}
      \mathbb{E}[\|y^*(x_{t+1})-y_{t+1}\|^2]&\leq(1-\frac{\mu\eta_t}{4})\mathbb{E}[\|y^*(x_{t})-y_{t}\|^2]+\frac{5\kappa^3C_g^2\gamma^2\eta_t}{L}\mathbb{E}[\|v_t\|^2]\\
      &\quad+\frac{9\eta_t}{\mu}\mathbb{E}[\|w_{t}-\nabla_yf(g(x_t),y_{t})\|^2]-\frac{3\eta_t}{5\mu}\mathbb{E}[\|w_{t}\|^2].\nonumber
\end{aligned}        
\end{equation}
\end{lemma}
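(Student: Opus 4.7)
The plan is to set up a one-step recursion on the tracking error by (i) separating the motion of the optimum $y^*$ from the one-step update of $y$, and (ii) exploiting strong concavity and smoothness of $f(g(x_t),\cdot)$ to obtain both a contraction and a negative $\|w_t\|^2$ term. First I would apply the standard parameterized inequality $\|a+b\|^2\le(1+c)\|a\|^2+(1+1/c)\|b\|^2$ with $c=\mu\eta_t/2$:
\begin{equation*}
\|y^*(x_{t+1})-y_{t+1}\|^2 \le \bigl(1+\tfrac{\mu\eta_t}{2}\bigr)\|y^*(x_t)-y_{t+1}\|^2 + \bigl(1+\tfrac{2}{\mu\eta_t}\bigr)\|y^*(x_{t+1})-y^*(x_t)\|^2.
\end{equation*}
For the second term, Lemma~\ref{lemma:y^*smooth} combined with $x_{t+1}-x_t=-\gamma\eta_t v_t$ gives $\|y^*(x_{t+1})-y^*(x_t)\|^2\le C_g^2\kappa^2\gamma^2\eta_t^2\|v_t\|^2$. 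Using $1+2/(\mu\eta_t)\le 3/(\mu\eta_t)$ (valid whenever $\mu\eta_t\le 1$) together with the identity $\kappa^2/\mu=\kappa^3/L$, this piece contributes at most $3C_g^2\kappa^3\gamma^2\eta_t/L\cdot\|v_t\|^2$, which is well inside the stated $5C_g^2\kappa^3\gamma^2\eta_t/L$.

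For the one-step tracking term, I would expand using $y_{t+1}=y_t+\eta_t w_t$:
\begin{equation*}
\|y^*(x_t)-y_{t+1}\|^2 = \|y^*(x_t)-y_t\|^2 - 2\eta_t\langle y^*(x_t)-y_t,w_t\rangle + \eta_t^2\|w_t\|^2.
\end{equation*}
Writing $w_t=\nabla_y f(g(x_t),y_t)+(w_t-\nabla_y f)$ and invoking the strong-concavity consequence $\langle y^*(x_t)-y_t,\nabla_y f\rangle\ge\mu\|y^*(x_t)-y_t\|^2$ (which follows from Assumption~\ref{Ass:StrongConcave} together with $f(g(x_t),y^*(x_t))-f(g(x_t),y_t)\ge\tfrac{\mu}{2}\|y^*(x_t)-y_t\|^2$), Young's inequality with parameter $\mu$ on the stochastic-error cross term yields the contraction $(1-\eta_t\mu)\|y^*(x_t)-y_t\|^2$ together with a $\tfrac{\eta_t}{\mu}\|w_t-\nabla_y f\|^2$ remainder. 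Multiplying by $(1+\mu\eta_t/2)$ produces the contraction factor $(1-\mu\eta_t/4)$ on $\|y^*(x_t)-y_t\|^2$ after absorbing the second-order residual $\tfrac{\mu^2\eta_t^2}{2}$.

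The hard part is converting the naturally positive $\eta_t^2\|w_t\|^2$ residual into the claimed $-\tfrac{3\eta_t}{5\mu}\|w_t\|^2$ negative term. To extract a negative coefficient, I would invoke the quadratic-growth bound $\|y^*(x_t)-y_{t+1}\|^2\le\tfrac{2}{\mu}\bigl[f(g(x_t),y^*(x_t))-f(g(x_t),y_{t+1})\bigr]$ together with the smoothness descent inequality $f(g(x_t),y_{t+1})\ge f(g(x_t),y_t)+\eta_t\langle\nabla_y f,w_t\rangle-\tfrac{L\eta_t^2}{2}\|w_t\|^2$. Substituting the identity $-2\langle\nabla_y f,w_t\rangle=-2\|w_t\|^2+2\langle w_t-\nabla_y f,w_t\rangle$ into $-\tfrac{2\eta_t}{\mu}\langle\nabla_y f,w_t\rangle$ and applying Young's yields a $-\tfrac{\eta_t}{\mu}\|w_t\|^2$ contribution. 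Combining with the smoothness residual $+\tfrac{L\eta_t^2}{\mu}\|w_t\|^2$ leaves net coefficient $\tfrac{\eta_t}{\mu}(L\eta_t-1)\|w_t\|^2$, which the step-size bound $\eta_t\le 1/(5L)$ forces to be $\le -\tfrac{4\eta_t}{5\mu}\|w_t\|^2$; after the $(1+\mu\eta_t/2)$ prefactor and a little slack this tightens to the claimed $-\tfrac{3\eta_t}{5\mu}\|w_t\|^2$. All the Young-generated error terms are absorbed into the (generous) $\tfrac{9\eta_t}{\mu}\|w_t-\nabla_y f\|^2$ coefficient. The main subtlety to watch is that the quadratic-growth and direct-expansion approaches must be blended so that the contraction on $\|y^*(x_t)-y_t\|^2$ retains coefficient $(1-\mu\eta_t/4)$ rather than picking up a $\kappa$ factor; I would use the direct expansion to obtain the contraction and reserve the quadratic-growth/smoothness machinery purely for the $\|w_t\|^2$ accounting.
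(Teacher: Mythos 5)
Your first two steps are sound and in fact mirror the paper's: the outer Young split $\|y^*(x_{t+1})-y_{t+1}\|^2\le(1+c)\|y^*(x_t)-y_{t+1}\|^2+(1+1/c)\|y^*(x_{t+1})-y^*(x_t)\|^2$, the drift term handled by the $C_g\kappa$-Lipschitzness of $y^*$ and $x_{t+1}-x_t=-\gamma\eta_t v_t$, and the expansion of $\|y^*(x_t)-y_{t+1}\|^2$ with the strong-monotonicity bound $\langle\nabla_y f(g(x_t),y_t),y^*(x_t)-y_t\rangle\ge\mu\|y^*(x_t)-y_t\|^2$ are all correct and give the $(1-\mu\eta_t/4)$ contraction, the $\frac{9\eta_t}{\mu}$ error coefficient, and room inside the $5\kappa^3C_g^2\gamma^2\eta_t/L$ constant.

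The gap is exactly at the step you flag as "the hard part." You propose to obtain $-\frac{3\eta_t}{5\mu}\|w_t\|^2$ from the quadratic-growth bound $\|y^*(x_t)-y_{t+1}\|^2\le\frac{2}{\mu}\bigl(f(g(x_t),y^*(x_t))-f(g(x_t),y_{t+1})\bigr)$ combined with the ascent inequality along the actual step $y_{t+1}=y_t+\eta_t w_t$, while "reserving" the direct expansion for the contraction. That mix-and-match is not a valid manipulation: you bound the single quantity $\|y^*(x_t)-y_{t+1}\|^2$, so any legitimate blend of your two bounds $B_1$ (direct expansion) and $B_2$ (quadratic growth plus smoothness) is a convex combination $\lambda B_1+(1-\lambda)B_2$. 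But $B_2$ necessarily carries the term $\frac{2}{\mu}\bigl(f(g(x_t),y^*(x_t))-f(g(x_t),y_t)\bigr)$, which by smoothness (with $\nabla_yf(g(x_t),y^*(x_t))=0$) is only controlled by $\kappa\|y^*(x_t)-y_t\|^2$; to preserve a $(1-\Theta(\mu\eta_t))$ contraction you are forced to take $1-\lambda=O(\mu\eta_t/\kappa)$, and then the harvested negative term $(1-\lambda)\cdot\frac{4\eta_t}{5\mu}\|w_t\|^2=O(\eta_t^2/\kappa)\|w_t\|^2$ cannot even cancel the $+\lambda\eta_t^2\|w_t\|^2$ from $B_1$, let alone produce $-\frac{3\eta_t}{5\mu}\|w_t\|^2$. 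The paper circumvents this with the auxiliary-point device of Xian et al.: it introduces $z_t=y_t+\theta w_t$ with the virtual step $\theta=\frac{4}{5\mu}$ (independent of $\eta_t$), applies strong concavity at $y_t$ with the inner product taken against $y^*(x_t)-z_t$ and smoothness between $y_t$ and $z_t$, and uses $f(g(x_t),y^*(x_t))\ge f(g(x_t),z_t)$ so the function values cancel and no $f(y^*)-f(y_t)$ term ever appears. This yields the "free" inequality $0\le\langle w_t,y^*(x_t)-y_t\rangle-\frac{\mu}{4}\|y^*(x_t)-y_t\|^2+\frac{4}{\mu}\|\nabla_yf(g(x_t),y_t)-w_t\|^2-\frac{2}{5\mu}\|w_t\|^2$, whose negative $\|w_t\|^2$ coefficient is $\Theta(1/\mu)$ precisely because the virtual step is $\Theta(1/\mu)$ rather than $\eta_t$; multiplying by $2\eta_t$ and inserting the three-point identity $2\eta_t\langle w_t,y^*(x_t)-y_t\rangle=\|y_t-y^*(x_t)\|^2+\eta_t^2\|w_t\|^2-\|y_{t+1}-y^*(x_t)\|^2$ gives the $-\frac{4\eta_t}{5\mu}\|w_t\|^2$ term (the $+\eta_t^2\|w_t\|^2$ is absorbed using $\eta_t\le\frac{1}{5L}\le\frac{1}{5\mu}$, leaving $-\frac{3\eta_t}{5\mu}$), after which your outer Young step completes the lemma. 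Without this (or an equivalent) device, your argument as written does not reach the stated negative $\|w_t\|^2$ coefficient.
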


\begin{proof}
Similar to \cite{xian2021faster}, define $z_{t} = y_{t} +\theta w_{t}$ for some constant $\theta$. Due to the strongly concave of $f$ in $y$, we have:
\begin{equation}\label{eq:strongly-concave-y}
\begin{aligned}
    f(g(x_t),y^*(x_t))&\leq f(g(x_t),y_t)+\langle\nabla_yf(g(x_t),y_t),y^*(x_t)-y_t\rangle-\frac{\mu}{2}\|y^*(x_t)-y_t\|^2\\
    &=f(g(x_t),y_t)+\langle w_t,y^*(x_t)-z_t\rangle+\langle \nabla_yf(g(x_t),y_t)-w_t,y^*(x_t)-z_t\rangle\\
    &\quad+\theta\langle \nabla_yf(g(x_t),y_t),w_t\rangle-\frac{\mu}{2}\|y^*(x_t)-y_t\|^2.
\end{aligned}
\end{equation}
Besides, as function $f$ is smooth in $y$, we have:
\begin{equation}\label{eq:smooth-y}
    -\frac{L\theta^2}{2}\|w_t\|^2\leq f(g(x_t),z_t)-f(g(x_t),y_t)-\theta\langle \nabla_yf(g(x_t),y_t),w_t\rangle.
\end{equation}
Due to the definition of $y^*(x_t)$, we have $f(g(x_t),y^*(x_t)) \geq f(g(x_t),y_t)$. combine \eqref{eq:strongly-concave-y} and \eqref{eq:smooth-y} we have:
\begin{equation}\label{eq combine concavity and smooth}
\begin{aligned}
    0&\leq\langle w_t,y^*(x_t)-z_t\rangle+\langle \nabla_yf(g(x_t),y_t)-w_t,y^*(x_t)-z_t\rangle-\frac{\mu}{2}\|y^*(x_t)-y_t\|^2-\frac{L\theta^2}{2}\|w_t\|^2\\
    &=\langle w_t,y^*(x_t)-y_t\rangle-\frac{\mu}{2}\|\|y^*(x_t)-y_t\|^2+\langle \nabla_yf(g(x_t),y_t)-w_t,y^*(x_t)-z_t\rangle-(\theta-\frac{L\theta^2}{2})\|w_t\|^2.
\end{aligned}
\end{equation}
By Cauchy-Schwartz inequality we have:
\begin{equation}\label{eq C-S ineq}
\begin{aligned}
\langle \nabla_yf(g(x_t),y_t)-w_t,y^*(x_t)-z_t\rangle&\leq\frac{4}{\mu}\|\nabla_yf(g(x_t),y_t)-w_t\|^2+\frac{\mu}{8}\|y^*(x_t)-y_t\|^2+\frac{\mu\theta^2}{8}\|w_t\|^2.    
\end{aligned}
\end{equation}
Adding \eqref{eq C-S ineq} and \eqref{eq combine concavity and smooth}, and setting $\theta = \frac{4}{5\mu}$, we obtain:
\begin{equation}\label{eq add c_s and combine concavity and smooth}
    0\leq\langle w_t,y^*(x_t)-y_t\rangle-\frac{\mu}{4}\|y^*(x_t)-y_t\|^2+\frac{4}{\mu}\|\nabla_yf(g(x_t),y_t)-w_t\|^2-\frac{2}{5\mu}\|w_t\|^2.
\end{equation}
As we have:
\begin{equation}\label{eq:w_t and max_y_t - y_t}
    2\eta_t\langle w_t,y^*(x_t)\rangle = \|y_t-y^*(x_t)\|^2+\|y_{t+1}-y_t\|^2-\|y_{t+1}-y^*(x_t)\|^2.
\end{equation}
Summing \eqref{eq add c_s and combine concavity and smooth} and \eqref{eq:w_t and max_y_t - y_t}, and rearranging terms, we obtain:
\begin{equation}\label{eq y_{t+1}-y*(x_{t+1})}
\begin{aligned}
    \|y_{t+1}-y^*(x_t)\|^2&\leq(1-\frac{\mu\eta_t}{2})\|y_t-y^*(x_t)\|^2+\|y_{t+1}-y_t\|^2+\frac{8\eta_t}{\mu}\|\nabla_yf(g(x_t),y_t)-w_t\|^2-\frac{4\eta_t}{5\mu}\|w_t\|^2.
\end{aligned}    
\end{equation}
According to Young's inequality:
\begin{equation}\label{}
\begin{aligned}
    &\|y_{t+1}-y^*(x_{t+1})\|^2\\
    &\leq(1+\frac{\mu\eta_t}{4})\|y_{t+1}-y^*(x_t)\|^2+(1+\frac{4}{\mu\eta_t})\|y^*(x_t)-y^*(x_{t+1})\|^2\\
    &\leq(1-\frac{\mu\eta_t}{4})\|y_{t}-y^*(x_t)\|^2+\frac{9\eta_t}{\mu}\|\nabla_yf(g(x_t),y_t)-w_t\|^2+\frac{5\kappa}{L\eta_t}\|y^*(x_{t+1})-y^*(x_{t})\|^2\|-\frac{3\eta_t}{5\mu}\|w_t\|^2\\
    &\leq(1-\frac{\mu\eta_t}{4})\|y_{t}-y^*(x_t)\|^2+\frac{9\eta_t}{\mu}\|\nabla_yf(g(x_t),y_t)-w_t\|^2+\frac{5\kappa^3C_g^2\gamma^2\eta_t}{L}\|v_t\|^2-\frac{3\eta_t}{5\mu}\|w_t\|^2,
\end{aligned}    
\end{equation}
where the second inequality holds by \eqref{eq y_{t+1}-y*(x_{t+1})} and $\eta_t \leq \frac{1}{5L}$.
\end{proof}

Based on the above lemmas, we can approach the proof of Theorem \ref{The:ConvergenceNonConvex}. Define the potential function, for any $t\geq 1$:
\begin{equation}
    P_t=\mathbb{E}[\Phi(x_t)]+A\mathcal{T}_t+\frac{1}{\eta_{t-1}}(\mathcal{J}_t+\mathcal{H}_t+\mathcal{X}_t+\mathcal{K}_t),
\end{equation}
where $A=20L^2\mu+\frac{4\gamma C_g^2L^2}{\mu}$. We denote that $\mathcal{J}_t: =\mathbb{E}[\|u_{t}-g(x_{t})\|^2]$, $\mathcal{H}_t: =\mathbb{E}[\|v'_{t}-\nabla g(x_{t})\|^2]$, $\mathcal{X}_t: =\mathbb{E}[\|w_{t}-\nabla_yf(g(x_{t}),y_{t})\|^2]$, $\mathcal{K}_t: =\mathbb{E}[\|v''_{t}-\nabla_gf(u_{t},y_{t})\|^2]$ and $\mathcal{T}_t : =\mathbb{E}[\|y^*(x_{t})-y_{t}\|^2]$. Then based on the above lemmas, we have:
\begin{equation}
\begin{aligned}
    P_{t+1}-P_t &= \mathbb{E}[\Phi(x_{t+1})]-\mathbb{E}[\Phi(x_t)]+A(\mathcal{T}_{t+1}-\mathcal{T}_t)\\
    &\quad+\frac{1}{\eta_{t}}(\mathcal{J}_{t+1}+\mathcal{H}_{t+1}+\mathcal{X}_{t+1}+\mathcal{K}_{t+1})-\frac{1}{\eta_{t-1}}(\mathcal{J}_t+\mathcal{H}_t+\mathcal{X}_t+\mathcal{K}_t)\\
    &\leq-\frac{\gamma\eta_t}{2}\mathbb{E}[\|\nabla\Phi(x_t)\|^2]-\frac{\gamma\eta_t}{4}\mathbb{E}[\|v_t\|^2]+C_g^2L^2\gamma\eta_t\mathcal{T}_t+2C_g^2L^2\gamma\eta_t\mathcal{J}_t\\
    &\quad+4C_f^2\gamma\eta_t\mathcal{H}_t+4C_g^2\gamma \eta_t \mathcal{K}_t + A (\mathcal{T}_{t+1} - \mathcal{T}_t)\\
    &\quad+\frac{1}{\eta_{t}}(\mathcal{J}_{t+1}+\mathcal{H}_{t+1}+\mathcal{X}_{t+1}+\mathcal{K}_{t+1})-\frac{1}{\eta_{t-1}}(\mathcal{J}_t+\mathcal{H}_t+\mathcal{X}_t+\mathcal{K}_t)\\
    &\leq-\frac{\gamma\eta_t}{2}\mathbb{E}[\|\nabla\Phi(x_{t})\|^2]+(2C_g^2L^2\gamma\eta_t+\frac{1-\beta_{t+1}}{\eta_t}+\frac{8L^2\beta^{2}_{t+1}}{\eta_t}+\frac{4L^2\alpha_{t+1^2}}{\eta_t}-\frac{1}{\eta_{t-1}})\mathcal{J}_t\\
    &\quad+(4C_f^2\gamma\eta_t+\frac{1-\beta_{t+1}}{\eta_t}-\frac{1}{\eta_{t-1}})\mathcal{H}_t+(4C_g^2\gamma\eta_t+\frac{1-\beta_{t+1}}{\eta_t}-\frac{1}{\eta_{t-1}})\mathcal{K}_t\\
    &\quad+(\frac{9A\eta_t}{\mu}+\frac{1-\alpha_{t+1}}{\eta_t}-\frac{1}{\eta_{t-1}})\mathcal{X}_t+\frac{\beta^2_{t+1}}{\eta_t}(2\sigma_g^2+2\sigma^{2}_{g^{'}} + 2\sigma_f^2 + 4L^2\sigma_g^2)+\frac{4\sigma_f^2\alpha_{t+1}^2}{\eta_t}\\
    &\quad+(\frac{5\kappa^3C_g^2\gamma^2A\eta_t}{L}+(2L_f^2 + 2L_g^2 + 12L^2L_f^2+4L^2C_g^2)\gamma^2\eta_t-\frac{\gamma\eta_t}{4})\mathbb{E}[\|v_t\|^2].
\end{aligned}    
\end{equation}
By setting  $\eta_t=\frac{1}{(t+m)^{1/3}}$, $m> \operatorname{max}\{125L^3, 8\gamma^3L_\Phi^3, (12L^2c_1^2+4L^2c_2^2)^3, c_1^3, c_2^3\}$,  $\beta_{t+1}=c_1\eta_t^2\leq c_1\eta_t < 1$, $\alpha_{t+1}=c_2\eta_t^2\leq c_2\eta_t < 1 $ and $\eta_t=\frac{1}{(t+m)^{1/3}}$ we have:
\begin{equation}\label{Eq:eta_t}
\begin{aligned}
    \frac{1}{\eta_t}-\frac{1}{\eta_{t-1}}=(m+t)^{\frac{1}{3}}-(m+t-1)^{\frac{1}{3}}&\leq\frac{1}{3(m+t-1)^{2/3}}=\frac{2^{2/3}}{3(2(m+t-1))^{2/3}}\\
    &\leq\frac{2^{2/3}}{3(m+t)^{2/3}}=\frac{2^{2/3}}{3}\eta_t^2\leq\frac{2}{3}\eta_t^2,
\end{aligned}    
\end{equation}
where the first inequality holds by $(x+y)^{1/3}-x^{1/3}\leq yx^{-2/3}$.
\\
Let $c_1\geq 2+4\gamma  (C_f^2+C_g^2)+2C_g^2L^2\gamma$, we have:
\begin{equation}
\begin{aligned}
    4C_f^2\gamma\eta_t+\frac{1-\beta_{t+1}}{\eta_t}-\frac{1}{\eta_{t-1}}&\leq 4C_f^2\gamma\eta_t + \frac{2}{3}\eta_t^2 -c_1\eta_t\leq 4C_f^2\gamma\ -4\gamma  (C_f^2+C_g^2)-2C_g^2L^2\gamma\leq 0,
\end{aligned}    
\end{equation}
and we also have:
\begin{equation}
\begin{aligned}
    4C_g^2\gamma\eta_t+\frac{1-\beta_{t+1}}{\eta_t}-\frac{1}{\eta_{t-1}}&\leq 4C_g^2\gamma\eta_t + \frac{2}{3}\eta_t^2 -c_1\eta_t\leq 4C_g^2\gamma\ -4\gamma  (C_f^2+C_g^2)-2C_g^2L^2\gamma\leq 0.
\end{aligned}    
\end{equation}
Let $c_2\geq \frac{2}{3}+180L^2+\frac{36\gamma C_g^2L^2}{\mu^2}$, we have:
\begin{equation}
\begin{aligned}
    \frac{9A\eta_t}{\mu}+\frac{1-\alpha_{t+1}}{\eta_t}-\frac{1}{\eta_{t-1}}&\leq\frac{9A}{\mu}+\frac{2}{3} -c_2\leq 0.
\end{aligned}    
\end{equation}
By setting $B=\frac{100C_g^2L^4}{\mu^2}+2L_f^2+2L_g^2+12L^2L^2_f+4L^2C_g^2$ and $0 < \gamma \leq\frac{1}{4\sqrt{B^2+20\kappa^4C_g^2}}$, we can get:
\begin{equation}
\begin{aligned}
     P_{t+1}-P_t& \leq-\frac{\gamma\eta_t}{2}\mathbb{E}[\|\nabla\Phi(x_t)\|^2]+(2C_g^2L^2\gamma\eta_t+\frac{2}{3}\eta_t-c_1\eta_t+12L^2c_1^2\eta_t^2+4L^2c_2^2\eta_t^2)\mathcal{J}_t\\
     &\quad+(\frac{5\kappa^3\gamma^2A\eta_t}{L}+(2L_f^2+2L_g^2+12L^2L_f^2+4L^2C_g^2)\gamma^2\eta_t-\frac{\gamma\eta_t}{4})\mathbb{E}[\|v_t\|^2]\\
     &\quad+c_1^2\eta_t^3(2\sigma_g^2+2\sigma^{2}_{g^{'}} + 2\sigma_f^2 + 4L^2\sigma_g^2)+4\sigma_f^2c_2^2\eta_t^3\\
     &\leq -\frac{\gamma\eta_t}{2}\mathbb{E}[\|\nabla\Phi(x_t)\|^2]+(2C_g^2L^2\gamma+\frac{2}{3}-c_1+\frac{12L^2c_1^2+4L^2c_2^2}{m^{1/3}})\eta_t\mathcal{J}_t\\
     &\quad+\eta_t(20\kappa^4\gamma^3C_g^2+50\kappa^3\gamma^2 L\mu+(2L_f^2+2L_g^2+8L^2L^2_f)\gamma^2-\frac{\gamma}{4})\mathbb{E}[\|v_t\|^2]\\
     &\quad+c_1^2\eta_t^3(2\sigma_g^2+2\sigma^{2}_{g^{'}} + 2\sigma_f^2 + 4L^2\sigma_g^2)+4\sigma_f^2c_2^2\eta_t^3\\
     &\leq -\frac{\gamma\eta_t}{2}\mathbb{E}[\|\nabla\Phi(x_t)\|^2]+c_1^2\eta_t^3(2\sigma_g^2+2\sigma^{2}_{g^{'}} + 2\sigma_f^2 + 4L^2\sigma_g^2)+4\sigma_f^2c_2^2\eta_t^3\\
     &\quad+\gamma\eta_t(\frac{20\kappa^4C_g^2}{16(B^2+20\kappa^4C_g^2)}+\frac{B}{4\sqrt{B^2+20\kappa^4C_g^2}}-\frac{1}{4})\mathbb{E}[\|v_t\|^2]\\
     & = -\frac{\gamma\eta_t}{2}\mathbb{E}[\|\nabla\Phi(x_t)\|^2]+c_1^2\eta_t^3(2\sigma_g^2+2\sigma^{2}_{g^{'}} + 2\sigma_f^2 + 4L^2\sigma_g^2)+4\sigma_f^2c_2^2\eta_t^3\\
     &\quad+\gamma\eta_t(\frac{20\kappa^4C_g^2/B^2+4\sqrt{1+20\kappa^4 C_g^2 / B^2}-4(1+20\kappa^4 C_g^2 /B^2)}{16(1+20\kappa^4C_g^2 / B^2)})\mathbb{E}[\|v_t\|^2]\\
     &\leq-\frac{\gamma\eta_t}{2}\mathbb{E}[\|\nabla\Phi(x_t)\|^2]+c_1^2\eta_t^3(2\sigma_g^2+2\sigma^{2}_{g^{'}} + 2\sigma_f^2 + 4L^2\sigma_g^2)+4\sigma_f^2c_2^2\eta_t^3,
\end{aligned}    
\end{equation}
where the last inequality holds by $4\sqrt{1+x}-4-3x < 0,\forall x >0 $.
\\
Setting $L_1 =c_1^2(2\sigma_g^2+2\sigma^{2}_{g^{'}} + 2\sigma_f^2 + 4L^2\sigma_g^2)+4\sigma_f^2c_2^2$, then by summing up and rearranging, we have:
\begin{equation}
\begin{aligned}
    \mathbb{E}\bigg[\sum^{T}\limits_{t=1}\frac{\gamma\eta_t}{2}\|\nabla\Phi(x_t)\|^2\bigg]&\leq(P_1-P_{T+1})+L_1\sum^{T}\limits_{t=1}\eta_t^3\leq(P_1-P_{T+1})+L_1 \ln(T+1).
\end{aligned}    
\end{equation}
From the initialization condition, it is easy to get:
\begin{equation}
\begin{aligned}
     \mathbb{E}\bigg[\sum^{T}\limits_{t=1}\frac{\gamma\eta_t}{2}\|\nabla\Phi(x_t)\|^2\bigg]&\leq(\Phi(x_1)-\Phi_*)+\sigma_g^2+\sigma_{g^{'}}^2+\sigma_f^2 +L^2\sigma_g^2+L_1 \ln(T+1),
\end{aligned}    
\end{equation}
where $\sigma_g^2+\sigma_{g^{'}}^2+\sigma_f^2 +L^2\sigma_g^2 $ is  is the variance produced by the first iteration.
Since $\eta_t$ is  decreasing, we have:
\begin{equation}
\begin{aligned}
     \mathbb{E}\bigg[\sum^{T}\limits_{t=1}\frac{\gamma\eta_T}{2}\|\nabla\Phi(x_t)\|^2\bigg]&\leq(\Phi(x_1)-\Phi_*)\ +\sigma_g^2+\sigma_{g^{'}}^2+\sigma_f^2 +L^2\sigma_g^2 +L_1 \ln(T+1).
\end{aligned}    
\end{equation}
Similar to the proof of Theorem 1 in STORM \cite{cutkosky2019momentum}, denoting that $M=(\Phi(x_1)-\Phi_*) +\sigma_g^2+\sigma_{g^{'}}^2+\sigma_f^2 +L^2\sigma_g^2 +L_1 \ln(T+1)$, we have:
\begin{equation}
\begin{aligned}
\mathbb{E}\Bigg[\sqrt{\sum^{T}\limits_{t=1}\|\nabla\Phi(x_t)\|^2}\Bigg]^2&\leq\mathbb{E}[\frac{1}{\gamma \eta_T}]\mathbb{E}\bigg[\sum^{T}\limits_{t=1}\frac{\gamma\eta_T}{2}\|\nabla\Phi(x_t)\|^2\bigg]\leq\mathbb{E}[\frac{M}{\gamma \eta_T}] =\mathbb{E}[\frac{M(m+3)^{1/3}}{\gamma}],
\end{aligned}    
\end{equation}
which indicates that:
\begin{equation}
\begin{aligned}
\mathbb{E}\Bigg[\sqrt{\sum^{T}\limits_{t=1}\|\nabla\Phi(x_t)\|^2}\Bigg]\leq\frac{\sqrt{M}(m+T)^{1/6}}{\sqrt{\gamma}},
\end{aligned}    
\end{equation}
using Cauchy-Schwarz inequality, we have:
\begin{equation}
\begin{aligned}
    \frac{\sum^{T}\limits_{t=1}\|\nabla\Phi(x_t)\|}{T}\leq\frac{\sqrt{\sum^{T}\limits_{t=1}(\mathbb{E}[\|\nabla\Phi(x_t)\|^2])}}{\sqrt{T}},
\end{aligned}    
\end{equation}
therefore we have:
\begin{equation}
\begin{aligned}
    \mathbb{E}\Bigg[\frac{\sum^{T}\limits_{t=1}\|\nabla\Phi(x_t)\|}{T}\Bigg]&\leq\frac{\sqrt{M}(m+T)^{1/6}}{\sqrt{\gamma T}}\leq\mathcal{O}\bigg(\frac{m^{1/6}\sqrt{M}}{\sqrt{\gamma T}}+\frac{\sqrt{M}}{\sqrt{\gamma}T^{1/3}}\bigg)=\mathcal{O}\bigg(\frac{\sqrt{M}}{\sqrt{\gamma}T^{1/3}}\bigg).
\end{aligned}    
\end{equation}
Therefore, we get the result in Theorem~\ref{The:ConvergenceNonConvex}. \hfill~$\Box$

\section{NSTORM under $\mu_y$-PL condition}
In this section, we consider moderating the function $f(g(x),y)$ with respect to $y$ to follow the PL condition, which highlights the extensibility and applicability of our proposed NSTORM method. In particular, we rely on Assumption \ref{Ass:mu-pl-condition} in place of Assumption~\ref{Ass:StrongConcave}.
\begin{assumption}\label{Ass:mu-pl-condition}
($\mu_y$-PL condition) There exists a constant $\mu_y >0$, such that $\|\nabla_yf(a,b)\|^2 \geq 2\mu_y \big(\max_{b'}f(a,b')-f(a,b)\big)$, where $\forall a \in \mathcal{A}$ and $\forall b ,b' \in \mathcal{B}$.
\end{assumption}
Note that Assumption \ref{Ass:mu-pl-condition} moderates Assumption~\ref{Ass:StrongConcave}, which does not require the function $f(g(x),y)$ to be strongly concave with respect to $y$. In fact, Assumption \ref{Ass:mu-pl-condition} holds even if $f(g(x),y)$ is not concave in $y$ at all. If Assumption \ref{Ass:mu-pl-condition} holds, then we also get that quadratic growth condition and error bound condition hold, and we give the definitions of quadratic growth condition and error bound condition below in the subsequent Lemma \ref{EQ_and_QG}.

\begin{lemma}\label{EQ_and_QG}
    (\cite{karimi2016linear}) Function $f(x):\mathbb{R}^d \to \mathbb{R}$ is $L$-smooth and satisfies PL condition with constant $\mu_y$,  then it also satisfies error bound (EB) condition with $\mu_y$, i.e., $\forall x \in \mathbb{R}^d$
    \begin{equation}
        \|\nabla f(x)\| \geq \mu_y\|x^* -x\|,\nonumber
    \end{equation}
    where $x^* \in \operatorname{argmin}_xf(x)$. It also satisfies quadratic growth (QG) condition with $\mu_y$,i.e.,
    \begin{equation}
    h(\mathbf{x}) - h^* \geq \frac{\mu_y}{2}\operatorname{dist}(\mathbf{x})^2, \quad \forall{x},\nonumber
\end{equation}
where $h^*$ is the minimum value of the function, and $\operatorname{dist}(\mathbf{x})$ is the distance of the point $x$ to the optimal solution set.
\end{lemma}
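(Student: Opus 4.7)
The plan is to establish the two consequences in sequence: first derive the quadratic growth (QG) inequality from PL via a continuous-time gradient flow argument, then combine PL with the resulting QG to obtain the error bound (EB). Smoothness enters as the standing hypothesis needed to make the gradient flow well-defined and to guarantee that its trajectory remains well-behaved. Throughout, I write $\phi(t) = f(x(t))-f^*$, where $x(t)$ is the solution of $\dot x(t) = -\nabla f(x(t))$ started at an arbitrary $x(0)=x$.

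For the QG step, I would compute $\phi'(t) = -\|\nabla f(x(t))\|^2$, then observe that along the flow
\begin{equation}
\frac{d}{dt}\sqrt{\phi(t)} \;=\; \frac{-\|\nabla f(x(t))\|^2}{2\sqrt{\phi(t)}} \;\leq\; -\sqrt{\mu_y/2}\,\|\nabla f(x(t))\|,\nonumber
\end{equation}
where the inequality follows by lower-bounding one copy of $\|\nabla f(x(t))\|$ in the numerator by $\sqrt{2\mu_y\phi(t)}$ using PL. Integrating from $0$ to $T$ and letting $T\to\infty$ (noting that $\phi(T)\to 0$ from the standard exponential decay $\phi(T)\le\phi(0)e^{-2\mu_y T}$ that PL directly gives) yields
\begin{equation}
\int_0^\infty \|\nabla f(x(t))\|\,dt \;\leq\; \sqrt{2/\mu_y}\,\sqrt{f(x)-f^*}.\nonumber
\end{equation}
Since $\|\dot x(t)\| = \|\nabla f(x(t))\|$, the path length is finite, so $x(t)$ is Cauchy and converges to some limit $x^\infty$; continuity of $f$ together with $\phi(t)\to 0$ forces $x^\infty\in\operatorname{argmin} f$. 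Thus $\operatorname{dist}(x)\le\|x-x^\infty\|\le\sqrt{2/\mu_y}\sqrt{f(x)-f^*}$, which rearranges to the claimed QG with constant $\mu_y$.

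For the EB step, I would let $x^*$ denote the projection of $x$ onto the optimal set, so $\|x-x^*\|=\operatorname{dist}(x)$. Chaining PL with the QG bound just established gives
\begin{equation}
\|\nabla f(x)\|^2 \;\geq\; 2\mu_y(f(x)-f^*) \;\geq\; \mu_y^2\,\|x-x^*\|^2,\nonumber
\end{equation}
and taking square roots delivers $\|\nabla f(x)\|\ge \mu_y\|x^*-x\|$, which is the EB statement with constant $\mu_y$.

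The main subtlety I expect is the convergence argument for the gradient flow: one must rule out the trajectory drifting to infinity or oscillating, and this is exactly what the finite-path-length bound handles via the Cauchy criterion. Smoothness is what lets me invoke standard ODE existence/uniqueness so that the flow $x(t)$ is globally defined and the chain-rule manipulations above are legitimate; without it, the differential inequality for $\sqrt{\phi(t)}$ would require a nonsmooth treatment. Everything else in the argument is elementary manipulation of the PL inequality and Cauchy–Schwarz-type bookkeeping.
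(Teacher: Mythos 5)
Your proof is correct. The paper does not actually prove this lemma---it is imported by citation from Karimi et al.\ (2016)---and your argument (the gradient-flow path-length bound giving PL $\Rightarrow$ QG, followed by chaining PL with QG to obtain EB with constant $\mu_y$) is essentially the standard proof from that reference, with the constants matching the paper's normalization of the PL inequality.
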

So in summary, Assumption \ref{Ass:mu-pl-condition} is a weaker condition than Assumption \ref{Ass:StrongConcave}, and its validity implies the quadratic growth and error bound conditions hold, even without concavity of $f(g(x),y)$ in $y$.

Before giving specific details of the proof, we would like to use another form of gradient for a better convergence analysis. We replace line 13 in Algorithm \ref{ALG:SCGDA}:

\begin{equation}
\begin{aligned}
    &\tilde{x}_{t+1} = x_t -\gamma v_t,~x_{t+1} = x_t + \eta_t (\tilde{x}_{t+1}-x_t);~~~~\tilde{y}_{t+1} = y_t + w_t,~y_{t+1} = y_t + \lambda\eta_t(\tilde{y}_{t+1}-y_t).
\end{aligned} 
\end{equation}
Note that we add an extra parameter $\lambda$ when updating $y$, but it turns out that in this case Theorem \ref{theorem:mu_pl} still holds as long as $\gamma \leq 1$. Therefore, as long as the theorem holds for this update rule with the additional $\lambda$ parameter, NSTORM also satisfies the conditions of Theorem \ref{theorem:mu_pl}. For simplicity, we refer to the replaced algorithm still as Algorithm \ref{ALG:SCGDA} in the following part.
\begin{theorem}\label{theorem:mu_pl}
Under the Assumption \ref{Ass:Smooth}-\ref{Ass:BoundedVariance} and \ref{Ass:mu-pl-condition}, for Algorithm \ref{ALG:SCGDA}, by setting $\eta_t = \frac{1}{(m+t)^{1/3}}$, $m > \max\{1,8L^3, c_1^3, c_2^3, (24L^2c_1^2+8L^2c_2^2)^3\}$,  $c_1 \geq 2+360L^2\lambda\gamma(C_f^2+C_g^2)$,  $c_2 \geq\frac{2}{3}+40L^2\lambda^2$, $0<\lambda  \leq \frac{m^{1/3}}{2L}$ and $0 < \gamma < \operatorname{min}\{\frac{\lambda\mu_{y}^2}{8C_g^2L^2},\frac{B}{10C_g^2L^2},\frac{\lambda\mu_{y}^2}{9C_g^2L^2}, \frac{4}{BA_1}\}$, where $A_1 = L_f^2+L_g^2+6L^2L_f^2+2L^2C_g^2$, we can obtain the following:
\begin{equation*}
\frac{1}{T}\sum^T\limits_{t=1}\mathbb{E}[\|\nabla\Phi(x_t)\|]\leq \frac{\sqrt{5M}}{\sqrt{T}}(m+T)^{1/6} \leq \frac{\sqrt{5M}m^{1/6}}{\sqrt{T}}+\frac{\sqrt{5M}}{T^{1/3}},
\end{equation*}
 where $M = \frac{4(\Phi(x_1)-\Phi_{*}+\sigma_g^2+\sigma_{g^{'}}^2+2\sigma_f^2}{\gamma}+\frac{2B(c_1^2(\sigma_g^2+\sigma_{g'}^2+\sigma_f^2+6L^2\sigma_g^2)+2c_2^2\sigma_f^2)}{\gamma} \ln(m+T)).$
\end{theorem}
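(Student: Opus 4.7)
}
My plan is to mimic the potential-function argument used for Theorem~\ref{The:ConvergenceNonConvex}, but replace every appearance of the Lyapunov quantity $\mathcal{T}_t=\mathbb{E}\|y^*(x_t)-y_t\|^2$ (which is meaningful only when $f(g(\cdot),\cdot)$ is strongly concave in $y$) by the primal-dual suboptimality gap $\mathcal{R}_t:=\mathbb{E}[\Phi(x_t)-f(g(x_t),y_t)]$. The variance recursions (Lemmas~\ref{lemma:u_t_variance}, \ref{Lemma:v'_t_variance}, \ref{Lemma:v''_t-variance}, \ref{lemma:w_t-variance}) and the $\Phi$-descent estimate (Lemma~\ref{Lemma:phi}) all go through verbatim under Assumptions~\ref{Ass:Smooth}--\ref{Ass:BoundedVariance} because they never invoked strong concavity. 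The only inequality in Lemma~\ref{Lemma:phi} that used strong concavity was the bound $\|\nabla\Phi(x_t)-\nabla_x f(g(x_t),y_t)\|^2\le C_g^2L^2\|y^*(x_t)-y_t\|^2$; by the quadratic-growth consequence of Lemma~\ref{EQ_and_QG} this is upgraded to
\[
\|\nabla\Phi(x_t)-\nabla_x f(g(x_t),y_t)\|^2 \;\le\; \tfrac{2C_g^2L^2}{\mu_y}\bigl(\Phi(x_t)-f(g(x_t),y_t)\bigr),
\]
which feeds cleanly into the new potential via $\mathcal{R}_t$ rather than $\mathcal{T}_t$.

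The main new lemma I would establish is a one-step contraction for $\mathcal{R}_t$. Starting from $f(g(x_{t+1}),y_{t+1}) \ge f(g(x_{t+1}),y_t)+\langle\nabla_y f(g(x_{t+1}),y_t),y_{t+1}-y_t\rangle-\tfrac{L}{2}\|y_{t+1}-y_t\|^2$, substituting the $y$-update $y_{t+1}-y_t=\lambda\eta_t w_t$, adding and subtracting $w_t$ inside the inner product, and invoking the PL inequality $\|\nabla_y f(g(x_{t+1}),y_t)\|^2\ge 2\mu_y(\Phi(x_{t+1})-f(g(x_{t+1}),y_t))$, I expect to obtain (after Young's inequality to absorb cross terms)
\[
\Phi(x_{t+1})-f(g(x_{t+1}),y_{t+1}) \le (1-c\mu_y\lambda\eta_t)\bigl(\Phi(x_{t+1})-f(g(x_{t+1}),y_t)\bigr)+\tfrac{c'}{\mu_y}\eta_t\mathbb{E}\|w_t-\nabla_y f(g(x_{t+1}),y_t)\|^2-c''\lambda\eta_t\|w_t\|^2,
\]
for absolute constants $c,c',c''$, provided $\lambda\eta_t\le 1/(2L)$ (which is guaranteed by $\lambda\le m^{1/3}/(2L)$). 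Then I would control $\Phi(x_{t+1})-f(g(x_{t+1}),y_t)$ in terms of $\Phi(x_t)-f(g(x_t),y_t)$ via the $L_\Phi$-smoothness of $\Phi$ and the $L$-smoothness of $f(g(\cdot),\cdot)$ applied to the step $x_{t+1}-x_t=\gamma\eta_t v_t$; the resulting cross term $\gamma\eta_t\|v_t\|^2$ will later be absorbed by the negative $-\tfrac{\gamma\eta_t}{4}\|v_t\|^2$ produced by Lemma~\ref{Lemma:phi}, as long as $\gamma$ is small enough compared to $\lambda\mu_y^2/(C_g^2L^2)$ as stipulated.

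With this contraction in hand, I define the potential
\[
P_t=\mathbb{E}[\Phi(x_t)-\Phi_*]+A\,\mathcal{R}_t+\tfrac{1}{\eta_{t-1}}(\mathcal{J}_t+\mathcal{H}_t+\mathcal{X}_t+\mathcal{K}_t),
\]
for $A$ chosen large enough (on the order of $C_g^2L^2/(\mu_y\cdot c\mu_y\lambda)$) so that the positive $\tfrac{2C_g^2L^2}{\mu_y}\mathcal{R}_t$ term from the upgraded Lemma~\ref{Lemma:phi} is cancelled by the contraction factor $-cA\mu_y\lambda\eta_t\mathcal{R}_t$. The variance-level coefficients are cancelled by choosing $c_1\ge 2+360L^2\lambda\gamma(C_f^2+C_g^2)$ and $c_2\ge \tfrac{2}{3}+40L^2\lambda^2$ exactly as in the theorem statement, using the same elementary estimate $\tfrac{1}{\eta_t}-\tfrac{1}{\eta_{t-1}}\le\tfrac{2}{3}\eta_t^2$ from~\eqref{Eq:eta_t}. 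After verifying that all non-$\|\nabla\Phi(x_t)\|^2$ and non-$\|v_t\|^2$ coefficients are non-positive and that the $\|v_t\|^2$ coefficient is $\le 0$ under the smallness conditions on $\gamma$, a telescoping sum gives $\sum_{t=1}^T\tfrac{\gamma\eta_t}{2}\mathbb{E}\|\nabla\Phi(x_t)\|^2 \le P_1-P_{T+1}+O(\ln T)$, and the final $O(T^{-1/3})$ rate follows from the Cauchy--Schwarz argument at the end of the proof of Theorem~\ref{The:ConvergenceNonConvex}, identical to that display up to the constant $M$.

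The principal obstacle I anticipate is the coupling step that propagates the primal-dual gap from iterate $t$ to iterate $t+1$, namely bounding $\Phi(x_{t+1})-f(g(x_{t+1}),y_t)$ by $\Phi(x_t)-f(g(x_t),y_t)$ plus admissible error. Unlike the strongly concave case where $y^*(x)$ is uniquely defined and $\kappa$-Lipschitz (Lemma~\ref{lemma:y^*smooth}), under PL the maximizer set may be non-unique, so I cannot track a trajectory of $y^*(x_t)$. Instead I will work directly with the value function $\Phi(x_t)=\max_y f(g(x_t),y)$, exploit its $L_\Phi$-smoothness (which still holds and only requires PL, via the envelope computation in Lemma~\ref{lemma:phismooth} once reinterpreted through EB), and use Young's inequality $\gamma\eta_t\langle\nabla\Phi(x_t),v_t\rangle$ carefully so that the bookkeeping between the $\|v_t\|^2$ terms generated here and those from Lemma~\ref{Lemma:phi} closes. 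The other subtlety is that the revised update rule $y_{t+1}-y_t=\lambda\eta_t(\tilde y_{t+1}-y_t)$ introduces the extra factor $\lambda$ everywhere; this propagates into the Lemmas~\ref{Lemma:v''_t-variance} and~\ref{lemma:w_t-variance} estimates as a multiplicative $\lambda^2$ on $\|w_t\|^2$, and must be tracked consistently when enforcing $c_2\ge\tfrac{2}{3}+40L^2\lambda^2$.
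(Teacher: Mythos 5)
Your strategy is essentially the paper's: the proof in the paper also abandons the tracked distance $\|y^*(x_t)-y_t\|^2$ in favour of the primal--dual gap $\Phi(x_t)-f(g(x_t),y_t)$, proves a one-step contraction for this gap under the PL condition by combining $L$-smoothness of $f(g(\cdot),\cdot)$, the PL inequality, the quadratic-growth consequence of Lemma~\ref{EQ_and_QG}, smoothness of $x\mapsto f(g(x),y)$ and $L_\Phi$-smoothness of $\Phi$ (exactly the coupling step you flag as the main obstacle, resolved exactly as you propose), reuses the variance recursions unchanged, and assembles a potential of the form $\Phi(x_t)+A\big(\Phi(x_t)-f(g(x_t),y_t)\big)+\frac{B}{\eta_{t-1}}(\mathcal{J}_t+\mathcal{H}_t+\mathcal{X}_t+\mathcal{K}_t)$. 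The one genuinely different design choice is your endgame: you keep a $-\tfrac{\gamma\eta_t}{2}\|\nabla\Phi(x_t)\|^2$ term by upgrading Lemma~\ref{Lemma:phi} via QG and then telescope as in Theorem~\ref{The:ConvergenceNonConvex}, whereas the paper's $\Phi$-descent under PL retains only $-\tfrac{\eta_t}{2\gamma}\|\tilde{x}_{t+1}-x_t\|^2$, telescopes to bound the averaged gap, estimator errors and $\|v_t\|^2$, and only then converts to $\|\nabla\Phi(x_t)\|$ through a pointwise domination $\mathcal{G}_t\ge\|\nabla\Phi(x_t)\|$ plus Jensen (this is where the $\sqrt{5M}$ and the specific $M$ come from). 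Your route is legitimate and gives the same $O(T^{-1/3})$ rate; the paper's route is what reproduces the exact constants in the statement.

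Two quantitative points in your sketch need repair before the bookkeeping closes. First, your choice of $A$ of order $C_g^2L^2/(\mu_y^2\lambda)$ drops the $\gamma\eta_t$ prefactor that the gap-growth term inherits from Lemma~\ref{Lemma:phi}: the term to cancel is $\tfrac{2\gamma\eta_t C_g^2L^2}{\mu_y}\mathcal{R}_t$, so under the stipulated $\gamma\le\lambda\mu_y^2/(8C_g^2L^2)$ a constant weight (the paper takes $A=1$) already suffices; with $A$ as large as you propose, the $A\cdot O(\tfrac{\eta_t}{\gamma})\|\tilde{x}_{t+1}-x_t\|^2$ carry-over from the coupling step can no longer be absorbed by $-\tfrac{\gamma\eta_t}{4}\|v_t\|^2$, and the $A\lambda\eta_t\mathcal{K}_t$ term would force $c_2$ far above $\tfrac{2}{3}+40L^2\lambda^2$. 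Second, to obtain the theorem's stated conditions and constant $M$ (which contain the quantity $B$), the variance block must be weighted by $B=\tfrac{1}{40L^2\lambda}$ rather than by $1$: it is precisely this weight that turns the absorption of the $10L^2\lambda^2\eta_t\|w_t\|^2$-type contributions against the $-\tfrac{\eta_t}{4\lambda}\|\tilde y_{t+1}-y_t\|^2$ term of the gap contraction into the requirements $c_1\ge 2+360L^2\lambda\gamma(C_f^2+C_g^2)$ and $c_2\ge\tfrac{2}{3}+40L^2\lambda^2$. With $A=O(1)$ and that reweighting, your plan goes through and coincides with the paper's argument up to the final conversion step.
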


Then we start from the counterpart of Lemma \ref{Lemma:phi}.
\begin{lemma}
Given Assumptions \ref{Ass:Smooth}-\ref{Ass:BoundedVariance} and \ref{Ass:mu-pl-condition} for Algorithm \ref{ALG:SCGDA}, by setting $\gamma \leq \frac{\lambda\mu^2_y}{8C_g^2L^2}$ and $\lambda \leq \frac{1}{2L\eta_t}$, we have:
\begin{equation*}
\begin{aligned}
   \Phi(x_{t+1}) - f(g(x_{t+1}),y_{t+1}) &\leq   (1-\frac{\lambda\mu_y\eta_t}{2})(\Phi(x_{t})-f(g(x_{t}),y_t))-\frac{\eta_t}{4\lambda}\|\Tilde{y}_{t+1}-y_{t}\|^2\\
    &\quad+\frac{\eta_t}{8\gamma}\|\Tilde{x}_{t+1}-x_t\|^2+ \lambda \eta_t \| \nabla_y f(g(x_t),y_t) - w_t\|^2.\\    
\end{aligned}
\end{equation*}
\end{lemma}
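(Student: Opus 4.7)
I read the target inequality as a one-step contraction of the primal-dual gap $q_t := \Phi(x_t) - f(g(x_t), y_t)$ under an ascent step in $y$ driven by $w_t \approx \nabla_y f(g(x_t), y_t)$. The PL condition on $f(g(x_t), \cdot)$ (Assumption \ref{Ass:mu-pl-condition}) should produce the factor $1-\lambda\mu_y\eta_t/2$, the replacement of the true gradient by $w_t$ produces the stochastic-noise term, and the $x$-movement perturbs both $\Phi(x_\cdot)$ and $f(g(x_\cdot), y_t)$ producing the $\|\tilde x_{t+1}-x_t\|^2$ piece. The strategy is to expand $f(g(x_{t+1}), y_{t+1}) - f(g(x_{t+1}), y_t)$ via smoothness in $y$, use PL together with polarization to expose all three contributions cleanly, and then transport back from $x_{t+1}$ to $x_t$ using smoothness plus quadratic growth.

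\textbf{Steps.} (i) By $L$-smoothness of $f$ in $y$ (Assumption \ref{Ass:Smooth}) and the update $y_{t+1} - y_t = \lambda\eta_t w_t$,
\begin{equation*}
f(g(x_{t+1}), y_{t+1}) \geq f(g(x_{t+1}), y_t) + \lambda\eta_t\langle \nabla_y f(g(x_{t+1}), y_t), w_t\rangle - \tfrac{L\lambda^2\eta_t^2}{2}\|w_t\|^2,
\end{equation*}
and absorb the smoothness quadratic using $L\lambda\eta_t \leq 1/2$, which follows from $\lambda \leq m^{1/3}/(2L)$. (ii) Split $\nabla_y f(g(x_{t+1}), y_t) = \nabla_y f(g(x_t), y_t) + \Delta_t$ where joint smoothness plus $L_f$-Lipschitzness of $g$ gives $\|\Delta_t\|\leq LL_f\eta_t\|\tilde x_{t+1}-x_t\|$, and use Young's on the resulting cross term. (iii) Apply the polarization identity
\begin{equation*}
\langle \nabla_y f(g(x_t), y_t), w_t\rangle = \tfrac{1}{2}\bigl(\|\nabla_y f(g(x_t), y_t)\|^2 + \|w_t\|^2 - \|\nabla_y f(g(x_t), y_t) - w_t\|^2\bigr),
\end{equation*}
and invoke PL, $\|\nabla_y f(g(x_t), y_t)\|^2 \geq 2\mu_y q_t$, to obtain the contraction factor on $q_t$, the descent term in $\|w_t\|^2 = \|\tilde y_{t+1}-y_t\|^2$, and the stated noise term in $\|\nabla_y f(g(x_t),y_t)-w_t\|^2$. (iv) Transport $\Phi(x_{t+1})-f(g(x_{t+1}), y_t)$ back to $q_t$ via
\begin{equation*}
\Phi(x_{t+1})-f(g(x_{t+1}), y_t) - q_t = [\Phi(x_{t+1})-\Phi(x_t)] + [f(g(x_t), y_t)-f(g(x_{t+1}), y_t)],
\end{equation*}
bounding each bracket by the $L_\Phi$-smoothness of $\Phi$ (Lemma \ref{lemma:phismooth}) and smoothness of $f(g(\cdot), y_t)$; the residual linear cross term $\bigl\langle \nabla g(x_t)^\top[\nabla_g f(g(x_t),y^*(x_t))-\nabla_g f(g(x_t), y_t)], x_{t+1}-x_t\bigr\rangle$ is controlled because its first factor has norm $\leq C_g L\|y^*(x_t)-y_t\|\leq C_g L\sqrt{2q_t/\mu_y}$ by smoothness together with the QG corollary of PL (Lemma \ref{EQ_and_QG}). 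A final Young's inequality---calibrated to the hypothesis $\gamma \leq \lambda\mu_y^2/(8C_g^2L^2)$---splits this into a small multiple of $q_t$ absorbed into the $(1-\lambda\mu_y\eta_t/2)$ factor and the stated $\tfrac{\eta_t}{8\gamma}\|\tilde x_{t+1}-x_t\|^2$ term.

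\textbf{Main obstacle.} All steps except the last are fairly routine applications of smoothness, polarization and Young's. The genuine difficulty is the transport step: $\Phi(x_{t+1})-f(g(x_{t+1}),y_t)$ differs from $q_t$ by the first-order term $\langle \nabla\Phi(x_t)-\nabla_x f(g(x_t),y_t), x_{t+1}-x_t\rangle$, which is linear rather than quadratic in the $x$-step and therefore threatens to dominate the contraction gained from PL. The saving observation is that this gradient difference equals $\nabla g(x_t)^\top[\nabla_g f(g(x_t),y^*(x_t))-\nabla_g f(g(x_t),y_t)]$, hence has norm $O(\sqrt{q_t})$ by $L$-smoothness in $y$ and QG; converting the resulting $\sqrt{q_t}\cdot\|x_{t+1}-x_t\|$ into a fraction of the PL contraction plus a $\|\tilde x_{t+1}-x_t\|^2$ piece requires using Young's with a weight matched exactly to $\gamma \leq \lambda\mu_y^2/(8C_g^2L^2)$, and is the only place the full strength of that hypothesis is needed.
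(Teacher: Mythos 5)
Your proposal follows essentially the same route as the paper's proof: $y$-smoothness plus the polarization identity and the PL inequality produce the contraction and the noise term, and your transport step (iv) is exactly the paper's argument---$L_\Phi$-smoothness of $\Phi$ combined with smoothness of $f(g(\cdot),y_t)$, the identification of the leftover linear term's factor $\nabla\Phi(x_t)-\nabla_x f(g(x_t),y_t)=\nabla g(x_t)^{\top}\bigl[\nabla_g f(g(x_t),y^*(x_t))-\nabla_g f(g(x_t),y_t)\bigr]$ with norm at most $C_g L\|y^*(x_t)-y_t\|\leq C_g L\sqrt{2(\Phi(x_t)-f(g(x_t),y_t))/\mu_y}$ via quadratic growth, and a Young's inequality calibrated to $\gamma\leq\lambda\mu_y^2/(8C_g^2L^2)$. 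The only cosmetic difference is that you apply PL at $(g(x_t),y_t)$ after peeling off the gradient-displacement term $\Delta_t$ and handling it by Young's, whereas the paper keeps $\nabla_y f(g(x_{t+1}),y_t)$ inside the polarization, applies PL at $(g(x_{t+1}),y_t)$, and absorbs the displacement error inside the squared-error term before transporting the whole gap back to $x_t$.
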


\begin{proof}
Due to the smoothness of $f(g(x),\cdot)$, we have:
\begin{equation}
\begin{aligned}
    f(g(x_{t+1}),y_t)  &\leq f(g(x_{t+1}),y_{t+1}) - \langle \nabla_y f(g(x_{t+1}),y_t),y_{t+1}-y_t \rangle + \frac{L}{2}\|y_{t+1}-y_t\|^2\\
    & \leq f(g(x_{t+1}),y_{t+1}) -\underbrace{ \eta_t\langle \nabla_y f(g(x_{t+1}),y_t),\Tilde{y}_{t+1}-y_t \rangle}\limits_{T_1} + \frac{L\eta_t^2}{2}\|\Tilde{y}_{t+1}-y_t\|^2.
\end{aligned}
\end{equation}
Then, we bound the term $T_1$:
\begin{equation}
\begin{aligned}
    &-T_1\\
    &=- \eta_t\langle \nabla_y f(g(x_{t+1}),y_t),\Tilde{y}_{t+1}-y_t \rangle \\
    & = -\lambda\eta_t\nabla_y f(g(x_{t+1}),y_t),w_t \rangle\\
    & = -\frac{\lambda \eta_t}{2} \big(\|\nabla_y f(g(x_{t+1}),y_t)\|^2+\|w_t\|^2 - \|\nabla_y f(g(x_{t+1}),y_t) - \nabla_y f(g(x_t),y_t)+ \nabla_y f(g(x_t),y_t) - w_t\|^2\big)\\
    & \leq -\frac{\lambda \eta_t}{2}\|\nabla_y f(g(x_{t+1}),y_t)\|^2-\frac{ \eta_t}{2\lambda}\|\Tilde{y}_{t+1}-y_t \|^2+\lambda \eta_t L^2\|g(x_{t+1})-g(x_{t})\|^2+\lambda \eta_t \| \nabla_y f(g(x_t),y_t) - w_t\|^2\\ 
    & \leq -\lambda \eta_t \mu_y \big(\Phi(x_{t+1})-f(g(x_{t+1}),y_t)\big)-\frac{ \eta_t}{2\lambda}\|\Tilde{y}_{t+1}-y_t \|^2+\lambda \eta_tL^2C_g^2\|\ x_{t+1}-x_{t}\|^2+\lambda \eta_t \| \nabla_y f(g(x_t),y_t) - w_t\|^2,
\end{aligned}
\end{equation}
where the last inequality is due to the $\mu_y$-PL condition. Similar to \cite{huang2023enhanced}, we have:
\begin{equation}
\begin{aligned}
     f(g(x_{t+1}),y_t)  &\leq f(g(x_{t+1}),y_{t+1}) -\lambda \eta_t \mu_y \big(\Phi(x_{t+1})-f(g(x_{t+1}),y_t)\big)-\frac{ \eta_t}{2\lambda}\|\Tilde{y}_{t+1}-y_t \|^2\\
     &\quad+\lambda \eta_tL^2C_g^2\|\ x_{t+1}-x_{t}\|^2 +\lambda \eta_t \| \nabla_y f(g(x_t),y_t) - w_t\|^2+\frac{L\eta_t^2}{2}\|\Tilde{y}_{t+1}-y_t \|^2,
\end{aligned}
\end{equation}
rearranging the terms:
\begin{equation}\label{eq13}
\begin{aligned}
    \Phi(x_{t+1}) - f(g(x_{t+1}),y_{t+1})&\leq (1 - \lambda \mu_y \eta_t)(\Phi(x_{t+1})-f(g(x_{t+1}),y_t))-\frac{\eta_t}{2\lambda}\|\Tilde{y}_{t+1}-y_t \|^2\\
    &\quad+ \lambda \eta_tL^2C_g^2\|\ x_{t+1}-x_{t}\|^2 +\lambda \eta_t \| \nabla_y f(g(x_t),y_t) - w_t\|^2+\frac{L\eta_t^2}{2}\|\Tilde{y}_{t+1}-y_t \|^2.
\end{aligned}
\end{equation}
Next, due to $\|\nabla_xf(g(x_1),y_1)-\nabla_xf(g(x_2),y_1)\| \leq LC_g\|x_1-x_2\|$, We can observe that the function $f(g(\cdot),y_t)$ exhibits smoothness with respect to $x$.  As a result, we can deduce that:
\begin{equation}
    f(g(x_t),y_t) + \langle \nabla_xf(g(x_t),y_t),x_{t+1}-x_{t}\rangle -\frac{LC_g}{2}\|x_{t+1}-x_{t}\|^2 \leq f(g(x_{t+1}),y_t),
\end{equation}
then we have:
\begin{equation}\label{15}
\begin{aligned}
    &f(g(x_t),y_t) - f(g(x_{t+1}),y_t) \\
    & \leq -\langle \nabla_xf(g(x_t),y_t),x_{t+1}-x_{t}\rangle + \frac{LC_g\eta_t^2}{2}\|\Tilde{x}_{t+1}-x_{t}\|^2 \\
    & = -\eta_t\langle \nabla_xf(g(x_t),y_t) - \nabla \Phi(x_{t}),\Tilde{x}_{t+1}-x_{t}\rangle-\eta_t\langle   \nabla \Phi(x_{t}),\Tilde{x}_{t+1}-x_{t}\rangle +\frac{LC_g\eta_t^2}{2}\|\Tilde{x}_{t+1}-x_{t}\|^2 \\
    & \leq \frac{\eta_t}{8\gamma}\|\Tilde{x}_{t+1}-x_{t}\|^2+2\gamma\eta_t\|\nabla_xf(g(x_t),y_t) - \nabla \Phi(x_{t})\|^2-\langle   \nabla \Phi(x_{t}),x_{t+1}-x_{t}\rangle+\frac{LC_g\eta_t^2}{2}\|\Tilde{x}_{t+1}-x_{t}\|^2\\
    &\leq \frac{\eta_t}{8\gamma}\|\Tilde{x}_{t+1}-x_{t}\|^2+2\gamma C_g^2L^2\eta_t\|y^*(x_{t})-y_{t}\|^2-\langle   \nabla \Phi(x_{t}),x_{t+1}-x_{t}\rangle+\frac{LC_g\eta_t^2}{2}\|\Tilde{x}_{t+1}-x_{t}\|^2,\\
\end{aligned}
\end{equation}
where the second inequality holds by Cauchy-Schwartz inequality. Then according to the smoothness of $\Phi(x_t)$, we can get:
\begin{equation}\label{eq smooth:phi}
    \Phi(x_{t+1})\leq \Phi(x_{t})+\langle \nabla \Phi(x_{t}),x_{t+1}-x_{t}\rangle +\frac{L_{\Phi}}{2}\|x_{t+1}-x_{t}\|^2.
\end{equation}
Combining \eqref{15} and \eqref{eq smooth:phi}, we obtain:
\begin{equation}
\begin{aligned}
    &f(g(x_t),y_t) - f(g(x_{t+1}),y_t) \\
    &\leq \frac{\eta_t}{8\gamma}\|\Tilde{x}_{t+1}-x_{t}\|^2+2\gamma C_g^2L^2\eta_t\|y^*(x_{t})-y_{t}\|^2+\Phi(x_{t})- \Phi(x_{t+1})+\frac{L_{\Phi}}{2}\|x_{t+1}-x_{t}\|^2+\frac{LC_g\eta_t^2}{2}\|\Tilde{x}_{t+1}-x_{t}\|^2\\
    &\leq \frac{\eta_t}{8\gamma}\|\Tilde{x}_{t+1}-x_{t}\|^2 + \frac{4\gamma C_g^2L^2\eta_t}{\mu_y}\big(\Phi(x_{t})-f(g(x_{t}),y_t) \big)+\Phi(x_{t})- \Phi(x_{t+1})+\frac{L_{\Phi}}{2}\|x_{t+1}-x_{t}\|^2\\
    &\quad+\frac{LC_g\eta_t^2}{2}\|\Tilde{x}_{t+1}-x_{t}\|^2\\
    & = \frac{4\gamma C_g^2L^2\eta_t}{\mu_y}\big(\Phi(x_{t})-f(g(x_{t}),y_t) \big)+\Phi(x_{t})- \Phi(x_{t+1})+(\frac{\eta_t}{8\gamma}+\frac{LC_g\eta_t^2}{2}+\frac{L_{\Phi}\eta_t^2}{2})\|\Tilde{x}_{t+1}-x_{t}\|^2,
\end{aligned}
\end{equation}
where the last inequality holds by quadratic growth (QG) condition with $\mu_y$. 
\\
Then we have:
\begin{equation}\label{eq18}
\begin{aligned}
    \Phi(x_{t+1})-f(g(x_{t+1}),y_{t})
    & = \Phi(x_{t+1}) - \Phi(x_{t}) + \Phi(x_{t}) - f(g(x_{t}),y_{t})+  f(g(x_{t}),y_{t})-f(g(x_{t+1}),y_{t})\\
    & \leq (1+\frac{4\gamma C_g^2L^2\eta_t}{\mu_y})\big(\Phi(x_{t})-f(g(x_{t}),y_t) \big)+(\frac{\eta_t}{8\gamma}+\frac{LC_g\eta_t^2}{2}+\frac{L_{\Phi}\eta_t^2}{2})\|\Tilde{x}_{t+1}-x_{t}\|^2.
\end{aligned}
\end{equation}
Substituting \eqref{eq18} in ~\eqref{eq13}, we get:
\begin{equation}
\begin{aligned}
    &\Phi(x_{t+1})-f(g(x_{t+1}),y_{t+1})\\
    & \leq (1-\lambda\mu_y\eta_t)\Big((1+\frac{4\gamma C_g^2L^2\eta_t}{\mu_y})\big(\Phi(x_{t})-f(g(x_{t}),y_t) \big)+(\frac{\eta_t}{8\gamma}+\frac{LC_g\eta_t^2}{2}+\frac{L_{\Phi}\eta_t^2}{2})\|\Tilde{x}_{t+1}-x_{t}\|^2\Big)\\
    &\quad -\frac{\eta_t}{2\lambda}\|\Tilde{y}_{t+1}-y_t \|^2+ \lambda \eta_tL^2C_g^2\|\ x_{t+1}-x_{t}\|^2 +\lambda \eta_t \| \nabla_y f(g(x_t),y_t) - w_t\|^2+\frac{L\eta_t^2}{2}\|\Tilde{y}_{t+1}-y_t \|^2\\
    & = (1-\lambda\mu_y\eta_t)(1+\frac{4\gamma C_g^2L^2\eta_t}{\mu_y})(\Phi(x_{t})-f(g(x_{t}),y_t))+(\frac{L\eta_t^2}{2}-\frac{\eta_t}{2\lambda})\|\Tilde{y_{t+1}}-y_{t}\|^2\\
    &\quad +((1-\lambda\mu_y\eta_t)(\frac{\eta_t}{8\gamma}+\frac{LC_g\eta_t^2}{2}+\frac{L_{\Phi}\eta_t^2}{2})+\lambda\eta_t^3L^2C_g^2)\|\Tilde{x}_{t+1}-x_{t}\|^2+ \lambda \eta_t \| \nabla_y f(g(x_t),y_t) - w_t\|^2\\
    &\leq (1-\lambda\mu_y\eta_t)(1+\frac{4\gamma C_g^2L^2\eta_t}{\mu_y})(\Phi(x_{t})-f(g(x_{t}),y_t))+(\frac{L\eta_t^2}{2}-\frac{\eta_t}{2\lambda})\|\Tilde{y_{t+1}}-y_{t}\|^2\\
    & \quad+((1-\lambda\mu_y\eta_t)(\frac{\eta_t}{8\gamma}+L_{\Phi}\eta_t^2)+\lambda\eta_t^3L^2C_g^2)\|\Tilde{x}_{t+1}-x_{t}\|^2+\lambda \eta_t \| \nabla_y f(g(x_t),y_t) - w_t\|^2\\
    & \leq (1-\frac{\lambda\mu_y\eta_t}{2})(\Phi(x_{t})-f(g(x_{t}),y_t))-\frac{\eta_t}{4\lambda}\|\Tilde{y_{t+1}}-y_{t}\|^2+\frac{\eta_t}{8\gamma}\|\Tilde{x}_{t+1}-x_t\|^2+ \lambda \eta_t \| \nabla_y f(g(x_t),y_t) - w_t\|^2,\\
\end{aligned}
\end{equation}
where the last inequality holds by $C_g > 1$, $\gamma \leq \frac{\lambda\mu^2_y}{8C_g^2L^2}$ and $\lambda \leq \frac{1}{2L\eta_t}$ and the following inequalities:
\begin{align}
    \gamma \leq \frac{\lambda\mu^2_y}{8C_g^2L^2} &\Rightarrow \frac{\eta_t\lambda\mu_y^2}{2} \geq \frac{4\gamma C_g^2L^2\eta_t}{\mu_y}, \nonumber \\
    \lambda \leq \frac{1}{2L\eta_t} &\Rightarrow \frac{1}{2\lambda} \geq L\eta_t,\\
    L_{\Phi} = (C_fL_g+\frac{3C_g^2L^2}{2\mu_y}) &\Rightarrow \lambda\mu_y L_{\Phi}\eta_t^3 \geq \lambda L^2C_g^2\eta_t^3 \Rightarrow L_{\Phi}\eta_t^2 \leq \frac{\lambda\mu_y \eta_t^2}{8\gamma}.\nonumber 
\end{align}
\end{proof}
We provide  the following definitions and  Lemma \ref{lemma:v,g_x} to prove the counterpart of Lemma \ref{lemma:phismooth}. Given a $\rho$-strong function $\psi(x) : \mathcal{A} \to \mathbb{R}$, we define a Bregman distance\cite{censor1981iterative} \cite{censor1992proximal} associated with $\psi(x)$ as follows:
\begin{equation}
    D(z,x)=\psi(z)-[\psi(x)+\langle\nabla\psi(x),z-x\rangle],\quad\forall x,z \in \mathcal{A}.
\end{equation}
Where $\mathcal{A} \subseteq \mathbb{R}^{d} $ is a closed convex set. Assume $h(x): \mathcal{A} \to \mathbb{R}$ is a convex and possibly non-smooth function, we define a generalized
projection problem as \citep{ghadimi2016mini}:
\begin{equation}\label{defx+}
    x^{+}=\arg \min _{z \in \mathcal{A}}\left\{\langle z, v\rangle+h(z)+\frac{1}{\gamma} D(z, x)\right\}, \quad x \in \mathcal{A}.
\end{equation}
Where $v \in \mathbb{R}^d$ and $\gamma > 0$. . Following \cite{ghadimi2016mini}, we define a generalized gradient as follows:
\begin{equation}
    \mathcal{G}(x,v,\gamma) = \frac{1}{\gamma}(x-x^+).
\end{equation}

\begin{lemma}\label{lemma:v,g_x}
(Lemma 1 in \cite{ghadimi2016mini}) Let $x^+$be given in Eq.(\ref{defx+}). Then we have, for any $x \in \mathcal{A}, v \in \mathbb{R}^d$ and $\gamma >0$:
\begin{equation}
    \left\langle v, \mathcal{G}(x, v, \gamma)\right\rangle \geq \rho\left\|\mathcal{G}(x, v, \gamma)\right\|^{2}+\frac{1}{\gamma}\left[h\left(x^{+}\right)-h(x)\right],\nonumber
\end{equation}
where $\rho > 0 $ depends on $\rho$ strongly convex function $\psi(x)$.
\end{lemma}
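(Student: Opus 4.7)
The plan is to derive the stated inequality directly from the first-order optimality condition satisfied by $x^+$, combined with convexity of $h$ and $\rho$-strong convexity of $\psi$. Since $x^+$ is the minimizer of the convex program $\min_{z \in \mathcal{A}}\{\langle z,v\rangle + h(z) + \tfrac{1}{\gamma}D(z,x)\}$, it satisfies a variational inequality: there exists a subgradient $\xi \in \partial h(x^+)$ such that
\begin{equation*}
\Bigl\langle v + \xi + \tfrac{1}{\gamma}\bigl(\nabla\psi(x^+) - \nabla\psi(x)\bigr),\; z - x^+ \Bigr\rangle \geq 0, \qquad \forall z \in \mathcal{A}.
\end{equation*}
My first step is to substitute $z = x \in \mathcal{A}$ into this inequality and rearrange to isolate $\langle v,\, x - x^+\rangle$ on the left-hand side, which produces two useful right-hand side terms: $\langle \xi,\, x^+ - x\rangle$ and $\tfrac{1}{\gamma}\langle \nabla\psi(x^+) - \nabla\psi(x),\, x^+ - x\rangle$.

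Next I would bound each of these two terms from below. For the subgradient term, convexity of $h$ yields $h(x) \geq h(x^+) + \langle \xi,\, x - x^+\rangle$, so $\langle \xi,\, x^+ - x\rangle \geq h(x^+) - h(x)$. For the Bregman term, the $\rho$-strong convexity of $\psi$ translates (via the standard three-point identity $D(x,x^+) + D(x^+,x) = \langle \nabla\psi(x^+) - \nabla\psi(x),\, x^+ - x\rangle$, or equivalently by adding the two strong-convexity inequalities for $\psi$) into $\langle \nabla\psi(x^+) - \nabla\psi(x),\, x^+ - x\rangle \geq \rho \|x^+ - x\|^2$. Combining these gives
\begin{equation*}
\langle v,\, x - x^+\rangle \;\geq\; h(x^+) - h(x) + \tfrac{\rho}{\gamma}\|x - x^+\|^2.
\end{equation*}

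The final step is cosmetic: divide through by $\gamma$ and recognize $\mathcal{G}(x,v,\gamma) = \tfrac{1}{\gamma}(x - x^+)$, so that $\langle v,\, x - x^+\rangle / \gamma = \langle v,\, \mathcal{G}(x,v,\gamma)\rangle$ and $\|x - x^+\|^2/\gamma^2 = \|\mathcal{G}(x,v,\gamma)\|^2$, yielding exactly the claimed bound. There is no real obstacle here; the only point requiring care is the handling of the non-smooth term $h$ through its subdifferential at $x^+$ (so that the optimality condition for the composite objective is expressed correctly as an inclusion rather than a gradient equation), and making sure to use the $\rho$-strong convexity of $\psi$ rather than mere convexity in order to recover the factor $\rho$ on the right-hand side.
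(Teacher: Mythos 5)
Your proof is correct. Note that the paper itself gives no proof of this lemma: it is imported verbatim as Lemma~1 of \cite{ghadimi2016mini}, and your argument (the variational inequality at $x^{+}$ evaluated at $z=x$, the subgradient inequality for $h$, and the strong monotonicity of $\nabla\psi$ to produce the factor $\rho$) is exactly the standard derivation found in that reference, so there is nothing in the present paper to compare it against.
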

Based on Lemma \ref{lemma:v,g_x}, let $h(x) = 0$, we have:
\begin{equation}
    \left\langle v, \mathcal{G}(x, v, \gamma)\right\rangle \geq \rho\left\|\mathcal{G}(x, v, \gamma)\right\|^{2}.
\end{equation}

\begin{lemma}\label{lemma:phi recursive}
Given Assumptions \ref{Ass:Smooth} - \ref{Ass:BoundedVariance} and \ref{Ass:mu-pl-condition} for Algorithm \ref{ALG:SCGDA}, by setting $0 < \gamma \leq \frac{1}{2L_{\Phi}\eta_t}$, we can obtain the following:
\begin{equation}
\begin{aligned}
    \Phi(x_{t+1}) &\leq \Phi(x_t) + \frac{4\gamma L^2C_g^2\eta_t}{\mu_y}\Big(\Phi(x_{t})-f(g(x_{t}),y_t)\Big)+4\gamma C_g^2L^2\eta_t\mathbb{E}[\|u_t-g(x_t)\|^2]\\
    &\quad+8\gamma C_f^2\eta_t\mathbb{E}[\|v'_t-\nabla g(x_t)\|^2]+8\gamma C_g^2\eta_t\mathbb{E}[\|v''_t-\nabla_gf(u_t,y_t)\|^2]-\frac{\eta_t}{2\gamma}\|\Tilde{x}_{t+1}-x_{t}\|^2.
\end{aligned}
\end{equation}
\end{lemma}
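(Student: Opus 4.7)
\medskip

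\noindent\textbf{Proof proposal for Lemma~\ref{lemma:phi recursive}.} The plan is to mimic the strategy used for Lemma~\ref{Lemma:phi}, but now without relying on strong concavity. The starting point is the $L_\Phi$-smoothness of $\Phi$ proved in Lemma~\ref{lemma:phismooth}: applied to the update $x_{t+1}-x_t=\eta_t(\tilde x_{t+1}-x_t)=-\gamma\eta_t v_t$, it gives
\begin{equation*}
\Phi(x_{t+1})\leq \Phi(x_t)+\eta_t\langle\nabla\Phi(x_t),\tilde x_{t+1}-x_t\rangle+\tfrac{L_\Phi\eta_t^2}{2}\|\tilde x_{t+1}-x_t\|^2.
\end{equation*}
I would then split the inner product as $\langle\nabla\Phi(x_t),\tilde x_{t+1}-x_t\rangle=\langle\nabla\Phi(x_t)-v_t,\tilde x_{t+1}-x_t\rangle+\langle v_t,\tilde x_{t+1}-x_t\rangle$, and use $\langle v_t,\tilde x_{t+1}-x_t\rangle=-\tfrac{1}{\gamma}\|\tilde x_{t+1}-x_t\|^2$ together with Young's inequality $\langle\nabla\Phi(x_t)-v_t,\tilde x_{t+1}-x_t\rangle\leq \gamma\|\nabla\Phi(x_t)-v_t\|^2+\tfrac{1}{4\gamma}\|\tilde x_{t+1}-x_t\|^2$.

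Collecting the $\|\tilde x_{t+1}-x_t\|^2$ terms produces the coefficient $\tfrac{\eta_t}{4\gamma}-\tfrac{\eta_t}{\gamma}+\tfrac{L_\Phi\eta_t^2}{2}$, and the step-size condition $\gamma\leq\tfrac{1}{2L_\Phi\eta_t}$ implies $\tfrac{L_\Phi\eta_t^2}{2}\leq\tfrac{\eta_t}{4\gamma}$, so this coefficient is bounded by $-\tfrac{\eta_t}{2\gamma}$, which is exactly the negative term appearing in the statement. At this stage I would have
\begin{equation*}
\Phi(x_{t+1})\leq \Phi(x_t)+\gamma\eta_t\|\nabla\Phi(x_t)-v_t\|^2-\tfrac{\eta_t}{2\gamma}\|\tilde x_{t+1}-x_t\|^2.
\end{equation*}

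It remains to decompose $\|\nabla\Phi(x_t)-v_t\|^2\leq 2\|\nabla\Phi(x_t)-\nabla_xf(g(x_t),y_t)\|^2+2\|\nabla_xf(g(x_t),y_t)-v_t\|^2$. For the first summand, since $\nabla\Phi(x_t)=\nabla g(x_t)^\top\nabla_gf(g(x_t),y^*(x_t))$, bounded gradients and $L$-smoothness give $\|\nabla\Phi(x_t)-\nabla_xf(g(x_t),y_t)\|^2\leq C_g^2L^2\|y^*(x_t)-y_t\|^2$; then Assumption~\ref{Ass:mu-pl-condition} via the quadratic-growth consequence in Lemma~\ref{EQ_and_QG} converts this into $\tfrac{2C_g^2L^2}{\mu_y}(\Phi(x_t)-f(g(x_t),y_t))$. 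For the second summand, I would invoke Lemma~\ref{lemma_vt_variance} directly, which yields the three variance terms $\|u_t-g(x_t)\|^2$, $\|v'_t-\nabla g(x_t)\|^2$, and $\|v''_t-\nabla_gf(u_t,y_t)\|^2$ with the expected constants. Plugging both bounds back and multiplying by $\gamma\eta_t$ produces exactly the claimed inequality.

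The main obstacle is really just the bookkeeping in the Young's-inequality step: one has to split the cross term with the specific weighting $(\gamma,\tfrac{1}{4\gamma})$ so that, after adding the $-\tfrac{\eta_t}{\gamma}$ coming from $\langle v_t,\tilde x_{t+1}-x_t\rangle$ and the positive smoothness term $\tfrac{L_\Phi\eta_t^2}{2}$, the residual coefficient of $\|\tilde x_{t+1}-x_t\|^2$ is exactly $-\tfrac{\eta_t}{2\gamma}$. A different splitting (e.g.\ the symmetric one used for Lemma~\ref{Lemma:phi}) would produce $\|v_t\|^2$ on the right-hand side rather than the compact $\|\tilde x_{t+1}-x_t\|^2$ term, which is needed so that the subsequent descent analysis can cancel it against $-\tfrac{\eta_t}{4\lambda}\|\tilde y_{t+1}-y_t\|^2$ in the preceding lemma. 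Everything else (the PL-to-QG passage and the variance decomposition) is a direct appeal to results already established.
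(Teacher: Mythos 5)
Your proposal is correct and follows essentially the same route as the paper's proof: $L_\Phi$-smoothness, splitting $\langle\nabla\Phi(x_t),\tilde{x}_{t+1}-x_t\rangle$ into the $v_t$ term and the error term, Young's inequality with weights $(\gamma,\tfrac{1}{4\gamma})$, absorbing $\tfrac{L_\Phi\eta_t^2}{2}$ via $\gamma\leq\tfrac{1}{2L_\Phi\eta_t}$, then converting $\|y^*(x_t)-y_t\|^2$ through the quadratic-growth consequence of the PL condition and invoking Lemma~\ref{lemma_vt_variance} for the estimator-error terms, with matching constants. The only cosmetic difference is that you use the exact identity $\tilde{x}_{t+1}-x_t=-\gamma v_t$ for the $\langle v_t,\tilde{x}_{t+1}-x_t\rangle$ term, whereas the paper cites the generalized projection result (Lemma~\ref{lemma:v,g_x}); both yield the same bound.
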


\begin{proof}
    According to Lemma ~\ref{lemma:phismooth}, i.e., function $\Phi(x)$ is $L_{\Phi}$-smooth, we have:
    \begin{equation}
    \begin{aligned}
        \Phi(x_{t+1}) &\leq \Phi(x_{t}) + \langle \nabla \Phi(x_{t}),x_{t+1}-x_{t}\rangle+\frac{L_{\Phi}}{2}\|x_{t+1}-x_{t}\|^2 \\
        & = \Phi(x_{t})+ \eta_t\langle \nabla \Phi(x_{t}),\Tilde{x}_{t+1}-x_{t}\rangle + \frac{L_{\Phi}\eta_t^2}{2}\|\Tilde{x}_{t+1}-x_{t}\|^2\\
        & = \Phi(x_{t})+\eta_t\underbrace{\langle v_t ,\Tilde{x}_{t+1}-x_{t}\rangle}\limits_{T_2}+\eta_t\underbrace{\langle \Phi(x_{t})-v_t,\Tilde{x}_{t+1}-x_{t}\rangle}\limits_{T_3}+\frac{L_{\Phi}\eta_t^2}{2}\|\Tilde{x}_{t+1}-x_{t}\|^2.\\
    \end{aligned}
    \end{equation}
    Given the mirror function $\psi_t(x) = \frac{1}{2}x^Tx$ is $\rho$-strongly convex, we define Bergman distance as in ~\cite{ghadimi2016mini}:
    \begin{equation}
        D_t(x,x_t) = \psi_t(x) - [\psi_t(x_t)+\langle\nabla\psi_t(x_t),x-x_t\rangle] = \frac{1}{2}\|x-x_t\|^2.
    \end{equation}
    By applying Lemma \ref{lemma:v,g_x} to Algorithm \ref{ALG:SCGDA} ,i.e., the problem $\Tilde{x}_{t+1}=x_t-\gamma v_t = \operatorname{argmin}_{x \in \mathbb{R}^d}\Big \{ \langle v_t , x\rangle +\frac{1}{2\gamma}\|x-x_t\|^2\Big\}$, we obtain:
    \begin{equation}
        \langle v_t , \frac{1}{\gamma}(x_t-\Tilde{x}_{t+1})\rangle \geq \|\frac{1}{\gamma}(x_{t}-\Tilde{x}_{t+1})\|^2,
    \end{equation}
    thus we have:
    \begin{equation}
        T_2 = \langle v_t, \Tilde{x}_{t+1}-x_t\rangle \leq -\frac{1}{\gamma}\|\Tilde{x}_{t+1}-x_t\|^2.
    \end{equation}
    Next, we bound the term $T_3 $:
    \begin{equation}
    \begin{aligned}
        T_3 &= \langle \Phi(x_{t})-v_t,\Tilde{x}_{t+1}-x_{t}\rangle\\
        & \leq \|\Phi(x_{t})-v_t\|\cdot\|\Tilde{x}_{t+1}-x_{t}\|\\
        & \leq\gamma\|\Phi(x_{t})-v_t\|^2+\frac{1}{4\gamma}\|\Tilde{x}_{t+1}-x_{t}\|^2\\
        & = \gamma\|\nabla_xf(g(x_t),y^*(x_t))-\nabla_xf(g(x_t),y_{t})+\nabla_xf(g(x_t),y_{t})-v_t\|^2+\frac{1}{4\gamma}\|\Tilde{x}_{t+1}-x_{t}\|^2\\
        & \leq 2\gamma\|\nabla_xf(g(x_t),y^*(x_t))-\nabla_xf(g(x_t),y_{t})\|^2+2\gamma\|\nabla_xf(g(x_t),y_{t})-v_t\|^2+\frac{1}{4\gamma}\|\Tilde{x}_{t+1}-x_{t}\|^2\\
        & \leq 2\gamma L^2C_g^2 \|y^*(x_t)-y_{t}\|^2+2\gamma\|\nabla_xf(g(x_t),y_{t})-v_t\|^2+\frac{1}{4\gamma}\|\Tilde{x}_{t+1}-x_{t}\|^2,\\
    \end{aligned}
    \end{equation}
    where the last inequality holds by Assumption \ref{Ass:Smooth}. Now  back to the proof, similar to \cite{huang2023enhanced}, we get:
    \begin{equation}
    \begin{aligned}
         \Phi(x_{t+1}) &\leq\Phi(x_{t})+\eta_t\langle v_t ,\Tilde{x}_{t+1}-x_{t}\rangle+\eta_t\langle \Phi(x_{t})-v_t,\Tilde{x}_{t+1}-x_{t}\rangle+\frac{L_{\Phi}\eta_t^2}{2}\|\Tilde{x}_{t+1}-x_{t}\|^2\\
         & \leq \Phi(x_t)-\frac{\eta_t}{\gamma}\|\Tilde{x}_{t+1}-x_t\|^2+2\gamma L^2C_g^2\eta_t\|y^*(x_t)-y_t\|^2+2\gamma\eta_t\|\nabla_xf(g(x_t),y_{t})-v_t\|^2\\
         &\quad+ \frac{\eta_t}{4\gamma}\|\Tilde{x}_{t+1}-x_{t}\|^2+\frac{L_{\Phi}\eta_t^2}{2}\|\Tilde{x}_{t+1}-x_{t}\|^2\\
         & = \Phi(x_t) +2\gamma L^2C_g^2\eta_t\|y^*(x_t)-y_t\|^2+2\gamma\eta_t\|\nabla_xf(g(x_t),y_{t})-v_t\|^2-\frac{\eta_t}{2\gamma}\|\Tilde{x}_{t+1}-x_{t}\|^2\\
         &\quad-(\frac{\eta_t}{4\gamma}-\frac{L_{\Phi}\eta_t}{2})\|\Tilde{x}_{t+1}-x_{t}\|^2\\
         & \leq \Phi(x_t) +2\gamma L^2C_g^2\eta_t\|y^*(x_t)-y_t\|^2+2\gamma\eta_t\|\nabla_xf(g(x_t),y_{t})-v_t\|^2-\frac{\eta_t}{2\gamma}\|\Tilde{x}_{t+1}-x_{t}\|^2\\
         & \leq \Phi(x_t) + \frac{4\gamma L^2C_g^2\eta_t}{\mu_y}\Big(\Phi(x_{t})-f(g(x_{t}),y_t)\Big)+2\gamma\eta_t\|\nabla_xf(g(x_t),y_{t})-v_t\|^2-\frac{\eta_t}{2\gamma}\|\Tilde{x}_{t+1}-x_{t}\|^2,\\
    \end{aligned}
    \end{equation}
    where the last two inequality holds by $0 < \gamma \leq \frac{1}{2L_{\Phi}\eta_t}$. Then, based on Lemma \ref{lemma_vt_variance}, we can obtain the following: 
    \begin{equation}
    \begin{aligned}
        \Phi(x_{t+1}) &\leq \Phi(x_t) + \frac{4\gamma L^2C_g^2\eta_t}{\mu_y}\Big(\Phi(x_{t})-f(g(x_{t}),y_t)\Big)+4\gamma C_g^2L^2\eta_t\mathbb{E}[\|u_t-g(x_t)\|^2]\\
        &\quad+8\gamma C_f^2\eta_t\mathbb{E}[\|v'_t-\nabla g(x_t)\|^2]+8\gamma C_g^2\eta_t\mathbb{E}[\|v''_t-\nabla_gf(u_t,y_t)\|^2]-\frac{\eta_t}{2\gamma}\|\Tilde{x}_{t+1}-x_{t}\|^2.
    \end{aligned}
    \end{equation}
This completes the proof.
\end{proof}
Note that replacing the Assumption does not affect the bounded variance of the estimators. With this in mind, we now present the convergence analysis. Defining the Lyapunov function, for any $t \geq 1$:
\begin{equation}
\begin{aligned}
    P_t &= \mathbb{E}\Big[\Phi(x_t)+(\Phi(x_t)-f(g(x_t),y_t))+\frac{B}{\eta_{t-1}}\big(\|u_t-g(x_t)\|^2+\|v'_t-\nabla g(x_t)\|^2\\
    &\quad+\|\nabla_gf(u_t,y_t)-v''_t\|^2+\|\nabla_yf(g(x_t),y_t)-w_t\|^2\big)\Big] ,   
\end{aligned}
\end{equation}
where $B = \frac{1}{40L^2\lambda}$. And we denote that $\mathcal{T}_{t} = \mathbb{E}[\Phi(x_t)-f(g(x_t),y_t)]$,  $\mathcal{J}_{t} = \mathbb{E}[\|u_t-g(x_t)\|^2]$, $\mathcal{H}_{t} = \mathbb{E}[\|v'_t-\nabla g(x_t)\|^2]$, $\mathcal{X}_{t} = \mathbb{E}[\|v''_t - \nabla_g f(u_t,y_t)\|^2]$, $
\mathcal{K}_{t} = \mathbb{E}[\|w_t - \nabla_y f(g(x_t),y_t)\|^2]$, according to  above lemmas, we have:
\begin{equation}
\begin{aligned}
    P_{t+1}-P_t 
    & = \frac{4\gamma L^2 \eta_t^2}{\mu_{y}}\mathcal{T}_{t}+2\gamma\eta_t(2C_g^2L^2\mathcal{J}_{t}+4C_f^2\mathcal{H}_{t}+4C_g^2\mathcal{X}_{t})-\frac{\eta_t}{2\gamma}\mathbb{E}[\|\Tilde{x}_{t+1}-x_t\|^2]+\mathcal{T}_{t+1}\\
    &\quad-\mathcal{T}_{t}+\frac{B}{\eta_t}(\mathcal{J}_{t+1}+H_{t+1}+\mathcal{X}_{t+1}+\mathcal{K}_{t+1})-\frac{B}{\eta_{t-1}}(\mathcal{J}_{t}+\mathcal{H}_{t}+\mathcal{X}_{t}+\mathcal{K}_{t})\\
    & \leq (\frac{4\gamma L^2\eta_t^2}{\mu_y}-\frac{\lambda\mu_{y}\eta_t}{2})\mathcal{T}_{t}+(\frac{B(1-\beta_{t+1})}{\eta_t}+\frac{B(12L^2\beta_{t+1}+4L^2\alpha_{t+1})}{\eta_t}-\frac{B}{\eta_t})\mathcal{J}_{t}\\
    &\quad+(8\gamma C_f^2\eta_t+\frac{B}{\eta_t}(1-\beta_{t+1})-\frac{B}{\eta_{t-1}})\mathcal{H}_{t}+(8\gamma C_g^2\eta_t+\frac{B}{\eta_t}(1-\beta_{t+1})-\frac{B}{\eta_{t-1}})\mathcal{X}_{t}\\
    &\quad+(\lambda \eta_t+\frac{B}{\eta_t}(1-\beta_{t+1})-\frac{B}{\eta_{t-1}})\mathcal{K}_{t}+(-\frac{\eta_t}{4\lambda}+10L^2B\eta_t)\mathbb{E}[\|\Tilde{y}_{t+1}-y_t\|^2]\\
    &\quad+(\frac{\eta_t}{8\gamma}-\frac{\eta_t}{2\gamma}+B\eta_t(2L_f^2 +2L_g^2 +12L^2 L_f^2+4L^2C_g^2))\mathbb{E}[\|\Tilde{x}_{t+1}-x_t\|^2]\\
    &\quad+\frac{2B}{\eta_t}(\beta_{t+1}^2(\sigma_g^2+\sigma_{g'}^2+\sigma_f^2+6L^2\sigma_g^2)+2\alpha_{t+1}^2\sigma_f^2)
\end{aligned}
\end{equation}
Let $\eta_t = \frac{1}{(m+t)^{1/3}}$, $\beta_{t+1}=c_1\beta_t^2$,  $\alpha_{t+1}= c_2\beta_t^2$ and $m > \operatorname{max}\{1, c_1^3, c_2^3, (24L^2c_1^2+8L^2c_2^2)^3\}$, we have $\eta_t \leq \eta_0 = \frac{1}{m^{1/3}} \leq 1$, $\beta_{t} = c_1\eta_{t-1}^2 < c_1\eta_0 < 1$ and $\alpha_{t} = c_2\eta_{t-1}^2 < c_2\eta_0 < 1$, then following \eqref{Eq:eta_t}, we can get $\frac{1}{\eta_t}-\frac{1}{\eta_{t-1}} \leq \frac{2}{3}\eta_t^2$.
\\
Next,  Let $c_1 \geq 2 + \frac{9\gamma(C_f^2+C_g^2)}{B}$, we have:
\begin{equation}
\begin{aligned}
    &8\gamma C_f^2\eta_t+\frac{B}{\eta_t}(1-\beta_{t+1})-\frac{B}{\eta_{t-1}}\leq 8\gamma C_f^2 \eta_t + \frac{2B\eta_t}{3} - c_1B\eta_t <  -\frac{C_f^2\gamma\eta_t}{4},
\end{aligned}
\end{equation}
and
\begin{equation}
\begin{aligned}
    &8\gamma C_g^2\eta_t+\frac{B}{\eta_t}(1-\beta_{t+1})-\frac{B}{\eta_{t-1}}\leq 8\gamma C_g^2 \eta_t+ \frac{2B\eta_t}{3} - c_1B\eta_t <  -\frac{C_g^2\gamma\eta_t}{4}.
\end{aligned}
\end{equation}
Let $c_2 \geq \frac{2}{3}+\frac{\lambda}{B}$, we get:
\begin{equation}
\begin{aligned}
    &\lambda \eta_t+\frac{B}{\eta_t}(1-\beta_{t+1})-\frac{B}{\eta_{t-1}} \leq \lambda\eta_t + \frac{2\eta_t}{3}-c_2B\eta_t \leq 0.
\end{aligned}
\end{equation}
Then, setting $0 < \lambda \leq \frac{m^{1/3}}{2L}\leq \frac{1}{2L\eta_t}$, $0 < \gamma < \operatorname{min}\{\frac{\lambda\mu^2_y}{8C_g^2L^2},\frac{B}{10C_g^2L^2},\frac{\lambda\mu^2_y}{9C_g^2L^2}, \frac{4}{BA_1}\}$, where $A_1 = L_f^2+L_g^2+6L^2L_f^2+2L^2C_g^2$, we have: 
\begin{equation}\label{eq p_t }
\begin{aligned}
    P_{t+1}-P_t &\leq -\frac{L^2 C_g^2 \gamma\eta_t}{2\mu_y}\mathcal{T}_{t} - \frac{C_g^2L^2\gamma\eta_t}{4}\mathcal{J}_{t} - \frac{C_f^2\gamma\eta_t}{4}\mathcal{H}_{t} - \frac{C_g^2\gamma\eta_t}{4}\mathcal{X}_{t} -\frac{\eta_t}{4\gamma}\mathbb{E}[\|\Tilde{x}_{t+1}-x_t\|^2]\\
    & \quad+\eta_t^3(2B(c_1^2(\sigma_g^2+\sigma_{g'}^2+\sigma_f^2+6L^2\sigma_g^2)+2c_2^2\sigma_f^2)).
\end{aligned}
\end{equation}
Taking average over $t= 1,2,\dots,T$ on both sides of (\ref{eq p_t }), we have:
\begin{equation}
\begin{aligned}
    &\frac{1}{T}\sum^T\limits_{t =1 }\Big(\frac{2L^2 C_g^2 \eta_t}{\mu_y}\mathcal{T}_{t} + C_g^2L^2\eta_t\mathcal{J}_{t} + C_f^2\eta_t\mathcal{H}_{t} + C_g^2\eta_t\mathcal{X}_{t} +\frac{\eta_t}{\gamma^2}\mathbb{E}[\|\Tilde{x}_{t+1}-x_t\|^2]\Big) \\
    & \quad\leq  \frac{4(P_1-P_{T+1})}{\gamma T}+\frac{2B(c_1^2(\sigma_g^2+\sigma_{g'}^2+\sigma_f^2+6L^2\sigma_g^2)+2c_2^2\sigma_f^2)}{\gamma T}\sum^T\limits_{t=1}\eta_{t}^3,
\end{aligned}
\end{equation}
Since $\eta_t$ is decreasing, we have:
\begin{equation}
\begin{aligned}
    &\frac{1}{T}\sum^T\limits_{t =1 }\Big(\frac{2L^2 C_g^2}{\mu_y}\mathcal{T}_{t} + C_g^2L^2\mathcal{J}_{t} + C_f^2\mathcal{H}_{t} + C_g^2 \mathcal{X}_{t} +\frac{1}{\gamma^2}\mathbb{E}[\|\Tilde{x}_{t+1}-x_t\|^2]\Big) \\
    & \quad\leq  \frac{4(P_1-P_{T+1})}{\gamma\eta_T T}+\frac{2B(c_1^2(\sigma_g^2+\sigma_{g'}^2+\sigma_f^2+6L^2\sigma_g^2)+2c_2^2\sigma_f^2)}{\gamma \eta_T T}\sum^T\limits_{t=1}\eta_{t}^3,
\end{aligned}
\end{equation}
Setting $M = \frac{4(\Phi(x_1)-\Phi_{*}+\sigma_g^2+\sigma_{g^{'}}^2+2\sigma_f^2}{\gamma}+\frac{2B(c_1^2(\sigma_g^2+\sigma_{g'}^2+\sigma_f^2+6L^2\sigma_g^2)+2c_2^2\sigma_f^2)}{\gamma} \ln(m+T))$,  from the initialization, we can easily get:
\begin{equation}
\begin{aligned}
    &\frac{1}{T}\sum^T\limits_{t =1 }\Big(\frac{2L^2 C_g^2}{\mu_y}\mathcal{T}_{t} + C_g^2L^2\mathcal{J}_{t} + C_f^2\mathcal{H}_{t} + C_g^2 \mathcal{X}_{t} +\frac{1}{\gamma^2}\mathbb{E}[\|\Tilde{x}_{t+1}-x_t\|^2]\Big) = M\frac{(m+T)^{1/3}}{T}.
\end{aligned}
\end{equation}
We set:
\begin{equation}
\begin{aligned}
    \mathcal{G}_t &= \frac{\sqrt{2}LC_g}{\sqrt{\mu_{y}}}\sqrt{\Phi(x_t)-f(g(x_t),y_t)}+C_gL\|u_t-g(x_t)\|+C_f\|v'_t-\nabla g(x_t)\|\\
    &\quad+C_g\|v''_t-\nabla_g f(u_t,y_t)\|+\frac{1}{\gamma}\|\Tilde{x}_{t+1}-x_t\|  .  
\end{aligned}
\end{equation}
According to Jensen's inequality, we have:
\begin{equation}
\begin{aligned}
    \frac{1}{T}\sum^T\limits_{t=1}\mathbb{E}[\mathcal{G}_t]&\leq \Big(\frac{1}{T}\sum^T\limits_{t=1}\mathbb{E}[\frac{10L^2 C_g^2}{\mu_y}(\Phi(x_t)-f(g(x_t),y_t))+5C_g^2 L^2\|u_t-g(x_t)\|^2\\
    &\quad+5C_f^2\|v'_t-\nabla g(x_t)\|^2+5C_g^2\|v''_t-\nabla_gf(u_t,y_t)\|^2+\frac{5}{\gamma^2}\|\Tilde{x}_{t+1}-x_t\|^2]\Big)^{1/2}\\
    & \leq \frac{\sqrt{5M}}{\sqrt{T}}(m+T)^{1/6}.
\end{aligned}
\end{equation}
According to $\mu_{y}$-PL condition we have:
\begin{equation}
\begin{aligned}
    \Phi(x_t)-f(g(x_t),y_t)&= f(g(x_t),y^*(x_t)) - f(g(x_t),y_t)\\
    &= \max\limits_{y}f(g(x_t),y) - f(g(x_t),y_t) \geq \frac{\mu_y}{2}\|y^*(x_t)-y_t\|^2,
\end{aligned}
\end{equation}
then we have:
\begin{equation}\label{eq:term_Phi-f(g(xt),yt)}
    \frac{\sqrt{2}}{\sqrt{\mu_y}}\sqrt{\Phi(x_t)-f(g(x_t),y_t)} \geq \|y^*(x_t)-y_t\|.
\end{equation}
Besides,
\begin{equation}\label{eq:term_Cf-and-Cg}
\begin{aligned}
    &C_f\|v'_t-\nabla g(x_t)\| +C_g\|v''_t-\nabla_gf(u_t,y_t)\|\\
    & \geq \|\nabla g(x_t)\nabla_gf(u_t,y_t)-v'_t\nabla_gf(u_t,y_t)\|+\|v'_t\nabla_gf(u_t,y_t)-v'_tv''_t\|\\
    & \geq \|\nabla g(x_t)\nabla_gf(u_t,y_t)-v'_t\nabla_gf(u_t,y_t)+v'_t\nabla_gf(u_t,y_t)-v'_tv''_t\|\\
    & = \|\nabla g(x_t)\nabla_gf(u_t,y_t)-v'_tv''_t\|,
\end{aligned}
\end{equation}
and we also have:
\begin{equation}\label{eq:term_CgL-and-}
\begin{aligned}
    &C_gL\|u_t-g(x_t)\|+ \|\nabla g(x_t)\nabla_gf(u_t,y_t)-v'_tv''_t\|\\
    & \geq \|\nabla_xf(g(x_t),y_t)-\nabla g(x_t)\nabla_gf(u_t,y_t)\|+\|\nabla g(x_t)\nabla_gf(u_t,y_t)-v'_tv''_t\|\\
    & \geq \|\nabla_xf(g(x_t),y_t)-\nabla g(x_t)\nabla_gf(u_t,y_t)+\nabla g(x_t)\nabla_gf(u_t,y_t)-v'_tv''_t\|\\
    & = \|\nabla_xf(g(x_t),y_t)-v_t\|.
\end{aligned}
\end{equation}
Combine  \eqref{eq:term_Phi-f(g(xt),yt)}, \eqref{eq:term_Cf-and-Cg} and \eqref{eq:term_CgL-and-}, we have:
\begin{equation}
\begin{aligned}
    \mathcal{G}_t &\geq LC_g\|y^*(x_t)-y_t\|+\|\nabla_xf(g(x_t),y_t)-v_t\|+\frac{1}{\gamma}\|\Tilde{x}_{t+1}-x_t\|\\
    & \geq \|\nabla \Phi(x_t)-\nabla_x f(g(x_t),y_t)\|+\|\nabla_xf(g(x_t),y_t)-v_t\|+\frac{1}{\gamma}\|\Tilde{x}_{t+1}-x_t\|\\
    & \geq \|\nabla \Phi(x_t)-v_t\|+\frac{1}{\gamma}\|\Tilde{x}_{t+1}-x_t\|\\
    & = \|\nabla \Phi(x_t)-v_t\| + \|v_t\| \geq \|\nabla \Phi(x_t)\|.
\end{aligned}
\end{equation}
Then we obtain:
\begin{equation}
    \frac{1}{T}\sum^T\limits_{t=1}\mathbb{E}[\|\nabla\Phi(x_t)\|]\leq \frac{1}{T} \sum^T\limits_{t=1}\mathbb{E}[\mathcal{G}_t] \leq \frac{\sqrt{5M}}{\sqrt{T}}(m+T)^{1/6} \leq \frac{\sqrt{5M}m^{1/6}}{\sqrt{T}}+\frac{\sqrt{5M}}{T^{1/3}}.
\end{equation}
This completes the proof and obtains the results in Theorem~\ref{theorem:mu_pl}.\hfill~$\Box$
\\
Let $m = O(1)$, we have $M = O(1)$. Therefore, NSTORM under $PL$-condition has the same convergence rate, i.e., $O(1/T^{1/3})$. Similarly, let $\frac{1}{T}\sum^T\limits_{t=1}\mathbb{E}[\|\nabla\Phi(x_t)\|] = O(1/T^{1/3}) \leq \epsilon$, we have $T = O(\kappa^3/\epsilon^3)$. Moreover, in each iteration NSOTRM only requires two samples, $\xi$ and $\zeta$, to estimate the gradient and function values. Therefore, the sample complexity for NSOTRM to find an $\epsilon$-stationary point for compositional minimax problems is $O(\kappa^3/\epsilon^3)$.

\section{Proof of Theorem~\ref{theorem ago2}}

\begin{lemma}\label{lemma:Phi_a}
Given Assumptions \ref{Ass:Smooth}-\ref{ass ab bound} for Algorithm \ref{alg:AOA}, by setting $0 \ < \gamma \ < \frac{\rho}{2L_\Phi \eta_t}$, based on Lemma \ref{lemma_vt_variance}, we have:
\begin{equation}
\begin{aligned}
    \Phi(x_{t+1})&\leq \Phi(x_t)+\frac{2\gamma C_g^2 L^2 \eta_t}{\rho}\|y^{*}(x_t)-y_{t}\|^2+\frac{4C_g^2L^2\gamma \eta_t}{\rho}\|u_{t}-g(x_t)\|^2\\
    &\quad+\frac{8C_f^2\gamma\eta_t}{\rho}\|v'_t-\nabla g(x_t)\|^2+\frac{8C_g^2\gamma\eta_t}{\rho}\|v''_t-\nabla_gf(u_t,y_t)\|^2-\frac{\rho \eta_t}{2\gamma}\|\Tilde{x}_{t+1}-x_t\|^2.\nonumber
\end{aligned}
\end{equation}
\end{lemma}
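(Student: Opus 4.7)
The plan is to mirror the structure of Lemma~\ref{lemma:phi recursive} (the non-adaptive PL counterpart), but carrying the adaptive matrix $A_t$ through every step and exploiting the lower bound $A_t \succeq \rho I_d$ from Assumption~\ref{ass ab bound}. I would start from the $L_\Phi$-smoothness of $\Phi$ (Lemma~\ref{lemma:phismooth}) to write
\[
\Phi(x_{t+1}) \le \Phi(x_t) + \eta_t \langle \nabla\Phi(x_t), \tilde x_{t+1}-x_t\rangle + \frac{L_\Phi \eta_t^2}{2}\|\tilde x_{t+1}-x_t\|^2,
\]
since $x_{t+1}-x_t = \eta_t(\tilde x_{t+1}-x_t)$ by \eqref{Eq:xy}. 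Then I split the cross term as $\langle v_t,\tilde x_{t+1}-x_t\rangle + \langle \nabla\Phi(x_t)-v_t,\tilde x_{t+1}-x_t\rangle$.

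For the first inner product, I use the adaptive update $\tilde x_{t+1} = x_t - \gamma A_t^{-1} v_t$, which as in Lemma~\ref{lemma:v,g_x} gives $\langle v_t,\tilde x_{t+1}-x_t\rangle = -\gamma v_t^\top A_t^{-1} v_t \le -\frac{\rho}{\gamma}\|\tilde x_{t+1}-x_t\|^2$, where the inequality uses $A_t\succeq \rho I_d$ (equivalently $v_t^\top A_t^{-1} v_t \ge \frac{1}{\|A_t\|}\|v_t\|^2$ combined with $\gamma\|v_t\| \ge \|A_t\|\|\tilde x_{t+1}-x_t\|$ coming from $\|\tilde x_{t+1}-x_t\|=\gamma\|A_t^{-1}v_t\|\le \frac{\gamma}{\rho}\|v_t\|$). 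For the second inner product, a Cauchy--Schwartz/Young step with weight $\rho/\gamma$ yields
\[
\langle \nabla\Phi(x_t)-v_t,\tilde x_{t+1}-x_t\rangle \le \frac{\gamma}{\rho}\|\nabla\Phi(x_t)-v_t\|^2 + \frac{\rho}{4\gamma}\|\tilde x_{t+1}-x_t\|^2.
\]

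Next, I decompose $\|\nabla\Phi(x_t)-v_t\|^2 \le 2\|\nabla\Phi(x_t)-\nabla_x f(g(x_t),y_t)\|^2 + 2\|\nabla_x f(g(x_t),y_t)-v_t\|^2$. The first piece is bounded by $C_g^2 L^2\|y^*(x_t)-y_t\|^2$ exactly as in the proof of Lemma~\ref{Lemma:phi}, and the second is controlled by Lemma~\ref{lemma_vt_variance}, producing the three variance terms in $u_t,v'_t,v''_t$ with coefficients $2C_g^2 L^2$, $4C_f^2$, $4C_g^2$; the factor of $\frac{2\gamma}{\rho}$ in front then yields exactly the $\frac{4C_g^2 L^2\gamma\eta_t}{\rho},\frac{8C_f^2\gamma\eta_t}{\rho},\frac{8C_g^2\gamma\eta_t}{\rho}$ appearing in the statement (and $\frac{2\gamma C_g^2 L^2 \eta_t}{\rho}$ in front of $\|y^*(x_t)-y_t\|^2$).

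Collecting the $\|\tilde x_{t+1}-x_t\|^2$ coefficients gives $\big(-\frac{\rho}{\gamma}+\frac{\rho}{4\gamma}+\frac{L_\Phi\eta_t}{2}\big)\eta_t$, and the hypothesis $\gamma < \frac{\rho}{2L_\Phi\eta_t}$ (equivalently $\frac{L_\Phi\eta_t}{2} \le \frac{\rho}{4\gamma}$) makes this at most $-\frac{\rho\eta_t}{2\gamma}$, producing the final negative term. The main technical subtlety will be handling the $A_t$-weighted inner product $\langle v_t,\tilde x_{t+1}-x_t\rangle$ correctly: the clean statement $\langle v_t,\tilde x_{t+1}-x_t\rangle \le -\frac{\rho}{\gamma}\|\tilde x_{t+1}-x_t\|^2$ needs $A_t^{-1}\succeq \frac{1}{\|A_t\|}I$, so I would instead bound $\langle v_t,\tilde x_{t+1}-x_t\rangle = -\frac{1}{\gamma}(\tilde x_{t+1}-x_t)^\top A_t(\tilde x_{t+1}-x_t) \le -\frac{\rho}{\gamma}\|\tilde x_{t+1}-x_t\|^2$ directly using $A_t\succeq \rho I$, which is immediate once the substitution $v_t = -\frac{1}{\gamma}A_t(\tilde x_{t+1}-x_t)$ is made. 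Everything else is algebraic bookkeeping identical to the earlier lemmas.
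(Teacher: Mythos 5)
Your proposal is correct and follows essentially the same route as the paper's proof: both use $A_t \succeq \rho I_d$ (via $v_t = -\tfrac{1}{\gamma}A_t(\tilde x_{t+1}-x_t)$, which the paper phrases through Lemma~\ref{lemma:v,g_x} with $h\equiv 0$) to get $\langle v_t,\tilde x_{t+1}-x_t\rangle \le -\tfrac{\rho}{\gamma}\|\tilde x_{t+1}-x_t\|^2$, then bound $\|\nabla\Phi(x_t)-\nabla_x f(g(x_t),y_t)\|\le C_gL\|y^*(x_t)-y_t\|$, invoke Lemma~\ref{lemma_vt_variance}, and absorb the $L_\Phi\eta_t^2/2$ term with $\gamma<\tfrac{\rho}{2L_\Phi\eta_t}$. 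The only cosmetic difference is that you split the error inner product into two pieces and apply the triangle inequality after Young's inequality, whereas the paper splits into three terms first; the resulting coefficients are identical.
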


\begin{proof}
Since $\Phi(x_{t})$ is $L_{\Phi}$-Lipschitz continuous gradient, we have:
\begin{equation}
\begin{aligned}
    \Phi(x_{t+1})&\leq\Phi(x_t)+\langle \nabla\Phi(x_t),x_{t+1}-x_t\rangle+\frac{L_{\Phi}}{2}\|x_{t+1}-x_t\|^2\\
    & = \Phi(x_t)+\eta_t\langle \nabla\Phi(x_t),\Tilde{x}_{t+1}-x_t\rangle+\frac{L_{\Phi}\eta_t^2}{2}\|\Tilde{x}_{t+1}-x_t\|^2\\
    & = \Phi(x_t)+\eta_t\langle v_t ,\Tilde{x}_{t+1}-x_t\rangle+\eta_t\langle\nabla\Phi(x_t)-v_t,\Tilde{x}_{t+1}-x_t\rangle+\frac{L_{\Phi}\eta_t^2}{2}\|\Tilde{x}_{t+1}-x_t\|^2\\
    & = \Phi(x_t)+\eta_t\langle v_t ,\Tilde{x}_{t+1}-x_t\rangle + \eta_t\langle \nabla\Phi(x_t)-\nabla_x f(g(x_t),y_t),\Tilde{x}_{t+1}-x_t\rangle\\
    &\quad+\eta_t\langle \nabla_x f(g(x_t),y_t)-v_t,\Tilde{x}_{t+1}-x_t\rangle+\frac{L_{\Phi}\eta_t^2}{2}\|\Tilde{x}_{t+1}-x_t\|^2 \\
    &\leq \Phi(x_t)+\eta_t\underbrace{\langle v_t ,\Tilde{x}_{t+1}-x_t\rangle}\limits_{T_1}+\eta_t\underbrace{\|\nabla\Phi(x_t)-\nabla_x f(g(x_t),y_t)\|\cdot\|\Tilde{x}_{t+1}-x_t\|}\limits_{T_2}\\
    &\quad+\eta_t\underbrace{\|\nabla_x f(g(x_t),y_t)-v_t\|\cdot\|\Tilde {x}_{t+1}-x_t\|}\limits_{T_3}+\frac{L_{\Phi}\eta_t^2}{2}\|\Tilde{x}_{t+1}-x_t\|^2.
\end{aligned}    
\end{equation}

According to Assumption ~\ref{ass ab bound}, we get $A_{t} \succ \rho I_{d}$, then according to Lemma \ref{lemma:v,g_x}, we have:
\begin{equation}
\begin{aligned}
    \langle v_{t}, \frac{1}{\gamma}\left(x_{t}-\tilde{x}_{t+1}\right)\rangle \geq \rho\|\frac{1}{\gamma}\left(x_{t}-\tilde{x}_{t+1}\right)\|^{2}\Rightarrow\left\langle v_{t}, \tilde{x}_{t+1}-x_{t}\right\rangle \leq-\frac{\rho}{\gamma}\left\|\tilde{x}_{t+1}-x_{t}\right\|^{2},
\end{aligned}    
\end{equation}
thus we have:
\begin{equation}
\begin{aligned}
    T_1 = \langle v_t, \Tilde{x}_{t+1}-x_t)\rangle \leq -\frac{\rho}{\gamma}\|\Tilde{x}_{t+1}-x_t\|^2.
\end{aligned}    
\end{equation}
For the term $T_2$, we have:
\begin{equation}
\begin{aligned}
    T_2 &= \|\nabla\Phi(x_t)-\nabla_x f(g(x_t),y_t)\|\cdot\|\Tilde{x}_{t+1}-x_t\| \\
    &\leq \frac{2\gamma}{\rho}\|\nabla\Phi(x_t)-\nabla_x f(g(x_t),y_t)\|^2+\frac{\rho}{8\gamma}\|\Tilde{x}_{t+1}-x_t\|^2 \\
    & \leq \frac{2\gamma C_g^2 L^2}{\rho}\|y^{*}(x_t)-y_t\|^2+\frac{\rho}{8 \gamma }\|\Tilde{x}_{t+1}-x_t\|^2,
\end{aligned}    
\end{equation}
where the second inequality holds by $\langle a,b\rangle \ < \frac{\nu}{2}\|a\|^2+\frac{1}{2\nu}\|b\|^2$, $\nu = \frac{4\gamma}{\rho}$, and the last inequality holds by Assumption~\ref{Ass:Smooth}.
\\
Similarly, for the term $T_3$, we have:
\begin{equation}
\begin{aligned}
    T_3 & = \|\nabla_x f(g(x_t),y_t)-v_t\|\cdot\|\Tilde {x}_{t+1}-x_t\| \leq\frac{2\gamma}{\rho}\|\nabla_xf(g(x_t),y_t)-v_t\|^2+\frac{\rho}{8\gamma}\|\Tilde{x}_{t+1}-x_t\|^2.
\end{aligned}    
\end{equation}
\\
Combine the term $T_1, T_2$ and $T_3$, we have:
\begin{equation}
\begin{aligned}
    \Phi(x_{t+1})&\leq \Phi(x_t)-\frac{\rho \eta_t}{\gamma}\|\Tilde{x}_{t+1}-x_t\|^2+\frac{2\gamma C_g^2 L^2\eta_t}{\rho}\|y^*(x_t)-y_t\|^2\\
    &\quad+\frac{\rho \eta_t}{4\gamma}\|\Tilde{x}_{t+1}-x_t\|^2+\frac{2\gamma \eta_t}{\rho}\|\nabla_xf(g(x_t),y_t)-v_t\|^2+\frac{L_{\Phi}\eta_t^2}{2}\|\Tilde{x}_{t+1}-x_t\|^2\\
    &\leq \Phi(x_t)+\frac{2\gamma C_g^2 L^2 \eta_t}{\rho}\|y^*(x_t)-y_t\|^2+\frac{2\gamma \eta_t}{\rho}\|\nabla_xf(g(x_t),y_t)-v_t\|^2 \\
    &\quad +(-\frac{\rho\eta_t}{\gamma}+\frac{\rho \eta_t}{4\gamma}+\frac{L_{\Phi}\eta_t^2}{2})\|\Tilde{x}_{t+1}-x_t\|^2\\
    &\leq\ \Phi(x_t)+\frac{2\gamma C_g^2 L^2 \eta_t}{\rho}\|y^*(x_t)-y_t\|^2+\frac{2\gamma \eta_t}{\rho}\|\nabla_xf(g(x_t),y_t)-v_t\|^2-\frac{\rho \eta_t}{2\gamma}\|\Tilde{x}_{t+1}-x_t\|^2,
\end{aligned}    
\end{equation}
where the last inequality follows by  $0 \ < \gamma \ < \frac{\rho}{2L_\Phi \eta_t}$.\\
Then according to Lemma \ref{lemma_vt_variance}, we have:
\begin{equation}
\begin{aligned}
    \Phi(x_{t+1})&\leq \Phi(x_t)+\frac{2\gamma C_g^2 L^2 \eta_t}{\rho}\|y^{*}(x_t)-y_{t}\|^2+\frac{4C_g^2L^2\gamma \eta_t}{\rho}\|u_{t}-g(x_t)\|^2\\
    &\quad+\frac{8C_f^2\gamma\eta_t}{\rho}\|v'_t-\nabla g(x_t)\|^2+\frac{8C_g^2\gamma\eta_t}{\rho}\|v''_t-\nabla_gf(u_t,y_t)\|^2-\frac{\rho \eta_t}{2\gamma}\|\Tilde{x}_{t+1}-x_t\|^2.
\end{aligned}
\end{equation}
This completes the proof.
\end{proof}

\begin{lemma}
Given Assumptions \ref{Ass:Smooth} - \ref{Ass:StrongConcave} for Algorithm \ref{alg:AOA}, by setting $0 < \lambda \leq \frac{b}{6L} \leq \frac{b_t}{6L}$, we have:
\begin{equation} 
\begin{aligned}
     \|y^*(x_{t+1})-y_{t+1}\|^2&\leq(1-\frac{\mu \lambda \eta_t}{4b_t})\|y^*(x_{t})-y_{t}\|^2-\frac{3\eta_t}{4}\|\Tilde{y}_{t+1}-y_t\|^2  \\
     &\quad+\frac{25\lambda\eta_t}{6\mu b_t}\|\nabla_yf(g(x_{t}),y_t)-w_t\|^2+\frac{25\kappa^2b_t\eta_t}{6\mu \lambda}\|\Tilde{x}_{t+1}-x_t\|^2.\nonumber
\end{aligned}
\end{equation}    
\end{lemma}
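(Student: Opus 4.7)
The plan is to mirror the strongly-concave lemma used earlier in the proof of Theorem~\ref{The:ConvergenceNonConvex}, adapting each step to account for the adaptive preconditioner $B_t$. The update rule now reads $y_{t+1} = y_t + \eta_t(\Tilde{y}_{t+1} - y_t)$ with $\Tilde{y}_{t+1} - y_t = -\lambda B_t^{-1} w_t$, so everywhere an inner product with $w_t$ used to appear as a scalar multiple of a displacement, it will instead be mediated by $B_t$. I will use Assumption~\ref{ass ab bound} (i.e.\ $b I_p \preceq B_t \preceq \hat{b} I_p$) to translate between $\|w_t\|$ and $\|\Tilde{y}_{t+1} - y_t\|$ with conversion factors $1/b_t$ and $\lambda$.

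First, I would introduce the auxiliary point $z_t = y_t + \theta(\Tilde{y}_{t+1} - y_t)$ for a constant $\theta$ to be chosen, and apply $\mu$-strong concavity of $f(g(x_t),\cdot)$ between $y_t$ and $y^*(x_t)$, together with $L$-smoothness between $y_t$ and $z_t$. Using $f(g(x_t), y^*(x_t)) \geq f(g(x_t), z_t)$ and Cauchy--Schwarz on the error term $\langle \nabla_y f(g(x_t),y_t) - w_t, y^*(x_t) - z_t\rangle$, I obtain an inequality of the form
\begin{equation*}
0 \leq \langle w_t, y^*(x_t) - y_t\rangle - \tfrac{\mu}{4}\|y^*(x_t) - y_t\|^2 + \tfrac{c_\theta}{\mu}\|w_t - \nabla_y f(g(x_t),y_t)\|^2 - \kappa_\theta \|\Tilde{y}_{t+1} - y_t\|^2,
\end{equation*}
with constants $c_\theta,\kappa_\theta$ depending on $\theta$ and on the spectral bound $b_t$ (since the smoothness step picks up $\|z_t - y_t\|^2 = \theta^2\lambda^2 \|B_t^{-1}w_t\|^2$). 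Choosing $\theta \propto b_t/(\mu\lambda)$ and using $\lambda \leq b/(6L)$ to absorb the $\tfrac{L\eta_t^2}{2}\|\Tilde{y}_{t+1} - y_t\|^2$ term from the smoothness expansion will produce exactly the $-\tfrac{3\eta_t}{4}\|\Tilde{y}_{t+1} - y_t\|^2$ coefficient of the target bound and the $\tfrac{25\lambda}{6\mu b_t}$ constant on the variance term.

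Second, I would convert the inner product into a squared-distance decrease via the identity $\|y_{t+1} - y^*(x_t)\|^2 = \|y_t - y^*(x_t)\|^2 + 2\eta_t\langle \Tilde{y}_{t+1} - y_t, y_t - y^*(x_t)\rangle + \eta_t^2\|\Tilde{y}_{t+1} - y_t\|^2$, multiply the display from the previous step by $2\eta_t\lambda/b_t$ (to put $\langle w_t, y^*(x_t) - y_t\rangle$ on the same footing as $\langle \Tilde{y}_{t+1} - y_t, y^*(x_t) - y_t\rangle = -\lambda\langle B_t^{-1}w_t, y^*(x_t) - y_t\rangle$, controlled by the spectrum of $B_t$), and rearrange to obtain a contraction
\begin{equation*}
\|y_{t+1} - y^*(x_t)\|^2 \leq \bigl(1 - \tfrac{\mu\lambda\eta_t}{2b_t}\bigr)\|y_t - y^*(x_t)\|^2 + \tfrac{c'\lambda\eta_t}{\mu b_t}\|w_t - \nabla_y f(g(x_t),y_t)\|^2 - \tfrac{3\eta_t}{4}\|\Tilde{y}_{t+1} - y_t\|^2.
\end{equation*}
Finally, I would move from $y^*(x_t)$ to $y^*(x_{t+1})$ by Young's inequality with weight $\tfrac{\mu\lambda\eta_t}{4b_t}$:
\begin{equation*}
\|y_{t+1} - y^*(x_{t+1})\|^2 \leq \bigl(1 + \tfrac{\mu\lambda\eta_t}{4b_t}\bigr)\|y_{t+1} - y^*(x_t)\|^2 + \bigl(1 + \tfrac{4b_t}{\mu\lambda\eta_t}\bigr)\|y^*(x_{t+1}) - y^*(x_t)\|^2,
\end{equation*}
and bound $\|y^*(x_{t+1}) - y^*(x_t)\| \leq C_g\kappa\|x_{t+1} - x_t\| = C_g\kappa\eta_t\|\Tilde{x}_{t+1} - x_t\|$ by Lemma~\ref{lemma:y^*smooth}. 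Collecting the two contraction factors gives $(1 - \tfrac{\mu\lambda\eta_t}{4b_t})$, and the second Young's term becomes $\tfrac{25\kappa^2 b_t\eta_t}{6\mu\lambda}\|\Tilde{x}_{t+1} - x_t\|^2$ after simplification using $\eta_t \leq 1$.

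The main obstacle will be the bookkeeping: the adaptive matrix $B_t$ destroys the clean symmetry between $w_t$ and $\Tilde{y}_{t+1} - y_t$ used in the unpreconditioned argument, so one must carefully decide whether to work in the Euclidean norm (paying factors $b_t$ or $\hat{b}$ when passing $B_t$ in or out of norms) or in the $B_t$-induced norm (simpler algebra but mismatched with the target bound). I expect the cleanest route is Euclidean throughout, using $\|B_t^{-1}w_t\|^2 \geq \|w_t\|^2/\hat{b}^2$ only where strictly needed and $\lambda \leq b/(6L)$ to ensure the smoothness correction never swamps the negative term; the constants $\tfrac{3}{4}$, $\tfrac{25}{6}$, and $\tfrac{1}{4}$ in the statement then pin down the choice of $\theta$ and the Young weights up to a single degree of freedom, which must be resolved by matching coefficients rather than by any conceptual insight.
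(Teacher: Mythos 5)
Your plan breaks down at the one step where the adaptivity actually has to be confronted: converting the concavity inequality, which is stated in terms of $\langle w_t, y^*(x_t)-y_t\rangle$, into the expansion identity for $\|y_{t+1}-y^*(x_t)\|^2$, whose cross term is $\langle \tilde{y}_{t+1}-y_t, y^*(x_t)-y_t\rangle = \pm\lambda\langle B_t^{-1}w_t, y^*(x_t)-y_t\rangle$. You propose to bridge these two inner products "controlled by the spectrum of $B_t$", i.e.\ by multiplying through by $\lambda/b_t$ or paying a factor $\hat{b}/b$. No such comparison exists: for a non-scalar positive-definite (even diagonal) $B_t$ with $bI \preceq B_t \preceq \hat{b}I$, the quantities $\langle w, v\rangle$ and $\langle B_t^{-1}w, v\rangle$ are signed and are not related by any fixed positive factor (e.g.\ $B_t=\mathrm{diag}(1,10)$, $w=(\mp 1,\pm 1)$, $v=(1,1)$ gives $\langle w,v\rangle=0$ while $\langle B_t^{-1}w,v\rangle \gtrless 0$). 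Spectral bounds only help for quadratic forms such as $w_t^{\top}B_t^{-1}w_t$ or $\|B_t^{-1}w_t\|^2$, which is all your "$\|B_t^{-1}w_t\|^2\geq \|w_t\|^2/\hat{b}^2$ only where strictly needed" remark covers; the signed cross term is the real obstruction, and it is precisely why your closing paragraph's Euclidean-vs-$B_t$-norm dilemma cannot be resolved by bookkeeping alone.

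The paper avoids this entirely by not introducing the auxiliary point $z_t$ at all. It applies strong concavity and $L$-smoothness directly at $\tilde{y}_{t+1}$, and then invokes the first-order optimality (variational-inequality) condition of the step producing $\tilde{y}_{t+1}$, namely $\langle -w_t+\tfrac{b_t}{\lambda}(\tilde{y}_{t+1}-y_t),\, y^*(x_t)-\tilde{y}_{t+1}\rangle \geq 0$ (here $B_t$ is treated as the "global" scalar matrix $b_tI_p$, consistent with Assumption~\ref{ass ab bound} and the AdaGDA-style analysis it follows). This replaces $\langle w_t, y^*(x_t)-\tilde{y}_{t+1}\rangle$ by inner products involving only the displacement $\tilde{y}_{t+1}-y_t$, so $w_t$ survives only inside the error term $\langle \nabla_y f(g(x_t),y_t)-w_t,\, y^*(x_t)-\tilde{y}_{t+1}\rangle$, which Young's inequality turns into the $\|\nabla_yf(g(x_t),y_t)-w_t\|^2$ term with weight $\mu/4$ --- no conversion between $w_t$ and $B_t^{-1}w_t$ is ever needed, and the argument also covers a projected update, which your explicit formula $\tilde{y}_{t+1}-y_t=-\lambda B_t^{-1}w_t$ would not. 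Your remaining steps (the expansion identity for $\|y_{t+1}-y^*(x_t)\|^2$, Young's inequality with weight $\tfrac{\mu\lambda\eta_t}{4b_t}$, and the Lipschitzness of $y^*(\cdot)$ to pass to $y^*(x_{t+1})$, using $\lambda\leq b/(6L)$ and $\eta_t\leq 1$ to clean up constants) do match the paper's; it is the missing optimality-condition step (or, equivalently, the missing restriction to scalar $B_t$) that leaves your proposal with a genuine hole.
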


\begin{proof}
Similar to the proof of Lemma 6 in \cite{huang2023adagda}, according to $\mu$-strongly concave of function $f(g(x_t),y)$ in $y$, we have:
\begin{equation}\label{eqmuconcave}
\begin{aligned}
    f(g(x_{t}),y^*(x_t))&\leq f(g(x_{t}),y_t)+\langle \nabla_yf(g(x_{t}),y_t),y^*(x_t)-y_t\rangle+\frac{\mu}{2}\|y^*(x_t)-y_t\|^2\\
    & = f(g(x_{t}),y_t)+\langle w_t,y^*(x_t)-\Tilde{y}_{t+1}\rangle +\langle \nabla_y f(g(x_{t}),y_t)-w_t,y^*(x_t)-\Tilde{y}_{t+1}\rangle\\
    &\quad+\langle \nabla_y f(g(x_{t}),y_t),\Tilde{y}_{t+1}-y_t\rangle - \frac{\mu}{2}\|y^*(x_t)-y_t\|^2.
\end{aligned}
\end{equation}    
Due to the smooth of function $f(g(x_t),y)$ in $y$, we have:
\begin{equation}\label{eqsmooth}
\begin{aligned}
    -\frac{L}{2}\|\Tilde{y}_{t+1}-y_t\|^2\leq f(g(x_{t}),\Tilde{y}_{t+1})-f(g(x_{t}),y_t)-\langle \nabla_yf(g(x_{t}),y_t),\Tilde{y}_{t+1}-y_t\rangle.
\end{aligned}
\end{equation}
Summing up \eqref{eqmuconcave}) and \eqref{eqsmooth}), we have:
\begin{equation}\label{eqsm+mu}
\begin{aligned}
    f(g(x_{t}),y^*(x_t))&\leq f(g(x_{t}),\Tilde{y}_{t+1})+\langle w_t,y^*(x_t)-\Tilde{y}_{t+1}\rangle+\frac{L}{2}\|\Tilde{y}_{t+1}-y_t\|^2-\frac{\mu}{2}\|y^*(x_t)-y_t\|^2\\
    &\quad+\langle \nabla_{y} f(g(x_{t}),y_{t})-w_{t}, y^*(x_t)-\Tilde{y}_{t+1} \rangle.
\end{aligned}
\end{equation}
By the optimal of lines 14-15 in Algorithm \ref{alg:AOA} and the definition of $B_t$, we have:
\begin{equation}
    \langle -w_t+\frac{b_t}{\lambda}(\Tilde{y}_{t+1}-y_t),y^*(x_t)-\Tilde{y}_{t+1} \rangle \geq 0,
\end{equation}
then,
\begin{equation}\label{eqsm+mu part1}
\begin{aligned}
    \langle w_t,y^*(x_t)-\Tilde{y}_{t+1}\rangle &\leq \frac{1}{\lambda}\langle b_t(\Tilde{y}_{t+1}-y_t),y^*(x_t)-\Tilde{y}_{t+1} \rangle\\
    & = \frac{1}{\lambda}\langle b_t(\Tilde{y}_{t+1}-y_t),y_t-\Tilde{y}_{t+1}\rangle+\frac{1}{\lambda}\langle b_t(\Tilde{y}_{t+1}-y_t),y^*(x_t)-y_t\rangle\\
    & = \frac{-b_t}{\lambda}\|\Tilde{y}_{t+1}-y_t\|^2+\frac{b_t}{\lambda}\langle \Tilde{y}_{t+1}-y_t,y^*(x_t)-y_t\rangle.
\end{aligned}
\end{equation}
Summing up \eqref{eqsm+mu} and \eqref{eqsm+mu part1}, we have:
\begin{equation}
\begin{aligned}
    f(g(x_{t}),y^*(x_t))&\leq f(g(x_{t}),\Tilde{y}_{t+1})+\frac{b_t}{\lambda}\langle \Tilde{y}_{t+1}-y_t,y^*-y_t\rangle - \frac{b_t}{\lambda}\|\Tilde{y}_{t+1}-y_t\|^2\\
    &\quad + \langle \nabla_yf(g(x_{t}),y_t)-w_t,y^*-\Tilde{y}_{t+1}\rangle - \frac{\mu}{2}\|y^*-y_t\|^2+\frac{L}{2}\|\Tilde{y}_{t+1}-y_t\|^2.
\end{aligned}
\end{equation}
Due to the definition of $y^{*}(x_{t})$, then:
\begin{equation}\label{eq9}
\begin{aligned}
0 \leq & \frac{b_{t}}{\lambda}\left\langle\tilde{y}_{t+1}-y_{t}, y^{*}\left(x_{t}\right)-y_{t}\right\rangle+\left\langle\nabla_{y} f\left(g(x_{t}), y_{t}\right)-w_{t}, y^{*}\left(x_{t}\right)-\tilde{y}_{t+1}\right\rangle \\
& -\frac{b_{t}}{\lambda}\left\|\tilde{y}_{t+1}-y_{t}\right\|^{2}-\frac{\mu}{2}\left\|y^{*}\left(x_{t}\right)-y_{t}\right\|^{2}+\frac{L}{2}\left\|\tilde{y}_{t+1}-y_{t}\right\|^{2}.
\end{aligned}    
\end{equation}
By $y_{t+1} = y_{t}+\eta_{t}(\Tilde{y}_{t+1}-y_{t})$, we have:
\begin{equation}
\begin{aligned}
    \|y_{t+1}-y^*(x_t)\|^2&=\|y_t+\eta_t(\tilde{y}_{t+1}-y_t)-y^*(x_{t})\|^2\\ 
    &=\|y_t-y^*(x_t)\|^2+2\eta_t\langle\tilde{y}_{t+1}-y_t,y_t-y^*(x_t)\rangle+\eta_t^2\|\tilde{y}_{t+1}-y_t\|^2,
\end{aligned}
\end{equation}
then,
\begin{equation}\label{eq11}
    \langle\tilde{y}_{t+1}-y_t,y^*(x_t)-y_t\rangle\leq\dfrac{1}{2\eta_t}\|y_t-y^*(x_t)\|^2+\dfrac{\eta_t}{2}\|\tilde{y_{t+1}}-y_t\|^2-\dfrac{1}{2{\eta_t}}\|y_{t+1}-y^*(x_{t})\|^2.
\end{equation}
Considering the term $\left\langle\nabla_{y} f\left(g(x_{t}), y_{t}\right)-w_{t}, y^{*}\left(x_{t}\right)-\tilde{y}_{t+1}\right\rangle$, we have:
\begin{equation}\label{eq12}
\begin{aligned}
    &\left\langle\nabla_{y} f\left(g(x_{t}), y_{t}\right)-w_{t}, y^{*}\left(x_{t}\right)-\tilde{y}_{t+1}\right\rangle \\
    &=\left\langle\nabla_{y} f\left(g(x_{t}), y_{t}\right)-w_{t}, y^{*}\left(x_{t}\right)-y_{t}\right\rangle+\left\langle\nabla_{y} f\left(g(x_{t}), y_{t}\right)-w_{t}, y_{t}-\tilde{y}_{t+1}\right\rangle \\
    &\leq \frac{1}{\mu}\left\|\nabla_{y} f\left(g(x_{t}), y_{t}\right)-w_{t}\right\|^{2}+\frac{\mu}{4}\left\|y^{*}\left(x_{t}\right)-y_{t}\right\|^{2}+\frac{1}{\mu}\left\|\nabla_{y} f\left(g(x_{t}), y_{t}\right)-w_{t}\right\|^{2}+\frac{\mu}{4}\left\|y_{t}-\tilde{y}_{t+1}\right\|^{2} \\
    &=\frac{2}{\mu}\left\|\nabla_{y} f\left(g(x_{t}), y_{t}\right)-w_{t}\right\|^{2}+\frac{\mu}{4}\left\|y^{*}\left(x_{t}\right)-y_{t}\right\|^{2}+\frac{\mu}{4}\left\|y_{t}-\tilde{y}_{t+1}\right\|^{2}.
\end{aligned}
\end{equation}
By plugging the inequalities \eqref{eq12} and \eqref{eq11} into \eqref{eq9}, we have:
\begin{equation}
\begin{aligned}
    &\frac{b_{t}}{2 \eta_{t} \lambda}\left\|y_{t+1}-y^{*}\left(x_{t}\right)\right\|^{2} \\
    &\leq \left(\frac{b_{t}}{2 \eta_{t} \lambda}-\frac{\mu}{4}\right)\left\|y_{t}-y^{*}\left(x_{t}\right)\right\|^{2}+\left(\frac{\eta_{t} b_{t}}{2 \lambda}-\frac{b_{t}}{\lambda}+\frac{\mu}{4}+\frac{L}{2}\right)\left\|\tilde{y}_{t+1}-y_{t}\right\|^{2} +\frac{2}{\mu}\left\|\nabla_{y} f\left(g(x_{t}), y_{t}\right)-w_{t}\right\|^{2} \\
    & \leq  \left(\frac{b_{t}}{2 \eta_{t} \lambda}-\frac{\mu}{4}\right)\left\|y_{t}-y^{*}\left(x_{t}\right)\right\|^{2}+\left(\frac{3 L}{4}-\frac{b_{t}}{2 \lambda}\right)\left\|\tilde{y}_{t+1}-y_{t}\right\|^{2}+\frac{2}{\mu}\left\|\nabla_{y} f\left(g(x_{t}), y_{t}\right)-w_{t}\right\|^{2} \\
    &=  \left(\frac{b_{t}}{2 \eta_{t} \lambda}-\frac{\mu}{4}\right)\left\|y_{t}-y^{*}\left(x_{t}\right)\right\|^{2}-\left(\frac{3 b_{t}}{8 \lambda}+\frac{b_{t}}{8 \lambda}-\frac{3 L_{f}}{4}\right)\left\|\tilde{y}_{t+1}-y_{t}\right\|^{2} +\frac{2}{\mu}\left\|\nabla_{y} f\left(g(x_{t}), y_{t}\right)-w_{t}\right\|^{2} \\
    &\leq  \left(\frac{b_{t}}{2 \eta_{t} \lambda}-\frac{\mu}{4}\right)\left\|y_{t}-y^{*}\left(x_{t}\right)\right\|^{2}-\frac{3 b_{t}}{8 \lambda}\left\|\tilde{y}_{t+1}-y_{t}\right\|^{2}+\frac{2}{\mu}\left\|\nabla_{y} f\left(g(x_{t}), y_{t}\right)-w_{t}\right\|^{2},
\end{aligned}    
\end{equation}
where the second inequality holds by $L \geq \mu$ and $0  < \eta_t \leq 1$, and the last inequality is due to $0 < \lambda \leq \frac{b}{6L} \leq \frac{b_t}{6L}$, it implies that:
\begin{equation}\label{eq7}
    \|y_{t+1}-y^*(x_{t})\|^2\leq(1-\dfrac{\eta_t\mu\lambda}{2b_t})\|y_t-y^*(x_t)\|^2-\frac{3\eta_t}{4}\|\Tilde{y}_{t+1}-y_t\|^2+\frac{4\eta_t\lambda}{\mu b_t}\|\nabla_y f(g(x_{t}),y_t)-w_t\|^2.
\end{equation}
Next, we considering the term $\|y_{t+1}-y^{*}(x_{t+1})\|^2$:
\begin{equation}\label{eq8}
\begin{aligned}
    &\left\|y_{t+1}-y^{*}\left(x_{t+1}\right)\right\|^{2}\\
    & =\left\|y_{t+1}-y^{*}\left(x_{t}\right)+y^{*}\left(x_{t}\right)-y^{*}\left(x_{t+1}\right)\right\|^{2} \\
    & =\left\|y_{t+1}-y^{*}\left(x_{t}\right)\right\|^{2}+2\left\langle y_{t+1}-y^{*}\left(x_{t}\right), y^{*}\left(x_{t}\right)-y^{*}\left(x_{t+1}\right)\right\rangle+\left\|y^{*}\left(x_{t}\right)-y^{*}\left(x_{t+1}\right)\right\|^{2} \\
    & \leq\left(1+\frac{\eta_{t} \mu \lambda}{4 b_{t}}\right)\left\|y_{t+1}-y^{*}\left(x_{t}\right)\right\|^{2}+\left(1+\frac{4 b_{t}}{\eta_{t} \mu \lambda}\right)\left\|y^{*}\left(x_{t}\right)-y^{*}\left(x_{t+1}\right)\right\|^{2} \\
    & \leq\left(1+\frac{\eta_{t} \mu \lambda}{4 b_{t}}\right)\left\|y_{t+1}-y^{*}\left(x_{t}\right)\right\|^{2}+\left(1+\frac{4 b_{t}}{\eta_{t} \mu \lambda}\right) \kappa^{2}\left\|x_{t}-x_{t+1}\right\|^{2},
\end{aligned}
\end{equation}
where the first inequality holds by Cauchy-Schwarz inequality and Young’s inequality.
\\
Combine the Eq.(\ref{eq7}) and Eq.(\ref{eq8}), since $0 < \eta_t \leq 1$, $0 < \lambda \leq \frac{b_t}{6L}$, we have $\lambda \leq \frac{b_t}{6L} \leq \frac{1}{6 \mu}$, and $\eta_t \leq 1 \leq \frac{b_t}{6\mu \lambda}$, we have:
\begin{equation}
\begin{aligned}
    &\left\|y_{t+1}-y^{*}\left(x_{t+1}\right)\right\|^{2}\\
    &\leq  \left(1+\frac{\eta_{t} \mu \lambda}{4 b_{t}}\right)\left(1-\frac{\eta_{t} \mu \lambda}{2 b_{t}}\right)\left\|y_{t}-y^{*}\left(x_{t}\right)\right\|^{2}-\left(1+\frac{\eta_{t} \mu \lambda}{4 b_{t}}\right) \frac{3 \eta_{t}}{4}\left\|\tilde{y}_{t+1}-y_{t}\right\|^{2} \\
    & +\left(1+\frac{\eta_{t} \mu \lambda}{4 b_{t}}\right) \frac{4 \eta_{t} \lambda}{\mu b_{t}}\left\|\nabla_{y} f\left(g(x_{t}), y_{t}\right)-w_{t}\right\|^{2}+\left(1+\frac{4 b_{t}}{\eta_{t} \mu \lambda}\right) \kappa^{2}\left\|x_{t}-x_{t+1}\right\|^{2}\\
    & \leq (1-\frac{\eta_t \mu \lambda}{4b_t})\left\|y_{t}-y^{*}\left(x_{t}\right)\right\|^{2}-\frac{3\eta_t}{4}\left\|\tilde{y}_{t+1}-y_{t}\right\|^{2}+\frac{25\eta_t\lambda}{6\mu b_t}\|\nabla_{y} f\left(g(x_{t}), y_{t}\right)-w_{t}\|^{2}+\frac{25\kappa^2b_t}{6\eta_t \mu \lambda}\|x_{t}-x_{t+1}\|^{2}\\
    &\leq(1-\frac{\eta_t \mu \lambda}{4b_t})\left\|y_{t}-y^{*}\left(x_{t}\right)\right\|^{2}-\frac{3\eta_t}{4}\left\|\tilde{y}_{t+1}-y_{t}\right\|^{2}+\frac{25\eta_t\lambda}{6\mu b_t}\|\nabla_{y} f\left(g(x_{t}), y_{t}\right)-w_{t}\|^{2}+\frac{25\eta_t\kappa^2b_t}{6 \mu \lambda}\|\Tilde{x}_{t+1}-x_{t}\|^{2}.
\end{aligned}
\end{equation}
This completes the proof.
\end{proof}

\begin{lemma}\label{lemmau12a}
Given Assumptions ~\ref{Ass:Smooth} -\ref{Ass:BoundedVariance}, for Algorithm~\ref{alg:AOA}, we have:
\begin{equation}
    \mathbb{E}[\|u_{t+1}-u_{t}\|^2]\leq2\beta_{t+1}^2\mathbb{E}[\|g(x_{t})-u_{t}\|^2]+2\beta_{t+1}^2\sigma_{g}^{2}+2L_{f}^{2}\eta_{t}^2\mathbb{E}[\|\Tilde{x}_{t+1}-x_t\|^2].\nonumber
\end{equation}    
\end{lemma}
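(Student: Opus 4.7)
The statement is essentially the ADA-NSTORM counterpart of Lemma~\ref{lemmau12}, with the only change being that the primal increment $x_{t+1}-x_t$ in ADA-NSTORM equals $\eta_t(\tilde{x}_{t+1}-x_t)$ rather than $-\gamma\eta_t v_t$ as in NSTORM. Accordingly, my plan is to replay the derivation of Lemma~\ref{lemmau12} almost verbatim and only diverge at the very end when we express $\|x_{t+1}-x_t\|^2$ using the adaptive update rule in \eqref{Eq:xy}.

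First, I would substitute the STORM-style update \eqref{Eq:uestimate} for $u_{t+1}$ into $u_{t+1}-u_t$, rearrange the $(1-\beta_{t+1})$ factors, and express the difference as the sum of three pieces: the momentum term $\beta_{t+1}(g(x_t)-u_t)$, the mini-batch noise term $\beta_{t+1}(g(x_t;\xi_{t+1})-g(x_t))$, and the finite-difference term $g(x_{t+1};\xi_{t+1})-g(x_t;\xi_{t+1})$. Second, I would apply the elementary inequality $\|a+b\|^2\leq 2\|a\|^2+2\|b\|^2$ to separate the finite-difference term from the other two, yielding
\begin{equation*}
\mathbb{E}\|u_{t+1}-u_t\|^2 \leq 2\mathbb{E}\|\beta_{t+1}(g(x_t)-u_t)+\beta_{t+1}(g(x_t;\xi_{t+1})-g(x_t))\|^2 + 2\mathbb{E}\|g(x_{t+1};\xi_{t+1})-g(x_t;\xi_{t+1})\|^2.
\end{equation*}

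Third, I would bound the first bracket. Conditioning on the history up to iteration $t$ makes $g(x_t)-u_t$ deterministic while $g(x_t;\xi_{t+1})-g(x_t)$ has zero mean, so after taking expectations (or, more loosely, again using $\|a+b\|^2 \leq 2\|a\|^2+2\|b\|^2$ to match the constants in Lemma~\ref{lemmau12}) the term is controlled by $2\beta_{t+1}^2\mathbb{E}\|g(x_t)-u_t\|^2$ plus the variance bound $2\beta_{t+1}^2\sigma_g^2$ from Assumption~\ref{Ass:BoundedVariance}. For the second bracket I would invoke the per-sample Lipschitz condition $\|g(x_1;\xi)-g(x_2;\xi)\|\leq L_f\|x_1-x_2\|$ from Assumption~\ref{Ass:Smooth} to get $2L_f^2\mathbb{E}\|x_{t+1}-x_t\|^2$.

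Finally, the one genuinely new step: substitute the ADA-NSTORM primal update from \eqref{Eq:xy}, namely $x_{t+1}-x_t=\eta_t(\tilde{x}_{t+1}-x_t)$, into $\|x_{t+1}-x_t\|^2=\eta_t^2\|\tilde{x}_{t+1}-x_t\|^2$. Collecting the three bounds yields exactly the stated inequality. There is no real obstacle here; the entire argument is mechanical once the update rule is plugged in, so the ``hard part'' is simply being careful to carry the conditional expectation so that the cross term $\langle \beta_{t+1}(g(x_t)-u_t),\beta_{t+1}(g(x_t;\xi_{t+1})-g(x_t))\rangle$ vanishes (or, alternatively, bounding it away with Young's inequality to match the constants in Lemma~\ref{lemmau12}).
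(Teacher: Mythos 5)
Your proposal is correct and follows essentially the same route as the paper's proof: the same three-term decomposition of $u_{t+1}-u_t$, the same separation of the finite-difference term via $\|a+b\|^2\leq 2\|a\|^2+2\|b\|^2$ together with the per-sample Lipschitz bound $\|g(x_{t+1};\xi_{t+1})-g(x_t;\xi_{t+1})\|\leq L_f\|x_{t+1}-x_t\|$, and the final substitution $x_{t+1}-x_t=\eta_t(\tilde{x}_{t+1}-x_t)$. Your explicit handling of the vanishing cross term via conditioning is in fact the step the paper leaves implicit, and it is exactly what is needed to land on the stated constants $2\beta_{t+1}^2$ rather than $4\beta_{t+1}^2$.
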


\begin{proof}
\begin{equation}
\begin{aligned}
    &\mathbb{E}[\|u_{t+1}-u_{t}\|^2]\\
    &=\mathbb{E}[\|(1-\beta_{t+1})(u_{t}-g(x_{t};\xi_{t+1}))+g(x_{t+1};\xi_{t+1})-u_{t}\|^2] \\
    &=\mathbb{E}[\|\beta_{t+1}(g(x_{t})-u_{t})+(g(x_{t+1};\xi_{t+1})-g(x_{t};\xi_{t+1}))+\beta_{t+1}(g(x_{t};\xi_{t+1})-g(x_{t}))\|^2]\\
    &\leq\mathbb{E}[\|\beta_{t+1}(g(x_{t})-u_{t})+\beta_{t+1}(g(x_{t};\xi_{t+1})-g(x_{t}))\|^2]+2L_{f}^{2}\mathbb{E}[\|x_{t+1}-x_{t}\|^2]\\
    &\leq2\beta_{t+1}^2\mathbb{E}[\|(g(x_{t})-u_{t})\|^2]+2\beta_{t+1}^2\sigma_{g}^{2}+2L_{f}^{2}\eta_{t}^{2}\mathbb{E}[\|\Tilde{x}_{t+1}-x_t\|^2],
\end{aligned}
\end{equation}    
where the first inequality holds by Assumption \ref{Ass:Smooth}.
\end{proof}

\begin{lemma}
Given Assumptions~\ref{Ass:Smooth}-\ref{Ass:BoundedVariance}, based on the Algorithm~\ref{alg:AOA}, we have:
\begin{equation}
    \mathbb{E}[\|u_{t+1}-g(x_{t+1})\|^2] \leq(1-\beta_{t+1})\mathbb{E}[\|u_t-g(x_t)\|^2]+2\beta_{t+1}^2\sigma_{g}^{2}+2L_{f}^{2}\eta_t^2\mathbb{E}[\|\Tilde{x}_{t+1}-x_t\|^2].\nonumber
\end{equation}    
\end{lemma}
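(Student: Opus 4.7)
The plan is to adapt the proof of Lemma~\ref{lemma:u_t_variance} to the ADA-NSTORM setting, where the only substantive change is the form of the primal increment $x_{t+1}-x_t$. First I would rewrite the recursion in \eqref{Eq:uestimate} as
\begin{equation*}
u_{t+1}=(1-\beta_{t+1})(u_t-g(x_t;\xi_{t+1}))+g(x_{t+1};\xi_{t+1}),
\end{equation*}
and subtract $g(x_{t+1})$ from both sides. I would then algebraically split the resulting quantity into three physically meaningful pieces: a contraction piece $(1-\beta_{t+1})(u_t-g(x_t))$, a pure variance piece $\beta_{t+1}(g(x_t;\xi_{t+1})-g(x_t))$, and a variance-reduction correction piece $\bigl(g(x_t)-g(x_{t+1})\bigr)-\bigl(g(x_t;\xi_{t+1})-g(x_{t+1};\xi_{t+1})\bigr)$.

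Next I would take the conditional expectation with respect to $\xi_{t+1}$ given the history up to iteration $t$. The contraction piece is deterministic in this conditioning, while the other two pieces each have conditional mean zero by the unbiasedness $\mathbb{E}_{\xi_{t+1}}[g(x_t;\xi_{t+1})]=g(x_t)$ (and the analogue at $x_{t+1}$, noting that $x_{t+1}$ is $\mathcal{F}_t$-measurable since the algorithm updates $x_{t+1}$ using $v_t$ which is formed from samples indexed up through $t$). This orthogonality lets me drop cross terms. I would then apply Assumption~\ref{Ass:BoundedVariance} to the middle piece to produce $2\beta_{t+1}^2\sigma_g^2$, and apply the $L_f$-Lipschitz property of $g(\cdot;\xi)$ from Assumption~\ref{Ass:Smooth} to bound the correction piece by $2L_f^2\mathbb{E}[\|x_{t+1}-x_t\|^2]$. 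Using $(1-\beta_{t+1})^2\leq 1-\beta_{t+1}$ on the contraction term gives
\begin{equation*}
\mathbb{E}[\|u_{t+1}-g(x_{t+1})\|^2]\leq(1-\beta_{t+1})\mathbb{E}[\|u_t-g(x_t)\|^2]+2\beta_{t+1}^2\sigma_g^2+2L_f^2\mathbb{E}[\|x_{t+1}-x_t\|^2].
\end{equation*}

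Finally I would specialize to the ADA-NSTORM update in \eqref{Eq:xy}, namely $x_{t+1}=x_t+\eta_t(\tilde{x}_{t+1}-x_t)$, which gives the identity $\|x_{t+1}-x_t\|^2=\eta_t^2\|\tilde{x}_{t+1}-x_t\|^2$, matching the right-hand side in the claim. I do not anticipate a genuine obstacle here, because the argument is essentially a transcription of Lemma~\ref{lemma:u_t_variance} under the adaptive iterate; the only point that needs slight care is verifying that $x_{t+1}$ is $\mathcal{F}_t$-measurable so that the $\xi_{t+1}$-driven zero-mean terms really do decouple from the contraction term, and this holds because the adaptive matrix $A_t$ and the estimator $v_t$ used to form $\tilde{x}_{t+1}$ depend only on samples through iteration $t$.
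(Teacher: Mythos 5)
Your proposal is correct and follows essentially the same route as the paper's proof: the identical three-way decomposition into the contraction, zero-mean noise, and variance-reduction correction terms, followed by Young's inequality, Assumption~\ref{Ass:BoundedVariance}, the $L_f$-Lipschitz bound, and the identity $x_{t+1}-x_t=\eta_t(\tilde{x}_{t+1}-x_t)$. Your extra care in verifying that $x_{t+1}$ is measurable with respect to the history so the cross terms vanish is a point the paper leaves implicit, but it changes nothing substantive.
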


\begin{proof}
\begin{equation}
\begin{aligned}
    \mathbb{E}[\|u_{t+1}-g(x_{t+1})\|^2]
    &=\mathbb{E}[\|(1-\beta_{t+1})(u_{t}-g(x_{t};\xi_{t+1}))+g(x_{t+1};\xi_{t+1})-g(x_t)\|^2]\\
    &=\mathbb{E}[\|(1-\beta_{t+1})(u_{t}-g(x_{t}))+\beta_{t+1}(g(x_{t};\xi_{t+1})-g(x_{t}))\\
    &\quad+(g(x_{t})-g(x_{t+1})-(g(x_{t};\xi_{t+1})-g(x_{t+1};\xi_{t+1})))\|^2]\\
    &\leq(1-\beta_{t+1})\mathbb{E}[\|u_{t}-g(x_{t})\|^2]+2\beta_{t+1}^{2}\mathbb{E}[\|g(x_{t};\xi_{t+1}))-g(x_{t})\|^2]\\
    &\quad+2\mathbb{E}[\|g(x_{t+1};\xi_{t+1}))-g(x_{t};\xi_{t+1})\|^2]\\
    &\leq(1-\beta_{t+1})\mathbb{E}[\|u_{t}-g(x_{t})\|^2]+2\beta_{t+1}^2\sigma_{g}^{2}+2L_{f}^{2}\eta_{t}^{2}\mathbb{E}[\|\Tilde{x}_{t+1}-x_t\|^2].
\end{aligned}
\end{equation}    
This completes the proof.
\end{proof}

\begin{lemma}
Given Assumptions~\ref{Ass:Smooth}-\ref{Ass:BoundedVariance}, in Algorithm~\ref{alg:AOA}, based on lemma~\ref{lemmau12a}, we have:
\begin{equation}
\begin{aligned}
    &\mathbb{E}[\|v''_{t+1}-\nabla_gf(u_{t+1},y_{t+1})\|^2]\\
    &\leq(1-\beta_{t+1}^{1})\mathbb{E}[\|v''_{t}-\nabla_gf(u_{t},y_t)\|^2]+2\beta_{t+1}^{2}(\sigma_{f}^{2}+2L^2\sigma_g^2)+2L^2\eta_{t}^{2}\mathbb{E}[\|\Tilde{y}_{t+1}-y_{t}\|^2]\\
    &\quad+4L^2L_{f}^{2}\eta_{t}^{2}\mathbb{E}[\|\Tilde{x}_{t+1}-x_t\|^2]+4L^2\beta_{t+1}^{2}\mathbb{E}[\|u_{t}-g(x_{t})\|^2].\nonumber
\end{aligned}
\end{equation}    
\end{lemma}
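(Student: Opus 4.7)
The plan is to adapt the proof of Lemma \ref{Lemma:v''_t-variance} (the NSTORM counterpart) to the adaptive update rule in \eqref{Eq:xy}. Substantively, the only change between Algorithm \ref{ALG:SCGDA} and Algorithm \ref{alg:AOA} that enters this lemma is how the increments $x_{t+1}-x_t$ and $y_{t+1}-y_t$ are expressed: instead of $-\gamma\eta_t v_t$ and $\eta_t w_t$, we now have $\eta_t(\tilde{x}_{t+1}-x_t)$ and $\eta_t(\tilde{y}_{t+1}-y_t)$. Thus the terms containing $\|v_t\|^2$ and $\|w_t\|^2$ in the NSTORM lemma should be replaced by the corresponding terms in $\|\tilde{x}_{t+1}-x_t\|^2$ and $\|\tilde{y}_{t+1}-y_t\|^2$, while the rest of the derivation is unchanged. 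I would also treat the apparent typo $(1-\beta_{t+1}^{1})$ in the statement as $(1-\beta_{t+1})$.

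The first step is to expand the recursive definition \eqref{Eq:OuterEstimation} and rewrite the error as
\begin{equation*}
\begin{aligned}
v''_{t+1}-\nabla_g f(u_{t+1},y_{t+1}) &= (1-\beta_{t+1})\bigl(v''_t-\nabla_g f(u_t,y_t)\bigr) \\
&\quad + \beta_{t+1}\bigl(\nabla_g f(u_t,y_t;\zeta_{t+1})-\nabla_g f(u_t,y_t)\bigr) \\
&\quad + (1-\beta_{t+1})\bigl[(\nabla_g f(u_{t+1},y_{t+1};\zeta_{t+1})-\nabla_g f(u_t,y_t;\zeta_{t+1})) \\
&\qquad\qquad - (\nabla_g f(u_{t+1},y_{t+1})-\nabla_g f(u_t,y_t))\bigr].
\end{aligned}
\end{equation*}
Squaring and taking expectation, the first bracketed piece is $\mathcal{F}_t$-measurable while the other two are conditionally mean-zero given $\mathcal{F}_t$ (the sample $\zeta_{t+1}$ is drawn fresh), so the relevant cross term vanishes. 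This yields a contraction factor $(1-\beta_{t+1})$ on $\mathbb{E}[\|v''_t-\nabla_g f(u_t,y_t)\|^2]$, plus a $2\beta_{t+1}^2\sigma_f^2$ term from Assumption \ref{Ass:BoundedVariance}, plus the variance of the telescoping stochastic difference.

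The second step is to bound that telescoping difference by its second moment, use $L$-smoothness from Assumption \ref{Ass:Smooth} to pass to $2L^2(\mathbb{E}\|u_{t+1}-u_t\|^2 + \mathbb{E}\|y_{t+1}-y_t\|^2)$, and then substitute the two adaptive update identities. The $y$-increment gives $\|y_{t+1}-y_t\|^2=\eta_t^2\|\tilde{y}_{t+1}-y_t\|^2$, producing the $2L^2\eta_t^2\mathbb{E}\|\tilde{y}_{t+1}-y_t\|^2$ summand. The $u$-increment is exactly the quantity controlled by Lemma \ref{lemmau12a}, which I would invoke as a black box to obtain $2\beta_{t+1}^2\mathbb{E}\|u_t-g(x_t)\|^2 + 2\beta_{t+1}^2\sigma_g^2 + 2L_f^2\eta_t^2\mathbb{E}\|\tilde{x}_{t+1}-x_t\|^2$. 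Multiplying by the $2L^2$ factor from smoothness delivers the $4L^2\beta_{t+1}^2\|u_t-g(x_t)\|^2$, $4L^2\beta_{t+1}^2\sigma_g^2$, and $4L^2L_f^2\eta_t^2\mathbb{E}\|\tilde{x}_{t+1}-x_t\|^2$ terms, the first two of which combine with the earlier $2\beta_{t+1}^2\sigma_f^2$ into the stated $2\beta_{t+1}^2(\sigma_f^2+2L^2\sigma_g^2)$.

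There is no serious conceptual obstacle, since the pattern mirrors Lemma \ref{Lemma:v''_t-variance} exactly; the main hazard is bookkeeping. In particular, one must carefully verify that the $(1-\beta_{t+1})^2$ coefficient on the telescoping term can be absorbed into the plain factor of $2L^2$ (which is immediate, as $(1-\beta_{t+1})^2 \le 1$), and one must track the constants through the two successive applications of $\|a+b\|^2\le 2\|a\|^2+2\|b\|^2$ to make sure the coefficients exactly match the statement. With these routine accountings, the lemma follows directly.
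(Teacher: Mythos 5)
Your proposal follows the paper's proof essentially step for step: expand the recursion \eqref{Eq:OuterEstimation}, use that the $\zeta_{t+1}$-dependent pieces are conditionally mean-zero so the cross term with the $(1-\beta_{t+1})(v''_t-\nabla_g f(u_t,y_t))$ part vanishes, bound the telescoping stochastic difference by $L$-smoothness to get $2L^2(\mathbb{E}\|u_{t+1}-u_t\|^2+\mathbb{E}\|y_{t+1}-y_t\|^2)$, and then substitute $y_{t+1}-y_t=\eta_t(\tilde{y}_{t+1}-y_t)$ and Lemma~\ref{lemmau12a} for the $u$-increment; you are also right that $\beta_{t+1}^{1}$ is a typo for $\beta_{t+1}$. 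One correction to your bookkeeping: the decomposition you display is not an exact identity. If the fresh-noise term is taken at $(u_t,y_t)$, i.e.\ $\beta_{t+1}(\nabla_g f(u_t,y_t;\zeta_{t+1})-\nabla_g f(u_t,y_t))$, then the telescoping bracket must appear with coefficient $1$ (this is the paper's version); alternatively you may keep the $(1-\beta_{t+1})$ coefficient on the bracket but then the noise term must be evaluated at $(u_{t+1},y_{t+1})$. As written, your right-hand side differs from the true error by $\beta_{t+1}\bigl[(\nabla_g f(u_t,y_t;\zeta_{t+1})-\nabla_g f(u_t,y_t))-(\nabla_g f(u_{t+1},y_{t+1};\zeta_{t+1})-\nabla_g f(u_{t+1},y_{t+1}))\bigr]$. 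With either exact version, the remainder of your argument and the resulting constants go through unchanged and coincide with the paper's proof.
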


\begin{proof}
\begin{equation}
\begin{aligned}
    &\mathbb{E}[\|v''_{t+1}-\nabla_gf(u_{t+1},y_{t+1})\|^2]\\
    &=\mathbb{E}[\|(1-\beta_{t+1})(v''_{t}-\nabla_gf(u_{t},y_t;\zeta_{t+1}))+\nabla_gf(u_{t+1},y_{t+1};\zeta_{t+1})-\nabla_gf(u_{t+1},y_{t+1})\|^2]\\
    &=\mathbb{E}[\|(1-\beta_{t+1})(v''_{t}-\nabla_gf(u_{t},y_t))+\beta_{t+1}(\nabla_gf(u_{t},y_t;\zeta_{t+1})-\nabla_gf(u_{t},y_t))\\
    &\quad+\nabla_gf(u_{t},y_t)-\nabla_gf(u_{t+1},y_{t+1})-(\nabla_gf(u_{t},y_t;\zeta_{t+1})-\nabla_gf(u_{t+1},y_{t+1};\zeta_{t+1}))\\
    &\leq(1-\beta_{t+1})\mathbb{E}[\|v''_{t}-\nabla_gf(u_{t},y_t)\|^2]+2L^2(\mathbb{E}[\|y_{t+1}-y_t\|^2]+\mathbb{E}[\|u_{t+1}-u_{t}\|^2])+2\beta_{t+1}^{2}\sigma_{f}^{2}\\
    &\leq(1-\beta_{t+1}^{1})\mathbb{E}[\|v''_{t}-\nabla_gf(u_{t},y_t)\|^2]+2\beta_{t+1}^{2}(\sigma_{f}^{2}+2L^2\sigma_g^2)+2L^2\eta_{t}^{2}\mathbb{E}[\|\Tilde{y}_{t+1}-y_{t}\|^2]\\
    &\quad+4L^2L_{f}^{2}\eta_{t}^{2}\mathbb{E}[\|\Tilde{x}_{t+1}-x_t\|^2]+4L^2\beta_{t+1}^{2}\mathbb{E}[\|u_{t}-g(x_{t})\|^2],
\end{aligned}
\end{equation}    
where the first inequality holds by Assumption \ref{Ass:Smooth} and the last inequality holds by Lemma \ref{lemmau12a}.
\end{proof}

\begin{lemma}\label{lemma:w_t-variance-a}
Given Assumptions \ref{Ass:Smooth} - \ref{Ass:StrongConcave}, for Algorithm \ref{alg:AOA}, based on lemma \ref{lemmau12}, we can obtain the following:
\begin{equation}
\begin{aligned}
    &\mathbb{E}[\|w_{t+1}-\nabla_yf(g(x_{t+1}),y_{t+1})\|^2]\\
    &\leq(1-\alpha_{t+1})\mathbb{E}[\|w_{t}-\nabla_yf(g(x_{t}),y_{t})\|^2]+4\alpha_{t+1}^{2}\sigma_{f}^{2}+8L^{2}\eta_{t}^{2}\mathbb{E}[\|\tilde{y}_{t+1}-y_t\|^2]\\
    &\quad+(4L^2\alpha_{t+1}^2+8L^2\beta_{t+1}^2)\mathbb{E}[\|u_{t}-g(x_{t})\|^2]+8L^2\beta_{t+1}^{2}\sigma_{g}^{2}+(8L_{f}^{2}L^2\eta_{t}^{2}+4L^2C_g^2\eta_t^2)\mathbb{E}[\|\tilde{x}_{t+1}-x_t\|^2].\nonumber
\end{aligned}    
\end{equation}
\end{lemma}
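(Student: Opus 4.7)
}
The plan is to mirror the proof of Lemma~\ref{lemma:w_t-variance} almost verbatim, with the only modification being in how the per-step movement of the iterates is expressed, since Algorithm~\ref{alg:AOA} uses the momentum update $y_{t+1}=y_t+\eta_t(\tilde y_{t+1}-y_t)$ and $x_{t+1}=x_t+\eta_t(\tilde x_{t+1}-x_t)$, instead of the direct update $y_{t+1}=y_t+\eta_t w_t$ and $x_{t+1}=x_t-\gamma\eta_t v_t$ used in Algorithm~\ref{ALG:SCGDA}. Consequently, wherever the original proof produces terms $\eta_t^{2}\|w_t\|^2$ and $\gamma^2\eta_t^{2}\|v_t\|^2$ I will instead obtain $\eta_t^{2}\|\tilde y_{t+1}-y_t\|^2$ and $\eta_t^{2}\|\tilde x_{t+1}-x_t\|^2$, which is exactly what the lemma asks for.

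First I would rewrite the STORM-style recursion \eqref{Eq:westimate} in the telescoped form
\[
w_{t+1}-\nabla_y f(g(x_{t+1}),y_{t+1})
=(1-\alpha_{t+1})\bigl(w_t-\nabla_y f(g(x_t),y_t)\bigr)+\alpha_{t+1}\,E_1+E_2,
\]
where $E_1=\nabla_y f(u_t,y_t;\zeta_{t+1})-\nabla_y f(g(x_t),y_t)$ and
$E_2=\nabla_y f(g(x_t),y_t)-\nabla_y f(g(x_{t+1}),y_{t+1})-\bigl(\nabla_y f(u_t,y_t;\zeta_{t+1})-\nabla_y f(u_{t+1},y_{t+1};\zeta_{t+1})\bigr).$
Conditioning on the history through step $t$, the cross term between $(1-\alpha_{t+1})(w_t-\nabla_y f(g(x_t),y_t))$ and the mean-zero component of $\alpha_{t+1}E_1+E_2$ vanishes, so taking squared expectation gives the familiar variance-reduction inequality
\[
\mathbb{E}\|w_{t+1}-\nabla_y f(g(x_{t+1}),y_{t+1})\|^2
\le(1-\alpha_{t+1})\mathbb{E}\|w_t-\nabla_y f(g(x_t),y_t)\|^2
+2\alpha_{t+1}^{2}\mathbb{E}\|E_1\|^2
+2\mathbb{E}\|E_2\|^2.
\]

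Next I would bound $\mathbb{E}\|E_1\|^2$ by splitting it as $\nabla_y f(u_t,y_t;\zeta_{t+1})-\nabla_y f(u_t,y_t)+\nabla_y f(u_t,y_t)-\nabla_y f(g(x_t),y_t)$, applying Assumption~\ref{Ass:BoundedVariance} and $L$-smoothness (Assumption~\ref{Ass:Smooth}) to obtain
$\mathbb{E}\|E_1\|^2\le 2\sigma_f^{2}+2L^{2}\mathbb{E}\|u_t-g(x_t)\|^2$.
For $E_2$ I use $L$-smoothness of $\nabla_y f$ jointly in $(a,b)$ together with the Lipschitz bound $\|g(x_{t+1})-g(x_t)\|\le C_g\|x_{t+1}-x_t\|$ (from Assumptions~\ref{Ass:Smooth}--\ref{Ass:BoundedGrad}) to get
$\mathbb{E}\|E_2\|^2\le 2L^{2}\mathbb{E}\|u_{t+1}-u_t\|^2+2L^{2}C_g^{2}\,\mathbb{E}\|x_{t+1}-x_t\|^2+2L^{2}\mathbb{E}\|y_{t+1}-y_t\|^2$. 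Here I substitute $\|x_{t+1}-x_t\|^2=\eta_t^{2}\|\tilde x_{t+1}-x_t\|^2$ and $\|y_{t+1}-y_t\|^2=\eta_t^{2}\|\tilde y_{t+1}-y_t\|^2$, which is the sole place the ADA-NSTORM update differs from NSTORM.

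Finally I would absorb $\mathbb{E}\|u_{t+1}-u_t\|^2$ using Lemma~\ref{lemmau12a} (the $\tilde x$-version of Lemma~\ref{lemmau12}), which yields $2\beta_{t+1}^{2}\|u_t-g(x_t)\|^2+2\beta_{t+1}^{2}\sigma_g^{2}+2L_f^{2}\eta_t^{2}\|\tilde x_{t+1}-x_t\|^2$, and collect all like terms. The coefficients then line up: the $\|u_t-g(x_t)\|^2$ term gets $4L^{2}\alpha_{t+1}^{2}+8L^{2}\beta_{t+1}^{2}$, the $\sigma_g^{2}$ term gets $8L^{2}\beta_{t+1}^{2}$, the $\|\tilde x_{t+1}-x_t\|^2$ term gets $8L_f^{2}L^{2}\eta_t^{2}+4L^{2}C_g^{2}\eta_t^{2}$, and the $\|\tilde y_{t+1}-y_t\|^2$ term gets $8L^{2}\eta_t^{2}$, matching the statement. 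The main (minor) obstacle is only bookkeeping: keeping the $\alpha$-weighted versus unweighted error pieces straight when invoking $\|a+b\|^2\le 2\|a\|^2+2\|b\|^2$, so that the coefficients $4\alpha_{t+1}^{2}\sigma_f^{2}$ and $4L^{2}\alpha_{t+1}^{2}\|u_t-g(x_t)\|^2$ emerge correctly rather than with a factor two off.
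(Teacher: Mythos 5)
Your proposal follows essentially the same route as the paper's proof: the identical decomposition into the $(1-\alpha_{t+1})$-weighted past error, the $\alpha_{t+1}$-weighted noise term $E_1$ (the paper's $Q_1$), and the drift term $E_2$, with $\mathbb{E}\|u_{t+1}-u_t\|^2$ absorbed via Lemma~\ref{lemmau12a} and the only ADA-specific change being the substitution $\|x_{t+1}-x_t\|=\eta_t\|\tilde x_{t+1}-x_t\|$, $\|y_{t+1}-y_t\|=\eta_t\|\tilde y_{t+1}-y_t\|$. The one bookkeeping slip is in your intermediate bound on $\mathbb{E}\|E_2\|^2$, where the $\|y_{t+1}-y_t\|^2$ coefficient should be $4L^2$ rather than $2L^2$ (since $y_{t+1}-y_t$ enters both the population and the sampled gradient differences), which is exactly what produces the $8L^2\eta_t^2$ in the final statement you correctly report.
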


\begin{proof}
\begin{equation}
\begin{aligned}
    &\mathbb{E}[\|w_{t+1}-\nabla_yf(g(x_{t+1}),y_{t+1})\|^2]\\
    & = \mathbb{E}[\|(1-\alpha_{t+1})(w_t-\nabla_yf(u_t,y_t;\zeta_{t+1}))+\nabla_yf(u_{t+1},y_{t+1};\zeta_{t+1})-\nabla_yf(g(x_{t+1}),y_{t+1})\|^2] \\
    & = \mathbb{E}[\|(1-\alpha_{t+1})(w_t-\nabla_yf(g(x_t),y_t))+\alpha_{t+1}(\nabla_yf(u_t,y_t;\zeta_{t+1})-\nabla_yf(g(x_t),y_t))\\
    &\quad+\nabla_yf(g(x_t),y_t)-\nabla_yf(g(x_{t+1}),y_{t+1})-(\nabla_yf(u_t,y_t;\zeta_{t+1})-\nabla_yf(u_{t+1},y_{t+1};\zeta_{t+1}))]\\
    & \leq (1-\alpha_{t+1})\mathbb{E}[\|w_t-\nabla_yf(g(x_t),y_t)\|^2]+2\alpha_{t+1}^2\underbrace{\mathbb{E}[\|\nabla_yf(u_t,y_t;\zeta_{t+1})-\nabla_yf(g(x_t),y_t)\|^2]}\limits_{Q_1}\\
    &\quad+4L^2C_g^2\eta_t^2\mathbb{E}[\|\tilde{x}_{t+1}-x_t\|^2]+8L^2\eta_t^2\mathbb{E}[\|\tilde{y}_{t+1}-y_t\|^2]+4L^2\mathbb{E}[\|u_{t+1}-u_t\|^2].
\end{aligned}
\end{equation}
Next, we bound the term $Q_1$:
\begin{equation}
\begin{aligned}
    Q_1 &= \mathbb{E}[\|\nabla_yf(u_t,y_t;\zeta_{t+1})-\nabla_yf(g(x_t),y_t)\|^2]\\
    & =\mathbb{E}[\|\nabla_yf(u_t,y_t;\zeta_{t+1})-\nabla_yf(u_t,y_t)+\nabla_yf(u_t,y_t)-\nabla_yf(g(x_t),y_t)\|^2]\\
    & \leq 2\sigma_f^2+2L^2\mathbb{E}[\|u_t-g(x_t)\|^2].
\end{aligned}
\end{equation}
Then, according to Lemma \ref{lemmau12}, we can conclude that:
\begin{equation}
\begin{aligned}
    &\mathbb{E}[\|w_{t+1}-\nabla_yf(g(x_{t+1}),y_{t+1})\|^2]\\
    &\leq(1-\alpha_{t+1})\mathbb{E}[\|w_{t}-\nabla_yf(g(x_{t}),y_{t})\|^2]+4\alpha_{t+1}^{2}\sigma_{f}^{2}+8L^{2}\eta_{t}^{2}\mathbb{E}[\|\tilde{y}_{t+1}-y_t\|^2]\\
    &\quad+(4L^2\alpha_{t+1}^2+8L^2\beta_{t+1}^2)\mathbb{E}[\|u_{t}-g(x_{t})\|^2]+8L^2\beta_{t+1}^{2}\sigma_{g}^{2}+(8L_{f}^{2}L^2\eta_{t}^{2}+4L^2C_g^2\eta_t^2)\mathbb{E}[\|\tilde{x}_{t+1}-x_t\|^2].
\end{aligned}
\end{equation}
This completes the proof.
\end{proof}
Similar to the proof of Lemma \ref{lemma:u_t_variance}, Lemma \ref{Lemma:v'_t_variance}, Lemma \ref{Lemma:v''_t-variance} and Lemma \ref{lemma:w_t-variance}, we can conclude that the variance between the estimators and their true values can be controlled by simple strategy, i.e., proper step size of $x$ and the proper value of $\beta, \alpha$.
\\
Now, we come into the proof of Theorem \ref{theorem ago2}. Define the potential function, for any $t \geq 1$:
\begin{equation}
    P_t = \mathbb{E}[\Phi(x_t)] + A \mathcal{T}_t +\frac{1}{\eta_{t-1}}(\mathcal{J}_t+\mathcal{H}_t+\mathcal{X}_t+\mathcal{K}_t),
\end{equation}
where $A = 15L^2+\frac{10C_g^2L^2\hat{b}}{\rho\lambda\mu}$. Denote that $\mathcal{J}_t: =\mathbb{E}[\|u_{t}-g(x_{t})\|^2]$, $\mathcal{H}_t: =\mathbb{E}[\|v'_{t}-\nabla g(x_{t})\|^2]$, $\mathcal{X}_t: =\mathbb{E}[\|w_{t}-\nabla_yf(g(x_{t}),y_{t})\|^2]$, $\mathcal{K}_t: =\mathbb{E}[\|v''_{t}-\nabla_gf(u_{t},y_{t})\|^2]$ and $\mathcal{T}_t : =\mathbb{E}[\|y^*(x_{t})-y_{t}\|^2]$. Then according to the above lemmas, we have:
\begin{equation}
\begin{aligned}
    P_{t+1}-P_{t}
    & \leq \frac{2\gamma C_g^2 L^2\eta_t}{\rho}\mathcal{T}_t+\frac{2\gamma \eta_t}{\rho}(2C_g^2L^2\mathcal{J}_t+4C_f^2\mathcal{H}_t+4C_g^2\mathcal{K}_t)-\frac{\rho \eta_t}{2\gamma}\mathbb{E}[\|\Tilde{x}_{t+1}-x_t\|^2]\\
    & \quad+ \frac{1}{\eta_t}(\mathcal{J}_{t+1}+\mathcal{H}_{t+1}+\mathcal{X}_{t+1}+\mathcal{K}_{t+1})-\frac{1}{\eta_{t_1}}(\mathcal{J}_{t}+\mathcal{H}_{t}+\mathcal{X}_{t}+\mathcal{K}_{t})\\
    &\quad+A(-\frac{\lambda \mu \eta_t}{4b_t}\mathcal{T}_t-\frac{3\eta_t}{4}\mathbb{E}[\|\Tilde{y}_{t+1}-y_t\|^2]+\frac{25\lambda \eta_t}{6\mu b_t}\mathcal{X}_t+\frac{25\kappa^2b_t\eta_t}{6\mu\lambda}\mathbb{E}[\|\Tilde{x}_{t+1}-x_t\|^2])\\
    &\leq (\frac{2\gamma C_g^2 L^2\eta_t}{\rho}-\frac{\mu\lambda A\eta_t}{4b_t})\mathcal{T}_t+(\frac{4\gamma C_g^2L^2\eta_t}{\rho}+\frac{(1-\beta_{t+1})}{\eta_t}+\frac{10L^2\beta_{t+1}^2+4L^2\alpha_{t+1}^2}{\eta_t}-\frac{1}{\eta_{t-1}})\mathcal{J}_t\\
    &\quad+(\frac{8C_f^2 \gamma \eta_t}{\rho}+\frac{1-\beta_{t+1}}{\eta_t}-\frac{1}{\eta_{t-1}})\mathcal{H}_t+(\frac{8C_g^2 \gamma \eta_t}{\rho}+\frac{1-\beta_{t+1}}{\eta_t}-\frac{1}{\eta_{t-1}})\mathcal{K}_t\\
    &\quad+(\frac{25\lambda A \eta_t }{6\mu b_t}+\frac{1-\beta_{t+1}}{\eta_t}-\frac{1}{\eta_{t-1}})\mathcal{X}_t+(-\frac{3A\eta_t}{4}+10L^2\eta_t)\mathbb{E}[\|\Tilde{y}_{t+1}-y_t\|^2]\\
    &\quad+(\frac{25\kappa^2 A b_t\eta_t}{6\mu \lambda}-\frac{\rho \eta_t}{2\gamma}+(2L_f^2+2L_g^2+12L^2L_f^2+4L^2C_g^2)\eta_t)\mathbb{E}[\|\Tilde{x}_{t+1}-x_t\|^2]\\
    &\quad+\frac{2\beta_{t+1}^2}{\eta_t}(\sigma^2_g+\sigma^{2}_{g^{'}}+\sigma^2_f+6L^2\sigma^2_g) +\frac{4\alpha_{t+1}^2\sigma_f^2}{\eta_t}.
\end{aligned}
\end{equation}
\\
By setting $m > \max \{\frac{8L_{\Phi}^3\gamma^3}{\rho^3},(10L^2c_1^2+4L^2c_2^2)^3, c_1^3, c_2^3\}$,  
 $\beta_{t+1}=c_1\eta_t^2\leq c_1\eta_t < 1$, $\alpha_{t+1}=c_2\eta_t^2\leq c_2\eta_t< 1 $ and 
 $\eta_t=\frac{1}{(t+m)^{1/3}}$ , according to \eqref{Eq:eta_t} we have:
\begin{equation}
\begin{aligned}
    \frac{1}{\eta_t}-\frac{1}{\eta_{t-1}}\leq\frac{2^{2/3}}{3}\eta_t^2\leq\frac{2}{3}\eta_t^2.
\end{aligned}    
\end{equation}
\\
Let $c_1 \geq 2+\frac{5\gamma(2C_f^2+2C_g^2+C_g^2L^2)}{\rho}$ and $c_2 \geq \frac{2}{3} + \frac{25\lambda A}{6\mu b}$ we can get:
\begin{equation}
\begin{aligned}
    \frac{8C_f^2 \gamma \eta_t}{\rho}+\frac{1-\beta_{t+1}}{\eta_t}-\frac{1}{\eta_{t-1}}& \leq \frac{8C_f^2 \gamma \eta_t}{\rho}+ \frac{2\eta_t}{3} -c_1\eta_t \leq  -\frac{2C_f^2\gamma\eta_t}{\rho},
\end{aligned}
\end{equation}
and
\begin{equation}
\begin{aligned}
    \frac{8C_g^2 \gamma \eta_t}{\rho}+\frac{1-\beta_{t+1}}{\eta_t}-\frac{1}{\eta_{t-1}}& \leq \frac{8C_g^2 \gamma \eta_t}{\rho}+ \frac{2\eta_t}{3} -c_1\eta_t \leq  -\frac{2C_g^2\gamma\eta_t}{\rho}.
\end{aligned}
\end{equation}
Then setting $\gamma  \leq \frac{\rho}{4\sqrt{B_1^2+\rho B_2}}$, where $B_1 = \frac{50C_g^2\kappa^4\hat{b}}{\lambda^2}$, $B_2 = \frac{70\kappa^3L}{\lambda}+2L_f^2+2L_g^2+12L^2L_f^2+4L^2C_g^2$, we can get:
\begin{equation}
\begin{aligned}
P_{t+1}-P_t&\leq -\frac{C_g^2L^2 \gamma}{2\rho}\eta_t\mathcal{T}_t-\frac{C_g^2L^2\gamma}{\rho}\eta_t\mathcal{J}_t-\frac{2C_f^2\gamma}{\rho}\eta_t\mathcal{H}_t-\frac{2C_g^2\gamma}{\rho}\eta_t\mathcal{K}_t-\frac{\rho}{4\gamma}\eta_t\mathbb{E}[\|\Tilde{x}_{t+1}-x_{t}\|^2]\\
&\quad+2c_1^2\eta_t^3(\sigma^2_g+\sigma^{2}_{g^{'}}+\sigma^2_f+6L^2\sigma^2_g) +4c_2^2\eta_t^3\sigma_f^2.
\end{aligned}    
\end{equation}
\\
Taking average over $t = 1,2,\cdots, T$ on both sides of above inequality, we have:
\begin{equation}
\begin{aligned}
    &\frac{1}{T}\sum^T\limits_{t=1}(\frac{C_g^2L^2 \gamma}{2\rho}\eta_t\mathcal{T}_t+\frac{C_g^2L^2\gamma}{\rho}\eta_t\mathcal{J}_t
    +\frac{2C_f^2\gamma}{\rho}\eta_t\mathcal{H}_t+\frac{2C_g^2\gamma}{\rho}\eta_t\mathcal{K}_t+\frac{\rho}{4\gamma}\eta_t\mathbb{E}[\|\Tilde{x}_{t+1}-x_{t}\|^2])\\
    &\leq \frac{P_1-P_{t+1}}{T}+\frac{(2c_1^2(\sigma^2_g+\sigma^{2}_{g^{'}}+\sigma^2_f+6L^2\sigma^2_g) +4c_2^2\sigma_f^2)\ln(m+T)}{T}.
\end{aligned}    
\end{equation}
\\
Since $\eta_t$ is decreasing, we have:
\begin{equation}
\begin{aligned}
    &\frac{1}{T}\sum^T\limits_{t=1}(\frac{C_g^2L^2 \gamma}{2\rho}\mathcal{T}_t+\frac{C_g^2L^2\gamma}{\rho}\mathcal{J}_t
    +\frac{2C_f^2\gamma}{\rho}\mathcal{H}_t+\frac{2C_g^2\gamma}{\rho}\mathcal{K}_t+\frac{\rho}{4\gamma}\mathbb{E}[\|\Tilde{x}_{t+1}-x_{t}\|^2])\\
    &\leq \frac{P_1-P_{t+1}}{T}(m+T)^{1/3}+\frac{(2c_1^2(\sigma^2_g+\sigma^{2}_{g^{'}}+\sigma^2_f+6L^2\sigma^2_g) +4c_2^2\sigma_f^2)\ln(m+T)}{T}(m+T)^{1/3}.
\end{aligned}    
\end{equation}
\\
Denote that $M = \frac{\Phi(x_1)-\Phi_*+ \sigma_g^2+\sigma_{g^{'}}^2+\sigma_f^2 +L^2\sigma_g^2}{\rho } + \frac{(2c_1^2(\sigma^2_g+\sigma^{2}_{g^{'}}+\sigma^2_f+6L^2\sigma^2_g) +4c_2^2\sigma_f^2)\ln(m+T)}{\rho}$, then from the initialization, we can easily get:
\begin{equation}
\begin{aligned}
     &\frac{1}{T}\sum^T\limits_{t=1}(\frac{C_g^2L^2 }{2\rho^2}\mathcal{T}_t+\frac{C_g^2L^2}{\rho^2}\mathcal{J}_t
    +\frac{2C_f^2}{\rho^2}\mathcal{H}_t+\frac{2C_g^2}{\rho^2}\mathcal{K}_t+\frac{1}{4\gamma^2}\mathbb{E}[\|\Tilde{x}_{t+1}-x_{t}\|^2])\\
    &\leq \frac{P_1-P_{t+1}}{\rho \gamma T}(m+T)^{1/3}+\frac{2c_1^21(\sigma_g^2+\sigma_{g^{'}}^{2}+2\sigma_f^2+4L^2\sigma_g^2)\ln(m+T)}{\rho \gamma T}(m+T)^{1/3} \leq M\frac{(m+T)^{1/3}}{\gamma T}.
\end{aligned}    
\end{equation}
According to Jensen’s inequality, we have:
\begin{equation}\label{jensen}
\begin{aligned}
    &\frac{2}{T}\sum^T\limits_{t=1}\mathbb{E}\Big[\frac{C_g L}{\sqrt{2}\rho}\|y^*(x_t)-y_t\|+\frac{C_gL}{\rho}\|u_{t}-g(x_t)\| \\
    &\quad+\|\frac{\sqrt{2}C_f}{\rho}\|v'_{t}-\nabla g(x_t)\|+\frac{\sqrt{2}C_g}{\rho}\|v''_{t}-\nabla_gf(u_{t},y_t)\|+\frac{1}{2\gamma}\|\Tilde{x}_{t+1}-x_t\|\Big] \\
    & = \mathbb{E}\Big[\frac{2C_gL}{\sqrt{2}\rho}\|y^*(x_t)-y_t\|+\frac{2C_gL}{\rho}\|u_{t}-g(x_t)\| \\
    &\quad+\|\frac{2\sqrt{2}C_f}{\rho}\|v'_{t}-\nabla g(x_t)\|+\frac{2\sqrt{2}C_g}{\rho}\|v''_{t}-\nabla_gf(u_{t},y_t)\|+\frac{1}{\gamma}\|\Tilde{x}_{t+1}-x_t\|\Big] \\
    & \leq 2\Big(\frac{5}{T}\sum^T\limits_{t=1}(\frac{C_g^2L^2 }{2\rho^2}\mathcal{T}_t+\frac{C_g^2L^2}{\rho^2}\mathcal{J}_t
    +\frac{2C_f^2}{\rho^2}\mathcal{H}_t+\frac{2C_g^2}{\rho^2}\mathcal{K}_t+\frac{1}{4\gamma^2}\mathbb{E}[\|\Tilde{x}_{t+1}-x_{t}\|^2])\Big)^{1/2} \leq 2\sqrt{\frac{5M(m+T)^{1/3}}{\gamma T}}.
\end{aligned}    
\end{equation}
Besides, $\|A_t\| \geq \rho$, we have:
\begin{equation}
\begin{aligned}
    &\frac{2C_gL}{\sqrt{2}\rho}\|y^*(x_t)-y_t\|+\frac{2C_gL}{\rho}\|u_{t}-g(x_t)\|+\|\frac{2\sqrt{2}C_f}{\rho}\|v'_{t}-\nabla g(x_t)\|\\
    &\quad+\frac{2\sqrt{2}C_g}{\rho}\|v''_{t}-\nabla_gf(u_{t},y_t)\|+\frac{1}{\gamma}\|\Tilde{x}_{t+1}-x_t\| \\
    & \geq \underbrace{\frac{C_gL}{\rho}\|y^*(x_t)-y_t\|}\limits_{T_5}+\underbrace{\frac{C_gL}{\rho}\|u_{t}-g(x_t)\|+\|\frac{C_f}{\rho}\|v'_{t}-\nabla g(x_t)\|+\frac{C_g}{\rho}\|v''_{t}-\nabla_gf(u_{t},y_t)\|} \limits_{T_6}+\underbrace{\frac{1}{\gamma}\|\Tilde{x}_{t+1}-x_t\|}\limits_{T_4}.
\end{aligned}    
\end{equation}
Looking at the term $T_4$, we have:
\begin{equation}
    T_4  = \frac{1}{\gamma}\|\Tilde{x}_{t+1}-x_t\| = \frac{1}{\|A_t\|}\|v_t\|.
\end{equation}
Then for the term $T_5$, we have:
\begin{equation}
\begin{aligned}
    T_5 = \frac{C_gL}{\rho}\|y^*(x_t)-y_t\| &\geq \frac{1}{\rho}\|\nabla_x f(g(x_t),y^*(x_t)) - \nabla_x f(g(x_t),y_t)\|\\
    &= \frac{1}{\rho}\|\nabla \Phi(x_t) - \nabla_x f(g(x_t),y_t)\|.
\end{aligned}    
\end{equation}
Next, for the term $T_6$, we have:
\begin{equation}
\begin{aligned}
    T_6 
    &= \frac{C_gL}{\rho}\|u_{t}-g(x_t)\|+\|\frac{C_f}{\rho}\|v'_{t}-\nabla g(x_t)\|+\frac{C_g}{\rho}\|v''_{t}-\nabla_gf(u_{t},y_t)\| \\
    & \geq \frac{1}{\rho}(\|\nabla g(x_t)\nabla_gf(u_{t},y_t)-\nabla_x f(g(x_t),y_t)\|+\|\nabla g(x_t)\nabla_gf(u_{t},y_t)-v'_{t}\nabla_gf(u_{t},y_t)\|  +\|v'_{t}\nabla_gg(u_{t},y_t)-v'_{t}v''_{t}\|)\\
    &\geq \frac{1}{\rho}(\|\nabla g(x_t)\nabla_gf(u_{t},y_t)-\nabla_x f(g(x_t),y_t)\|+\|\nabla g(x_t)\nabla_gf(u_{t},y_t)-v'_{t}v''_{t}\|) \geq \frac{1}{\rho}\|\nabla_xf(g(x_t),y_t)-v_t\|.
\end{aligned}    
\end{equation}
Then,
\begin{equation}
\begin{aligned}
    &\mathcal{M}_t = \frac{2C_gL}{\sqrt{2}\rho}\|y^*(x_t)-y_t\|+\frac{2C_gL}{\rho}\|u_{t}-g(x_t)\|+\|\frac{2\sqrt{2}C_f}{\rho}\|v'_{t}-\nabla g(x_t)\|\\
    &\quad+\frac{2\sqrt{2}C_g}{\rho}\|v''_{t}-\nabla_gf(u_{t},y_t)\|+\frac{1}{\gamma}\|\Tilde{x}_{t+1}-x_t\| \\
    &\geq \frac{1}{\|A_t\|}\|v_t\|+\frac{1}{\rho}\|\nabla\Phi(x_t) - \nabla_x f(g(x_t),y_t)\|+\frac{1}{\rho}\|\nabla_xf(g(x_t),y_t)-v_t\|\\
    &\geq \frac{1}{\|A_t\|}\|v_t\| + \frac{1}{\rho}\|\nabla\Phi(x_t)-v_t\|\\
    &\geq \frac{1}{\|A_t\|}(\|v_t\|+\|\nabla\Phi(x_t)-v_t\|)\\
    &\geq \frac{1}{\|A_t\|}\|\nabla\Phi(x_t)\|.
\end{aligned}    
\end{equation}
We can get $\|\nabla\Phi(x_t)\| \leq \mathcal{M}_t\|A_t\|$. By using Cauchy-Schwarz inequality, we have:
\begin{equation}
\begin{aligned}
    \frac{1}{T}\sum^T\limits_{t=1}\mathbb{E}[\|\nabla\Phi(x_t)\|] \leq \frac{1}{T}\sum^T\limits_{t=1}\mathbb{E}[\mathcal{M}_t\|A_t\|]\leq\sqrt{\frac{1}{T}\sum^T\limits_{t=1}\mathbb{E}[\mathcal{M}_t^2]}\cdot\sqrt{\frac{1}{T}\sum^T\limits_{t=1}\mathbb{E}[\|A_t\|^2]}.
\end{aligned}    
\end{equation}
Based on \eqref{jensen}, we have:
\begin{equation}
\begin{aligned}
    \frac{1}{T}\sum^T\limits_{t=1}\mathbb{E}[\mathcal{M}_t^2]& \leq  \frac{20M(m+T)^{1/3}}{\gamma T}.
\end{aligned}
\end{equation}
Thus, we have:
\begin{equation}
\begin{aligned}
    \frac{1}{T}\sum^T\limits_{t=1}\mathbb{E}[\|\nabla\Phi(x_t)\|] &\leq \sqrt{\frac{1}{T}\sum^T\limits_{t=1}\mathbb{E}[\|A_t\|^2]} \cdot \frac{2\sqrt{5M(m+T)^{1/3}}}{\sqrt{\gamma T}}\\
    &\leq \sqrt{\frac{1}{T}\sum^T\limits_{t=1}\mathbb{E}[\|A_t\|^2]} \cdot (\frac{2\sqrt{5Mm^{1/3}}}{\sqrt{\gamma T}}+\frac{2\sqrt{5M}}{\sqrt{\gamma}T^{1/3}}).
\end{aligned}    
\end{equation}
Therefore, we get the result in Theorem~\ref{theorem ago2}.\hfill~$\Box$

\section{ADA-NSTORM with the Different Adam-Type Generator}
Adaptive learning rates have been widely used in stochastic optimization problems, with many successful methods proposed such as Adam \cite{kingma2014adam}, AdaBelief \cite{zhuang2020adabelief}, AMSGrad \cite{j.2018on}, and AdaBound \cite{luo2018adaptive}. However, their application in stochastic compositional problems remains less explored. To enable adaptive learning rates, we propose generating adaptive learning rate matrices in different ways. We give ADA-NSTORM with different Adam-type as Algorithm \ref{alg:Ada-all-type}.$\rho$ is an arbitrary positive constant that is greater than 0. It is introduced to prevent the matrices $A_t$ or $B_t$ from containing zeros, which would cause issues with the scoring factor calculations.
\begin{algorithm}[h]
    \caption{Illustration of ADA-NSTORM method with different Adam-type.}
    \label{alg:Ada-all-type}
    \renewcommand{\algorithmicrequire}{\textbf{Initialization:}}
    \renewcommand{\algorithmicensure}{\textbf{Output:}}
    \begin{algorithmic}[1]
        \REQUIRE $\rho > 0.$
        \STATE replace line 13 in Algorithm \ref{alg:AOA} by the following cases:
        \STATE Case 1. AMSGrad \cite{j.2018on}:
            \begin{equation}
            \begin{aligned}
                & a'_t=\tau a'_{t-1}+(1-\tau) v_t^2,~~a_t = \max(a_{t-1}, a'_t),~~A_{t}=\operatorname{diag}\left(\sqrt{a_{t}}+\rho\right);\\
                & b'_t=\tau b'_{t-1}+(1-\tau) w_t^2,~~b_t = \max(b_{t-1}, b'_t),~~B_{t}=\operatorname{diag}(\sqrt{b_{t}}+\rho).
            \end{aligned}
            \end{equation}
        \STATE Case 2. AdaBelief \cite{zhuang2020adabelief}:
            \begin{equation}
            \begin{aligned}
                &a_{t} =\tau a_{t-1}  +(1-\tau)( \nabla_{g}f\left(u_{t}, y_{t} ; \zeta_{t}\right)\cdot v'_{t} - v_t)^{2},~~A_{t}=\operatorname{diag}\left(\sqrt{a_{t}}+\rho\right);\\
                &b_{t} =\tau b_{t-1} +(1-\tau) (\nabla_{y} f\left(u_{t}, y_{t} ; \zeta_{t}\right)-w_t)^2 ,~~B_{t} =\operatorname{diag}(\sqrt{b_t}+\rho).
            \end{aligned}       
            \end{equation}
        \STATE Case 3. AdaBound \cite{luo2018adaptive}:
            \begin{equation}\label{eq:AdaBound}
            \begin{aligned}
                & a'_t=\tau a'_{t-1}+(1-\tau) v_t^2,~~ a_t = \Pi_{[C_l, C_u]}[a'_t], ~~A_{t}=\operatorname{diag}\left(\sqrt{a_{t}}+\rho\right);\\
                & b'_t =\tau b'_{t-1}+(1-\tau) w_t^2,~~b_t =  \Pi_{[C_l, C_u]}[a'_t],~~B_{t}=\operatorname{diag}(\sqrt{b_{t}}+\rho).
            \end{aligned}
            \end{equation}

    \end{algorithmic}
\end{algorithm}

In case 1, we consider using  AMSGrad. AMSGrad enhances Adam optimization by retaining the maximum of all past learning rates $v_t$, denote as $v_{max}$. This maximal learning rate replaces $v_t$ when calculating the current learning rate $\eta_t$. As a result, the learning rate decays more slowly over training compared to Adam, where $v_t$ continually decreases. By preserving larger historical learning rates in $v_{max}$, AMSGrad stabilizes the learning rate at a higher value, avoiding premature convergence. On certain tasks, AMSGrad achieves superior performance to Adam.

In case 2, we consider using AdaBelief. In conventional non-convex optimization, parameter updates for $x$ typically rely solely on noisy gradient value or gradient estimator value. AdaBelief incorporates both the noisy gradients and estimator values when updating x. It increases the update size when the estimator and noisy gradient are in close agreement. But when there is a large gap between the estimator and noisy gradient, AdaBelief slows down the updates. Thus, AdaBelief adapts the update pace based on the alignment between these two information sources. Nonetheless, this method cannot be directly applied to our problem, i.e., we can only obtain the biased estimation of the full gradient. As such, combining with NSTORM, we consider the gap between the inner estimator and the gradient of $g(x)$, i.e., $(\nabla_{g} f\left(u_{t}, y_{t} ; \zeta_{t}\right)\cdot v'_{t} - v_t)$. 

In Case 1 and Case 2 of Algorithm \ref{alg:Ada-all-type}, we can see that making the same assumptions as in Theorem \ref{theorem ago2}, specifically Assumptions \ref{Ass:Smooth}-\ref{Ass:StrongConcave} and \ref{ass ab bound}, leaves the recursion inequalities for the estimation error of the inner and outer functions' values and gradients unchanged. That is, Lemmas \ref{lemma:Phi_a}-\ref{lemma:w_t-variance-a} still hold. Now under Assumptions \ref{Ass:Smooth}-\ref{ass ab bound} for Algorithm \ref{alg:Ada-all-type}, by setting the same parameters as in Theorem \ref{theorem ago2}, we can obtain:
\begin{equation}
    \begin{split}
    &\frac{1}{T}\sum^T\limits_{t=1}\mathbb{E}[\|\nabla\Phi(x_t)\|]\leq \sqrt{\frac{1}{T}\sum^T\limits_{t=1}\mathbb{E}[\|A_t\|^2]} \cdot \left(\frac{2\sqrt{5Mm^{1/3}}}{\sqrt{\gamma T}}+\frac{2\sqrt{5M}}{\sqrt{\gamma}T^{1/3}}\right),
    \end{split}    
\end{equation}
where $M =  (\Phi(x_1)-\Phi_*+ \sigma_g^2+\sigma_{g^{'}}^2+\sigma_f^2 +L^2\sigma_g^2 )/\rho + ((2c_1^2(\sigma^2_g + \sigma^{2}_{g^{'}} + \sigma^2_f + 6L^2\sigma^2_g ) +4c_2^2\sigma_f^2)\ln(m+T))/\rho$ and $\Phi_*$ represents the minimum value of $\Phi(x)$. Therefore, the sample complexity for Algorithm \ref{alg:Ada-all-type} with Case 1 and Case 2 remains $O(\kappa^3/\epsilon^3)$.

Note that this is the first work that introduces AdaBelief into the compositional minimax optimization problem without using a large batch size.

In Case 3, we consider using AdaBound which constrains learning rates within predefined minimum and maximum bounds. In \ref{eq:AdaBound}, where $C_l \leq C_u$, the projection $\Pi$ restricts $a'_t$ to the range $[C_l, C_u]$. We can use an analysis similar to Theorem \ref{theorem ago2} to derive the sample complexity. Furthermore, Theorem \ref{theorem ago2} still applies to Case 3 of Algorithm \ref{alg:Ada-all-type} even without needing Assumption \ref{ass ab bound}. We only require a minor modification - the projection threshold $C_u$  is set equal to $\hat{b}$ from Assumption \ref{ass ab bound}.Specifically, Under Assumption \ref{Ass:Smooth}-\ref{Ass:StrongConcave} for Algorithm \ref{alg:Ada-all-type},  setting $C_u = \hat{b}$, $\eta_t=\frac{1}{(m+t)^{1/3}}$,  $m > \max \{\frac{8L_{\Phi}^3\gamma^3}{\rho^3},(10L^2c_1^2+4L^2c_2^2)^3, c_1^3, c_2^3\}$,  $c_1 \geq 2+\frac{5\gamma(2C_f^2+2C_g^2+C_g^2L^2)}{\rho}$ , $c_2 \geq \frac{2}{3} + \frac{125\lambda L^2}{2\mu b}+\frac{125\gamma C_g^2\kappa^2\hat{b}}{3b}$, $\gamma  \leq \frac{\rho}{4\sqrt{B_1^2+\rho B_2}}$,   where $B_1 = \frac{50C_g^2\kappa^4\hat{b}}{\lambda^2}$, $B_2 = \frac{70\kappa^3L}{\lambda}+2L_f^2+2L_g^2+12L^2L_f^2+4L^2C_g^2$, $\beta_{t+1}=c_1\eta_t^2\leq c_1\eta_t < 1$, $\alpha_{t+1}=c_2\eta_t^2\leq c_2\eta_t < 1$, $0 < \lambda \leq \frac{b}{6L}$, we can obtain 
\begin{equation}
    \begin{split}
    &\frac{1}{T}\sum^T\limits_{t=1}\mathbb{E}[\|\nabla\Phi(x_t)\|]\leq \sqrt{\frac{1}{T}\sum^T\limits_{t=1}\mathbb{E}[\|A_t\|^2]} \cdot \left(\frac{2\sqrt{5Mm^{1/3}}}{\sqrt{\gamma T}}+\frac{2\sqrt{5M}}{\sqrt{\gamma}T^{1/3}}\right),
    \end{split}    
\end{equation}
where $M =  (\Phi(x_1)-\Phi_*+ \sigma_g^2+\sigma_{g^{'}}^2+\sigma_f^2 +L^2\sigma_g^2 )/\rho + ((2c_1^2(\sigma^2_g + \sigma^{2}_{g^{'}} + \sigma^2_f + 6L^2\sigma^2_g ) +4c_2^2\sigma_f^2)\ln(m+T))/\rho$ and $\Phi_*$ represents the minimum value of $\Phi(x)$. Consequently, the sample complexity of Algorithm \ref{alg:Ada-all-type} using Case 3 remains $O(\kappa^3/\epsilon^3)$.

\section{Additional Experiments}
\subsection{Experimental Setup of deep AUC}
\noindent{\bf Dataset description.} Table \ref{tab:dataset} reports the detailed statistics for the different datasets. Note that "Number of image" refers to the number of samples in the original training set. The "Imbalance Ratio" represents the ratio of the number of positive examples to the total number of examples.
\begin{table*}[htb]
\centering
\begin{tabular}{cccc}
\hline
\textbf{Dataset} & \textbf{Number of images} & \textbf{Imbalance Ratio} & \textbf{Number of labels} \\ \hline
CAT\_VS\_DOG \cite{krizhevsky2009learning} & 20,000 & 1\%,5\%,10\%,30\% & 2 \\
CIFAR10 \cite{krizhevsky2009learning}      & 50,000 & 1\%,5\%,10\%,30\% & 2 \\
CIFAR100 \cite{krizhevsky2009learning}    & 50,000 & 1\%,5\%,10\%,30\% & 2 \\
STL10 \cite{coates2011analysis}       & 5,000  & 1\%,5\%,10\%,30\% & 2 \\ \hline
\end{tabular}
\caption{Dataset Description for Classification Tasks}
\label{tab:dataset}
\end{table*}

\noindent{\bf Training configurations.} All benchmark datasets were evaluated using the NVIDIA GTX-3090. The same dataloaders from \cite{yuan2020robust}\footnote{https://libauc.org/}were used for all datasets. Specifically, for the benchmark datasets, a 19k/1k, 45k/5k, 45k/5k, and 4k/1k training/validation split was applied for CatvsDog, CIFAR10, CIFAR100, and STL10, respectively.

\noindent{\bf Loss function.}We use $(\omega,\theta)$ to denote an example. where $\omega \in \mathbb{R}^d$ denotes the input and $\theta \in \mathcal{Y}$ denotes its corresponding label. 
\begin{equation}\label{AUC_loss}
\begin{aligned}
     &\min_{x, a, b} \max_{y \in \Omega} \Theta\left(x-\alpha \nabla L_{\mathrm{AVG}}(x), a, b, y\right)=\frac{1}{n} \sum_{i=1}^{n} \Upsilon\left(x-\alpha \nabla L_{\mathrm{AVG}}(x), a, b, y ; \omega_{i}, \theta_{i}\right).
\end{aligned}
\end{equation}
Note that we define the vector $\bar{x} = (x;a,b)$, $n_+(n_{-})$ are the number of positive (negative) examples, $p = n_+/n$, we rewrite   $\phi(x,a,b,y;\omega_{i}, \theta_{i}) = g_{1}\left(\bar{x} ;\omega_i,  \theta_{i}\right)+\theta g_{2}\left(\bar{x} ;\omega_{i}, \theta_{i}\right)-g_{3}(\theta)$, where the first part $g_{1}\left(\overline{\mathbf{x}} ; \omega_{i}, \theta_{i}\right)=(1-p)\left(f\left(\mathbf{x} ; \omega_{i}\right)-a\right)^{2} \mathbb{I}_{\left[\theta_{i}=1\right]}+p\left(f\left(\mathbf{x} ; \omega_{i}\right)-b\right)^{2} \mathbb{I}_{\left[\theta_{i}=-1\right]} +2 p f\left(\mathbf{x} ; \omega_{i}\right) \mathbb{I}_{\left[\theta_{i}=1\right]}-2(1-p) f\left(\mathbf{x} ; \omega_{i}\right) \mathbb{I}_{\left[\theta_{i}=-1\right]}$, the second part
$g_{2}\left(\overline{\mathbf{x}} ; \omega_{i}, \theta_{i}\right)=2\left(p f\left(\mathbf{x} ; \omega_{i}\right) \mathbb{I}_{\left[\theta_{i}=-1\right]}-(1-p) f\left(\mathbf{x} ; \omega_{i}\right) \mathbb{I}_{\left[\theta_{i}=1\right]}\right)$ and the last part $g_3(\theta) = p(1-p)\theta^2$. It's notable that the inner function $g(\mathbf{x})$ is equal to $\mathbf{x}-\alpha\nabla L_{\mathrm{AVG}}(\mathbf{x})$, and $\nabla g(\mathbf{x})$ involves the Hessian matrix $\nabla^2 L_(\mathbf{x})$, we use the same strategy as \cite{yuan2022compositional}  that simply ignore the second-order term. The loss function $\Theta$ is transformed from a special surrogate loss \cite{gao2015consistency} $\min_{x} \frac{1}{n_{+} n_{-}} \sum_{\theta_{i}=1} \sum_{\theta_{j}=-1}\left(c-\left(f\left(x ; \omega_{i}\right)-f\left(x ; \omega_{j}\right)\right)\right)^{2}$, due to the solution of this surrogate loss is computationally expensive, where $c$ is a margin parameter, e.g., $c= 1$.
\subsection{Additional Experimental Results of Deep AUC}
As shown in Figure \ref{fig:lr_m_0.1_all_dataset}, we tested the effect of changing the hyperparameter $m$ from 50 to 5000 for NSOTRM on different datasets. We found that varying the value of $m$ had a slight effect on the results. Similarly, we tested the effect of changing the step size ratio of $x$ and $y$, specifically the hyperparameter $\gamma$ as shown in Figure \ref{fig:gamma_0.1_all_dataset}, on different test sets. We found that the best results consistently occurred when the step size for $x$ was less than the step size for $y$. In Figure \ref{fig:upper_all}, we can see that changing the upper bound from 50 to 5000 had little effect on the results across the three datasets. This provides stronger evidence for the moderation implied by Assumption \ref{ass ab bound}. Additionally, in Figure \ref{fig:tau_0.1_all_dataset}, changing the value of $\tau$ from 0.1 to 0.9 leads to only small changes in the results. This demonstrates the robustness of our algorithm.
\subsection{Additional Ablation Studies of Deep AUC}
\begin{figure*}[t!]
    \centering
    \includegraphics[width=\textwidth]{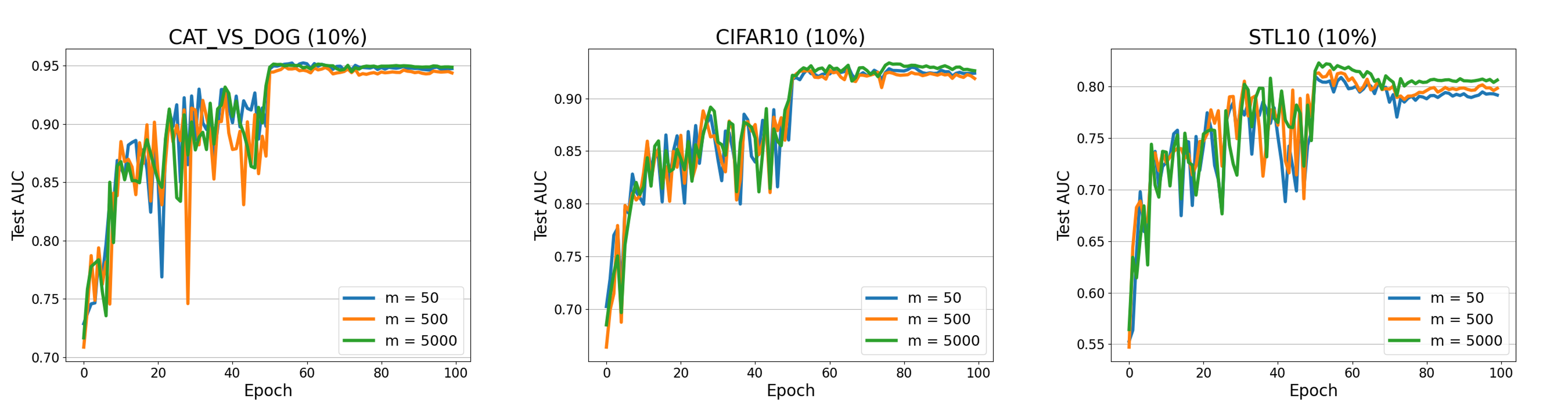}
    \caption{Different $m$ on four Datasets}
    \label{fig:lr_m_0.1_all_dataset}
\end{figure*}

\begin{figure*}[t!]
    \centering
    \includegraphics[width=\textwidth]{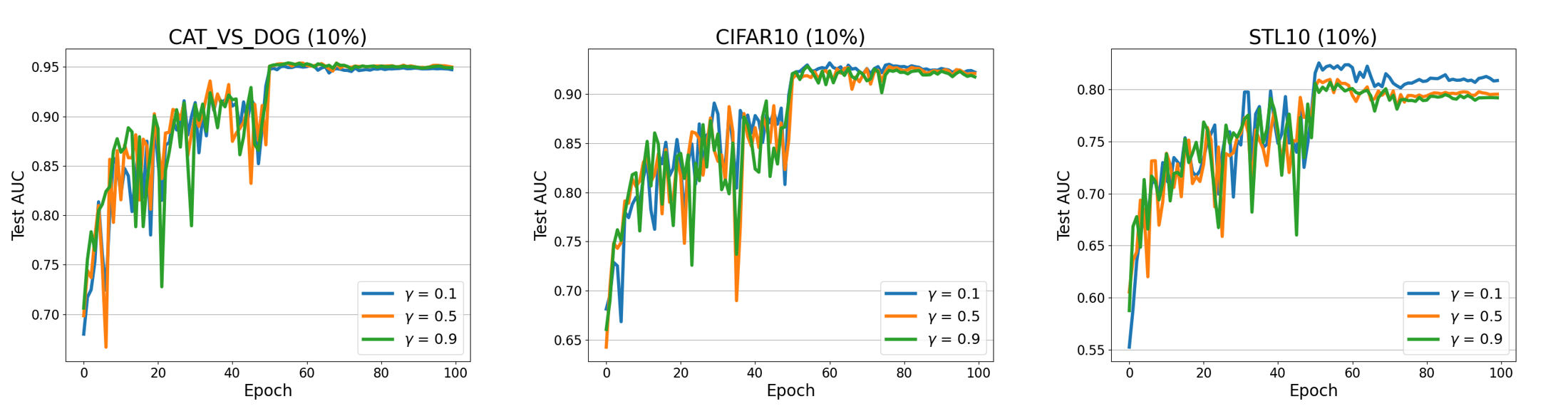}
    \caption{Different $\gamma$ on four Datasets}
    \label{fig:gamma_0.1_all_dataset}
\end{figure*}

\begin{figure*}[t!]
    \centering
    \includegraphics[width=\textwidth]{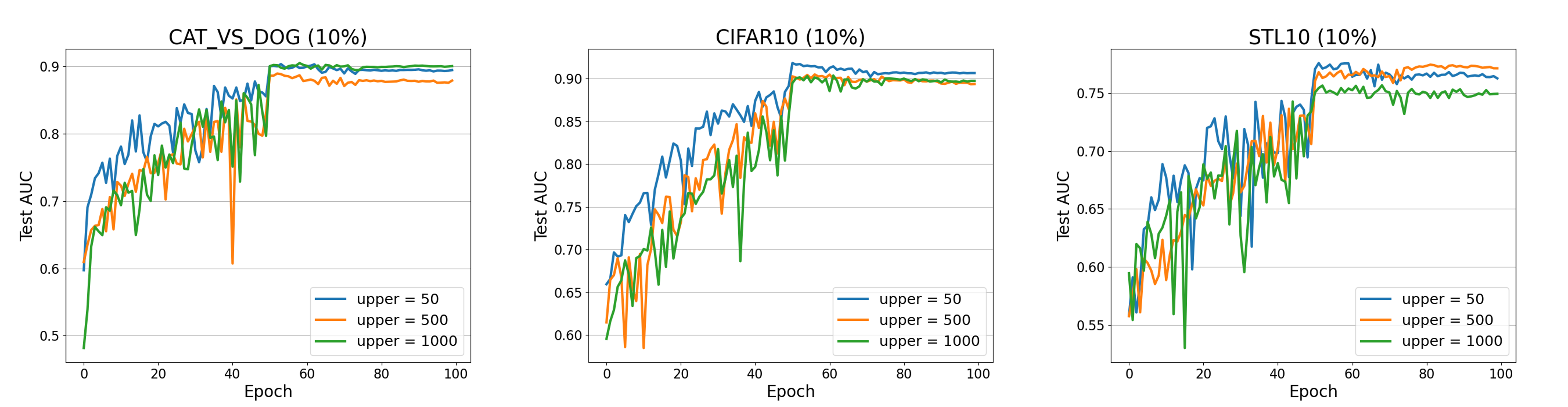}
    \caption{Different upper bound on four Datasets}
    \label{fig:upper_all}
\end{figure*}

\begin{figure*}[t!]
    \centering
    \includegraphics[width=\textwidth]{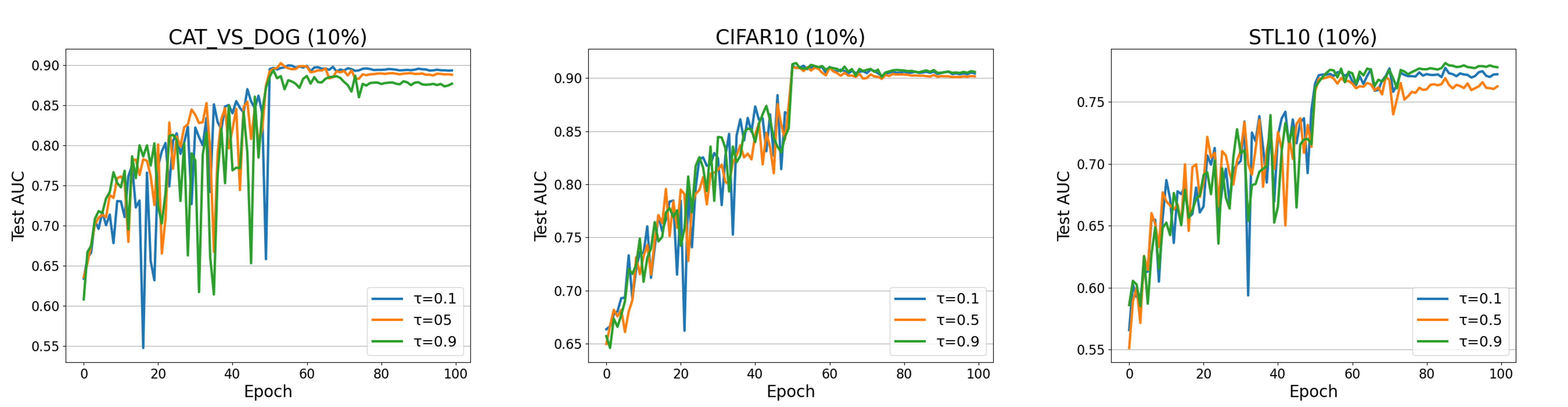}
    \caption{Different $\tau$ on four Datasets}
    \label{fig:tau_0.1_all_dataset}
\end{figure*}

\subsection{Risk-Averse Portfolio Optimization}
We then consider the risk-averse portfolio optimization problem. In this problem, we have $D$ assets to invest during each iteration $\{1,\dots, T\}$, and $r_t \in \mathbb{R}^D$ represents the payoff of $D$ assets in iteration $t$. Our objective is to simultaneously maximize the return on the investment and minimize the associated risk. One commonly used formulation for this problem is the mean-deviation risk-averse optimization \cite{shapiro2021lectures}, where the risk is measured using the standard deviation. This mean-deviation model is widely employed in practical settings and serves as a common choice for conducting experiments in compositional optimization \cite{zhang2021multilevel}. The problem can be formulated as follows:
\begin{equation}
    \min_{x\in\mathcal{X}}\max_{y \in \mathcal{Y}}\frac{1}{D}\sum_{d=1}^{D}y_d \left(-\sum_{t=1}^{T}\langle r_t ,x\rangle + \lambda \sqrt{\frac{1}{T}\sum_{t=1}^{T}(\langle r_t ,x \rangle - \langle \bar{r}, x\rangle)^2} - \left\|y_d - \frac{1}{D}\right\|^2\right),
\end{equation}
where $\bar{r} = \sum_{t=1}^{T}r_t$ and decision variable $x$ denotes the investment quantity vector in $d$ assets and $\mathcal{Y} = \{y = [y_d ]\in\mathbb{R}^D | \sum_{d=1}^{D} y_d =1, y_d \geq 0 ,\forall d\}$. In the experiment, we test different methods on real-world datasets from Keneth R. French Data Library\footnote{https://mba.tuck.dartmouth.edu/pages/faculty/ken.french/data\_library.html}. 

\begin{figure}[t!]
  \centering
    % \caption{Testing accuracy of different numbers of Decoupled BNs on seen and unseen domain.}
  \begin{minipage}{0.24\columnwidth}
  \centering
      \includegraphics[width=\textwidth]{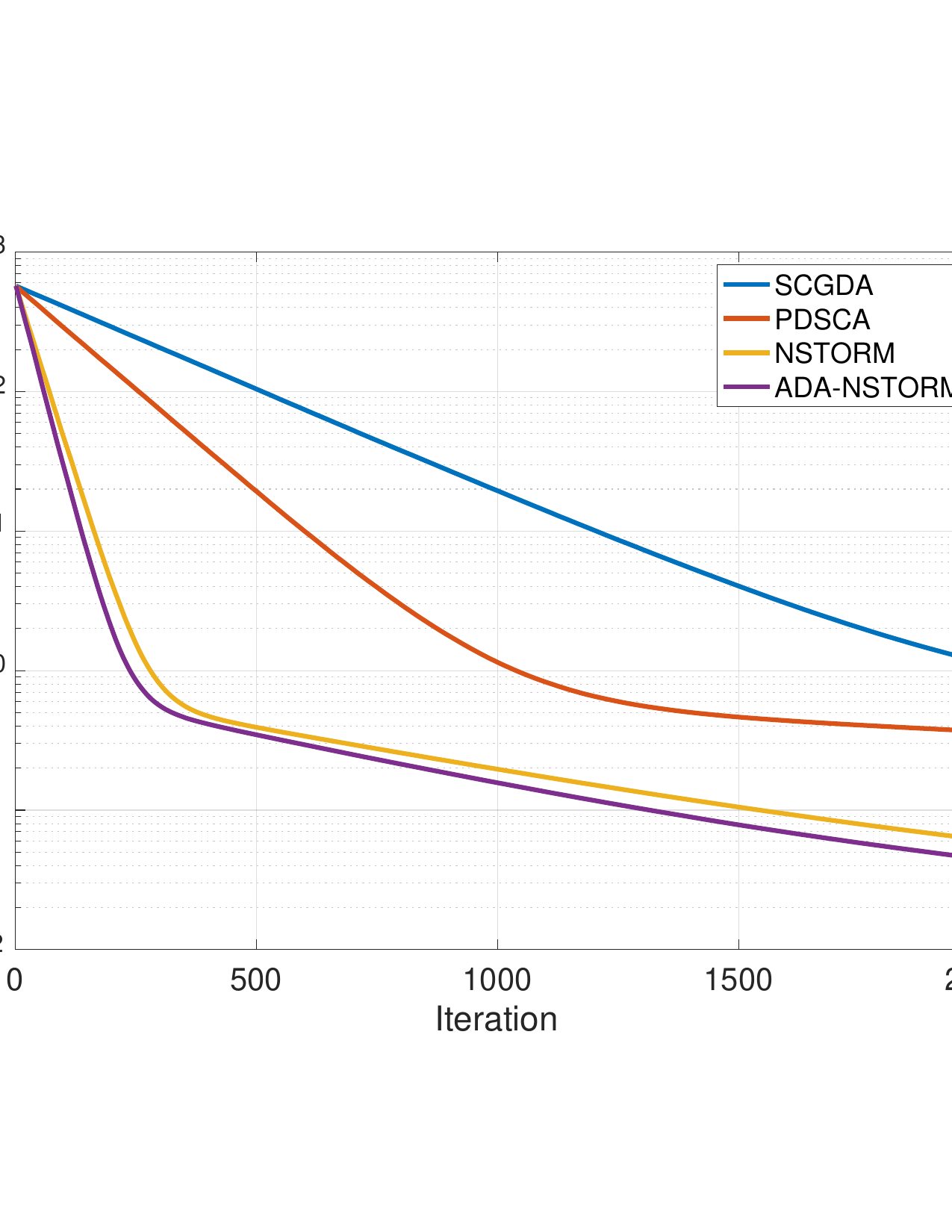}
      \vspace{-10mm}
      \subcaption[first]{Europe.}
      \label{Fig:europe}
    \end{minipage}
    \hfill
  \begin{minipage}{0.24\columnwidth}
  \centering
    \includegraphics[width=\textwidth]{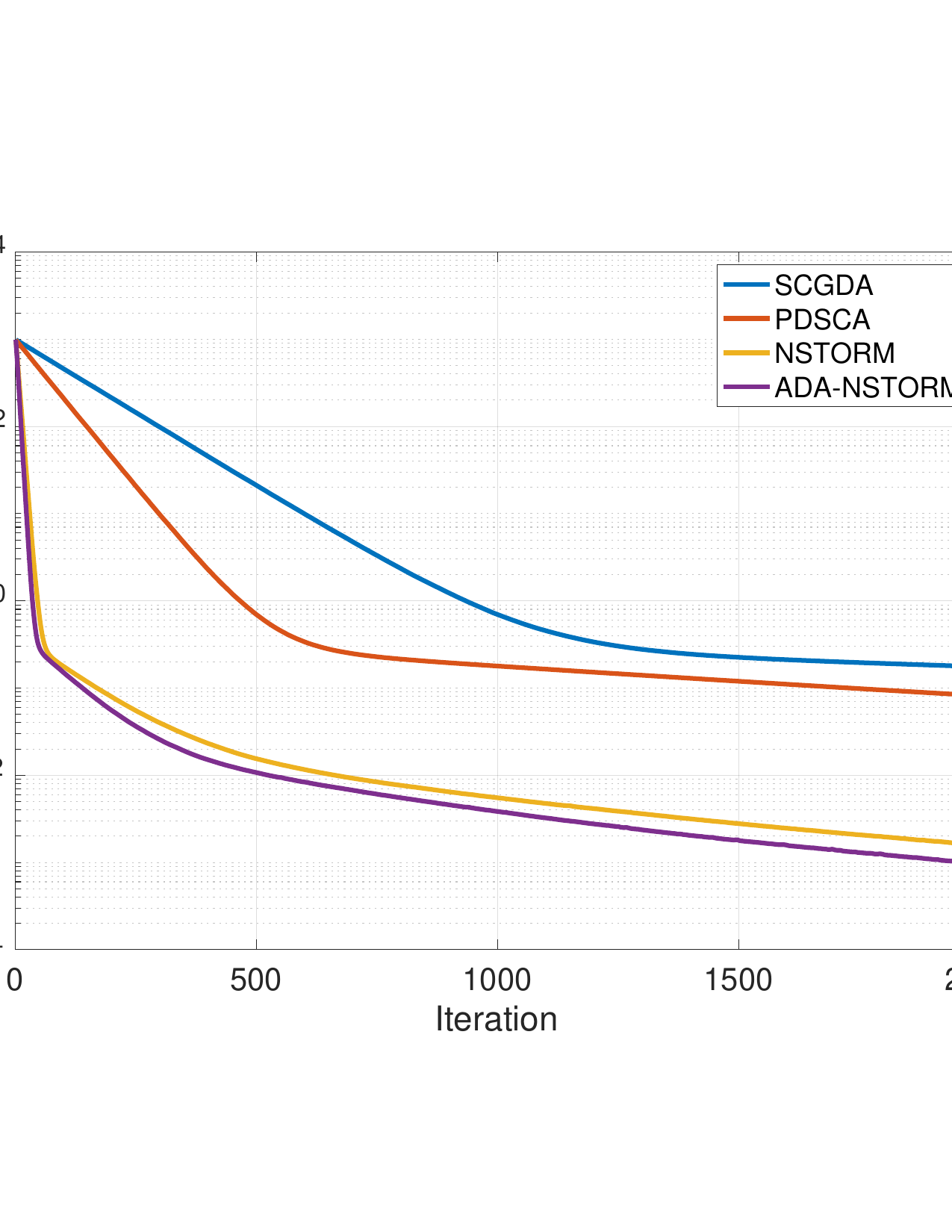}
    \vspace{-10mm}
    \subcaption[second]{Japan.}
     \label{Fig:japan}
    \end{minipage}
    \hfill
  \begin{minipage}{0.24\columnwidth}
  \centering
    \includegraphics[width=\textwidth]{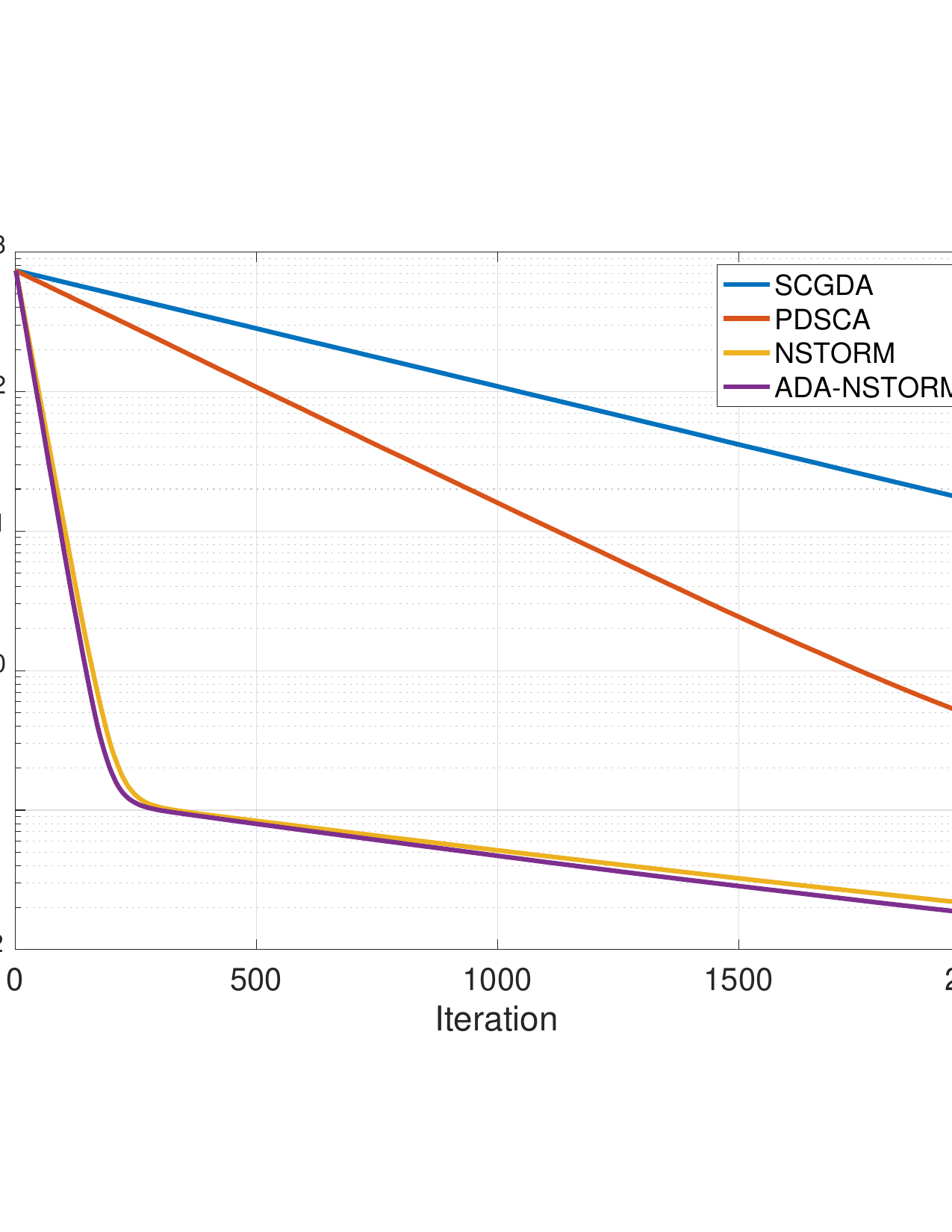}
    \vspace{-10mm}
    \subcaption[third]{North America.}
     \label{Fig:North_America}
    \end{minipage}
        \hfill
  \begin{minipage}{0.24\columnwidth}
  \centering
    \includegraphics[width=\textwidth]{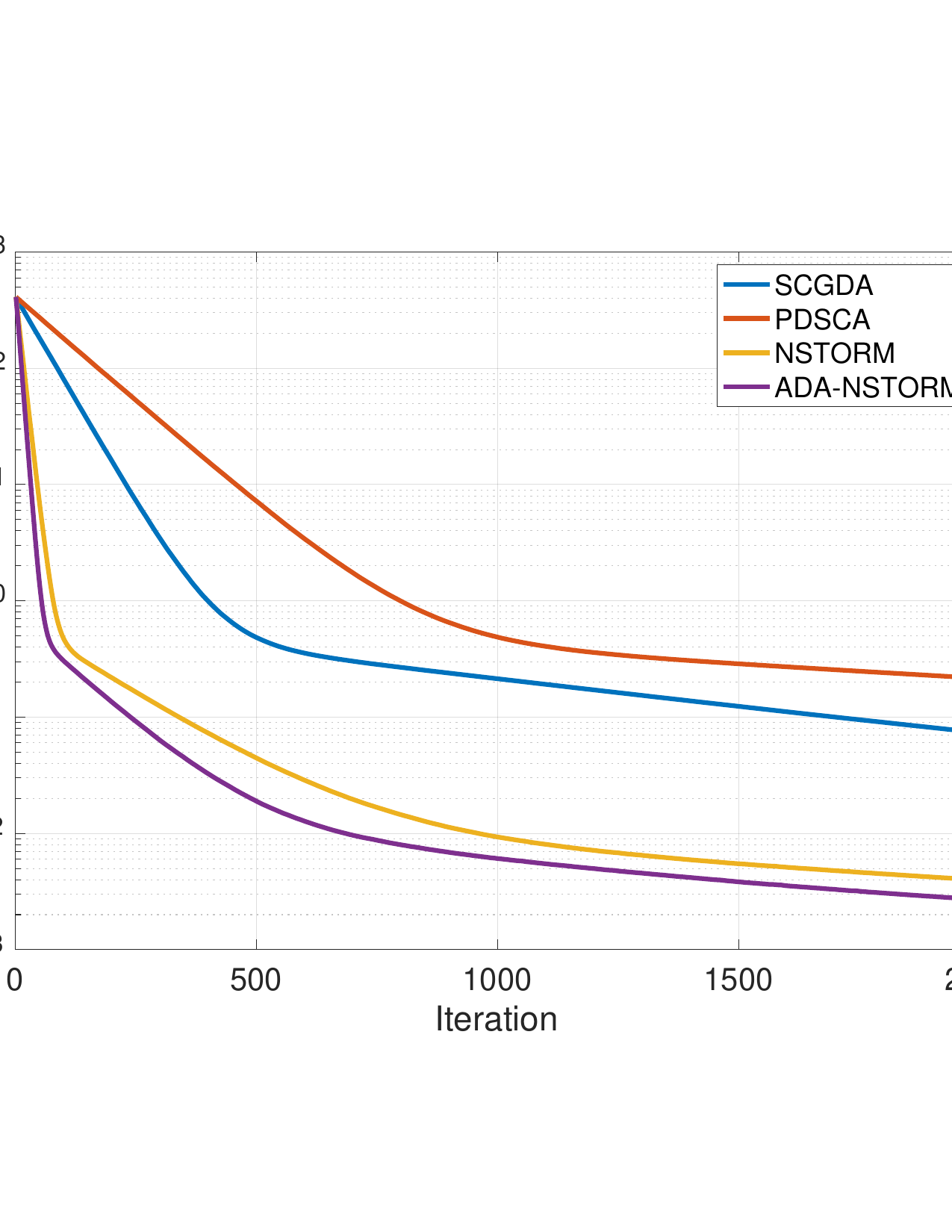}
    \vspace{-10mm}
    \subcaption[fourth]{Global.}
     \label{Fig:global}
    \end{minipage}
    \caption{Objective Gap.}
    \label{Fig:RPGap}
    \vspace{-5mm}
\end{figure}

\begin{figure}[t!]
  \centering
    % \caption{Testing accuracy of different numbers of Decoupled BNs on seen and unseen domain.}
  \begin{minipage}{0.24\columnwidth}
  \centering
      \includegraphics[width=\textwidth]{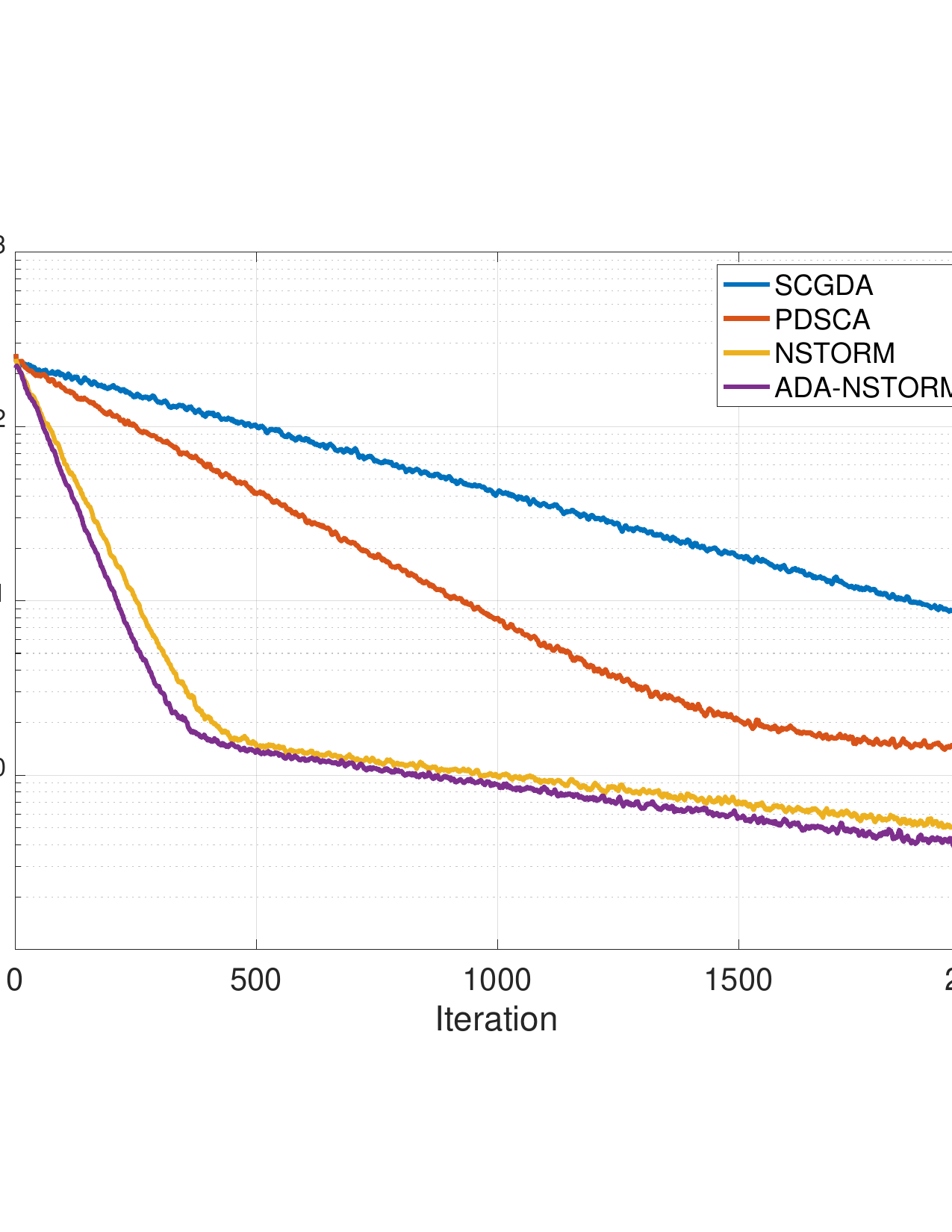}
      \vspace{-12mm}
      \subcaption[first]{Europe.}
      \label{Fig:gradeurope}
    \end{minipage}
    \hfill
  \begin{minipage}{0.24\columnwidth}
  \centering
    \includegraphics[width=\textwidth]{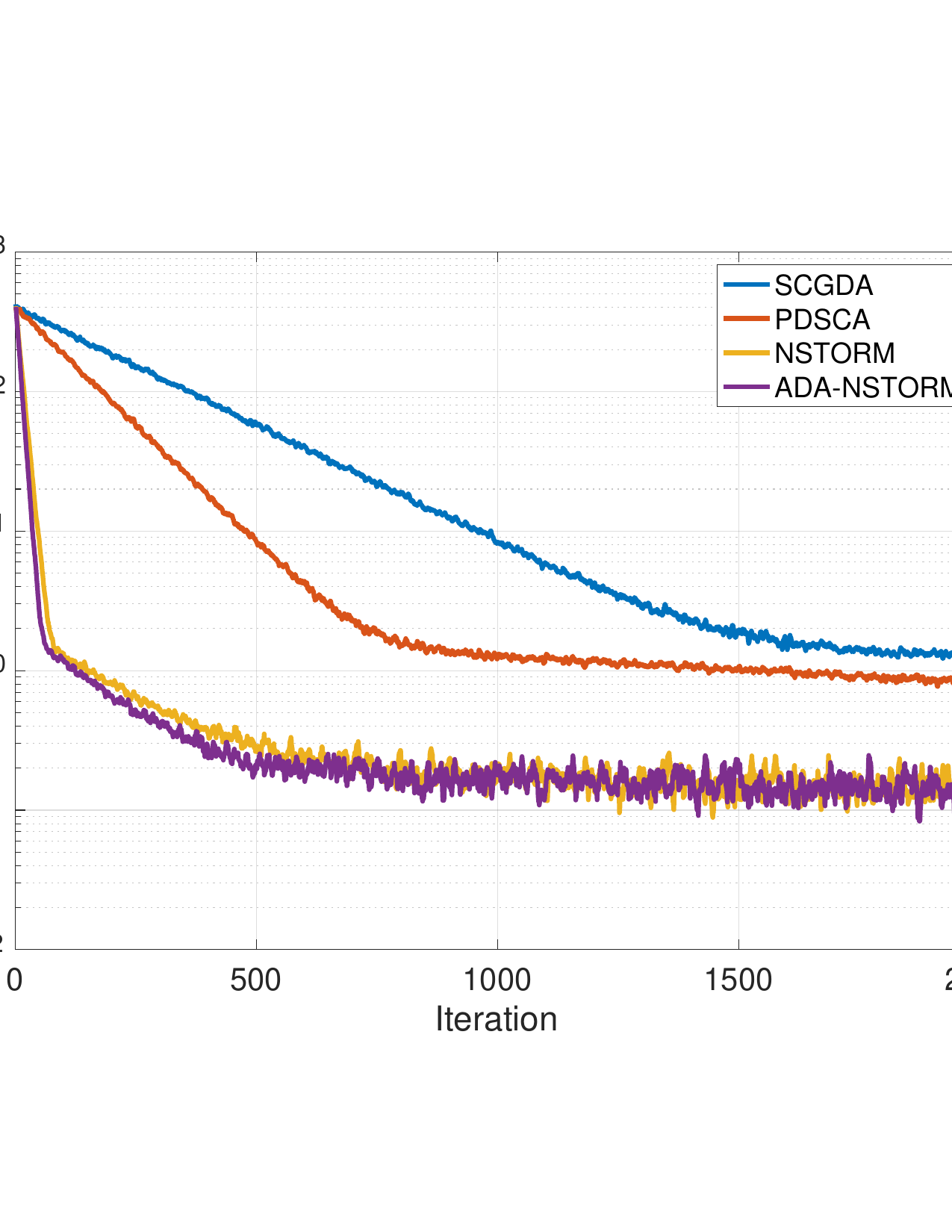}
    \vspace{-12mm}
    \subcaption[second]{Japan.}
     \label{Fig:gradjapan}
    \end{minipage}
    \hfill
  \begin{minipage}{0.24\columnwidth}
  \centering
    \includegraphics[width=\textwidth]{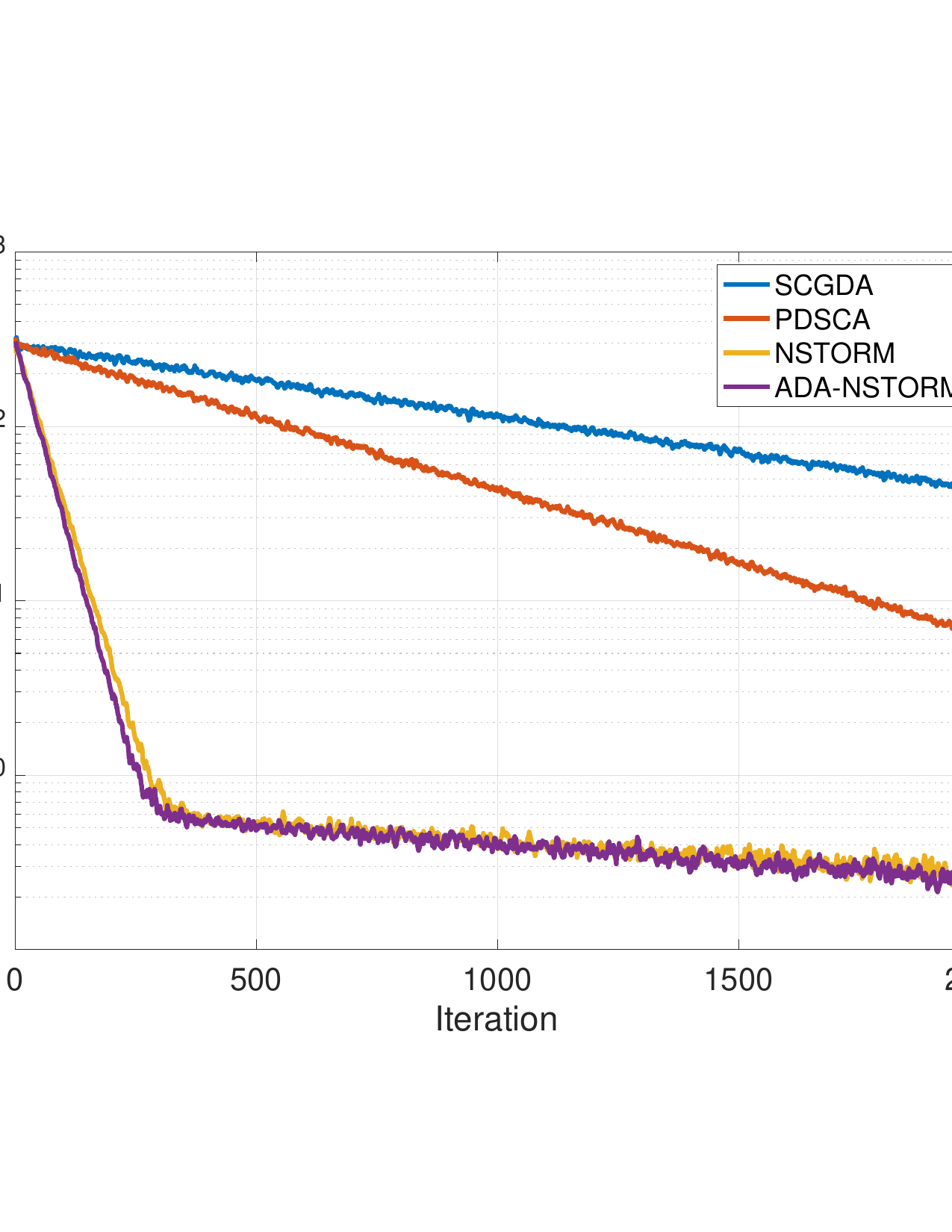}
    \vspace{-12mm}
    \subcaption[third]{North America.}
     \label{Fig:gradNorth_America}
    \end{minipage}
        \hfill
  \begin{minipage}{0.24\columnwidth}
  \centering
    \includegraphics[width=\textwidth]{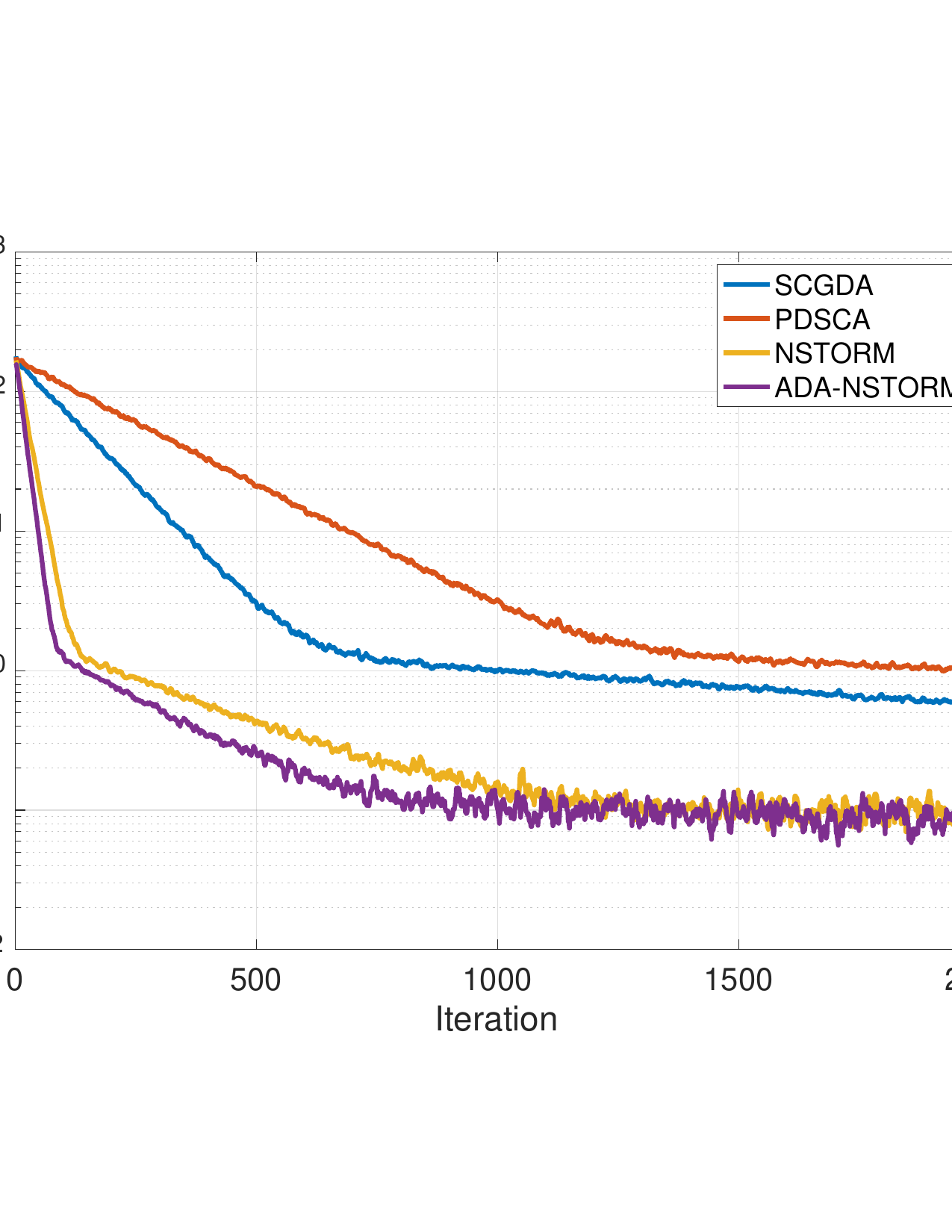}
    \vspace{-12mm}
    \subcaption[fourth]{Global.}
     \label{Fig:gradglobal}
    \end{minipage}
    \caption{Normalized Objective Gap.}
    \label{Fig:RPGapgrad}
\end{figure}

Figures~\ref{Fig:RPGap}-\ref{Fig:RPGapgrad} show the loss value and the norm of the gradient gaps against the number of samples drawn by each method, and all curves are averaged over 20 runs. We can see that our proposed two methods converge much faster than other methods in all datasets. More specifically, both the loss and the gradient of NSTORM and ADA-STORM decrease more quickly, demonstrating the low sample complexity of the proposed methods. In addition, although NSTORM and ADA-STORM obtain the sample complexity theoretically, the latter converges faster in practice due to the adaptive learning rate used in the training procedure.

\subsection{Policy Evaluation in Reinforcement Learning}
In this subsection, we aim to use NSTORM and ADA-NSTORM to optimize the policy evaluation of distributionally robust linear value function approximation in reinforcement learning \cite{zhang2019stochastic}. The value function in reinforcement learning is an important component to compute the reward. More specifically, given a Markov decision process (MDP) $\{\mathcal{S}, P^\pi ,R, r\}$, where $\mathcal{S} = \{1,2,\dots, S\}$ represents the state space, $p_{s,s'}^\pi$ denotes the transition probability from state $s$ to state $s'$ under a given policy $\pi$, $R_{s,s'}$ is the reward when state $s$ goes to state $s'$, and $r$ is the discount factor, then the value function at state $s$ is defined as $V(s) = \sum_{s'=1}^{S}P_{s,s'}^\pi (R_{s,s'} + rV(s'))$. To estimate the value function, a typical choice is to parameterize it with a linear function: $\tilde{V}_x (s)= z_s^\top x$, where $z_s \in \mathbb{R}^L$ is fixed and $x \in \mathbb{R}^L$ is the model parameter which needs to be optimized. We are interested in the distributionally robust variant, i.e., the compositional loss function, \cite{yuan2019stochastic, zhang2019stochastic}, and hence the optimization is modified as follows: 
\begin{equation}\label{Eq:PolicyEvaluation}
    \min_{x}\max_{y} \frac{1}{S}\sum_{s=1}^{S}y_s \left(\tilde{V}_x (s) - \sum_{s'=1}^{S} P_{s,s'}^\pi (R_{s,s'} + r\tilde{V}_x (s'))\right)^2 + \sum_{l=1}^{L}\frac{\beta x_l^2}{1+x_i^2} - \left\|y-\frac{1}{S}\right\|^2 ,
\end{equation}
where $w = [w_l ]\in \mathbb{R}^L$ is the model parameter, $\mathcal{Y} = \{y = [y_s ]\in\mathbb{R}^S | \sum_{s=1}^{S} y_s =1, y_s \geq 0 ,\forall s\}$, $\beta >0$. Following \cite{yuan2019stochastic}, we generate an MDP that has 400 states and each state is associated with 10 actions. Regarding the transition probability, $P_{s,s'}^\pi$ is drawn from $[0,1]$ uniformly. Additionally, to guarantee the ergodicity, we add $10^{-5}$ to $P_{s,s'}^\pi$. Then, we use the four methods to optimize \eqref{Eq:PolicyEvaluation} on this dataset.

\begin{figure}[]
  \centering
  \includegraphics[width=0.4\columnwidth]{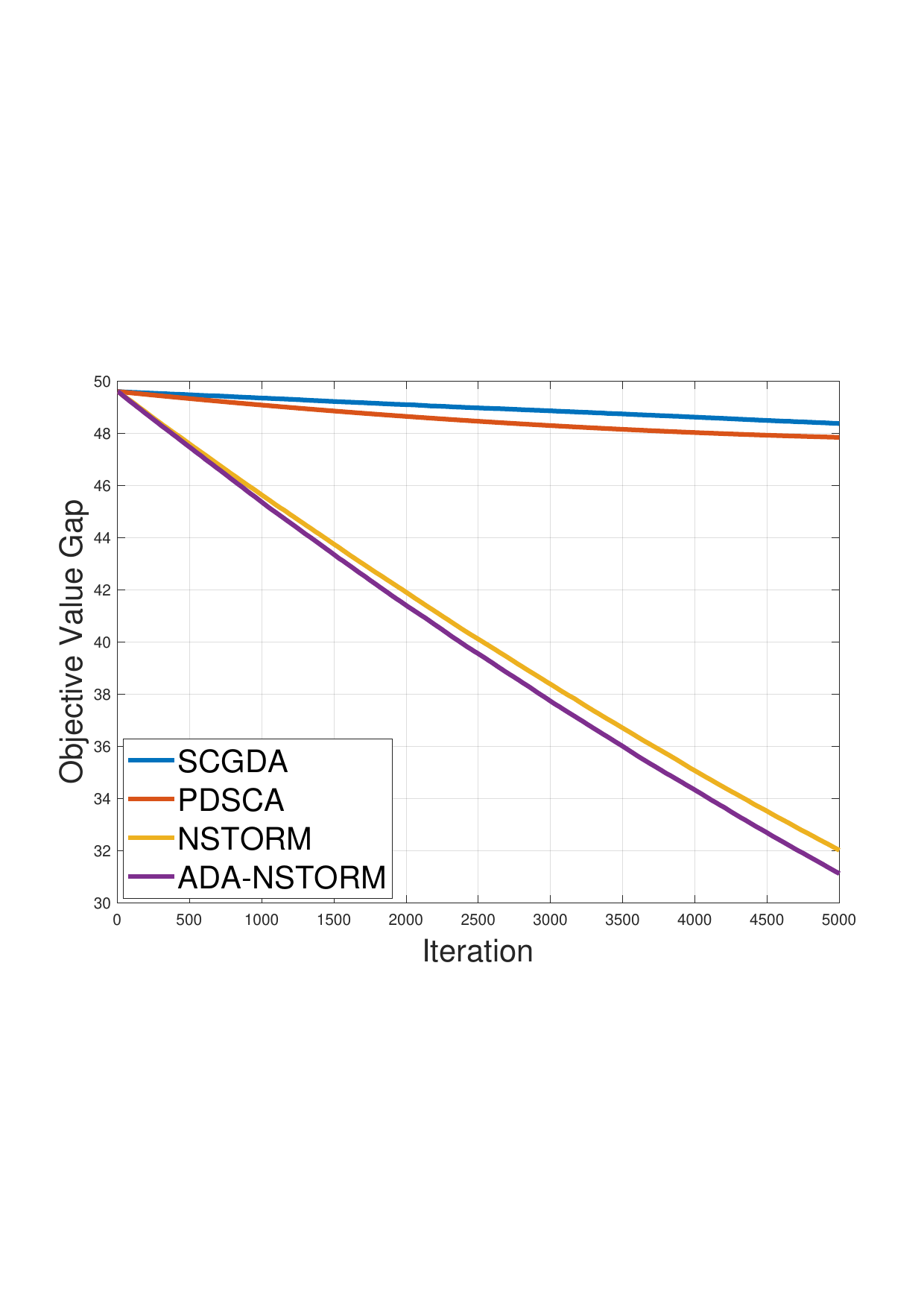}
  \vspace{-3cm}
  \caption{Objective gap of policy evaluation in reinforcement learning.}\label{Fig:RL}
\end{figure}

In Figure~\ref{Fig:RL}, it can be seen that the value function gap decreases with the training going on for all compositional minimax optimization methods. Among all methods, we can see that our proposed NSTORM and ADA-NSTORM methods outperform existing studies, i.e., SCGDA and PDSCA, which confirms the effectiveness of the proposed methods. More specifically, ADA-NSTORM incrementally decreases the gap of the function value, and it is more stable in changing learning rates.

\end{document}